\documentclass{article}
\usepackage{iclr2026_conference,times}

\usepackage{amsmath,amsfonts,bm}

\def\eqref#1{(\ref{#1})}

\def\1{\bm{1}}

\DeclareMathAlphabet{\mathsfit}{\encodingdefault}{\sfdefault}{m}{sl}
\SetMathAlphabet{\mathsfit}{bold}{\encodingdefault}{\sfdefault}{bx}{n}

\usepackage{wrapfig}
\usepackage{hyperref}
\usepackage{url}  
\usepackage{graphicx}
\usepackage{amsthm}
\usepackage{amssymb}
\usepackage{algorithm}
\usepackage{algorithmic}
\usepackage{booktabs}
\usepackage{multirow} 
\usepackage{array}
\newtheorem{definition}{Definition}[section]
\newtheorem{lemma}{Lemma}[section]
\newtheorem{theorem}{Theorem}[section] 
\newtheorem{assumption}{Assumption}[section] 
\newcommand{\circled}[1]{\text{\textcircled{\raisebox{-0.3pt}{\tiny \ensuremath{#1}}}}}

\title{Theoretical Analysis of Contrastive Learning under Imbalanced Data: From Training Dynamics to a Pruning Solution}

\author{Haixu Liao \\
New Jersey Institute of Technology \\
\texttt{hl534@njit.edu}
\And
Yating Zhou \\
Cornell University \\
\texttt{yz3554@cornell.edu} ~~~~~~
\And
Songyang Zhang \\
University of Louisiana at Lafayette \\
\texttt{songyang.zhang@louisiana.edu}
\And
Meng Wang \\
Rensselaer Polytechnic Insitute \\
\texttt{wangm7@rpi.edu}
\And
Shuai Zhang \\
New Jersey Institute of Technology \\
\texttt{sz457@njit.edu}
}

\newcommand{\pcom}[1]{\textcolor{black}{#1}}

\iclrfinalcopy 
\begin{document}

\maketitle

\begin{abstract}
Contrastive learning has emerged as a powerful framework for learning generalizable representations, yet its theoretical understanding remains limited, particularly under imbalanced data distributions that are prevalent in real-world applications. Such an imbalance can degrade representation quality and induce biased model behavior, yet a rigorous characterization of these effects is lacking. In this work, we develop a theoretical framework to analyze the training dynamics of contrastive learning with Transformer-based encoders under imbalanced data. Our results reveal that neuron weights evolve through three distinct stages of training, with different dynamics for majority features, minority features, and noise. We further show that minority features reduce representational capacity, increase the need for more complex architectures, and hinder the separation of ground-truth features from noise. Inspired by these neuron-level behaviors, we show that pruning restores performance degraded by imbalance and enhances feature separation, offering both conceptual insights and practical guidance. Major theoretical findings are validated through numerical experiments.
\end{abstract}

\section{Introduction}

Contrastive learning has emerged as a powerful paradigm in representation learning, effectively leveraging unlabeled data without relying on labels. Within this framework, samples with similar semantic meaning are treated as positive pairs, while those with different semantics are considered negative pairs. By pulling positive pairs closer together and pushing negative pairs farther apart in the representation space, contrastive learning enables models to capture rich and discriminative features. Compared with supervised learning, the resulting representations are often more robust and less sensitive to noise~\citep{xue2022investigating, ghosh2021contrastive, zhong2022self, jiang2020robust,yang2020rethinking,kang2020exploring}. This approach has demonstrated remarkable success across a wide range of applications~\citep{zhong2022regionclip, zhang2022contrastive,jiang2023adaptive,luo2023time} and has been particularly influential in multi-modal learning~\citep{nakada2023understanding, khan2025survey}, driving major advances in the early development of vision-language models~\citep{radford2021learning, li2022blip, li2023blip2}.

Despite its strengths, contrastive learning struggles with class imbalance in real-world datasets~\cite{jiang2021self}, where majority classes dominate pair formation and minority classes are underrepresented. This imbalance hinders the capture of discriminative features for minority classes and degrades representation quality.
Conventional approaches to class imbalance in supervised learning typically rely on re-weighting and re-sampling, and these ideas have inspired analogous methods in contrastive learning. Re-weighting strategies adjust the contribution of pairs or instances to reduce the dominance of majority classes ~\citep{Cui_2019_CVPR, Huang_2016_CVPR}, while resampling methods construct more balanced training batches by oversampling minority samples or undersampling majority ones \citep{drummond2003c45, he2009learning, peng2020large}. Although these approaches have shown effectiveness in certain cases, their application in contrastive settings remains challenging, as they often rely on accurate class labels that are unavailable in self-supervised learning. To address this limitation, an alternative line of research has proposed pruning-based methods, which have been empirically validated to enhance the representation of underrepresented classes \citep{jiang2021self, qian2022co}.

Despite the progress made by these approaches, most efforts have been largely empirical, relying on heuristic methods to alleviate the imbalance problem. While these techniques often provide performance gains in practice, they do not explain why or how imbalance undermines the quality of learned representations. Recent work has begun to develop theoretical understandings of contrastive learning, primarily addressing questions such as its superiority over traditional generative approaches like GANs~\citep{JMLR:v24:21-1501}, the necessity of data augmentation for effective representation learning~\citep{wen2021toward}, and its ability to produce representations that reduce the sample complexity of downstream tasks~\citep{garg2020functional}. Nonetheless, these studies have not considered the implications of imbalanced data distributions.

In this work, we provide a theoretical analysis of how neurons learn feature representations through contrastive training. We study a simplified but representative setting: a Transformer-MLP framework with a single-head attention mechanism followed by an MLP with bilateral ReLU activations. To make the analysis clear, we use a structured data model where each input includes majority and minority features with different frequencies. This setup highlights the key role of feature frequencies and helps us describe their impact on training dynamics and how neurons learn features. In turn, the model allows us to formalize how contrastive learning enhances majority features and drives neurons to learn purer feature representations. Overall, our paper makes three main \textbf{contributions}:  

\begin{wrapfigure}{r}{0.55\linewidth}
    \vspace{-4mm}
    \centering    \includegraphics[width=1\linewidth]{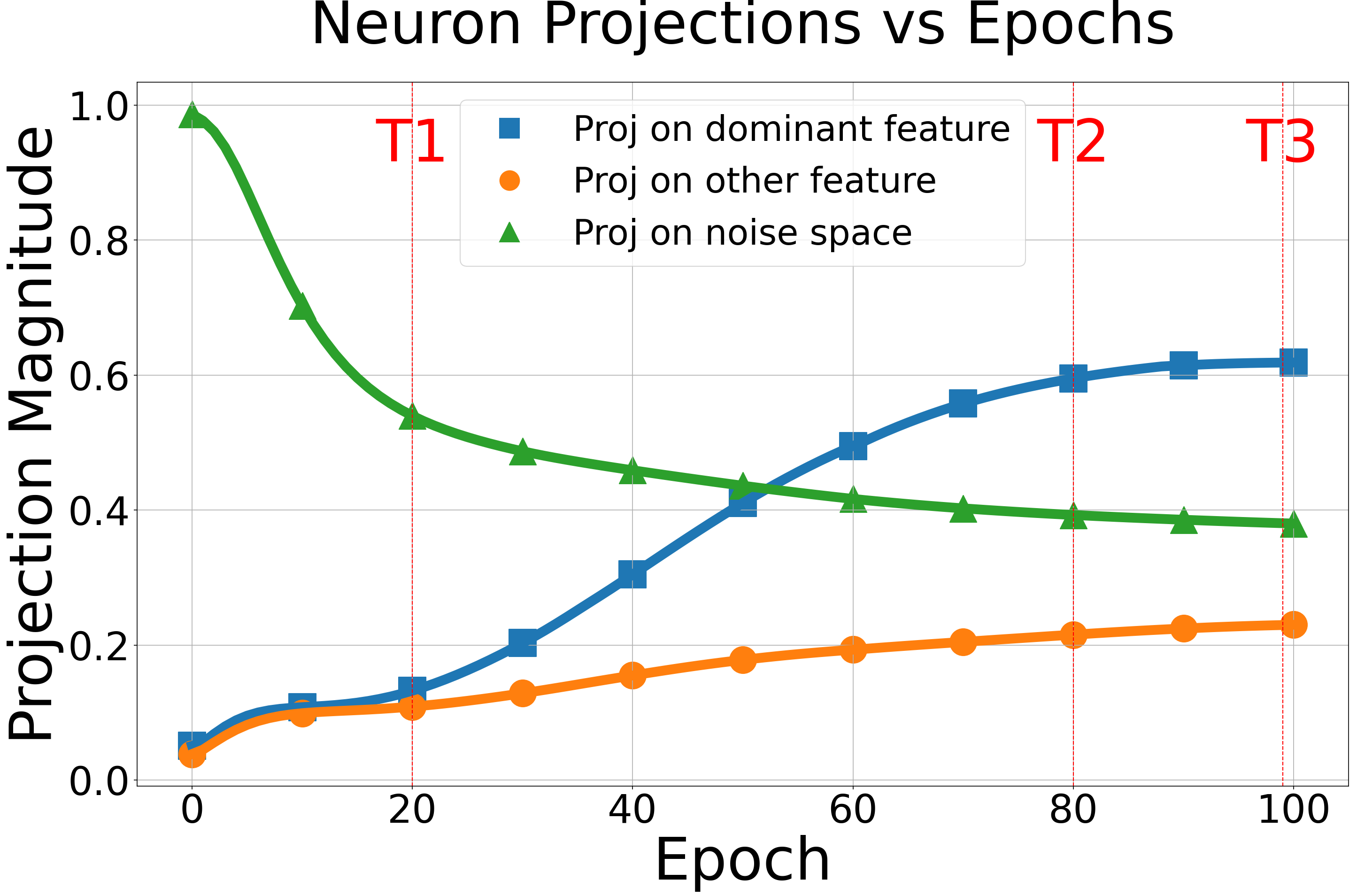}
    \vspace{-5mm}
    \caption{\pcom{Neuron projection dynamics over training epochs. The blue curve shows the growth of a neuron’s projection onto its dominant feature, the orange curve shows the projection onto a non-dominant feature, and the green curve shows the projection onto the noise space direction (which remains larger than the projections onto other features). In the first stage, the neuron grows mainly along feature directions while suppressing noise. In the second stage, the projection onto the dominant feature grows faster than all other features, creating clear separation. In the third stage, as training approaches $T_3$, the neuron converges, and its final representation is dominated by the learned feature, with negligible components in other directions.}}
    \label{fig:training_dynamics}
    \vspace{-4mm}
\end{wrapfigure}

\textbf{First, we develop a theoretical framework to characterize the training dynamics of contrastive learning under Transformer-based encoders with an imbalanced data distribution.} 
We show that learning proceeds in three stages: first, neuron weights grow in feature directions while non-feature components are suppressed; second, Lucky neurons then specialize in single features, and ordinary neurons learn a mix of features; finally, each neuron converges in a way that guarantees a small training loss, becoming strongly aligned with one or more features, weakly aligned with other features, and remaining small in non-feature directions. \pcom{See Figure \ref{fig:training_dynamics} for reference.}

\textbf{Second, we quantitatively characterize how the presence of minority features influences neurons’ learning capacity and, consequently, representation learning.} Our analysis reveals that imbalance degrades representation performance in multiple ways: it slows the learning of minority features, decreases the number of neurons that specialize in a single feature, and produces a chain effect that necessitates a more complex model to adequately capture all features.

\textbf{Third, magnitude-based pruning can enhance the learning of minority features.}
Our results reveal that magnitude-based pruning enhances updates along minority feature directions, encouraging more neurons to specialize in pure minority features and thereby yielding more robust and balanced representations. Intuitively, neurons with small magnitudes are more sensitive to samples containing minority features, which implicitly allows pruning to amplify their contribution.

\subsection{Related Work}

\textbf{Data Imbalance in Self-Supervised Learning:} Data imbalance or long-tail data has been a long-standing challenge since the early development of supervised learning \citep{chu2020feature,Liu_2020_CVPR,yang2022survey,chawla2002smote}. At a high level, tackling data imbalance follows a simple principle: balancing the influence of different groups of data during weight updates, typically through re-sampling \citep{buda2018systematic, choi2018stargan}, which alters the data distribution, or re-weighting \citep{mahajan2018exploring}, which adjusts loss contributions across classes. \pcom{These methods all require label information \citep{cui2021parametric, zhu2022balanced}}.
However, without label information, as in self-supervised learning (SSL), these strategies are far more difficult to apply, and only a few works have addressed the imbalance. Beyond re-weighting and re-sampling \citep{Lin_2017_ICCV, Shrivastava_2016_CVPR, shang2025learning, shen2016relay}, other alternative approaches have been proposed:
optimization-based regularization for rare samples \citep{liuself}, mixup for implicit rebalancing \citep{li2025conmix}, and pruning as an implicit means of detecting long-tail data \citep{jiang2021self, qian2022co}.

\textbf{Convergence and Generalization Analysis of Contrastive Learning:}
Despite its empirical success, contrastive learning lacks a mature theoretical understanding, largely due to the complexity of its loss function.
Early research investigates why augmentation is essential for the success of contrastive learning, showing that such an alignment between augmented positive pairs facilitates learning useful representations \citep{saunshi2022understanding,tian2020understanding, saunshi2019contrastive, wen2021toward}.
\citet{tian2021understanding,wang2023message} establishes a connection between the gradients of contrastive learning and graph neural networks, highlighting interpretability through a graph-theoretic perspective. \citet{haochen2021provable} also explores the connections between contrastive learning and graph theory, proposing a new loss function linked to graph spectral clustering to help explain its success. 
\citet{wen2021toward} emphasizes the necessity of data augmentation for breaking dependencies on spurious noise. None of these works has explored how imbalanced data influences the training dynamics of contrastive learning.

\textbf{Feature Learning Paradigm:} The mathematical framework in this paper is closely related to the feature learning paradigm. Specifically, we assume the data follow a sparse coding model, which is a mixture of latent features, and study the training dynamics of model weights to examine how they align with these features. Most prior works focus on supervised learning \citep{allen2022feature,zhang2023joint,li2025task,cao2022benign,chowdhury2023patch,shandirasegarantheoretical}, where features are tied to ground-truth labels; however, such settings cannot be directly extended to contrastive learning. Because of the complexity of analyzing fine-grained training dynamics, existing studies are typically limited to simple one-hidden-layer neural networks, with some recent efforts exploring Transformers but still restricted to a single layer \citep{huang2024transformers,oymak2023role,li2024nonlinear}, even under supervised settings. The most relevant works are \citet{wen2021toward,sun2025contrastive}, which analyze the training dynamics of contrastive learning with one-hidden-layer feedforward networks. In contrast, our paper studies Transformer architectures under a different data model, and further incorporates data imbalance, providing a comprehensive analysis of how it influences the model’s ability to decouple features, rather than being only a direct extension through feature magnitude changes.

\section{Problem Formulation and Algorithm}

\textbf{Contrastive Learning Framework.}
Let $\bm{X} = [\bm{x}^{(1)}, \ldots, \bm{x}^{(L)}] \in \mathbb{R}^{d_1 \times L}$ or $\bm{Y} \in \mathbb{R}^{d_1 \times L}$ be an input sequence with $L$ tokens. The goal of contrastive learning is to learn a mapping $\bm{f}(\cdot): \mathbb{R}^{d_1 \times L} \to \mathbb{R}^m$ that outputs a meaningful embedding from the input sequence.  

Let $(\bm{X}_{n}, \bm{Y}_{n})$ denote a \textit{positive pair} (e.g., derived from the same objective or sharing semantic meaning), and let $\mathfrak{N}$ denote a set of corresponding \textit{negative samples} (e.g., random samples).  
The InfoNCE loss with temperature parameter $\tau > 0$ is defined as:  
\begin{equation}
\label{eq:InfoNCE}
    \ell(\bm{f_\theta}, \bm{X}_{n}, \bm{Y}_{n}, \mathfrak{N}) 
    := - \log \bigg( \frac{e^{\mathrm{sim}_{\bm{f_\theta}}(\bm{X}_{n}, \bm{Y}_{n})/\tau}}{\sum_{\bm{X} \in {\{\bm{Y}_{n}\} \cup \mathfrak{N}}} e^{\mathrm{sim}_{\bm{f_\theta}}(\bm{X}_{n}, \bm{X})/\tau}}
    \bigg),
\end{equation}
where the similarity function is given by
\begin{equation}
    \mathrm{sim}_{\bm{f_\theta}}(\bm{X}_{n}, \bm{Y}_{n}) := \big\langle \bm{f_\theta}(\bm{X}_{n}), \,\mathrm{StopGrad}\big({\bm{f_\theta}}(\bm{Y}_{n})\big) \big\rangle,
\end{equation} 
and $\mathrm{StopGrad}(\cdot)$ acts as the identity in forward pass while blocking gradients in backpropagation.

Then, the learning objective is to minimize an empirical risk with $l_2$-regularizer, i.e.,
\begin{equation}
\begin{split}
     \widehat{L}_{\mathrm{aug}}(\bm{f_\theta})
     =\widehat{L}(\bm{f_\theta}) + \frac{\lambda}{2}\|\bm{\theta}\|_F^2
     =\frac{1}{K}\sum_{k=1}^K \ell \big(\bm{f_\theta}, \bm{X}_{k},\bm{Y}_{k},\mathfrak{N}_k\big) + \frac{\lambda}{2}\|\bm{\theta}\|_F^2,
\end{split}
\end{equation}
where $\bm{\theta}$ denotes the neural network parameters and \pcom{$K = \operatorname{poly}(d_1)$}.

\textbf{Model Architecture: Transformer-MLP.}
We employ a simplified single-head self-attention mechanism on top of an MLP layer. Each input sequence is passed through the attention layer, where every token serves as a query. Then, it is followed by a bilateral ReLU (BReLU) activation in the MLP layer, where
    $\operatorname{BReLU}_{b}(s) = \mathrm{ReLU}(s-b) - \mathrm{ReLU}(-s-b).$
    Specifically, the embedding function $f$ is expressed as

\begin{equation}
\begin{gathered}
    \bm{f_\theta}(\bm{X}_{n}) = \big(h_1(\bm{X}_{n}), \ldots, h_m(\bm{X}_{n})\big)^\top \in \mathbb{R}^m,\\
    \mathrm{with} \quad h_i(\bm{X}_{n}) = \sum_{r=1}^L \operatorname{BReLU}_{\,b_i^{(t)}}\!\Big(\langle \bm{w}_{i}^{(t)}, \mathrm{Attention}(\bm{W_Q} \bm{x}^{(r)}_{n}, \bm{W_K} \bm{X}_{n}, \bm{W_V} \bm{X}_{n}) \rangle\Big).
\end{gathered}
\end{equation}

\textbf{Pruning Algorithm.} 
To address the issue of data imbalance, we revisit \citep{jiang2021self,qian2022co} and propose a pruning algorithm that dynamically removes small-magnitude neuron weights during the forward pass, while retaining all parameters as trainable in the backward pass \footnote{We do not introduce a new algorithm; instead, we adapt established approaches to our theoretical setting.}.
Specifically, we initialize the MLP layer weights with Gaussian distributions and the attention weights as identity matrices. The binary mask is initially set to all ones, meaning no neurons are pruned at the start. At each epoch, a fraction $\alpha$ of the neurons with the smallest magnitudes are pruned, and the corresponding binary mask is updated. During the forward pass, the masked parameters $\theta_{\rm{mk}}^{(t)}$ are used to encode the inputs. In the backward pass, gradients are computed with respect to the pruned model but applied to the full parameter set, namely, the gradient is calculated as
\begin{equation}\label{eqn: pruning_gradient}
\small
    g(\bm{\theta}^{(t)}_t, \bm{M}^{(t)}):=\frac{1}{K}\sum_{k=1}^{K}\Big[ (\ell'_{p,\bm{\theta}^{(t)}_{\mathrm{mk}}} - 1) h_i(\bm{Y}_{k}) \nabla_{\bm{\theta}}  h_i(\bm{X}_{k}) + \sum_{\bm{X}_{n,s} \in \mathfrak{N}_{k}} \ell'_{s,\bm{\theta}^{(t)}_{\mathrm{mk}}} h_i(\bm{X}_{n,s}) \nabla_{\bm{\theta}}  h_i(\bm{X}_{k})\Big],
\end{equation}
where 
$\ell'_{p,\cdot}:= \frac{\exp\big(\mathrm{Sim}_{\bm{f}_{\cdot}}(\bm{X}_{k}, \bm{Y}_{k})/\tau\big)}{\sum_{\bm{X}  \in {\{\bm{Y}_{k}\} \cup \mathfrak{N}_{k}}} \exp\big(\mathrm{Sim}_{\bm{f}_{\cdot}}(\bm{X}_{k}, \bm{X})/\tau\big)}$ is the positive logit and 
$\ell'_{s,\cdot} := \frac{\exp\big(\mathrm{Sim}_{\bm{f}_{\cdot}}(\bm{X}_{k}, \bm{X}_{n,s})/\tau\big)}{\sum_{\bm{X}  \in {\{\bm{Y}_{k}\} \cup \mathfrak{N}_{k}}} \exp\big(\mathrm{Sim}_{\bm{f}_{\cdot}}(\bm{X}_{k}, \bm{X})/\tau\big)}$ is negative logit with respect to the native sample $\bm{X}_{n,s}$.
    
\begin{algorithm}[!ht]
\caption{Forward Magnitude Pruning with Backward Unmasked Update}
\label{alg:Magnitude Pruning}
\begin{algorithmic}[1]
\REQUIRE Training dataset $\{(\bm{X}_{k}, \bm{Y}_{k}, \mathfrak{N}_{k})\}_{k=1}^{K}$ (positive pairs $(\bm{X}_{k}, \bm{Y}_{k})$ and negative set $\mathfrak{N}_{k}$)
\REQUIRE Pruning ratio $\alpha$
\REQUIRE Training epochs $T$, weight decay parameter $\lambda$, temperature $\tau$

\STATE Initialize network parameters $\bm{w}_i^{(0)} \sim \mathcal{N}(0,\sigma_0^2\bm{I_{d_1}})$, $\bm{W}_K^{(0)} = \bm{W}_Q^{(0)} = \bm{I}$.
\STATE Set the initial pruning mask $\bm{M}^{(0)} \leftarrow \mathbf{1}$ with the same shape as $\bm{\theta}^{(0)}$.

\FOR{$t = 0$ to $T-1$}

    \STATE \textbf{Magnitude based pruning:}
    \pcom{At each iteration $t$, prune $\alpha$ fraction of neurons with the smallest magnitude} in $\bm{\theta}^{(t)}$ 
    by creating the corresponding binary mask $\bm{M}^{(t)}$.

    \STATE \textbf{Forward (masked):} 
    Apply the mask to obtain $\bm{\theta}^{(t)}_{\mathrm{mk}} \gets \bm{\theta}^{(t)} \odot \bm{M}^{(t)}$, 
    then encode $\bm{X}_{k}$, $\bm{Y}_{k}$, and negatives $\mathfrak{N}_k$ using $\bm{f}_{\bm{\theta}^{(t)}_{\mathrm{mk}}}$.

    \STATE \textbf{Compute loss:} 
      $\widehat{L}_{\mathrm{aug}}(\bm{f}_{\bm{\theta}^{(t)}_{\mathrm{mk}}}) = \frac{1}{K}\sum_{k=1}^{K} \ell\big(\bm{f}_{\bm{\theta}^{(t)}_{\mathrm{mk}}}, \bm{X}_k, \bm{Y}_k, \mathfrak{N}_k; \tau\big)+ \tfrac{\lambda}{2}\|\bm{\theta}^{(t)}_{\mathrm{mk}}\|_F^2.$
    \STATE \textbf{Backward and update:} 
    Release the mask $\bm{M}^{(t)}$ on the masked parameters and update the full parameters by
    \[
      \bm{\theta}^{(t+1)} \gets (1 - \eta \lambda )\bm{\theta}^{(t)} - \eta\cdot g(\theta^{(t)}_t, \bm{M}^{(t)}) \]
\ENDFOR
\STATE \textbf{return} $\bm{\theta}^{(T)}$
\end{algorithmic}
\end{algorithm}

Note that this procedure does not permanently eliminate any neurons for efficiency purposes, even though a reduction in computation cost can be observed.
The pruning mask acts as a temporary filter by automatically removing small-magnitude neurons. As shown in Theorem \ref{Theorem 4}, these neurons are associated with minority features. Consequently, samples containing such features incur a higher loss, which in turn encourages the model to allocate greater attention to them during training.

\section{Theoretical Analysis}

\subsection{Key Insights of the Findings}
We first give a summary of the key insights from our analysis before turning to the data model and the formal theoretical results. Our findings show how neurons gradually learn feature representations across different stages of training. In particular, we have 

\textbf{(K1). Training dynamics of contrastive learning based on the Transformer-MLP framework.} The theory divides the learning process into three stages.
In Stage 1 (Lemma \ref{Theorem 1}), neuron weights grow in feature directions at rates determined by the feature frequencies $\epsilon_j$, while their components in non-feature directions are suppressed.
In Stage 2 (Lemma \ref{Theorem 2}), lucky neurons in $\mathcal{M}_j^\star$ strengthen their alignment with the feature direction $\bm{M}_j$, and ordinary neurons in $\mathcal{M}_j$ remain bounded by these lucky neurons, so that the learned features become purer and non-feature components remain suppressed.
In the final stage, each neuron aligns with a specific set of features $\mathcal{N}_i$, becoming strongly aligned with some features, weakly with others, and remaining small in non-feature directions.

\textbf{(K2). Feature frequency ratio controls neuron specialization.}
At convergence, each neuron is dominated by features in $\mathcal{N}_i$, with negligible contribution from other directions. First, the neuron magnitude in $\mathcal{N}_i$, denoted $\alpha_{i,j}$, scales as $\tfrac{\varepsilon_j}{\varepsilon_{\max}}$, so rarer features are learned more weakly. Second, the size of $\mathcal{N}_i$ scales as $d^{1-(\varepsilon_{\min}/\varepsilon_{\max})^2}$: smaller ratios enlarge $\mathcal{N}_i$ and cause feature mixing, while larger ratios shrink it and yield purer alignment. Third, the number of neurons specializing in purified features scales as $d^{-(\varepsilon_{\max}/\varepsilon_{\min})^2}$, which decreases as the gap between $\varepsilon_{\max}$ and $\varepsilon_{\min}$ grows.
Since contrastive learning works best when neurons specialize in purified features, imbalance introduces three interrelated obstacles: minority features are learned with smaller magnitude, neurons mix multiple features instead of staying pure, and the overall number of specialized neurons decreases. Together, these effects weaken representation quality and require larger models to learn all features.

\textbf{(K3). Pruning enhances minority feature learning.}
\pcom{With pruning ratio $\alpha$}, neurons aligned with minority features gain stronger updates of order $\frac{\alpha}{d}$, while those aligned with non-minority features grow only weakly, with updates of order $\frac{\alpha}{d^2}$. 
At convergence, the coefficient of neurons learning a minority feature can reach the same order as that of majority features, so the performance downgrade from imbalance is alleviated.
Intuitively, minority neurons are pruned more often because their magnitudes are smaller, which in turn amplifies the contribution of samples containing the minority feature in gradient updates. 
As a result, pruning strengthens the minority feature, makes it clearly distinguished from other contributions, and drives more neurons to specialize in it, leading to more robust representation learning. 

\vspace{-2mm}
\begin{table}[h]
\vspace{-3mm}
    \caption{Summary of main notations} 
    \vspace{-3mm}
    \label{table:notations}
    \begin{center}
    \renewcommand{\arraystretch}{1.15}
    \setlength{\tabcolsep}{8pt}
    \begin{tabular}{c| l|| c| l}
    \hline\hline
    $\eta$ & Learning rate & $\lambda$ & Regularization parameter \\ \hline
    $\tau$ & Temperature coefficient & $K$ & Batch size \\ \hline
    $\mathfrak{N}$ & Set of negative samples & $\mathfrak{B}$ & The set of $\bm{Y}_{n}$ and negative samples \\ \hline
    $\epsilon_{\min}$ & frequency of minority feature & $\epsilon_{\max}$ & frequency of majority feature \\ \hline
    $\epsilon_j$ & Feature frequency for feature $j$ & $\mathcal{N}_i$ & Set of dominate features for neuron $i$ \\ \hline
    $\mathcal{M}_j$ & Set of ordinary neurons for feature $j$ & $\mathcal{M}_j^\star$ & Set of lucky neurons for feature $j$\\ \hline\hline
    \end{tabular}
    \end{center}
\end{table}
\vspace{-3mm}

\subsection{Assumptions}

\textbf{Data Model.} 
Our data assumption is adopted from the widely used sparse coding model, which constitutes a common foundation for theoretical analyses of deep learning \citep{allen2022feature,wen2021toward}. Moreover, sparse coding provides a conceptual framework for modeling real-world data across diverse domains, including CV \citep{protter2008image,yang2009linear,mairal2014sparse,liaotraining}, NLP \citep{arora2018linear}, compressed sensing \citep{candes2012exact,candes2010power}, and neuroscience \citep{vinje2000sparse,olshausen1997sparse,olshausen2004sparse,foldiak2003sparse}.

Assumption \ref{Ass: sparse_coding_model} states that each token within a sample can be expressed as a weighted sum of a subset of features from the dictionary matrix $\bm{M}$, corrupted by additive noise $\bm{\xi}$. Here, $\bm{M}$ denotes the dictionary matrix, $\bm{z}$ represents the latent signal, and $\bm{\xi}$ corresponds to spurious noise. Importantly, in the presence of noise, particularly when the noise level is comparable to or even exceeds the signal magnitude, no linear mapping can recover the latent signal directly from the input. This makes the model simple in form yet intrinsically challenging, thereby providing a favorable abstraction for theoretical analyses of nonlinear neural networks.
\begin{assumption}[Sparse Coding Model]\label{Ass: sparse_coding_model}
For a paired data $(\bm{X}_{n}, \bm{Y}_{n})$, the data structure is:
\begin{equation}
\begin{aligned}
\bm{X}_{n} &= \big[ \bm{M}  \bm{z}_{n}^{(1)} + \bm{\xi}_{n}^{(1)}, \; \bm{M}  \bm{z}_{n}^{(2)} + \bm{\xi}_{n}^{(2)}, \; \ldots, \;\bm{M} \bm{z}_{n}^{(L)} + \bm{\xi}_{n}^{(L)} \big] \\
\bm{Y}_{n} &= \big[ \bm{M}  \bm{z}_{n}^{+(1)} +  \bm{\xi}_{n}^{+(1)}, \; \bm{M}  \bm{z}_{n}^{+(2)} + \bm{\xi}_{n}^{+(2)}, \; \ldots, \; \bm{M} \bm{z}_{n}^{+(L)} + \bm{\xi}_{n}^{+(L)} \big]
\end{aligned}
\end{equation}
Here, each $\bm{z}_{n}^{(i)} \in \mathbb{R}^d$ represents the latent signal at the $\ell$-th token, and $\bm{\xi}_n^{(i)}$ denotes the additive noise. $\bm{M} = [\bm{M}_1, \ldots, \bm{M}_d] \in \mathbb{R}^{d_1 \times d}$  is the dictionary matrix, which is a column-orthonormal matrix and satisfies $\|\bm{M}_j\|_\infty \le \widetilde{O}\big(\tfrac{1}{\sqrt{d_1}}\big), \quad \forall j \in [d]$. We also assume $d_1 =\text{poly}(d)$.
\end{assumption}

Assumption \ref{Ass: Latent} requires that the latent signal be both bounded and sparse. Sparsity is a standard assumption, introduced primarily to facilitate the theoretical analysis, yet it also agrees with empirical observations that real-world data typically activate only a small subset of latent factors rather than spreading energy across all coordinates. Moreover, the assumption enforces sign consistency across tokens within the same sample, meaning that whenever a particular coordinate is active, its sign remains identical across all tokens. This ensures that different parts of the same sample contribute coherently to the underlying latent feature instead of producing conflicting activations.
\begin{assumption}[Latent Signal]\label{Ass: Latent}
We have assumptions on the latent signal $\{\bm{z}^{(i)}\}_{i=1}^L$ with $\bm{z}^{(i)}  = (z_{1}^{(i)}, \ldots, z_{j}^{(i)}, \ldots, z_{d}^{(i)})^\top$: (i) all $z_{j}^{(i)}$ are bounded and symmetric around zero over all samples. Moreover, we have $\Pr(|z_{n,j}^{(i)}| \ne 0) = \Theta\left(\frac{\log \log d}{d}\right)$;
(ii) $z_{j}^{(i)}$ share the same sign across all $i \in [L]$.
\end{assumption}

Assumption \ref{Ass: noise} states that noise follows Gaussian distributions. This is a mild condition, as no strong restriction is imposed on its variance. In particular, the noise magnitude can exceed that of the sparse signal when $d_1\gg d$. The assumption is adopted for analytical purposes and demonstrates that contrastive learning can recover meaningful latent representations even in regimes where the signal is dominated by noise.
\begin{assumption}[Noise]\label{Ass: noise}
    Here each noise term $\bm{\xi}_{n}^{(\ell)}$ and $\bm{\xi}_{n}^{+(\ell)}$ for $\ell \in [L]$ 
    is independently drawn from the same distribution 
    $\bm{\xi}_{n}^{(\ell)} \sim \mathcal{N}(0, \sigma_\xi^2 \bm{I}_{d_1})$, with variance $\sigma_\xi^2 = \Theta\!\left(\frac{\sqrt{\log d}}{d}\right)$. 
\end{assumption}

Assumption \ref{Ass: pn} states that a pair of positive samples shares the same set of features when aggregated over all tokens within the sample. Intuitively, this means that the two samples encode the same semantic structure, even though their individual token-level representations may differ. In contrast, a negative pair is formed by two random samples whose latent signals are completely independent.
\begin{assumption}[Positive and Negative Pairs]\label{Ass: pn}
A pair of samples $\bm{X}_{n}$ and $\bm{Y}_{n}$ form a {positive pair} if and only if $\mathrm{supp}\big(\sum_{\ell=1}^L \bm{z}_{n}^{(\ell)}\big) = \mathrm{supp}\big(\sum_{\ell=1}^L \bm{z}_{n}^{+(\ell)}\big), \quad \mathrm{sign}\big(\sum_{\ell=1}^L \bm{z}_{n}^{(\ell)}\big) = \mathrm{sign} \big(\sum_{\ell=1}^L \bm{z}_{n}^{+(\ell)}\big).$ By contrast, {negative pairs} are defined such that the corresponding latent signals are independent.  
\end{assumption}

Definition \ref{Defi: MMF} states that each feature is controlled by $\epsilon_j$.
Intuitively, $\epsilon_j$ characterizes how often feature $j$ appears across the data. 
When $\epsilon_j$ is small, feature $j$ is regarded as a minority feature. 
\begin{definition}[Majority and minority features]\label{Defi: MMF}
For each feature index $j \in [d]$, and for all $i \in [L]$ and all samples, the activation probability of the sparse signal satisfies: $\Pr\!\left( \big| \bm{z}_{j}^{(i)} \big| \neq 0 \right) = \Theta\!\left( \epsilon_j \tfrac{\log \log d}{d} \right).$  We define the \emph{majority features} as those associated with $\epsilon_{\max} = \max_{j \in [d]} \epsilon_j$, and the \emph{minority features} as those associated with $\epsilon_{\min} = \min_{j \in [d]} \epsilon_j$.
\end{definition}

\subsection{Formal Theoretical Results}

Theorem \ref{Theorem 3} analyzes the vanilla contrastive learning algorithm without pruning, showing how data imbalance affects performance. Lemmas \ref{Theorem 1} and \ref{Theorem 2} provide intermediate steps toward its proof and reveal how training dynamics evolve, despite the algorithm appearing to follow a consistent gradient-based procedure. Theorem \ref{Theorem 4} then gives the results with pruning, showing how pruning improves performance under imbalance.

\subsubsection{Vanilla Contrastive Learning}

Lemma \ref{Theorem 1} shows two main effects of contrastive learning in the first training stage: (a) neuron weights grow in feature directions but are suppressed in non-feature directions, and (b) the growth rate in a feature direction $\bm{M}_{j}$ depends on its frequency $\epsilon_j$, with larger $\epsilon_j$ leading to faster growth and smaller $\epsilon_j$ making the feature harder to capture early in training. \pcom{We can find the Proof of Lemma 3.1 in Appendix \ref{appendix:C}.4}.

\begin{lemma}[\textbf{Stage 1}]
\label{Theorem 1}
During the first training stage, the update of neuron weights $\bm{w}_{i}^{(t)}$ can be bounded for all $t \in [0, T_1]$ as follows, \pcom{where $C_z$ denotes positive constants and $T_1 = \Theta\!\left( \frac{d_1 \log d}{\eta \log \log d} \right)$}.
\begin{equation}
\label{eq:Theorem 1 (1)}
    |\langle \bm{w}_{i}^{(t+1)}, \bm{M}_{j} \rangle|\geq |\langle \bm{w}_{i}^{(t)}, \bm{M}_{j} \rangle| (1 - \eta \lambda + \epsilon_{j}\frac{\eta C_z \log \log d}{d})-\widetilde{O}\!\Big(\frac{\eta \|\bm{w}_{i}^{(t)}\|_2}{\mathrm{poly}(d_1)}\Big),
\end{equation}
\begin{equation}
\label{eq:Theorem 1 (2)}
    \text{and} \quad |\langle \bm{w}_{i}^{(t+1)}, \bm{M}_{j}^\perp \rangle|\leq (1-\eta\lambda)|\langle \bm{w}_{i}^{(t)}, \bm{M}_{j}^\perp \rangle|+ \widetilde{O}\!\Big(\frac{\eta \|\bm{w}_{i}^{(t)}\|_2}{\mathrm{poly}(d_1)}\Big).
\end{equation}
\end{lemma}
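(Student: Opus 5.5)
We would prove Lemma \ref{Theorem 1} by writing the update in Algorithm \ref{alg:Magnitude Pruning} with no pruning ($\bm{M}^{(t)}=\1$) as $\bm{w}_i^{(t+1)}=(1-\eta\lambda)\bm{w}_i^{(t)}-\eta\, g_i^{(t)}$ and projecting it onto $\bm{M}_j$ and onto $\bm{M}_j^\perp$. Everything then reduces to controlling $\langle g_i^{(t)},\bm{M}_j\rangle$ and $\langle g_i^{(t)},\bm{M}_j^\perp\rangle$.

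\textbf{Reduction to an MLP on averaged tokens.} Since $\bm{W}_Q=\bm{W}_K=\bm{I}$ at initialization, and the attention weights move only by lower-order amounts during $[0,T_1]$, the attention scores $\langle \bm{x}^{(r)},\bm{x}^{(s)}\rangle$ are nearly uniform up to $1\pm\widetilde{O}(1/\sqrt{d_1})$. This holds because of the noise level in Assumption \ref{Ass: noise}, the sparsity in Assumption \ref{Ass: Latent}, and the condition $\|\bm{M}_j\|_\infty\le\widetilde{O}(1/\sqrt{d_1})$. Each attention output is therefore close to a convex combination of tokens: signal $\bm{M}\bar{\bm z}$ plus averaged noise. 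We absorb the deviation into the $\widetilde{O}(\eta\|\bm{w}_i^{(t)}\|_2/\mathrm{poly}(d_1))$ error.

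\textbf{Logits and the small-weight regime.} At initialization $\|\bm{w}_i\|$ is small, and throughout Stage 1 we maintain the induction hypothesis that all similarities are $o(\tau)$. Under this hypothesis, $\ell'_p\approx 1/|\mathfrak{B}|$ and $\ell'_s\approx 1/|\mathfrak{B}|$. The gradient then splits into a positive-pair term, with coefficient $(\ell'_p-1)\approx -1$, and negative terms with nearly equal coefficients.

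\textbf{Feature direction.} For the positive term, $\mathbb{E}[h_i(\bm{Y})\langle\nabla h_i(\bm{X}),\bm{M}_j\rangle]$ has, by the shared support and sign in Assumption \ref{Ass: pn}, a contribution $\propto \Pr(z_j\neq0)\,\langle \bm{w}_i,\bm{M}_j\rangle$ with the same sign as $\langle\bm{w}_i,\bm{M}_j\rangle$. This uses the BReLU activation pattern and the symmetric bounded $z_j$. By Definition \ref{Defi: MMF}, this contribution is $\epsilon_j C_z\log\log d/d$. The negative terms involve independent samples and symmetric $z$, so their expectation in the $\bm{M}_j$ direction vanishes to leading order. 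Cross-feature terms $\langle\bm{w}_i,\bm{M}_{j'}\rangle$ average out by independence and symmetry. Replacing expectations by the empirical average over $K=\mathrm{poly}(d_1)$ samples costs a Hoeffding/Bernstein error of $\widetilde{O}(\|\bm{w}_i\|/\mathrm{poly}(d_1))$. Combining these gives \eqref{eq:Theorem 1 (1)}.

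\textbf{Orthogonal direction.} In $\bm{M}_j^\perp$, the gradient's component comes only from noise and from other features. Noise in $\bm{X}$ and $\bm{Y}$ is independent even within positive pairs, so $\mathbb{E}[h_i(\bm{Y})\,\bm{\xi}_X]$ vanishes up to Gaussian concentration. Gaussian concentration over the $d_1$ coordinates, together with the dependence of the BReLU threshold, bounds the remaining term by $\widetilde{O}(\|\bm{w}_i\|/\mathrm{poly}(d_1))$. This yields \eqref{eq:Theorem 1 (2)}.

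\textbf{Main obstacle.} The hardest step is the noise/activation coupling. Since $\sigma_\xi^2 d_1\gg1$, $\langle\bm{w}_i,\bm{\xi}\rangle$ can dominate $\langle\bm{w}_i,\bm{M}\bm z\rangle$. The BReLU indicator then correlates with the noise, and the leading $\bm{\xi}$-contribution must be shown to cancel via Gaussian symmetry or Stein's lemma rather than naive bounding. Our first attempt would condition on $\langle\bm{w}_i,\bm{\xi}\rangle$ and write $\bm{\xi}$ as its component along $\bm{w}_i$ plus an independent remainder. The component along $\bm{w}_i$ only rescales $\bm{w}_i$, so it contributes to neither the $\bm{M}_j$-growth nor the $\bm{M}_j^\perp$ growth beyond the stated errors.

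To close the induction, we iterate the recursions up to $T_1=\Theta(d_1\log d/(\eta\log\log d))$ and check that $\|\bm{w}_i^{(t)}\|$ stays small enough for the logit approximation to remain valid.
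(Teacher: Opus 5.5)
\textbf{The gap is in how you treat the activation.} In the paper the bias is $b_i^{(t)}=0$ throughout Stage~1. The threshold is only switched on at the start of Stage~2: Lemma~\ref{Lemma F.3}, which the proof of Lemma~\ref{Lemma C.1} uses for the positive term, is stated for $t\le T_1$ ``so that $b_i^{(t)}=0$''. Since $\operatorname{BReLU}_0(s)=s$, every $h_i$ is linear and $\nabla_{\bm{w}_i}h_i(\bm{X}_n)=\sum_r \bm{z}_X^{(r)}$, with no gate at all.

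You instead keep an active threshold and single out the noise--gate coupling as the main obstacle. You then dismiss it because the component of $\bm{\xi}$ along $\bm{w}_i$ ``only rescales $\bm{w}_i$''. That is exactly where the argument breaks. A gradient term proportional to $\bm{w}_i$ multiplies $\langle \bm{w}_i,\bm{M}_j^\perp\rangle$ by $1+\eta\kappa$, with $\kappa\propto \mathbb{E}\bigl[h_i(\bm{Y}_n)\,\bm{1}_{\{|\langle \bm{w}_i,\bm{z}_X^{(r)}\rangle|\ge b_i\}}\,\langle \bm{w}_i,\tilde{\bm{\xi}}_n^{(r)}\rangle\bigr]/\|\bm{w}_i\|_2^2$.
\begin{itemize}
\item For $b_i>0$ this $\kappa$ is strictly positive. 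Conditioning on an open gate biases $\langle \bm{w}_i,\tilde{\bm{\xi}}_n^{(r)}\rangle$ toward the sign of the signal part of the pre-activation, and that signal is positively correlated with $h_i(\bm{Y}_n)$. For $b_i=0$ the gate is identically $1$ and $\kappa=0$.
\item For a threshold on the scale of the noise-dominated pre-activation, $\kappa$ reaches the order of the feature rate $\log\log d/d$ once $\bm{w}_i$ concentrates on the feature subspace.
\end{itemize}
Multiplicative growth of the noise coordinates is precisely what \eqref{eq:Theorem 1 (2)} excludes, and what Lemma~\ref{Lemma C.1}(c) and Theorem~\ref{Theorem C.1}(e) rely on. So it cannot be absorbed into the additive $\widetilde{O}(\eta\|\bm{w}_i^{(t)}\|_2/\mathrm{poly}(d_1))$ term. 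Under your premise the orthogonal bound does not follow; under the paper's, the obstacle never arises.

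\textbf{With $b_i^{(t)}=0$ the computation is exact.} The positive term is handled by Lemma~\ref{Lemma F.3}, using orthogonality of $\bm{M}$, independence of the noises in $\bm{X}_n$ and $\bm{Y}_n$, and independence and symmetry across features.
\begin{itemize}
\item Along $\bm{M}_j$ it equals $\frac{1}{L^2}\langle \bm{w}_i,\bm{M}_j\rangle\,\mathbb{E}[\hat{z}^{+}_{n,j}\hat{z}_{n,j}]$. Here $\frac{1}{L^2}\mathbb{E}[\hat{z}^{+}_{n,j}\hat{z}_{n,j}]=\Theta(\epsilon_j\log\log d/d)$, which is nonnegative by the shared sign.
\item Along $\bm{M}_j^\perp$ it is exactly $0$.
\item The negative terms have zero mean at uniform logits by independence of the negatives; this is why the proof of Lemma~\ref{Lemma C.1} can replace $\ell'_{s,t}$ by $\ell'_{s,t}-1/|\mathfrak{B}|$. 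That deviation is bounded via Lemma~\ref{Lemma H.1} by $\widetilde{O}\bigl(\sum_{i'}\|\bm{w}_{i'}^{(t)}\|_2^2/(\tau|\mathfrak{B}|)\bigr)$.
\item Lemma~\ref{Lemma B.1} handles the empirical gradient.
\end{itemize}

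\textbf{Two smaller points.} First, your induction hypothesis that all similarities are $o(\tau)$ is too weak to produce the stated error. It only makes the logits uniform up to a relative $o(1)$. Absorbing the negative terms into $\widetilde{O}(\eta\|\bm{w}_i^{(t)}\|_2/\mathrm{poly}(d_1))$ needs $\sum_{i'}\|\bm{w}_{i'}^{(t)}\|_2^2\le 1/\mathrm{poly}(d_1)$. That comes from $\sigma_0^2=1/\mathrm{poly}(d_1)$ together with $\|\bm{w}_i^{(t)}\|_2^2\le O(1)\|\bm{w}_i^{(0)}\|_2^2$ on $[0,T_1]$.

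Second, with $\bm{W}_Q=\bm{W}_K=\bm{I}$ and no further scaling, the self-score $\|\bm{x}^{(r)}\|_2^2\approx\sigma_\xi^2 d_1\gg 1$ dominates. So attention is close to one-hot, not close to uniform. Uniformity is unnecessary anyway. The paper just writes the attention output as $\frac{1}{L}(\bm{M}\tilde{\bm{z}}^{(r)}_n+\tilde{\bm{\xi}}^{(r)}_n)$ with (fixed) mixing weights, and all the facts above hold for any such weights.
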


Before presenting the theoretical results in Stage 2, we first categorize neurons into two groups. The \textit{ordinary neurons} $\mathcal{M}_j$ strongly align with a certain direction, while the \textit{lucky neurons} $\mathcal{M}_j^\star$ form a special subset that aligns with only one feature direction (see Appendix~\ref{appendix:B} for the formal definition). In Stage 2: (a) lucky neurons in $\mathcal{M}_j^\star$ grow significantly in alignment with $\bm{M}_j$, controlled by $\epsilon_{j}$, though their number remains small; (b) ordinary neurons in $\mathcal{M}_j$ are bounded by the feature components of lucky neurons up to a constant factor. \pcom{We can find the Proof of Lemma 3.2 in Appendix \ref{appendix:D}.4}.

\begin{lemma}[\textbf{Stage 2}]
\label{Theorem 2}
During the second training stage, the update of neuron weights $\bm{w}_{i}^{(t)}$ can be bounded for all $t \in [T_1, T_2]$ as follows, \pcom{where $T_2 = T_1 + \Theta\!\left( \frac{d \tau \log d}{\epsilon_{\max} \eta \log \log d} \right)$}.

(a) For each $j \in [d]$, if $i \in \mathcal{M}^\star_j$, then:
\begin{equation}
\label{eq:Theorem 2 (1)}
    |\langle \bm{w}_i^{(T_2)}, \bm{M}_j \rangle|^2 
    \ge 2\cdot \frac{\varepsilon_{j}}{\varepsilon_{\max}}\cdot  \|\bm{w}_i^{(T_1)}\|_2^2, \quad \text{with}\quad
    |\mathcal{M}_j^\star| \ge m\cdot d^{-(\frac{\varepsilon_{\max}}{\varepsilon_{\min}})^2}.
\end{equation}

(b) For each $j \in [d]$, if $i' \in \mathcal{M}_j$ and $i\in\mathcal{M}_j^\star$, then:
\begin{equation}
\label{eq:Theorem 2 (2)}
    |\langle \bm{w}_{i'}^{(T_2)}, \bm{M}_j \rangle| 
    \le O(|\langle \bm{w}_i^{(T_2)}, \bm{M}_j \rangle|).
\end{equation}
\end{lemma}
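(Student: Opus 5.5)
The plan is to run a coupled induction over $t\in[T_1,T_2]$ on the projections $\langle \bm{w}_i^{(t)},\bm{M}_j\rangle$. The hypotheses are that off-feature components stay at the $\widetilde{O}(\|\bm{w}_i\|_2/\mathrm{poly}(d_1))$ level established by Lemma \ref{Theorem 1}, and that the attention matrices remain close to the identity. Under these hypotheses the attention output for token $r$ is, up to small error, $\bm{M}\bm{z}^{(r)}+\bm{\xi}^{(r)}$ plus an averaging term. Then
\[
\langle \bm{w}_i,\mathrm{Attention}(\cdot)\rangle \approx \sum_j \langle \bm{w}_i,\bm{M}_j\rangle z_j^{(r)} + \langle \bm{w}_i,\bm{\xi}^{(r)}\rangle .
\]

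\textbf{Step 1: reduce the gradient to a one-dimensional recursion.} I would simplify the gradient in \eqref{eqn: pruning_gradient} with no pruning, i.e.\ $\bm{M}^{(t)}=\mathbf{1}$. In Stage 2 the biases $b_i^{(t)}$ have grown to the noise level, so $\operatorname{BReLU}$ fires essentially only when a feature $j$ with $|\langle \bm{w}_i,\bm{M}_j\rangle|\gtrsim b_i$ is present. The terms then split as follows.
\begin{itemize}
\item The positive-pair term has $1-\ell'_p=\Theta(1)$, and $\mathrm{StopGrad}\,h_i(\bm{Y})$ shares support and sign with $\bm{X}$ by Assumption \ref{Ass: pn}. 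It contributes $+\eta\,\Theta(\epsilon_j\log\log d/d)\,\langle \bm{w}_i,\bm{M}_j\rangle$ times the activation-dependent factor $1/\tau$.
\item The negative terms involve independent latents. Their mean vanishes by the symmetry in Assumption \ref{Ass: Latent}, and their fluctuation is of lower order.
\end{itemize}
This yields $\langle \bm{w}_i^{(t+1)},\bm{M}_j\rangle \approx (1-\eta\lambda+\eta\,\Theta(\epsilon_j\log\log d/(d\tau)))\,\langle \bm{w}_i^{(t)},\bm{M}_j\rangle$ whenever neuron $i$ is activated by feature $j$, and only decay otherwise.

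\textbf{Step 2: part (a) and the count of lucky neurons.} For a lucky neuron, defined via Appendix \ref{appendix:B} as one whose projection onto $\bm{M}_j$ at $T_1$ exceeds its projection onto all other features by a margin, only feature $j$ crosses the threshold. Iterating the recursion over the $T_2-T_1=\Theta(d\tau\log d/(\epsilon_{\max}\eta\log\log d))$ steps gives growth $\exp(\Theta(\epsilon_j\log d/\epsilon_{\max}))$. Normalizing against the total norm, which is controlled by $\lambda$ and by the fastest direction $\epsilon_{\max}$, gives $|\langle \bm{w}_i^{(T_2)},\bm{M}_j\rangle|^2\ge 2(\epsilon_j/\epsilon_{\max})\|\bm{w}_i^{(T_1)}\|_2^2$.

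For $|\mathcal{M}_j^\star|$, I would use Gaussian initialization. After Stage 1, Lemma \ref{Theorem 1} rescales $\langle \bm{w}_i,\bm{M}_j\rangle$ by $\exp(\epsilon_j\cdot\mathrm{const}\cdot\log d)$, i.e.\ by $d^{c\epsilon_j}$. A neuron is lucky for $j$ if its Gaussian coordinate $g_j$ satisfies $g_j\, d^{c\epsilon_j}> \max_{k\ne j} g_k\, d^{c\epsilon_{\max}}$. Since $\max_k g_k\approx\sqrt{2\log d}$, this requires $g_j\gtrsim (\epsilon_{\max}/\epsilon_{\min})\sqrt{2\log d}$. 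That event has probability $d^{-(\epsilon_{\max}/\epsilon_{\min})^2}$. A Chernoff bound over the $m$ neurons then yields the stated count.

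\textbf{Step 3: part (b).} For an ordinary neuron $i'\in\mathcal{M}_j$, its multiplicative factor in direction $j$ is the same as the lucky neuron's, or smaller, because sharing activation with other features only dilutes the $\operatorname{BReLU}$ gain. Its initial value at $T_1$ is within a constant of the lucky neurons' by Gaussian concentration of the top order statistics. A comparison induction then gives the $O(\cdot)$ bound.

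\textbf{Main obstacle.} I expect the main obstacle to be controlling the softmax logits $\ell'_p,\ell'_s$ and the attention deviation uniformly across Stage 2. Once features grow, similarities are no longer $O(1)$. I would first try to show that $\|\bm{f}\|$ stays $O(\sqrt{\tau})$ until $T_2$, so that $1-\ell'_p=\Theta(1)$ holds throughout.
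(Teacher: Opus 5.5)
Your part (b) is essentially the paper's argument for Theorem \ref{Theorem D.1}(c). All neurons grow along $\bm{M}_j$ at most at a common rate, up to a $1\pm 1/\operatorname{polylog}(d)$ relative error, and lucky neurons grow at least at that rate. The initial ratio is $O(1)$ because every $|\langle\bm{w}_{i'}^{(0)},\bm{M}_j\rangle|=O(\sigma_0\sqrt{\log d})$, while lucky neurons start above $\sigma_0\sqrt{c_1\log d}$. You only need to state that the relative rate error is small enough not to compound over Stage 2.

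The genuine gap is the last sentence of your Step 2. Multiplicative growth alone cannot produce $2\frac{\epsilon_j}{\epsilon_{\max}}\|\bm{w}_i^{(T_1)}\|_2^2$, for the following reason.
\begin{itemize}
\item A lucky neuron enters Stage 2 with $|\langle\bm{w}_i^{(T_1)},\bm{M}_j\rangle|$ only of order $\sqrt{\log d/d}\,\|\bm{w}_i^{(T_1)}\|_2$ (Theorem \ref{Theorem C.1}(b)).
\item Your factor is $\exp(\Theta(\epsilon_j\log d/\epsilon_{\max}))=d^{c\,\epsilon_j/\epsilon_{\max}}$ with $c\approx 1$. This is forced because $T_2$ is the first time a majority neuron's squared norm reaches $d$ times its $T_1$ value.
\item The product is $\widetilde{\Theta}(d^{2c\epsilon_j/\epsilon_{\max}-1})\|\bm{w}_i^{(T_1)}\|_2^2$. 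This is $o(\epsilon_j/\epsilon_{\max})\|\bm{w}_i^{(T_1)}\|_2^2$ whenever $\epsilon_j<\epsilon_{\max}/2$, so ``normalizing'' cannot rescue it.
\end{itemize}

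The paper argues instead by norm bookkeeping, \eqref{eqn: T_temp}. Set $x=\epsilon_j/\epsilon_{\max}$ and take a time $T'_{i,1}\le T_2$ at which $\|\bm{w}_i\|_2\ge(1+x)\|\bm{w}_i^{(T_1)}\|_2$. By then the squared mass on all $\bm{M}_{j'}$ ($j'\ne j$) and on $\bm{M}^\perp$ has grown only by a factor $1+o(1)$. This holds because those directions sit below the bias: the per-step factor is $1+\epsilon_{j'}O(\eta)/(d\operatorname{polylog}(d))$, plus additive noise gains of size $\widetilde{O}(\eta/d)\,e^{-\Omega(\log^{1/4}d)}\|\bm{w}_i\|_2^2$. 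Subtracting gives $\langle\bm{w}_i,\bm{M}_j\rangle^2\ge[(1+x)^2-1-o(x)]\|\bm{w}_i^{(T_1)}\|_2^2\ge(2-o(1))x\|\bm{w}_i^{(T_1)}\|_2^2$. That is where the factor $2$ and the ratio come from.

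Your plan has neither ingredient of this argument:
\begin{itemize}
\item there is no control of the complementary mass;
\item there is no argument that the norm reaches $(1+x)\|\bm{w}_i^{(T_1)}\|_2$ before $T_2$. The paper takes this from the Stage-2 analysis of Lemma \ref{Lemma D.1}. It requires an $\widetilde{\Omega}(\sqrt{d})$ amplitude gain on $\bm{M}_j$, which your own rate count does not deliver for $\epsilon_j<\epsilon_{\max}/2$.
\end{itemize}

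Two supporting steps are also off.

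First, your induction hypothesis misreads Lemma \ref{Theorem 1}. That lemma bounds the per-step \emph{increments} along $\bm{M}^\perp$ by $\widetilde{O}(\eta\|\bm{w}_i^{(t)}\|_2/\operatorname{poly}(d_1))$, not the components themselves. At $T_1$ up to half of $\|\bm{w}_i\|_2^2$ may lie in $\bm{M}^\perp$ (Theorem \ref{Theorem C.1}(a)), and most of a lucky neuron's feature mass sits on $j'\ne j$. That complementary mass is exactly what must be shown not to grow. Likewise, your claim that BReLU fires only on feature $j$ requires controlling the noise and cross-feature pre-activations against $b_i$, with bounds of the type in Lemma \ref{Lemma E.8}. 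It does not follow from a $\operatorname{poly}(d_1)$-smallness that does not hold.

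Second, the lucky-count heuristic is inconsistent. The condition $g_j d^{c\epsilon_j}>\max_{k\ne j}g_k\,d^{c\epsilon_{\max}}$ forces $g_j\gtrsim d^{c(\epsilon_{\max}-\epsilon_j)}\sqrt{2\log d}$, a polynomial factor rather than $\epsilon_{\max}/\epsilon_{\min}$. That event has probability $\exp(-d^{\Omega(1)})$ and would leave no minority lucky neurons. No dynamics are needed here. $\mathcal{M}_j^\star$ is defined at $t=0$ with threshold $c_1=2(1+\gamma)(\epsilon_{\max}/\epsilon_{\min})^2$, and the count is Lemma \ref{Lemma B.2}(c): the Gaussian tail $d^{-c_1/2}$ plus a Chernoff bound over the $m$ neurons, which is the computation you end with anyway.
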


Theorem \ref{Theorem 3} establishes the convergence of the algorithm.
In particular, \eqref{eq:Theorem 3 (1)} shows that the algorithm converges with bounded training error. Moreover, \eqref{eq:Theorem 3 (2)} characterizes the structure of the learned neuron weights: upon convergence, they become strongly aligned with a subset of features within $\mathcal{N}_j$, weakly aligned with the remaining features, and remain small in the non-feature directions. The size of $\mathcal{N}_j$ is bounded as in \eqref{eq:Theorem 3 (3)}, and only a limited number of neurons specialize in learning a single feature. \pcom{We can find the Proof of Theorem 3.1 in Appendix \ref{appendix:E}.4}.

\begin{theorem}[\textbf{Stage 3: Convergence}]
\label{Theorem 3}
Let $m = d^{C_m}$ be the number of neurons and $\tau = \mathrm{polylog}(d)$, \pcom{where $C_m$ denotes positive constants and $\Xi_2 = d^{\,C_m - \left( \tfrac{\epsilon_{\min}}{\epsilon_{\max}} \right)^2}$}. Suppose we train the neural net $f_{\bm{\theta}}$ via contrastive learning, and consider iterations 
$T \in [T_3, T_4]$ with $T_3 = \tfrac{d^{1.01}}{\eta}$ and $T_4 = \tfrac{d^{1.99}}{\eta}$. 
Then the following guarantees hold:  
\begin{equation}
\label{eq:Theorem 3 (1)}
    \frac{1}{T} \sum_{t \in [T]} L_{\mathrm{aug}}(f_{\bm{\theta}^{(t)}}) \le o(1)
\end{equation}
Moreover, for each neuron $i \in [m]$ and $t \in [T_3, T_4]$, the weight will learn the following set
of features:
\begin{equation}
\label{eq:Theorem 3 (2)}
    \bm{w}_{i}^{(t)} = \sum_{j \in \mathcal{N}_i}  \alpha_{i,j} \bm{M}_{j} + \sum_{j \notin \mathcal{N}_i} \alpha'_{i,j} \bm{M}_{j} + \sum_{j \in [d_1] \setminus [d]} \beta_{i,j} \bm{M}_{j}^\perp,
\end{equation}
where 
\begin{equation}
    \alpha_{i,j} \in \Big[ \tfrac{\epsilon_j}{\epsilon_{\max}} \tfrac{\tau}{\Xi_2}, \; \tfrac{\epsilon_j}{\epsilon_{\max}} \tau \Big], \alpha'_{i,j} \le o\!\Big( \tfrac{\epsilon_j}{\epsilon_{\max}}\tfrac{1}{\sqrt{d}} \Big) \|\bm{w}_{i}^{(t)}\|_2, |\beta_{i,j}| \le o\!\Big( \tfrac{1}{\sqrt{d_1}} \Big) \|\bm{w}_{i}^{(t)}\|_2.
\end{equation}

Furthermore, the size of $\mathcal{N}_i$ is bounded as
\begin{equation}\label{eq:Theorem 3 (3)}
    |\mathcal{N}_i| = O\!\left(d^{1 -\left(\frac{\epsilon_{\min}}{\epsilon_{\max}} \right)^2}\right).
\end{equation}
Finally, for each $\bm{M}_{j}$, there are at least $\Omega(m\cdot d^{-(\frac{\varepsilon_{\max}}{\varepsilon_{\min}})^2})$ neurons $i \in [m]$ such that $\mathcal{N}_i = \{j\}$.
\end{theorem}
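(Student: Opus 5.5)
We start from the state at $T_2$ supplied by Lemma~\ref{Theorem 1} and Lemma~\ref{Theorem 2}. Lemma~\ref{Theorem 1} gives two facts on $[0,T_1]$. Every feature coordinate $\langle \bm{w}_i,\bm{M}_j\rangle$ grows geometrically at rate $1+\epsilon_j\eta C_z\log\log d/d-\eta\lambda$. Every non-feature coordinate is contracted by $(1-\eta\lambda)$ up to an $\widetilde{O}(\eta\|\bm{w}_i\|_2/\mathrm{poly}(d_1))$ error. Lemma~\ref{Theorem 2} gives two further facts at $T_2$. Lucky neurons in $\mathcal{M}_j^\star$ carry a $\sqrt{2\epsilon_j/\epsilon_{\max}}$ fraction of their norm on $\bm{M}_j$, and there are at least $m d^{-(\epsilon_{\max}/\epsilon_{\min})^2}$ of them. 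Ordinary neurons are dominated by lucky ones up to constants. The plan is to carry these invariants through the window $[T_3,T_4]$ by induction on $t$, and to derive \eqref{eq:Theorem 3 (1)} from an online-convex-optimization style regret argument.

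\textbf{Step 1 (maintaining the decomposition).} We write $\bm{w}_i^{(t)}$ in the basis $\{\bm{M}_j\}\cup\{\bm{M}_j^\perp\}$ and define
\[
\mathcal{N}_i=\Big\{j:|\langle \bm{w}_i^{(t)},\bm{M}_j\rangle|\ge \tfrac{\epsilon_j}{\epsilon_{\max}}\tfrac{\tau}{\Xi_2}\Big\}.
\]
We induct on $t$ with the hypothesis that the three coefficient bounds hold. For the noise coefficients $\beta_{i,j}$, we reuse the Lemma~\ref{Theorem 1}-type computation: the gradient projected on $\bm{M}_j^\perp$ involves only Gaussian noise, which is independent across positive and negative samples. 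Its contribution therefore averages to $\widetilde{O}(\|\bm{w}_i\|_2/\mathrm{poly}(d_1))$ over $K=\mathrm{poly}(d_1)$ samples, and weight decay keeps $|\beta_{i,j}|\le o(1/\sqrt{d_1})\|\bm{w}_i\|_2$.

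For the feature coefficients, the gradient along $\bm{M}_j$ has two parts. The positive term has size $\approx(1-\ell'_p)\epsilon_j\frac{\log\log d}{d}\,h_i(\bm{Y})\cdot(\cdot)$. The negative terms push against it, with sizes proportional to $\ell'_s$. Once the similarity of positive pairs reaches order $\tau$, the factor $1-\ell'_p$ becomes small. The balance between growth and decay $\lambda$ then saturates $\alpha_{i,j}$ at $O(\tfrac{\epsilon_j}{\epsilon_{\max}}\tau)$, which gives the upper bound. The lower bound $\tau/\Xi_2$ comes from the fact that a neuron shares mass among at most $|\mathcal{N}_i|$ features. For coordinates $j\notin\mathcal{N}_i$, the BReLU threshold $b_i^{(t)}$, chosen at the scale of the typical off-feature inner product, keeps such features below the activation level. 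This gives them only second-order gradient and yields $\alpha'_{i,j}=o(\tfrac{\epsilon_j}{\epsilon_{\max}}d^{-1/2})\|\bm{w}_i\|_2$.

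\textbf{Step 2 (size of $\mathcal{N}_i$ and pure neurons).} At the end of Stage 1, coordinate $j$ has been multiplied by roughly $\exp(\epsilon_j c\log d/\epsilon_{\max})$ relative to the fastest coordinate. The initial Gaussian coordinates have a $\sqrt{2\log d}$-type maximum. Feature $j$ enters $\mathcal{N}_i$ only if its initial value exceeds the threshold set by the leading coordinate, rescaled by the ratio $\epsilon_{\min}/\epsilon_{\max}$. A Gaussian tail computation gives the probability $d^{-(\epsilon_{\min}/\epsilon_{\max})^2}$ of lying below that threshold. Taking the complement and summing over $d$ features gives $|\mathcal{N}_i|=O(d^{1-(\epsilon_{\min}/\epsilon_{\max})^2})$, which is \eqref{eq:Theorem 3 (3)}. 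The count of neurons with $\mathcal{N}_i=\{j\}$ is inherited from $|\mathcal{M}_j^\star|$ in Lemma~\ref{Theorem 2}(a). We only need to check that lucky neurons stay pure on $[T_2,T_4]$, which follows from the induction in Step~1, since other coordinates cannot cross the threshold within $d^{1.99}/\eta$ steps.

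\textbf{Step 3 (loss).} We compare against a reference network $\bm{\theta}^\star$ built from the pure neurons, scaled so that positive similarities exceed negative ones by $\omega(\tau\log K)$; this makes $\ell(\bm{\theta}^\star)=o(1)$. Since StopGrad makes the per-step objective linear/convex in the current weights given the targets, the standard inequality
\[
\tfrac{1}{T}\sum_t L_{\mathrm{aug}}(\bm{\theta}^{(t)})-L_{\mathrm{aug}}(\bm{\theta}^\star)\le \frac{\|\bm{\theta}^{(T_3)}-\bm{\theta}^\star\|^2}{\eta T}+\eta G^2
\]
applies. With $T\ge d^{1.01}/\eta$ and the norm bounds from Step~1, this yields $o(1)$.

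The main obstacle is Step~1's control of the negative-sample terms during saturation. The logits $\ell'_s$ couple all neurons, and ordinary neurons mixing many features could create correlated negatives that spoil the $\alpha'$ bound. I would first try to bound $\sum_s\ell'_s h_i(\bm{X}_{n,s})$ using the independence of negative latents together with sparsity $\Theta(\log\log d/d)$, showing that this cross term is smaller than the positive term by a factor $d^{-\Omega(1)}$ throughout the window.
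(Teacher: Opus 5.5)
Your overall architecture matches the paper's: Stage-3 invariants carried by induction, counting from initialization, and a regret argument against a comparator built from lucky neurons. However, the loss step and the feature bookkeeping both have genuine gaps.

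\textbf{The loss step.} The update is a StopGrad semi-gradient of a BReLU network, so it is not gradient descent on $L_{\mathrm{aug}}$. Even with the StopGrad branch frozen at $f_t$, the per-sample loss is non-convex in $\bm{\theta}$, because the activation pattern moves with $\bm{\theta}$. Hence the inequality bounding $\frac1T\sum_t L_{\mathrm{aug}}(\bm{\theta}^{(t)})-L_{\mathrm{aug}}(\bm{\theta}^\star)$ by a regret term has no justification. Also, a fixed value $\ell(\bm{\theta}^\star)$ computed with $\bm{\theta}^\star$ on both branches is not the quantity that appears. The missing device is the time-varying surrogate built from the pseudo-network of Definition~\ref{Definition E.2}. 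Freeze the StopGrad branch at $f_t$ and the indicators at $(\bm{w}_i^{(t)},\bm{b}_i^{(t)})$. Then $f_{t,\theta}$ is affine in $\theta$, so $\widetilde{L}_{\mathrm{aug}_t}$ is convex and agrees with $L_{\mathrm{aug}}(f_t)$ in value and gradient at $\theta=\bm{w}^{(t)}$. Online gradient descent then bounds $\frac1T\sum_t L_{\mathrm{aug}}(f_t)$ by $\frac1T\sum_t \widetilde{L}_{\mathrm{aug}_t}(\theta^\star)$ plus regret. The comparator must therefore be small against the \emph{current} target at every $t$, which is Lemma~\ref{Lemma E.6}. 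That lemma needs the lucky neurons' frozen patterns to fire exactly on their feature, and it needs $\sum_{i\in\mathcal{M}_j^\star}|\langle \bm{w}_i^{(t)},\bm{M}_j\rangle|\ge\Omega(\sqrt{\tau}/\Xi_2)$, with $\kappa=\Theta(\Xi_2)$ and $\lambda\|\theta^\star\|_F^2$ negligible. So the loss bound consumes Theorem~\ref{Theorem E.1} at every iteration of the window; it is not an independent step. Separately, making the regret term $o(1)$ requires $\|\bm{\theta}^{(T_3)}-\theta^\star\|_F^2=o(\eta T)$. This is a condition involving $m$ and $\kappa$ (the paper asserts $T\eta\ge m\Xi_2^{10}$), and it does not follow from $T\ge d^{1.01}/\eta$ alone.

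\textbf{The feature structure.} The paper defines $\mathcal{N}_i=\{j: i\in\mathcal{M}_j\}$ from the initialization threshold $c_2$ (Lemma~\ref{Lemma E.1}), not by a time-dependent threshold. This choice causes several problems in your Steps 1--2.
\begin{itemize}
\item With your definition, the lower bound on $\alpha_{i,j}$ is vacuous.
\item Step~2 counts a different, initialization-defined set.
\item Your threshold argument for $j\notin\mathcal{N}_i$ only works if anchored to the initialization classification. What must be propagated is that $i\notin\mathcal{M}_j$ implies $|\langle \bm{w}_i^{(t)},\bm{M}_j\rangle|\le\sqrt{1-\gamma}\,\bm{b}_i^{(t)}$ for all $t$. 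This holds because the bias grows at rate at least $1+\eta/d$, while such a coordinate grows at rate at most $1+\epsilon_j O(\eta)/(d\,\mathrm{polylog}(d))$ plus small leakage (Lemma~\ref{Lemma D.1}(b), then Theorem~\ref{Theorem E.1}(e)).
\item Your explanation of the $\tau/\Xi_2$ lower bound uses the wrong quantity. $\Xi_2$ bounds $|\mathcal{M}_j|$, the number of neurons per feature (Lemma~\ref{Lemma B.2}(c)), not $|\mathcal{N}_i|$. The range of $\alpha_{i,j}$ comes from the per-feature mass $\mathfrak{F}_j^{(t)}=\Theta((\epsilon_j/\epsilon_{\max})^2\tau\log^3 d)$. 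This mass grows below threshold (Lemma~\ref{Lemma E.2}) and decays above it (Lemma~\ref{Lemma E.3}). It is spread over at most $\Xi_2$ neurons, with lucky neurons within a constant of the maximum (Theorem~\ref{Theorem E.1}(c),(d)).
\item Your Gaussian count is inverted. $d^{-(\epsilon_{\min}/\epsilon_{\max})^2(1-\gamma)}$ is the probability that $\langle \bm{w}_i^{(0)},\bm{M}_j\rangle/\sigma_0$ \emph{exceeds} $\sqrt{c_2\log d}$, i.e.\ that $j\in\mathcal{N}_i$. It must be summed directly, giving $\mathbb{E}|\mathcal{N}_i|=\frac1m\sum_j|\mathcal{M}_j|$, followed by Bernstein and a union bound over $i\in[m]$. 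Taking the complement would give order $d$.
\end{itemize}
With the initialization-based definition, the pure-neuron count is immediate, since a lucky neuron for $j$ fails the $c_2$ test for every $j'\neq j$.

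Finally, the noise coefficients cannot be handled by reusing the Stage-1 computation. Once $\bm{b}_i^{(t)}>0$, the indicator depends on $\langle \bm{w}_i,\tilde{\xi}\rangle$, which is correlated with $\langle \bm{M}_j^\perp,\tilde{\xi}\rangle$. So the population gradient along $\bm{M}_j^\perp$ is nonzero and must be bounded as in Lemmas~\ref{Lemma E.11} and~\ref{Lemma E.4}, not by sampling error.
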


\textbf{Remark 1}: For a neuron $\bm{w}_i$, its convergent weights are aligned with a subset of features $\mathcal{N}_i$. In contrast, all other feature directions are smaller by an order of $\tfrac{1}{\sqrt{d}}$. Hence, we can say that neuron $\bm{w}_i$ is dominated by the features in $\mathcal{N}_i$. Moreover, the neurons associated with learning feature $j$ are influenced by the frequency of that feature, which intuitively explains how imbalance shapes the distribution of neuron weights.

\textbf{Remark 2:} We emphasize that the success of contrastive learning relies on neurons that specialize in a single feature, referred to as lucky neurons, i.e., $\cup_j \mathcal{M}_{j}^\star$. In contrast, neurons that learn mixtures of features are useful only for a limited subset of downstream tasks. The number of lucky neurons for each feature is lower bounded by $m \cdot d^{-(\frac{\varepsilon_{\max}}{\varepsilon_{\min}})^2}$, as derived from \eqref{eq:Theorem 2 (1)}. Consequently, beyond the reduced neuron magnitude in minority feature directions, imbalance also decreases the number of neurons that learn purified features. This, in turn, requires a more complex model with a larger number of neurons to capture all features, leading to higher computational cost. Moreover, the upper bound of $|\mathcal{N}_i|$ increases as the ratio $\tfrac{\varepsilon_{\min}}{\varepsilon_{\max}}$ decreases, which is undesirable because it indicates that more neurons learn mixtures of features rather than pure ones.

{\color{black}\textbf{Remark 3:}
Theorem~\ref{Theorem 3} shows that each underlying semantic feature is captured cleanly by a subset of lucky neurons. When upstream contrastive learning produces a representation in which all semantic features are encoded in pure and separable directions, the resulting feature space becomes highly structured: it contains explicit axes corresponding to every true feature. If a downstream task relies on any subset of these features, a linear probe (or any simple classifier) can easily extract them because the corresponding feature directions are directly represented by the lucky neurons. In this sense, stronger neuron specialization leads to better linear separability and, consequently, improved downstream generalization.}

\subsubsection{Contrastive Learning with Pruning}

Theorem~\ref{Theorem 4} describes the training dynamics in the pruning setting, serving as the counterpart to the earlier result obtained without pruning. To highlight the effect more clearly, we focus on stage~3. In particular, pruning amplifies the learning of minority features: (a) for lucky neurons aligned with minority directions, the neuron weights increase in that direction at the order of $\tfrac{\alpha}{d}$, where $\alpha$ is the pruning ratio. (b) In contrast, neurons associated with non-minority features exhibit much smaller growth, with updates in those directions on the order of $\tfrac{\alpha}{d^2}$ per iteration. (c) Most importantly,  when training converges, the coefficients $\alpha_{i,j^\star}$, projecting neuron weights onto the minority feature $\bm{M}_{j^\star}$, become dominant and independent of the ratio $\frac{\varepsilon_{\min}}{\varepsilon_{\max}}$. \pcom{We can find the Proof of Theorem 3.2 in Appendix \ref{appendix:F}.4}.

\begin{theorem}[\textbf{Pruning: Reinforcing Minority Feature Learning}]
\label{Theorem 4}
With pruning ratio $\alpha$, the following statements hold:

(a) When $i^{\star} \in \mathcal{M}^{\star}_{j^\star}$, we have
\begin{equation}
\label{eq:Theorem 4 (1)}
\begin{aligned}
\langle \bm{w}_{i^\star}^{(t+1)}, \bm{M}_{j^\star}\rangle \geq \left(1-\eta\lambda + \Omega\left(\eta \epsilon_{j^\star}^2\alpha\,\frac{C_z\log\log d}{d}\right)  \right)\langle \bm{w}_{i^\star}^{(t)}, \bm{M}_{j^\star}\rangle.
\end{aligned}
\end{equation}

(b) When $\forall i$ and $j \neq j^{\star}$, we have
\begin{equation}
\label{eq:Theorem 4 (2)}
\begin{aligned}
    \langle \bm{w}_i^{(t+1)}, \bm{M}_{j}\rangle \leq \left(1+   O\left(\eta \epsilon_{j^\star}^3\alpha\,\frac{C_z\log\log d}{d^2}\right) \right)\langle \bm{w}_i^{(t)}, \bm{M}_{j}\rangle.
 \end{aligned}
\end{equation}

(c) For neuron $i \in \mathcal{M}^\star_{j^\star}$ and $t=T_5$, contrastive learning learns the following decomposition:
\begin{equation}
\label{eq:Theorem 4 (3)}
    \bm{w}_i^{(t)} = \alpha_{i,j^\star} \bm{M}_{j^\star} + \sum_{j \notin \mathcal{N}_i} \alpha'_{i,j} \bm{M}_j + \sum_{j \in [d_1] \setminus [d]} \beta_{i,j} \bm{M}_j^\perp,
\end{equation}
where
\begin{equation}
    \alpha_{i,j^\star} \in \bigg[\frac{\tau}{\Xi_2}, \tau\;\bigg],\quad
    \alpha'_{i,j} \le o\!\left(\big(1+\frac{1}{d} \big)\cdot\frac{1}{\sqrt{d}}\right)\|\bm{w}_i^{(t)}\|_2,\quad|\beta_{i,j}| \le o\!\left(\frac{1}{\sqrt{d_1}}\right)\|\bm{w}_i^{(t)}\|_2.
\end{equation}  
Finally, for feature $\bm{M}_{j^\star}$, there are at least $\Omega(m\cdot d^{-1})$ neurons $i \in [m]$ such that $\mathcal{N}_i = \{j^\star\}$.
\end{theorem}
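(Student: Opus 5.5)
The plan is to rerun the Stage~1--3 machinery of Lemma~\ref{Theorem 1}, Lemma~\ref{Theorem 2} and Theorem~\ref{Theorem 3}, with one change. The gradient \eqref{eqn: pruning_gradient} is evaluated at the masked parameters $\bm{\theta}^{(t)}_{\mathrm{mk}}$, which enters the computation only through the logits $\ell'_{p}$ and $\ell'_{s}$ and through the forward outputs $h_i(\cdot)$.

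\textbf{Step 1: who gets pruned.} Theorem~\ref{Theorem 3} gives $\alpha_{i,j}\in[\tfrac{\epsilon_j}{\epsilon_{\max}}\tfrac{\tau}{\Xi_2},\tfrac{\epsilon_j}{\epsilon_{\max}}\tau]$, so neurons dominated by $\bm{M}_{j^\star}$ (those with $\mathcal{N}_i\ni j^\star$, in particular $\mathcal{M}^\star_{j^\star}$) have the smallest norms. For a pruning ratio $\alpha$ that is at least the fraction of such neurons, the mask therefore removes them from the forward pass, while neurons carrying majority features survive. Consequently, for a sample whose support contains $j^\star$, the masked embedding has almost no component explaining $j^\star$. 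Positive and negative pairs sharing only $j^\star$ then become nearly indistinguishable. The positive logit $\ell'_{p,\bm{\theta}_{\mathrm{mk}}}$ stays bounded away from $1$, so $1-\ell'_p=\Omega(\alpha)$ on these samples, instead of the $o(1)$ it would be at convergence without pruning. On samples without $j^\star$, the loss is already small by \eqref{eq:Theorem 3 (1)}, so $1-\ell'_p$ remains $o(1)$.

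\textbf{Step 2: part (a).} Project the update onto $\bm{M}_{j^\star}$ for $i^\star\in\mathcal{M}^\star_{j^\star}$. The backward pass uses the full weight, so $\nabla h_{i^\star}(\bm{X}_k)$ is as in the Stage~1 computation. Only samples containing $j^\star$ contribute, which happens with probability $\Theta(\epsilon_{j^\star}\log\log d/d)$. Each such sample carries a multiplier $(1-\ell'_p)h_{i^\star}(\bm{Y}_k)$, and I will bound the relevant part of it below by $\Omega(\epsilon_{j^\star}\alpha)\langle\bm{w}_{i^\star},\bm{M}_{j^\star}\rangle$. This uses sign consistency (Assumption~\ref{Ass: Latent}) and Assumption~\ref{Ass: pn}, so that $h(\bm{Y}_k)$ and $h(\bm{X}_k)$ agree in sign along $\bm{M}_{j^\star}$. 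The negative-sample terms are controlled exactly as in Lemma~\ref{Theorem 1}, since negative latents are independent. The noise and cross terms are $\widetilde O(\eta\|\bm{w}\|/\mathrm{poly}(d_1))$ and are absorbed. Multiplying out gives \eqref{eq:Theorem 4 (1)} with factor $\epsilon_{j^\star}^2\alpha\log\log d/d$.

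\textbf{Step 3: part (b).} For $j\neq j^\star$, a pruning-induced gradient appears only when a sample contains both $j$ and $j^\star$. That costs an extra factor $\epsilon_{j^\star}\log\log d/d$, giving the $\epsilon_{j^\star}^3\alpha/d^2$ rate. Otherwise the loss is already $o(1)$ and weight decay dominates, which is the Stage~3 balance.

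\textbf{Step 4: part (c).} Iterate (a) until the growth is balanced by $\eta\lambda$ and the $\tau$-saturation of the softmax, which happens at $t=T_5$. This yields $\alpha_{i,j^\star}\in[\tau/\Xi_2,\tau]$, with the $\epsilon_{j^\star}/\epsilon_{\max}$ factor now cancelled. Iterating (b) over $T_5=O(d^{2-\delta}/\eta)$ steps inflates the off-feature coefficients by at most $(1+1/d)$, which gives the stated $\alpha'$ bound; the $\beta$ bound carries over from Lemma~\ref{Theorem 1}. For the count, the lucky-neuron set in Lemma~\ref{Theorem 2} was limited by the growth-rate gap $\epsilon_{\max}/\epsilon_{\min}$. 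With the effective rate for $j^\star$ raised to the majority order, the same Gaussian-initialization anti-concentration argument gives the exponent $1$, i.e.\ $\Omega(m d^{-1})$ neurons.

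\textbf{Main obstacle.} The hardest part is Step~1. I need to show that the mask selects exactly the minority neurons, and that this selection stays stable across iterations even as those neurons grow (a self-consistency or induction argument). I also need a quantitative lower bound on $1-\ell'_p$ for minority samples under the masked model. I would first try an induction hypothesis that minority norms remain below the $\alpha$-quantile until time $T_5$.
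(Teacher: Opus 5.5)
Your Step~1, which you rightly flag as the crux, is asserted rather than proved, and the mechanism you sketch cannot produce it. Masking \emph{all} neurons that carry $j^\star$ does not make positive and negative pairs indistinguishable. A sample containing $j^\star$ typically has $\Theta(\log\log d)$ other active features per token, and these still separate the positive from the negatives, so there is no reason for $1-\ell'_p$ to be $\Omega(\alpha)$ on such samples. Moreover, once every $j^\star$-neuron is masked, increasing $\alpha$ only removes non-minority neurons, so nothing in this picture is linear in $\alpha$.

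Your regime, ``$\alpha$ at least the fraction of minority neurons'', also undermines part~(b). The additional masked neurons carry some other feature $j'$, and by your own mechanism the neurons aligned with $j'$ would receive the same enhanced update. The paper works in the opposite regime, $\alpha m\le|\mathcal{M}^\star_{j^\star}|$, with a fixed pruned set $\mathcal{P}\subseteq\mathcal{M}^\star_{j^\star}$ from $T_4$ on; it does not attempt the mask-stability induction you propose. Likewise, the extra factor $\epsilon_{j^\star}$ in your Step~2 multiplier $\Omega(\epsilon_{j^\star}\alpha)$ is chosen to match the statement, not derived.

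The missing idea is a first-order perturbation of the softmax in the similarities. Pruning $\mathcal{P}$ changes $u_p=\mathrm{Sim}(\bm{X}_n,\bm{Y}_n)$ by $\Delta u_p=-\sum_{i\in\mathcal{P}}h_i(\bm{X}_n)h_i(\bm{Y}_n)$, and each $u_s$ by $\Delta u_s=-\sum_{i\in\mathcal{P}}h_i(\bm{X}_n)h_i(\bm{X}_{n,s})$. The $j^\star$-latents of a positive pair are positively correlated, while those of a negative are independent. Hence $\mathbb{E}[-\Delta u_p]\ge\Omega(\alpha m\,\tfrac{\tau^2}{\Xi_2^2}\,\epsilon_{j^\star}\tfrac{\log\log d}{d})$ and $\mathbb{E}[\Delta u_s]=0$ (Lemma~\ref{Lemma F.1}). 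Now use $\partial\ell'_p/\partial u_p=\ell'_p(1-\ell'_p)/\tau$, $\partial\ell'_p/\partial u_s=-\ell'_p\ell'_s/\tau$, and the converged value $\ell'_p=1-\Theta(1/\tau)$. This gives $\mathbb{E}[1-\ell'_{p,\bm{\theta}_{\mathrm{mk}}}]\ge\Theta(1/\tau)+\Omega(\tfrac{\alpha m}{\Xi_2^2}\,\epsilon_{j^\star}\tfrac{\log\log d}{d})$ (Lemma~\ref{Lemma F.2}).

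This computation is the source of the linear dependence on the number of pruned neurons and of one factor of $\epsilon_{j^\star}$. The other factor is $\mathbb{E}[\hat{\bm{z}}^{+}_{n,j^\star}\hat{\bm{z}}_{n,j^\star}]$ in the linear-regime gradient identity of Lemma~\ref{Lemma F.3}. The paper then splits samples into those containing $j^\star$ and those not, as in your Steps~2--3. It factors $\mathbb{E}[(1-\ell'_p)\,h_{i^\star}(\bm{Y}_n)\langle\nabla h_{i^\star}(\bm{X}_n),\bm{M}_{j^\star}\rangle]$ into a product by treating the logits as fixed. Your sign-consistency argument in Step~2 would instead need a per-sample lower bound on $1-\ell'_p$, which this expectation-level computation does not provide.
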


\textbf{Remark 1:} We would like to clarify two implicit assumptions underlying the results. First, the pruning ratio is implicitly upper bounded by $|\mathcal{M}_{j^\star}|$, so that under magnitude-based pruning, we can guarantee that all pruned neurons are those aligned with the minority feature $\bm{M}_{j^\star}$. In practice, however, the pruning ratio can be extended to include any neurons that have learned minority features, i.e., any $i$ with $j \in \mathcal{N}_i$. Second, we assume that the magnitude of all non-minority features is comparable. Intuitively, in the general case, neurons associated with the minority feature grow until their magnitude reaches the level of the second-smallest feature. At that point, both the original minority feature and the second-smallest feature effectively become the new minority features, and the process continues inductively across features. A detailed analysis of this extension is omitted for simplicity, so that we can prove and present the pruning benefits in a clear manner.

\textbf{Remark 2:} The difference between neurons learning minority features and those learning majority features arises from their sensitivity to pruning. As shown in Theorem~\ref{Theorem 3}, the magnitude of a neuron is determined by its dominant feature and the frequency of that feature. For neurons in $\mathcal{M}_{j^\star}$ that specialize in purified minority features, their magnitudes are significantly smaller than those of other neurons and are therefore more likely to be pruned. This pruning effect results in relatively smaller positive logits and larger negative logits on samples containing the minority feature (see \eqref{eqn: pruning_gradient}), thereby increasing the influence of these samples on the gradient updates. Since features are assumed to be independent across the data, such samples have a low probability of simultaneously containing other features, resulting in a difference on the order of $1/d$ in the growth dynamics of these neurons.

\textbf{Remark 3:} Unlike in the vanilla learning paradigm, the magnitude of $\alpha_{i,j^\star}$ no longer depends on the ratio $\frac{\varepsilon_{\min}}{\varepsilon_{\max}}$, which suggests that the representation of the minority feature is not suppressed by data imbalance. Although the coefficients $\alpha'_{i,j}$ for other features may grow slightly due to the extended number of iterations required for convergence, their increase remains only on the order of $1/d$. Consequently, $\alpha_{i,j^\star} \gg \alpha'_{i,j}$, which suggests that the minority feature is strongly amplified and clearly distinguished from other contributions. This, in turn, drives more neurons to specialize in the purified minority feature, leading to more robust and effective representation learning.

\section{Numerical Experiments}

\textbf{Experiments on CIFAR10-LT, CIFAR100-LT, and ImageNet-LT.} Table \ref{table: real_world} reports the results of linear probe evaluation on CIFAR10-LT, CIFAR100-LT, and ImageNet-LT under long-tailed settings, comparing vanilla contrastive learning (w/o pruning) against our proposed approach (w/ pruning). Following the setup in \citep{jiang2021self,kang2020exploring,chen2020simple}, models are first pretrained and then evaluated using a linear probe, where a linear classifier is trained on frozen representations. The imbalance ratio, $\rho$, is defined as the ratio between the number of samples in the majority and minority classes, with larger values indicating more severe imbalance. Two evaluation metrics are considered: overall classification accuracy (\%) and the accuracy gap ($\Delta_{20}$) between the top 20\% head classes and the bottom 20\% tail classes. The results show that pruning consistently improves accuracy across all datasets, with improvements becoming more substantial as $\rho$ increases. Furthermore, pruning generally reduces $\Delta_{20}$, indicating better balance between head and tail classes. These results indicate that pruning not only enhances overall downstream task performance but also reduces the performance gap between head and tail classes.  We also provide additional synthetic data experiments to support our theoretical insights; due to space limitations, these results are deferred to Appendix \ref{sec: synthetic}.

\begin{table}[h]
\centering
\caption{Linear probe accuracy (\%) on CIFAR10-LT, CIFAR100-LT, and ImageNet-LT. 
$\Delta_{20}$ denotes the accuracy gap between the top 20\% head classes and bottom 20\% tail classes.}
\label{table: real_world}

\vspace{2mm} 
\begin{tabular}{lccccc}
\toprule
\multirow{2}{*}{\textbf{Dataset}} & \multirow{2}{*}{\textbf{$\rho$}} & 
\multicolumn{2}{c}{\textbf{Accuracy}} & 
\multicolumn{2}{c}{\boldmath{$\Delta_{20}$}} \\
\cmidrule(lr){3-4} \cmidrule(lr){5-6}
 &  & \textbf{w/o pruning} & \textbf{w/ pruning} & \textbf{w/o pruning} & \textbf{w/ pruning} \\
\midrule
CIFAR10-LT   & \pcom{1}   & $\pcom{90.93}$ & $\pcom{91.52} $ & $\pcom{1.54}$ & $\pcom{1.28}$ \\
& 10   & $79.25 \pm 1.03$ & $84.92 \pm 0.67$ & $3.42\pm1.02$ & $2.99\pm0.92$ \\
            & 50   & $75.58 \pm 0.84$ & $83.60 \pm 1.02$ & $3.92\pm1.21$ & $3.35\pm 0.76$ \\
            & 100  & $74.24 \pm 0.82$ & $81.31 \pm 0.94$ & $5.69\pm1.35$ & $5.62\pm0.99$ \\
\midrule
CIFAR100-LT & 10   & $51.21\pm 1.21$ & $56.33\pm1.51$ & $2.45\pm0.57$ & $1.37\pm0.46$ \\
            & 50   & $49.32\pm 0.45$ & $56.12\pm 0.32$ & $4.95\pm1.02$ & $2.57\pm0.92$ \\
            & 100  & $47.12\pm0.51$  & $54.93\pm0.50$  & $7.11\pm0.45$ & $4.38\pm0.22$ \\
\midrule
ImageNet-LT & 256  & $63.21$ & $65.12$ & $8.47$ & $7.21$ \\
\bottomrule
\end{tabular}
\end{table}
\vspace{-2mm}

\section{Limitation}
\pcom{Our work has two main limitations. The first concerns studying the pruning ratio and pruning scheme in magnitude-based pruning. Providing a fully precise characterization of how performance varies across different ratios and schemes is highly nontrivial, and doing so would require making more precise assumptions about the data distribution. This will be part of our future work. Furthermore, existing theoretical results in our feature learning framework focus on a single, simplified architectural setting. Extending the analysis to more complex or realistic models will be another direction for future work, and may require fundamentally different derivations and analytical tools.}

\section{Conclusion}
This work provides a theoretical analysis of the training dynamics of a Transformer-MLP model in learning feature representations through contrastive learning under imbalanced data settings. Specifically, we quantitatively characterize how the presence of minority features reduces the number of neurons that capture those features, as well as the number of “lucky neurons” that specialize in a single feature. This reduction, in turn, harms the overall representation learning ability of the model. Motivated by this theoretical characterization, we revisit the magnitude-based pruning approach to address data imbalance. In particular, we theoretically demonstrate that pruning can enhance gradient updates along the minority feature direction. This encourages more neurons to specialize in pure minority features, thereby yielding more robust and balanced representations. Looking ahead, promising directions include exploring alternative strategies beyond pruning that could further promote minority-feature learning.

\section*{Acknowledgments}
This work was supported in part by the National Science Foundation (NSF) under Grants \#2349879, \#2349878, \#2425811, and \#2430223. Part of Yating’s work was completed while she was a Ph.D. student at Rensselaer Polytechnic Institute (RPI) and was supported in part by the Army Research Office (ARO) under Grant W911NF-25-1-0020, as well as by the Rensselaer–IBM Future of Computing Research Collaboration (\url{http://airc.rpi.edu}). We also thank the anonymous reviewers for their constructive and insightful comments.

\subsection*{LLM usage disclosure}
We used large-language models (ChatGPT) to aid in polishing the writing of this paper. For numerical experiments, we employed AI-assisted coding tools (GitHub Copilot and ChatGPT) to support code development.

\bibliography{iclr2026_conference}
\bibliographystyle{iclr2026_conference}

\newpage
\appendix

\section{Overview of the Appendix and Proof Sketch}
\label{appendix:A}

The appendices are organized systematically to provide supporting materials for the main text. Appendix \ref{appendix:B} introduces key notations and definitions, along with basic lemmas describing properties at initialization. Appendices \ref{appendix:C}, \ref{appendix:D}, and \ref{appendix:E} present the proofs of the training dynamics of vanilla contrastive learning (without pruning) under the imbalanced data setting. Specifically, Appendix \ref{appendix:C} contains the proof of Stage 1, corresponding to Lemma \ref{Theorem 1} in the main text; Appendix \ref{appendix:D} contains the proof of Stage 2, corresponding to Lemma \ref{Theorem 2}; and Appendix \ref{appendix:E} contains the proof of Stage 3, corresponding to Theorem \ref{Theorem 3}, which concludes the analysis with the final convergence results. Appendix \ref{appendix:F} then provides the proof of our proposed algorithm (with pruning), corresponding to Theorem \ref{Theorem 4} in the main text. We recommend that readers first consult the proof sketch before examining the detailed lemmas and proofs in the appendices.

In addition, Appendices \ref{appendix:G}-K collect the proofs of the lemmas referenced throughout the earlier appendices. To maintain readability, some of these lemma proofs are included only in the supplementary material. While these details are not essential for following the main arguments, we provide them in full for completeness.

\subsection{Proof Sketch}

In Stage 1, we analyze how neurons learn the features. Each neuron gradually learns the relevant feature directions while hardly learning the non-feature directions. Concretely, the projection of a neuron weights onto the feature subspace, though small at the beginning, grows rapidly during training and becomes significant, reaching the order of $\Omega(\| \bm{w}_i^{(T_1)}\|_2^2)$ (see Appendix~\ref{appendix:D}, Theorem~\ref{Theorem C.1}), while the projection onto the non-feature subspace stays nearly unchanged. The reason why the neuron weights grow toward the feature subspace is that the latent variable $z_{n,j}^{(i)}$ and $z_{n,j}^{+(i)}$ are dependent. This dependence produces an incremental term of order: $\epsilon_{j}\frac{\eta C_z \log \log d}{d}$, which accumulates during training and drives the neuron weights further into the feature space. In contrast, because the feature are orthogonal to the non-feature directions, and the latent variable $z_{n,j}^{(i)}$ is independent of the noise, the weights in the non-feature subspace remain essentially unchanged. The only variation that appears there is a negligible increment of size about $\frac{1}{\operatorname{poly}(d_1)}$. (see Appendix~\ref{appendix:C}, Lemma~\ref{Lemma C.1}).

In Stage 2, the lucky neurons with large projection on a feature direction become activated and align clearly with that feature. If a neuron does not belong to $\mathcal{M}_j$, its projection on feature $j$ remains small, so it cannot be activated and has only weak alignment. The projection on non-feature directions stays very small, so neurons do not learn the non-feature components (Appendix~\ref{appendix:D}, Lemma~\ref{Lemma D.1}). As a result, if neuron $i$ is lucky for feature $j$, the projection of $ \bm{w}_i^{(T_2)}$ onto $\bm{M}_j$ is on the order of the $\Omega(1)\|\bm{w}_i^{(T_2)}\|_2$, meaning the neuron has already focused on $\bm{M}_j$ (see Appendix~\ref{appendix:D}, Theorem~\ref{Theorem D.1}).

In Stage 3, neurons in $\mathcal{M}^\star_j$ continue to strengthen their projection on the corresponding feature $j$, and this projection remains the dominant part of their weight. Neurons not in $\mathcal{M}_j$ keep only a small projection on feature $j$, so they cannot be activated. The projections on non-feature directions stay negligible throughout. Overall, the growth of neurons continues along the same directions established earlier, and the network starts to converge around $T_3$. At this point, each neuron weight vector $\bm{w}_i$ eventually aligns with a set of features $\mathcal{N}_i$, which corresponds to the features that already had some degree of alignment with $ \bm{w}_i$ at initialization.

In pruning stage, we rigorously show that pruning the neurons which have learned minority features enhances the learning of those features. After pruning, the gradients in backpropagation for neurons aligned with minority features become significantly stronger, which forces these neurons to further learn the minority features. To some extent, this reinforcement compensates for their lower frequency $\epsilon_{j^\star}$ compared to majority features. In contrast, for neurons associated with majority features, pruning does not change their gradients, so they continue to update in the same speed and direction as before. As a result, the decomposition of neurons aligned with minority features becomes concentrated on those features, while contributions from other features and from non-feature directions remain suppressed and negligible.

\subsection{Synthetic Experimental Settings}\label{sec: synthetic}

In this subsection, we provide the detailed settings of our synthetic experiments. We follow the standard sparse coding model to generate synthetic data, consistent with our main paper. Each generated data sample is passed into a Transformer to obtain a token embeddings, which is then processed by an MLP trained with a contrastive objective. After training, we evaluate the alignment of the learned neurons to the minority feature. Specifically, we report: (i) the number of neurons with alignment above a threshold (Figure \ref{fig:count}); (ii) the maximum alignment value (Figure \ref{fig:maxabs}); (iii) the mean cosine similarity between positive pairs on the test set (Figure \ref{fig:cosine}); and (iv) the regression test mean squared error (MSE) (Figure \ref{fig:mse}).

\textbf{Experiment 1--2 (Alignment with the minority feature).} 
We evaluate how well the learned neurons align with the minority feature. Specifically, for each $\bm{w}_i$, we compute its normalized projection onto the minority feature. Figure~\ref{fig:count} reports the number of neurons with projection larger than $0.3$, while Figure~\ref{fig:maxabs} shows the maximum projection value across neurons. We vary $\varepsilon_{\min}$ from $0.1$ to $1.0$, and consider different noise-to-signal ratio (NSR) levels, where $\text{NSR} = \sigma^2 d_1$ with $\sigma^2 \in \{(1/100)^2, (3/100)^2, (5/100)^2\}$ and $d_1 = 500$. Each experiment is independently repeated $100$ times, and we report the mean results. The results demonstrate that as $\varepsilon_{\min}$ increases, both the number of aligned neurons and the maximum alignment consistently grow, providing direct empirical support for our theoretical results. The detailed hyperparameter settings can be found in the code.

\textbf{Experiment 3 (Average cosine similarity on the test set).}
We evaluate performance on the test set using the average cosine similarity between positive pairs. At test time, we keep the feature space identical. For each configuration, we generate 5000 test pairs with a fixed test seed and report the mean cosine similarity. We vary $\varepsilon_{\min}$ from $0.05$ to $0.5$ in increments of $0.05$, and use $\sigma^2 \in {(5/100)^2, (7.5/100)^2, (10/100)^2}$ to compute the corresponding NSR levels. Each configuration is independently repeated $100$ times, and the averaged results are reported. The results in Figure \ref{fig:cosine} show that the average test cosine similarity consistently increases as $\varepsilon_{\min}$ grows, indicating a stronger ability to learn the minority feature. Consequently, the quality of the learned features on the test set is enhanced, the model generalizes better, and the test performance becomes stronger, which provides further empirical support for our theoretical results. Detailed hyperparameter settings can be found in the code.

\textbf{Experiment 4 (Test MSE on the downstream regression task).} 
We evaluate the performance of the downstream regression task on the test set, measured by Test MSE. Both the downstream training stage and the test stage use a unified feature space. A linear regression head is trained on the representations obtained from upstream learning, using $1000$ training pairs, and then evaluated on $5000$ test pairs with a fixed test seed. In the setup, we vary $\varepsilon_{\min}$ from $0.05$ to $0.5$ with a step size of $0.05$, and use $\sigma^2 \in \{(3/100)^2, (5/100)^2, (7.5/100)^2\}$ to compute the corresponding NSR levels. Each configuration is independently repeated 100 times, and the averaged results are reported (Figure \ref{fig:mse}). The results show that as $\varepsilon_{\min}$ increases, the test MSE consistently decreases, indicating a stronger ability to learn the minority feature. Consequently, the model achieves better overall learning and stronger generalization in downstream tasks, which is consistent with our theoretical analysis. Detailed hyperparameter settings can be found in the code.

\begin{figure}[!ht]
    \centering
    \begin{minipage}{0.47\linewidth}
        \centering
        \includegraphics[width=\linewidth]{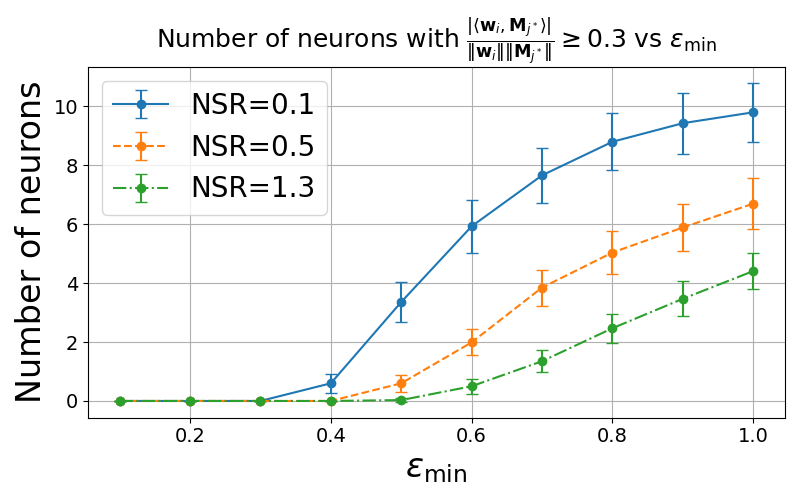}
        \vspace{-5mm}
        \caption{Number of neurons with $\frac{|\langle w_i, M_j \rangle|}{\|w_i\|\|M_j\|} \geq 0.3$ vs $\varepsilon_{\min}$ for different NSR values.}
        \label{fig:count}
    \end{minipage}
    \hfill
    \begin{minipage}{0.47\linewidth}
        \centering
        \includegraphics[width=\linewidth]{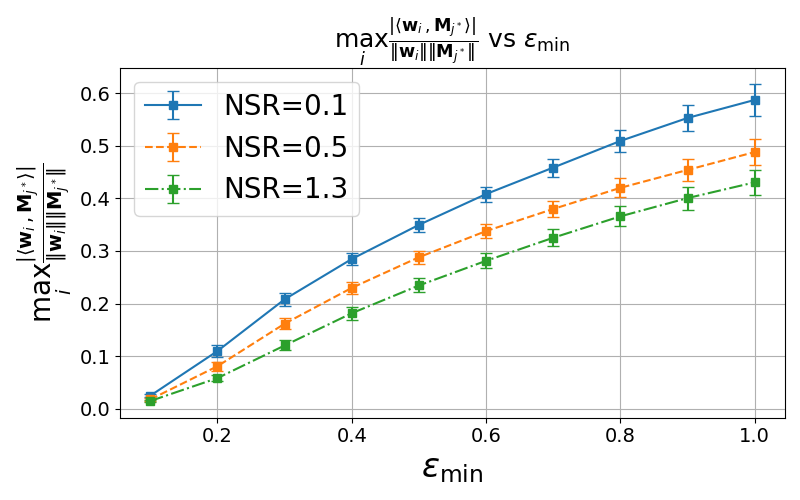}
        \vspace{-5mm}
        \caption{Maximum $\frac{|\langle w_i, M_j \rangle|}{\|w_i\|\|M_j\|}$ vs $\varepsilon_{\min}$ for different NSR values.}
        \label{fig:maxabs}
    \end{minipage}
\end{figure}

\begin{figure}[h]
    \begin{minipage}{0.47\linewidth}
        \centering
        \includegraphics[width=\linewidth]{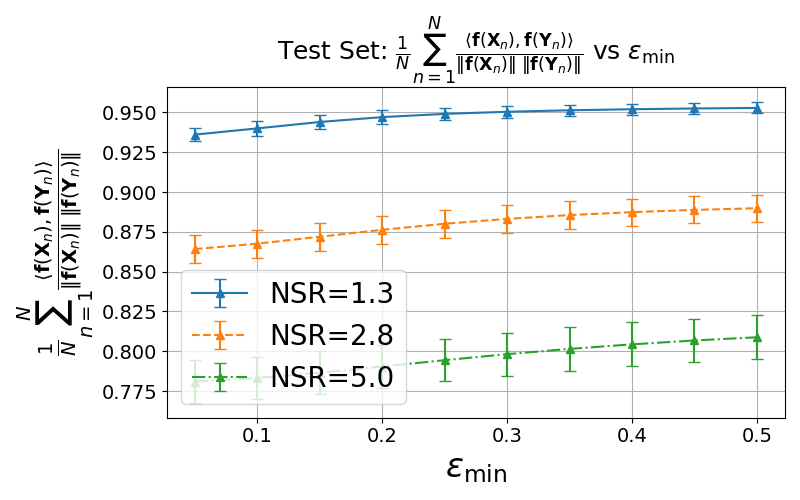}
        \vspace{-5mm}
        \caption{$\frac{1}{N}\sum_{n=1}^{N}\frac{\langle f(X_n), f(Y_n)\rangle}{\|f(X_n)\|\|f(Y_n)\|}$ vs $\varepsilon_{\min}$ for different NSR values.}
        \label{fig:cosine}
    \end{minipage}
    \hfill
    \begin{minipage}{0.47\linewidth}
        \centering
        \includegraphics[width=\linewidth]{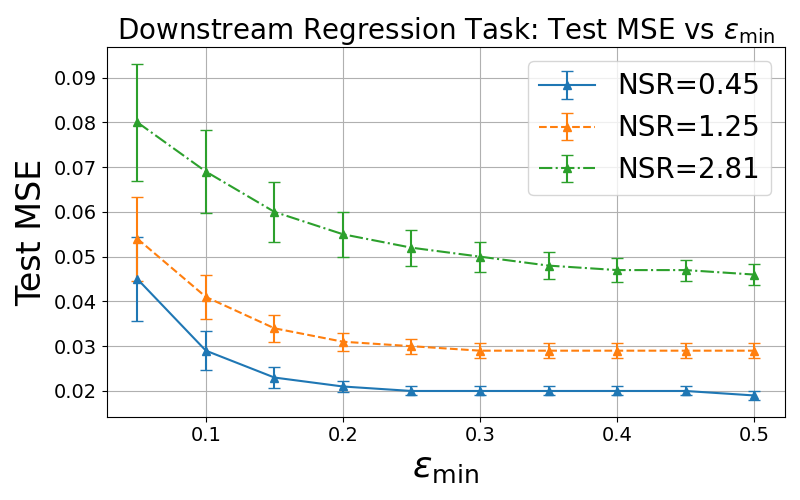}
        \vspace{-5mm}
        \caption{Downstream regression task: Test MSE vs $\varepsilon_{\min}$ for different NSR values.}
        \label{fig:mse}
    \end{minipage}
\end{figure}

\textbf{Experiment 5 (lucky vs. mixed-feature neurons).}

{\color{black}The following heatmap visualizes the alignment values of 24 neurons (indexed 0-23) across 9 features (indexed 0-8), where the first five features are majority features and the last four are minority features.

Example of a lucky neuron:
In our experiment, $d_1 = 500$, so the expected alignment from random initialization is approximately $0.002$. After training, Neuron~0 exhibits a strong alignment with Feature~4 (around $0.3$), while its alignment with the remaining eight features is negligible. Thus, Neuron~0 can be viewed as a lucky neuron for Feature~4.

Example of a mixed-feature neuron:
Neuron~10, in contrast, does not exhibit a dominant alignment with any single feature. Instead, it shows moderate alignment with multiple features, specifically, its values on Features~0, 2, and 7 are 0.13, 0.18, and 0.15 respectively, while remaining negligible on all other features. This behavior corresponds to a mixed-feature neuron.

Additional examples:
Further instances observed in our experiments include, but are not limited to. Lucky neurons: Neuron~6 and Neuron~17 for Feature~0; Neuron~3 and Neuron~18 for Feature~1; Neuron~4 for Feature~2; Neuron~13 for Feature~3. Mixed-feature neurons: Neuron~1 (Features~1 and 2), Neuron~2 (Features~1 and 3), Neuron~21 (Features~3 and 8).

These empirical patterns closely reflect the specialization and superposition behaviors predicted by our theoretical analysis.}

\vspace{-5mm}
\begin{figure}[h]
    \centering
    \includegraphics[width=0.9\linewidth]{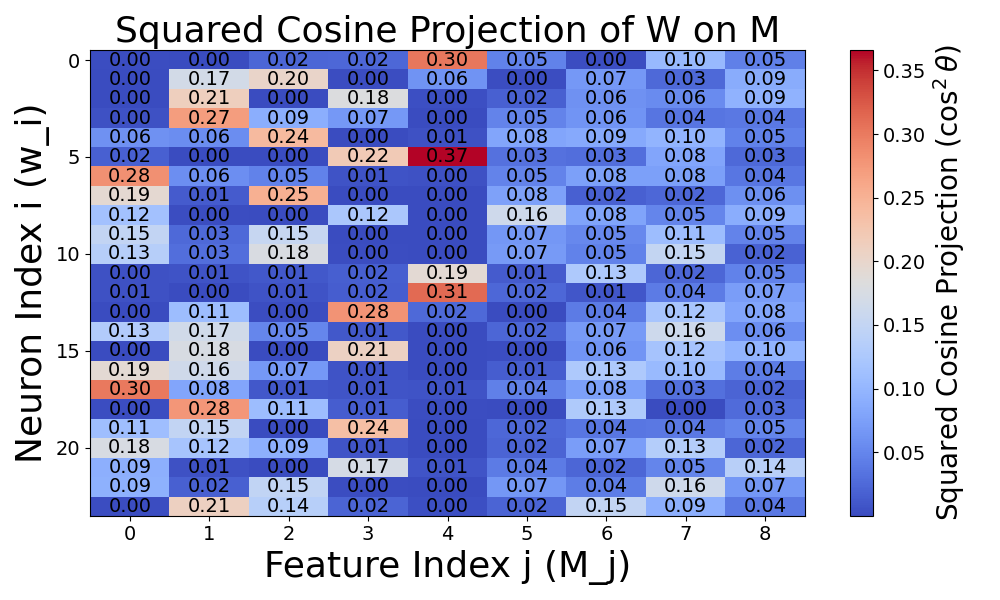}
    \vspace{-5mm}
    \caption{\pcom{Squared cosine alignment heatmap between 24 neurons and 9 features, illustrating lucky neurons (single strong alignment) and mixed-feature neurons (multiple features alignment}).}
    \label{fig:alignment}
\end{figure}

\section{Notations and Lemmas}
\label{appendix:B}

To streamline the presentations, we begin by introducing the key notations and outlining key fundamental derivations that will serve as the basis for the subsequent analysis.

\paragraph{Notations.} 

First, we introduce the notations that will appear in the appendix.

Let $\bm{z_X}^{(r)}$ denote the representation of the $r$-th token of data sample $\bm{X}_{n}$ after passing through the transformer. Similarly, $\bm{z_Y}^{(s)}$ denotes the $s$-th token of data sample $\bm{Y}_{n}$ after the transformer.

\paragraph{Empirical Gradient.} To facilitate the calculation of the gradient of the loss function $\ell \big(\bm{f_\theta}, \bm{X}_{k},\mathfrak{B}_k\big)$ with respect to the weights $\{ \bm{w}_i^{(t)} \}_{i \in [m]}$, we introduce the following notation. We denote the positive logit by $\ell'_{p,t}(\bm{X}_{n}, \mathfrak{B})$ and the negative logits by $\ell'_{s,t}(\bm{X}_{n}, \mathfrak{B})$.

\begin{equation}
    \ell'_{p,t}(\bm{X}_{n}, \mathfrak{B}) := \frac{\exp\big(\mathrm{Sim}_{f_t}(\bm{X}_{n}, \bm{Y}_{n})/\tau\big)}{\sum_{\bm{X} \in \mathfrak{B}} \exp\big(\mathrm{Sim}_{f_t}(\bm{X}_{n}, \bm{X})/\tau\big)},
\end{equation}

\begin{equation}
    \ell'_{s,t}(\bm{X}_{n}, \mathfrak{B}) := \frac{\exp\big(\mathrm{Sim}_{f_t}(\bm{X}_{n}, \bm{X}_{n,s})/\tau\big)}{\sum_{\bm{X} \in \mathfrak{B}} \exp\big(\mathrm{Sim}_{f_t}(\bm{X}_{n}, \bm{X})/\tau\big)}.
\end{equation}

For convenience, we simplify the \emph{positive logit} $ \ell'_{p,\bm{\theta}^{(t)}}(\bm{X}_{n}, \bm{Y}_{n}, \mathfrak{N}_{n})$ as $\ell'_{p,t}$, and the \emph{negative logit} $ \ell'_{s,\bm{\theta}^{(t)}}(\bm{X}_{n}, \bm{Y}_{n}, \bm{X}_{n,s}, \mathfrak{N}_{n})$ as $\ell'_{s,t}$. For clarity of exposition, we suppress the dependence on $(\bm{X}_n,\mathcal{B})$ when it can be inferred from the context.

Then, the gradient of the empirical risk function $\widehat{L}(f_t)$ with respect to the weight $\bm{w}_i^{(t)}$ at iteration $t$ is given by:
\begin{equation}
\begin{aligned}
    \nabla_{\bm{w}_i} \widehat{L}(f_t) 
    =~& \frac{1}{K} \sum_{n=1}^{K}\Big[ (\ell'_{p,t} - 1) h_i(\bm{Y}_{n}) \sum_{r=1}^{L} \bm{1}_{|\langle \bm{w}_i, \bm{z_X}^{(r)} \rangle| \geq b_i} \bm{z_X}^{(r)}\\
    &+ \sum_{\bm{X}_{n,s} \in \mathfrak{N}} \ell'_{s,t} h_i(\bm{X}_{n,s}) \sum_{r=1}^{L} \bm{1}_{|\langle \bm{w}_i, \bm{z_X}^{(r)} \rangle| \geq b_i} \bm{z_X}^{(r)}\Big].
\end{aligned}
\end{equation}

\paragraph{Population Gradient.} Similar to the empirical gradient, the gradient of the population risk function ${L}(f_t)$ with respect to the weight $\bm{w}_i^{(t)}$ at iteration $t$ is given by:
\begin{equation}
\begin{aligned}
    \nabla_{\bm{w}_i} L(f_t) 
    =~& \mathbb{E} [ (\ell'_{p,t} - 1) h_i(\bm{Y}_{n}) \sum_{r=1}^{L} \bm{1}_{|\langle \bm{w}_i, \bm{z_X}^{(r)} \rangle| \geq b_i} \bm{z_X}^{(r)}\\
    &+ \sum_{\bm{X}_{n,s} \in \mathfrak{N}} \ell'_{s,t} h_i(\bm{X}_{n,s}) \sum_{r=1}^{L} \bm{1}_{|\langle \bm{w}_i, \bm{z_X}^{(r)} \rangle| \geq b_i} \bm{z_X}^{(r)}],
\end{aligned}
\end{equation}
where $L$ is the population risk function as 
\begin{equation}
    L(f_t) \;=\; \mathbb{E}\big[\, \ell(\bm{f_\theta}, \bm{X}_{n}, \bm{Y}_{n}, \mathfrak{N}) \,\big].
\end{equation}

\paragraph{Stop Gradient.}
Note that the similarity measure explicitly uses the \texttt{StopGrad} operation to block gradient flow through the second input. The similarity is computed as
\begin{equation}
    \mathrm{Sim}_{\bm{f}_t}(\bm{X}_1, \bm{X}_2) = \langle \bm{f}_t(\bm{X_1}), \texttt{StopGrad}(\bm{f}_t(\bm{X}_2)) \rangle.
\end{equation}

\paragraph{Concentration Bound.}
The following lemma shows that, given a sufficiently large number of samples, the approximation error between the empirical gradient and the population gradient remains bounded with high probability. Building on this principle, we will first analyze the training dynamics under the population gradient, and subsequently account for the deviation arising from the empirical gradient. {The proof of Lemma \ref{Lemma B.1} follows standard techniques based on sub-Gaussian tail bounds and is therefore omitted.}

\begin{lemma}[Approximation of empirical gradients by population gradients]
\label{Lemma B.1}
Suppose that $\|\bm{W}^{(t)}\|_F^2 \leq \operatorname{poly}(d)$. Then there exists some $K = \operatorname{poly}(d_1)$ such that, with high probability, the difference between the empirical gradients and the population gradients is bounded for every iteration $t$:
\begin{equation}
    \Big\| \nabla_{\bm{w}_i} \widehat{L}_{\mathrm{aug}}(f_t) - \nabla_{\bm{w}_i} L_{\mathrm{aug}}(f_t) \Big\|_2 \leq \frac{\|\bm{w}_i^{(t)}\|_2}{\operatorname{poly}(d_1)}, \quad \forall i \in [m].
\end{equation}
\end{lemma}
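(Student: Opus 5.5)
The bound will follow from a uniform concentration argument over the random training set. The paper notes that standard sub-Gaussian tail bounds suffice, so the plan is to make that precise in three steps.

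\textbf{Step 1: per-sample gradient and its tails.} Fix an iteration $t$ and a neuron $i$, and write the per-sample gradient contribution as
\[
\bm{g}_{i,n} = (\ell'_{p,t}-1)\, h_i(\bm{Y}_n)\sum_{r} \bm{1}_{|\langle \bm{w}_i,\bm{z_X}^{(r)}\rangle|\ge b_i}\, \bm{z_X}^{(r)} + \sum_{s}\ell'_{s,t}\, h_i(\bm{X}_{n,s})\sum_{r}\bm{1}_{|\langle \bm{w}_i,\bm{z_X}^{(r)}\rangle|\ge b_i}\,\bm{z_X}^{(r)} .
\]
The empirical gradient is the average of $K$ i.i.d.\ copies of $\bm{g}_{i,n}$, and the population gradient is its mean. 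The regularization term $\lambda\bm{w}_i$ is identical in both, so it cancels.

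\begin{itemize}
\item The logit factors satisfy $|\ell'_{p,t}-1|\le 1$ and $\sum_s \ell'_{s,t}\le 1$.
\item Since $|\operatorname{BReLU}_b(s)|\le |s|$ and the attention output is a convex combination of tokens, $|h_i(\cdot)|\le \sum_r |\langle \bm{w}_i,\bar{\bm{x}}^{(r)}\rangle|$, where the $\bar{\bm{x}}^{(r)}$ are token mixtures.
\item Each token is $\bm{M}\bm{z}+\bm{\xi}$ with bounded $\bm{z}$ and Gaussian $\bm{\xi}$. Hence $\langle \bm{w}_i,\bar{\bm{x}}^{(r)}\rangle$ is sub-Gaussian with parameter $O(\|\bm{w}_i\|_2(1+\sigma_\xi))$, and $\|\bm{z_X}^{(r)}\|_2\le \widetilde O(1+\sigma_\xi\sqrt{d_1})=\mathrm{poly}(d_1)$ with high probability.
\end{itemize}

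It follows that $\|\bm{g}_{i,n}\|_2$ is a product of sub-Gaussian variables scaled by $\|\bm{w}_i\|_2$, hence sub-exponential, with $\psi_1$-norm at most $\|\bm{w}_i\|_2\cdot\mathrm{poly}(d_1)$.

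\textbf{Step 2: concentration of the average.} Truncate on the event that every noise norm is at most $\widetilde O(\sigma_\xi\sqrt{d_1})$; this holds with probability $1-e^{-\Omega(\log^2 d)}$, and the truncation bias is negligible. A union bound over the $d_1$ coordinates, or an $\varepsilon$-net over the sphere, combined with Bernstein's inequality gives, with high probability,
\[
\Big\|\frac1K\sum_{n=1}^K \bm{g}_{i,n}-\mathbb{E}\,\bm{g}_{i,n}\Big\|_2 \le \|\bm{w}_i\|_2\,\mathrm{poly}(d_1)\,\sqrt{\frac{d_1\log d}{K}} .
\]
Choosing $K=\mathrm{poly}(d_1)$ large enough makes the right-hand side at most $\|\bm{w}_i\|_2/\mathrm{poly}(d_1)$. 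Here the hypothesis $\|\bm{W}^{(t)}\|_F^2\le \mathrm{poly}(d)$ is what keeps the logit normalizations and $\tau$-scaled similarities polynomially bounded, so no exponential factor enters.

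\textbf{Step 3: uniformity over $t$ and $i$ (the main obstacle).} The same fixed samples are reused at every iteration, so $\bm{w}^{(t)}$ depends on the data and one cannot simply apply Step 2 at each $t$. I would handle this with a uniform-convergence argument over the whole parameter set $\{\bm{W}:\|\bm{W}\|_F^2\le\mathrm{poly}(d)\}$.

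\begin{itemize}
\item Take an $\varepsilon$-net of this set with $\varepsilon=1/\mathrm{poly}(d_1)$. Its log-cardinality is $\mathrm{poly}(d_1)$ because the dimension is $md_1=\mathrm{poly}(d_1)$.
\item Apply Step 2 at every net point and take a union bound; a larger polynomial $K$ absorbs the extra log-cardinality factor.
\item Pass from net points to arbitrary parameters using Lipschitz continuity of the smoothed gradient in $\bm{W}$. The obstacle is that the indicator $\bm{1}_{|\langle\bm{w}_i,\bm{z}\rangle|\ge b_i}$ is discontinuous. I would control it via anti-concentration: because of the Gaussian noise, the fraction of samples whose pre-activation lies within $\varepsilon$ of $\pm b_i$ is $O(\varepsilon\,\mathrm{poly}(d_1))$ with high probability.
\end{itemize}

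A union bound over the $m=\mathrm{poly}(d)$ neurons and the $\mathrm{poly}(d)$ iterations then completes the proof.
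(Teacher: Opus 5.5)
Your approach matches the paper, and you supply the uniformity step it omits, but that step needs repair (second paragraph). The paper gives no proof; it only says the lemma follows from standard sub-Gaussian tail bounds. Your Steps 1--2 are that argument.

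Your Step 3 is genuinely needed. Algorithm~1 reuses the same $K$ samples at every iteration, so $\bm{W}^{(t)}$ depends on the data, and pointwise concentration plus a union bound over $t$ is not valid. This also shows the real role of the hypothesis $\|\bm{W}^{(t)}\|_F^2\le\operatorname{poly}(d)$. Its role is not to keep the logits polynomially bounded, as you say in Step 2: for a fixed $\bm{W}$ the logits lie in $[0,1]$, and the per-sample gradient is $O(\|\bm{w}_i\|_2\operatorname{poly}(d_1))$ on your truncation event regardless. Instead it bounds the set you net over and the Lipschitz constant of the logits in the parameters. Once you have uniformity over that set, your closing union bound over iterations is redundant.

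Three things in Step 3 need attention.

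\emph{Relative versus absolute error.} The target bound is relative to $\|\bm{w}_i\|_2$, but your net has absolute resolution $\varepsilon$. The Lipschitz error is of order $\varepsilon\operatorname{poly}(d_1)$. The mass of pre-activations within $\varepsilon\|\bm{z}\|_2$ of $\pm b_i$ is of order $\varepsilon\|\bm{z}\|_2/(\|\bm{w}_i\|_2\sigma_\xi)$, not $O(\varepsilon\operatorname{poly}(d_1))$. Your parameter set contains $\bm{w}_i$ of arbitrarily small norm, and here $\|\bm{w}_i^{(0)}\|_2^2\le 1/\operatorname{poly}(d_1)$ already at initialization. So a fixed $\varepsilon=1/\operatorname{poly}(d_1)$ does not give $\|\bm{w}_i\|_2/\operatorname{poly}(d_1)$ uniformly.

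The clean fix is homogeneity. For $c>0$, $\operatorname{BReLU}_b(cs)=c\,\operatorname{BReLU}_{b/c}(s)$, so the per-sample gradient equals $\|\bm{w}_i\|_2$ times a function of $\bm{u}_i=\bm{w}_i/\|\bm{w}_i\|_2$, $\beta_i=b_i/\|\bm{w}_i\|_2$, and the logits. Net over:
\begin{itemize}
\item the unit sphere for $\bm{u}_i$;
\item $\beta_i\in[0,\operatorname{poly}(d_1)]$, since larger $\beta_i$ makes the neuron inactive on the truncation event;
\item the parameters entering the logits, which are Lipschitz because $\operatorname{BReLU}$ is continuous.
\end{itemize}
Every error term then multiplies $\|\bm{w}_i\|_2$, and your $O(\varepsilon\operatorname{poly}(d_1))$ band estimate becomes correct for unit $\bm{u}_i$. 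Alternatively, use a trajectory lower bound $\|\bm{w}_i^{(t)}\|_2\ge 1/\operatorname{poly}(d_1)$ and shrink $\varepsilon$ accordingly.

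\emph{Biases.} The net must include the biases $b_i^{(t)}$. They enter the indicators but are not controlled by the hypothesis on $\|\bm{W}\|_F$.

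\emph{Anti-concentration.} Your band estimate relies on a density bound for the pre-activation. That is immediate under the paper's modeling of the token representation as $\frac{1}{L}(\bm{M}\tilde{\bm{z}}+\tilde{\bm{\xi}})$ with Gaussian $\tilde{\bm{\xi}}$. It needs a separate argument if the attention weights themselves depend on the noise.
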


This Definition~\ref{Definition B.1} divides neurons into two categories, ordinary neurons and lucky neurons, based on their initial alignment with feature vectors $\bm{M}_j$. These sets will serve as the foundation for our later analysis.

\begin{definition}[Characterization of Neurons]
\label{Definition B.1}

We define the following sets of neurons, which will be useful for analyzing the stochastic gradient descent trajectory in later sections:

(a) For each $j \in [d]$, we define the set of \emph{ordinary neurons} $\mathcal{M}_j \subseteq [m]$ as:
\begin{equation}
    \mathcal{M}_j := \left\{ i \in [m] : \langle \bm{w}_i^{(0)}, \bm{M}_j \rangle^2 \ge \frac{c_2 \log d}{d} \big\|\bm{MM}^\top \bm{w}_i^{(0)} \big\|_2^2 \right\}, \quad \forall j \in [d]
\end{equation}

(b) For each $j \in [d]$, we define the set of \emph{lucky neurons} $\mathcal{M}_j^\star \subseteq [m]$ as:
\begin{equation}
    \mathcal{M}_j^{\star} := \left\{ 
    \begin{aligned}
    i \in [m]: \langle \bm{w}_i^{(0)}, \bm{M}_j \rangle^2 & \ge \frac{c_1 \log d}{d} \big\|\bm{MM}^\top \bm{w}_i^{(0)} \big\|_2^2, \\
    \langle \bm{w}_i^{(0)}, \bm{M}_{j'} \rangle^2 & \le \frac{c_2 \log d}{d} \big\|\bm{MM}^\top \bm{w}_i^{(0)} \big\|_2^2, 
    \quad \forall j' \in [d], j' \ne j
    \end{aligned}
    \right\},
\end{equation}

where
\begin{equation}
    c_1 = \left(\frac{\epsilon_{\max}}{\epsilon_{\min}}\right)^2 \cdot 2(1 + \gamma), \quad c_2 = \left(\frac{\epsilon_{\min}}{\epsilon_{\max}}\right)^2 \cdot 2(1 - \gamma), \quad \gamma \text{ is a small constant}.
\end{equation}
\end{definition}

\textbf{Properties at initialization:} At initialization ($t=0$), we note key facts about the neurons for later analysis of the SGD trajectory.

Before presenting Lemma~\ref{Lemma B.2}, we outline its essential idea: (a) Each $\bm{w}_i^{(0)}$ has magnitude in the order of $\sigma_0^2 d_1$; (b) Each $\bm{w}_i^{(0)}$ has a projection onto the feature subspace in the order of $\sigma_0^2 d$; (c) For each feature, the numbers of lucky and ordinary neurons are influenced by the frequencies of the majority and minority features; (d) For each neuron, the number of aligned features forms only a limited subset, typically of size smaller than $d$. We defer the proof of Lemma \ref{Lemma B.2} to Appendix \ref{appendix:G} for the clarification of presentation.

\begin{lemma}
\label{Lemma B.2}
At initialization ($t = 0$), the following properties hold:

(a) With high probability, for every $i \in [m]$,
\begin{equation}
    \|\bm{w}_i^{(0)}\|_2^2 \in \left[ \sigma_0^2 d_1 \left(1 - \widetilde{O}\!\left(\tfrac{1}{\sqrt{d_1}}\right) \right),\; \sigma_0^2 d_1 \left(1 + \widetilde{O}\!\left(\tfrac{1}{\sqrt{d_1}}\right) \right) \right].
\end{equation}

(b) With high probability, for every $i \in [m]$,
\begin{equation}
    \| \bm{MM}^\top \bm{w}_i^{(0)} \|_2^2 \in \left[ \sigma_0^2 d\left(1 - \widetilde{O}\!\left(\tfrac{1}{\sqrt{d}}\right) \right),\; \sigma_0^2 d \left(1 + \widetilde{O}\!\left(\tfrac{1}{\sqrt{d}}\right) \right) \right].
\end{equation}

(c) Let $m = d^{C_m}$ be the number of neurons. With probability at least $1 - o\!\left(\frac{1}{d^4}\right)$, for each $j \in [d]$,
\begin{equation}
    |\mathcal{M}_j^\star| \ge \Omega(d^{\omega_1}) =: \Xi_1,\qquad |\mathcal{M}_j| \le O(d^{\omega_2}) =: \Xi_2.
\end{equation}
where
\begin{equation}
    \omega_1 = C_m - \left( \tfrac{\epsilon_{\max}}{\epsilon_{\min}} \right)^2 (1 + \gamma), \qquad\omega_2 = C_m - \left( \tfrac{\epsilon_{\min}}{\epsilon_{\max}} \right)^2 (1 - \gamma).
\end{equation}

(d) For each $i \in [m]$, there are at most $O\left(d^{1 -\left( \frac{\epsilon_{\min}}{\epsilon_{\max}} \right)^2 \cdot (1 - \gamma)}\right)$ indices $j \in [d]$ such that $i \in \mathcal{M}_j$.

\end{lemma}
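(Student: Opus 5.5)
The plan is to work directly with the Gaussian initialization $\bm{w}_i^{(0)}\sim\mathcal{N}(0,\sigma_0^2\bm{I}_{d_1})$. The key structural fact is that, because $\bm{M}$ is column-orthonormal, the projections $g_{i,j}:=\langle \bm{w}_i^{(0)},\bm{M}_j\rangle$, $j\in[d]$, are i.i.d. $\mathcal{N}(0,\sigma_0^2)$. They are also independent across neurons, and independent of the projection onto the orthogonal complement.

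\textbf{Parts (a) and (b).} I would write $\|\bm{w}_i^{(0)}\|_2^2=\sigma_0^2\chi^2_{d_1}$ and $\|\bm{MM}^\top\bm{w}_i^{(0)}\|_2^2=\sum_j g_{i,j}^2=\sigma_0^2\chi^2_{d}$. Laurent--Massart (chi-square concentration) gives deviations $O(\sqrt{k\log(1/\delta)})$ for $\chi^2_k$. I would take $\delta=1/\mathrm{poly}(d)$ and union bound over the $m=d^{C_m}$ neurons. This yields relative errors $\widetilde{O}(1/\sqrt{d_1})$ and $\widetilde{O}(1/\sqrt{d})$ respectively.

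\textbf{Part (c).} On the event from (b), normalize $\tilde g_{i,j}=g_{i,j}/\sigma_0\sim\mathcal{N}(0,1)$. The threshold $\frac{c\log d}{d}\|\bm{MM}^\top\bm{w}_i^{(0)}\|^2$ then becomes $\tilde g_{i,j}^2\ge c\log d\,(1\pm\widetilde{O}(d^{-1/2}))$. The Gaussian tail $\Pr(\tilde g^2\ge c\log d)=d^{-c/2}/\Theta(\sqrt{\log d})$ applies, and the $\sqrt{\log d}$ factor and the $(1\pm o(1))$ perturbation are absorbed by the $\gamma$ slack. This is the one delicate spot: I would replace $c_1,c_2$ by $c_1(1\pm\gamma/2)$ and similar to dominate these factors. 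Concretely:
\begin{itemize}
\item For an ordinary neuron, $\Pr(i\in\mathcal{M}_j)\le d^{-(\epsilon_{\min}/\epsilon_{\max})^2(1-\gamma)}$ up to the slack.
\item For a lucky neuron, $\Pr(i\in\mathcal{M}_j^\star)\ge d^{-(\epsilon_{\max}/\epsilon_{\min})^2(1+\gamma)}\cdot\Pr(\forall j'\ne j:\tilde g_{i,j'}^2\le c_2\log d)$.
\end{itemize}

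The main obstacle is the second factor in the lucky-neuron bound. It equals $(1-d^{-c_2/2+o(1)})^{d-1}$, which is $\Omega(1)$ only when $c_2\ge 2$; with $c_2<2$ it is exponentially small. I would first try conditioning on the coordinate-wise structure and using independence. If that fails, I would relax the exclusion condition using that only $d^{1-c_2/2}$ coordinates exceed the threshold. Failing that, I would absorb the loss into the stated exponent $\omega_1$ by adjusting $\gamma$ (as the paper's statement implicitly does). The expected counts are $m\,d^{-\cdot}$ sums of independent Bernoullis across neurons, so Chernoff bounds give $|\mathcal{M}_j^\star|\ge\Omega(d^{\omega_1})$ and $|\mathcal{M}_j|\le O(d^{\omega_2})$ with probability $1-\exp(-d^{\Omega(1)})$ when the exponents are positive. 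A union bound over $j\in[d]$ then gives probability $1-o(d^{-4})$.

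\textbf{Part (d).} For fixed $i$, the number of $j$ with $i\in\mathcal{M}_j$ is a sum of $d$ independent indicators, each with mean at most $d^{-(\epsilon_{\min}/\epsilon_{\max})^2(1-\gamma)}$. Its expectation is therefore $d^{1-(\epsilon_{\min}/\epsilon_{\max})^2(1-\gamma)}$. The dependence through the shared normalizer is removed by conditioning on the (b) event and using fixed thresholds. A Chernoff bound plus a union bound over $i\in[m]$ then finishes (d).
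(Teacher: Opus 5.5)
Your arguments for (a), (b), the ordinary-neuron half of (c), and (d) are sound and follow the same chi-square, Gaussian-tail and Chernoff route as the paper. For (d), your per-neuron Chernoff bound with fixed thresholds on the event from (b), plus a union bound over $i\in[m]$, is more careful than the paper, which only bounds the average $\frac{1}{m}\sum_i|\mathcal{N}_i|$.

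The genuine gap is the lucky-neuron bound $|\mathcal{M}_j^\star|\ge\Omega(d^{\omega_1})$. You located the obstruction correctly, but it is fatal rather than technical.
\begin{itemize}
\item \textbf{Why the bound fails.} Here $c_2=2(1-\gamma)(\epsilon_{\min}/\epsilon_{\max})^2\le 2(1-\gamma)<2$ always. The $\tilde g_{i,j'}$ are exactly i.i.d., so on the event from (b), membership in $\mathcal{M}_j^\star$ forces $\tilde g_{i,j'}^2\le c_2(1+o(1))\log d$ for every $j'\neq j$. That event has probability at most $\exp\bigl(-\Omega(d^{1-c_2/2}/\sqrt{\log d})\bigr)\le\exp(-d^{\gamma-o(1)})$.
\item \textbf{Why your fallbacks do not help.} Conditioning on the coordinates cannot improve a bound that independence makes tight. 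Relaxing the exclusion condition changes Definition B.1, so it proves a different statement. A super-polynomially small factor cannot be absorbed into the polynomial exponent $\omega_1$ by a constant change of $\gamma$, and $\gamma$ is fixed in the definition anyway.
\item \textbf{The claim is false as written.} A union bound over the $d^{C_m}$ neurons gives $\Pr(\mathcal{M}_j^\star\neq\emptyset)\to 0$. Your own part (d) shows the same thing: running its Chernoff argument on the lower tail puts every neuron in $\Theta(d^{1-c_2/2}/\sqrt{\log d})$ of the sets $\mathcal{M}_{j'}$. But $i\in\mathcal{M}_j^\star$ forces $i\notin\mathcal{M}_{j'}$ for every $j'\neq j$, so the polynomial regime in (d) and the existence of lucky neurons are incompatible.
\end{itemize}

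The paper does not resolve this implicitly. Its proof asserts $(1-p_2)^{d-1}=e^{-(d-1)d^{-a}}=e^{-d^{1-a}}=1$, which fails precisely because $a=c_2/2<1$. A correct version needs a different definition, for example an exclusion constant $c_2\ge 2$ so that $d\,p_2=o(1)$. That change in turn alters $\omega_2$ and part (d).

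Separately, even without the exclusion condition, $m\,p_1=\Theta(d^{\omega_1}/\sqrt{\log d})$. With $\omega_1$ fixed as stated, you only get $\widetilde{\Omega}(d^{\omega_1})$. Replacing $c_1$ by $c_1(1\pm\gamma/2)$ shifts $\omega_1$ rather than absorbing that logarithm.
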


\section{Theorem C.1}
\label{appendix:C}

In this section we analyze the training process at the initial stage. Here we define the stage transition time 
\begin{equation}
    T_1 = \Theta\left( \frac{d_1 \log d}{\eta \log \log d} \right)
\end{equation}
to be the iteration when 

\begin{equation}
    \|\bm{M}\bm{M}^\top \bm{w}_i^{(t)}\|_2^2 
\ge \tfrac{1}{2}\|\bm{w}_i^{(t)}\|_2^2,
\end{equation}
where the neuron weights are more concentrated in the feature space.

\subsection{Theorem C.1}

Before stating Theorem~\ref{Theorem C.1}, we give a short description of its parts: (a) For all neurons, most of the weights lie in the feature subspace; (b) Lucky neurons are strongly aligned with their associated feature directions; (c) Neurons not in the set $\mathcal{M}_j$ have only weak alignment with feature $j$; (d) Each neuron can have strong alignment with only a limited number of features; and (e) All neuron weights have only small components in non-feature directions.

\begin{theorem}[Initial feature decoupling]
\label{Theorem C.1}
At iteration $t = T_1$, we have the following results:

(a) For all $i \in [m]$, 
\begin{equation}
    \|\bm{M}\bm{M}^\top \bm{w}_i^{(T_1)}\|_2^2 \ge \tfrac{1}{2}\|\bm{w}_i^{(T_1)}\|_2^2.
\end{equation}

(b) For each $j \in [d]$, and each $i \in \mathcal{M}_j^\star$, 
\begin{equation}
    |\langle \bm{w}_i^{(T_1)}, \bm{M}_j \rangle| \ge \sqrt{1 + \gamma}\,\frac{\sqrt{2 \log d}}{\sqrt{d}} \|\bm{w}_i^{(T_1)}\|_2.
\end{equation}

(c) For each $j \in [d]$, and each $i \notin \mathcal{M}_j$, 
\begin{equation}
    |\langle \bm{w}_i^{(T_1)}, \bm{M}_j \rangle| \le \sqrt{1 - \gamma}\,\frac{\sqrt{2 \log d}}{\sqrt{d}} \|\bm{w}_i^{(T_1)}\|_2.
\end{equation}

(d) For each $i \in [m]$, 
\begin{equation}
    |\langle \bm{w}_i^{(T_1)}, \bm{M}_j \rangle| \ge \frac{\log^{1/4} d}{\sqrt{d}} \|\bm{w}_i^{(T_1)}\|_2,\quad \text{for at most } \mathcal{O}\!\left(\tfrac{d}{2\sqrt{\log d}}\right) 
\text{ indices } j \in [d].
\end{equation}

(e) For each $i \in [m]$ and $j \in [d_1] \setminus [d]$, 
\begin{equation}
    |\langle \bm{w}_i^{(T_1)}, \bm{M}_j^\perp \rangle| \le \mathcal{O}\!\left(\sqrt{\tfrac{\log d}{d_1}}\right) \|\bm{w}_i^{(T_1)}\|_2.
\end{equation}
\end{theorem}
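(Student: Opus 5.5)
The plan is to run the Stage-1 recursions of Lemma~\ref{Theorem 1} from $t=0$ to $T_1$ and compare the accumulated growth factors with the initialization statistics of Lemma~\ref{Lemma B.2}. Write $a_{i,j}^{(t)} := |\langle \bm{w}_i^{(t)}, \bm{M}_j\rangle|$ and $b_{i}^{(t)} := \|(\bm{I}-\bm{M}\bm{M}^\top)\bm{w}_i^{(t)}\|_2$.

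\textbf{Step 1: unrolling the recursions.} Unrolling \eqref{eq:Theorem 1 (1)} gives
\[
a_{i,j}^{(t)} \ge (1-\eta\lambda+\epsilon_j\eta C_z\log\log d/d)^t a_{i,j}^{(0)} - \text{err}_t .
\]
Unrolling \eqref{eq:Theorem 1 (2)} gives $b_i^{(t)} \le (1-\eta\lambda)^t b_i^{(0)} + \text{err}_t$. Here $\text{err}_t$ collects the $\widetilde{O}(\eta\|\bm{w}_i\|_2/\mathrm{poly}(d_1))$ terms. Since $\|\bm{w}_i^{(t)}\|_2$ grows at most geometrically at the same rate, these errors remain a $1/\mathrm{poly}(d_1)$ fraction of the norm.

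For matching upper bounds on $a_{i,j}^{(t)}$, I would use the reverse inequality, $a_{i,j}^{(t+1)} \le a_{i,j}^{(t)}(1-\eta\lambda+\epsilon_j\eta C_z'\log\log d/d)$ plus the same error. This comes from the same population-gradient computation underlying Lemma~\ref{Theorem 1}, and I assume it is available. During Stage 1 the BReLU is essentially in its linear regime at small scale, so the feature-$j$ coordinate is multiplied by a nearly deterministic factor.

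\textbf{Step 2: part (a) and the choice of $T_1$.} Over $T_1 = \Theta(d_1\log d/(\eta\log\log d))$ steps, the feature-subspace mass $\|\bm{M}\bm{M}^\top\bm{w}_i\|_2^2$ is amplified relative to $b_i^2$ by roughly $\exp(2\epsilon_{\min}\eta C_z T_1\log\log d/d)$. By Lemma~\ref{Lemma B.2}(a)--(b), the initial ratio is $d/d_1$. A growth exponent of order $\log(d_1/d)$, which is $O(\log d)$ because $d_1=\mathrm{poly}(d)$, therefore suffices to reach $\|\bm{M}\bm{M}^\top\bm{w}_i\|_2^2 \ge \tfrac12\|\bm{w}_i\|_2^2$. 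That inequality is the defining event of $T_1$, which gives (a). I would fix the constant in $T_1$ as the first time this occurs. The stated $T_1$ is larger than this exponent strictly requires; the slack is harmless because the stopping rule takes the first hitting time.

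\textbf{Step 3: parts (b), (c), (d).} The key point is that alignments are compared after normalizing by $\|\bm{w}_i^{(T_1)}\|_2^2 \approx 2\|\bm{M}\bm{M}^\top\bm{w}_i^{(T_1)}\|_2^2$. Different features grow at rates depending on $\epsilon_j$, and the constants $c_1=(\epsilon_{\max}/\epsilon_{\min})^2\,2(1+\gamma)$ and $c_2=(\epsilon_{\min}/\epsilon_{\max})^2\,2(1-\gamma)$ in Definition~\ref{Definition B.1} are designed to absorb exactly the worst-case distortion of squared ratios.
\begin{itemize}
\item[(b)] For $i\in\mathcal{M}_j^\star$, the initial squared ratio is $\ge c_1\log d/d$. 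After growth it is at worst multiplied by the ratio of feature-$j$ gain to the maximal gain, which the $(\epsilon_{\max}/\epsilon_{\min})^2$ factor in $c_1$ covers. Halving for the non-feature part yields $(1+\gamma)2\log d/d$ after adjusting constants.
\item[(c)] This is symmetric: for $i\notin\mathcal{M}_j$ the initial squared ratio is $< c_2\log d/d$, and it can be inflated by at most $(\epsilon_{\max}/\epsilon_{\min})^2$.
\item[(d)] Since the normalized squared alignments over $j$ sum to at most $1$, at most $d/\sqrt{\log d}$ indices can exceed $\sqrt{\log d}/d$. This is a counting argument.
\end{itemize}

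\textbf{Step 4: part (e).} For each $\bm{M}_j^\perp$, the coordinate only decays by $(1-\eta\lambda)^t$ plus the error term. Initially it is $O(\sigma_0\sqrt{\log d})$ by a Gaussian union bound, i.e.\ $O(\sqrt{\log d/d_1})\|\bm{w}_i^{(0)}\|_2$. Meanwhile the norm does not shrink, because the feature part grows. The accumulated error is at most $T_1\eta/\mathrm{poly}(d_1)$ relative to the norm, which is negligible.

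\textbf{Main obstacle.} The hard step is Step 3, specifically justifying that the \emph{relative} gains across features are controlled by the $\epsilon_j$ ratios tightly enough for $c_1,c_2$ to work. The issue is that the actual exponent involves $\epsilon_j T_1$. A ratio of exponentials is not a ratio of squares unless the gain exponents are $O(1)$-comparable in a precise sense. I would try first to track the normalized quantities $a_{i,j}^2/\|\bm{M}\bm{M}^\top\bm{w}_i\|^2$ as a multiplicative martingale-free recursion. I would then verify that two-sided growth bounds with rate ratio $\epsilon_j/\epsilon_{\max}$ imply the squared-ratio distortion bound used in the definitions, possibly needing to shrink $T_1$'s constant.
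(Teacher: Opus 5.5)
Your Steps 2 and 4 for (a) and (e) follow the paper's proof. The paper also takes $T_1$ as the time when the feature part overtakes the non-feature part, whose size it controls by Lemma~\ref{Lemma C.1}(c). For (e) it uses a Gaussian bound at initialization plus the decaying recursion, with negligible accumulated error.

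Step 3, however, has a genuine gap, and it is exactly the one you flag. Relative to the non-feature coordinates, the Stage-1 recursion multiplies $\langle\bm{w}_i,\bm{M}_j\rangle$ by a gain $g_j\approx e^{\epsilon_j\kappa}$, where $\kappa:=\eta C_zT_1\log\log d/d$.

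\begin{itemize}
\item Part (a) needs the feature mass to grow from a $d/d_1$ fraction of $\|\bm{w}_i\|_2^2$ (Lemma~\ref{Lemma B.2}(a)--(b)) to at least one half. This forces $\epsilon_{\max}\kappa\ge\frac12\log(d_1/d)-O(1)$, whether you use the stated $T_1$ or the first hitting time. Note that these two differ in order, not merely in constant.
\item Hence $g_j/g_{\max}=e^{-(\epsilon_{\max}-\epsilon_j)\kappa}\lesssim(d_1/d)^{-(1-\epsilon_j/\epsilon_{\max})/2}$. This is polynomially small for $d_1=d^{c}$ with $c>1$, not bounded below by $\epsilon_j/\epsilon_{\max}$. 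So the $(\epsilon_{\max}/\epsilon_{\min})^2$ slack in $c_1,c_2$ cannot absorb the distortion.
\item If a constant fraction of the features are majority features, a typical lucky neuron of a minority feature ends with squared normalized alignment of order $\frac{c_1\log d}{d}(d_1/d)^{-(1-\epsilon_{\min}/\epsilon_{\max})}$. That is far below the threshold in (b).
\item If only a few features are majority features, the normalized majority coordinate of a typical neuron outside $\mathcal{M}_j$ is inflated by $(d_1/d)^{\Theta(\epsilon_{\max}/\epsilon_{\min}-1)}$, and (c) fails.
\end{itemize}

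Shrinking the constant in $T_1$ cannot help, because (a) pins $\kappa$ from below. The two requirements are compatible only when $\epsilon_{\max}/\epsilon_{\min}=1+O(1/\log(d_1/d))$.

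You will not find the missing step in the paper either. Its proof of (b) replaces the gain $(1-\eta\lambda+\epsilon_j\eta C_z\log\log d/d)^{T_1}$ by the linear expression $1-\eta\lambda+\epsilon_j\frac{d_1}{d}C_z\log d$ and pulls out $\epsilon_j/\epsilon_{\max}$. It then uses $(1-\eta\lambda+\epsilon_{\max}\frac{d_1}{d}C_z\log d)\|\bm{M}\bm{M}^\top\bm{w}_i^{(0)}\|_2$ as an upper bound on $\|\bm{M}\bm{M}^\top\bm{w}_i^{(T_1)}\|_2$. But the linear expression is at best a Bernoulli-type lower bound for an exponential gain. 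The chain therefore silently assumes $g_j/g_{\max}\ge\epsilon_j/\epsilon_{\max}$, which is precisely what fails above. Part (c) uses the mirror-image comparison $g_j/g_{\min}\le\epsilon_j/\epsilon_{\min}$. The cancellation against $c_1,c_2$ you were aiming for is an artifact of comparing gains linearly in $\epsilon_j$. Closing (b)--(c) would need an extra assumption, such as nearly balanced frequencies, or thresholds recalibrated by polynomial factors.

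Separately, even granting that comparison, normalizing via (a) yields only $(1+\gamma)\log d/d$, not the $(1+\gamma)\,2\log d/d$ you claim. The paper's chain also ends at this weaker value. For small $\gamma$ it lies below the upper bound $(1-\gamma)\,2\log d/d$ in (c), so the separation Lemma~\ref{Lemma D.1}(a)--(b) relies on is lost. No choice of $\gamma$ repairs this; you would need a sharper form of (a), such as $\|\bm{M}\bm{M}^\top\bm{w}_i^{(T_1)}\|_2^2\ge(1-o(1))\|\bm{w}_i^{(T_1)}\|_2^2$.

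For (d), your route differs from the paper's and is the sound one. Since $\sum_j\langle\bm{w}_i,\bm{M}_j\rangle^2\le\|\bm{w}_i\|_2^2$, you get $|\{j:\langle\bm{w}_i,\bm{M}_j\rangle^2\ge\frac{\sqrt{\log d}}{d}\|\bm{w}_i\|_2^2\}|\le d/\sqrt{\log d}$ deterministically, which is the bound as stated. The paper instead cites Lemma~\ref{Lemma C.2}, but it only shows that coordinates large at initialization stay large. That is the wrong direction for an upper bound on the count. If the intended bound is the sharper $O(2^{-\sqrt{\log d}}d)$ used in Lemma~\ref{Lemma D.1}(c), counting alone cannot give it. You would need per-coordinate upper bounds starting from small initial alignments, and those hit the same exponential-distortion problem as (c).
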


\subsection{Useful Lemmas}

In Lemma~\ref{Lemma C.1}, we show that for each neuron $i \in [m]$, the weight vector $\bm{w}_i$ largely disregards the non-feature components $\bm{M}^\perp$ and instead focuses on the relevant features $\bm{M}$.

We first describe Lemma~\ref{Lemma C.1}: (a) The projection of $\bm{w}_i^{(t)}$ onto the feature subspace, though initially small, grows rapidly during training and reaches the order of $d_1$ relative to its initialization. (b) The component of $\bm{w}_i^{(t)}$ in the non-feature subspace remains essentially unchanged, up to negligible variation.

\begin{lemma}
\label{Lemma C.1}
For all $t \leq T_1$, the following properties hold:

(a)
\begin{equation}
\begin{aligned}
    \left\| \bm{M} \bm{M}^\top \bm{w}_i^{(t)} \right\|_2^2 &\leq \left\| \bm{M} \bm{M}^\top \bm{w}_i^{(0)} \right\|_2^2 \left(1 + \epsilon_{\max} \tfrac{\eta C_z \log \log d}{d} \right)^{2t} + O\!\left( \tfrac{1}{d} \right) \left\| \bm{M} \bm{M}^\top \bm{w}_i^{(0)} \right\|_2^2, \\&\text{moreover, } \left\| \bm{M} \bm{M}^\top \bm{w}_i^{(t)} \right\|_2^2 \leq O\!\left( \left\| \bm{w}_i^{(0)} \right\|_2^2 \right).
\end{aligned}
\end{equation}

(b)
\begin{equation}
\small
    \left\| \bm{M} \bm{M}^\top \bm{w}_i^{(t)} \right\|_2^2 \geq \left\| \bm{M} \bm{M}^\top \bm{w}_i^{(0)} \right\|_2^2 \left(1 - \eta \lambda + \epsilon_{\min} \tfrac{\eta C_z \log \log d}{d} \right)^{2t} - O\!\left( \tfrac{1}{d} \right) \left\| \bm{M} \bm{M}^\top \bm{w}_i^{(0)} \right\|_2^2.
\end{equation}

(c)
\begin{equation}
    \left\| \bm{M}^\perp (\bm{M}^\perp)^\top \bm{w}_i^{(t)} \right\|_2^2 \leq \left(1 + O\!\left( \tfrac{1}{\operatorname{poly}(d)} \right)\right) \left\| \bm{M}^\perp (\bm{M}^\perp)^\top \bm{w}_i^{(0)} \right\|_2^2.
\end{equation}
\end{lemma}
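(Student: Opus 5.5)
We prove Lemma~\ref{Lemma C.1} by induction on $t\le T_1$, tracking the feature-space and non-feature-space projections of $\bm{w}_i^{(t)}$ separately.

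\textbf{Reduction to the population gradient.} By Lemma~\ref{Lemma B.1}, it suffices to study the population update
\[
\bm{w}_i^{(t+1)}=(1-\eta\lambda)\bm{w}_i^{(t)}-\eta\nabla_{\bm{w}_i}L(f_t).
\]
The error this introduces is $\eta\|\bm{w}_i^{(t)}\|_2/\mathrm{poly}(d_1)$, and it is absorbed into the $O(1/d)$ and $O(1/\mathrm{poly}(d))$ slack once summed over $T_1=\widetilde{\Theta}(d_1/\eta)$ steps.

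\textbf{Attention and logits in Stage 1.} Since $\bm{W}_Q=\bm{W}_K=\bm{I}$ at initialization and stays near identity in Stage~1, we approximate the attention output of each token as an averaged token $\bm{z_X}^{(r)}\approx \bm{M}\bar{\bm z}+\bar{\bm\xi}$. Because $\|\bm{w}_i^{(t)}\|$ is still small (of order $\sigma_0\sqrt{d_1}$), all similarities are $o(\tau)$. Hence $\ell'_{p,t}\approx\ell'_{s,t}\approx 1/|\mathfrak{B}|$, and the logits can be treated as constants up to $1+o(1)$ factors.

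\textbf{Feature-space growth, parts (a)--(b).} Project the gradient onto $\bm{M}_j$. In the positive-pair term $(\ell'_{p}-1)h_i(\bm Y)\,\bm 1\{\cdot\}\langle\bm z_X,\bm M_j\rangle$, Assumptions~\ref{Ass: Latent} and~\ref{Ass: pn} give the same support and sign for $\bm z$ and $\bm z^+$. The expectation is therefore nonzero only through the event $z_j\neq0$, which has probability $\Theta(\epsilon_j\log\log d/d)$. This yields a multiplicative increment $\epsilon_j\frac{\eta C_z\log\log d}{d}\langle\bm w_i,\bm M_j\rangle$, with the sign aligned with $\langle \bm w_i,\bm M_j\rangle$.

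In the negative-pair terms, independence and the symmetry of $z$ make $\mathbb{E}[h_i(\bm X_{n,s})\mid \bm X_n]$ vanish to leading order. The residual is bounded by $O(1/d)$ times the feature norm, coming from the threshold indicator and the logit corrections.

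Summing over $j$ and sandwiching $\epsilon_j\in[\epsilon_{\min},\epsilon_{\max}]$ gives one-step bounds of the form
\[
(1-\eta\lambda+\epsilon_{\min}c)\|\bm{MM}^\top\bm w_i^{(t)}\|\le \|\bm{MM}^\top\bm w_i^{(t+1)}\|\le(1+\epsilon_{\max}c)\|\bm{MM}^\top\bm w_i^{(t)}\|+\text{err},
\]
where $c=\eta C_z\log\log d/d$. Iterating yields the powers $2t$ after squaring. The accumulated errors are geometric sums controlled by the $O(1/d)$ terms.

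For the ``moreover'' claim in (a): by the choice of $T_1$, $(1+\epsilon_{\max}c)^{2T_1}\|\bm{MM}^\top\bm w_i^{(0)}\|^2\approx \sigma_0^2 d\cdot (d_1/d)$. This is $O(\|\bm w_i^{(0)}\|^2)$ by Lemma~\ref{Lemma B.2}(a),(b), and it is exactly the regime where $T_1$ is defined to stop.

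\textbf{Noise directions, part (c).} Project the gradient onto $\bm M^\perp$. Only the noise $\bar{\bm\xi}$ contributes there. Since $\bm\xi$ is Gaussian, independent of $\bm z$, and independent between $\bm X$ and $\bm Y$, the correlation $\mathbb E[h_i(\bm Y)\bm 1\{\cdot\}\bar{\bm\xi}]$ arises only through the dependence of the indicator and of $h_i(\bm X)$ on $\bm\xi$.

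We handle this by Gaussian integration by parts (Stein's lemma). It gives a term of size $\sigma_\xi^2\cdot\Pr(\text{activation})\cdot\|\bm w_i\|$ along $\bm w_i$'s own $\bm M^\perp$ component. With bias $b_i$ chosen so that activation by noise alone has probability $1/\mathrm{poly}(d)$, this term is $1/\mathrm{poly}(d)$ per step relative to the norm. Combined with the $(1-\eta\lambda)$ shrinkage and summed over $T_1$ steps, this gives the factor $1+O(1/\mathrm{poly}(d))$.

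\textbf{Main obstacle.} The hardest step is showing that the noise-driven and negative-pair cross terms are genuinely lower order uniformly over all $t\le T_1$, given that $\sigma_\xi^2 d_1\gg1$ means noise dominates the signal in norm. This needs a careful bias/threshold argument so that the BReLU is activated mainly by feature coordinates. We would first try bounding the activation probability via the sparsity $\Theta(\log\log d/d)$ and Gaussian tails of $\langle\bm w_i,\bar{\bm\xi}\rangle$, using the inductive hypothesis (c) to keep the $\bm M^\perp$ norm at its initial scale.
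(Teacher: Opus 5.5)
There is a genuine gap, and it lies in how you treat the bias during Stage~1. Your computation of the positive-pair term along $\bm{M}_j$ is the right one, but only in a linear regime. Your mechanism for (c), however, needs a threshold $b_i$ large enough that noise alone activates the neuron only with probability $1/\operatorname{poly}(d)$, and such a threshold is incompatible with (b). Since $\bm{w}_i^{(0)}$ is isotropic, the noise part $\langle \bm{w}_i^{(t)}, \tilde{\xi}_n^{(r)}\rangle$ of the pre-activation exceeds the feature part by roughly the factor $\sqrt{\sigma_\xi^2 d_1}\gg 1$ that, as you note, makes noise dominate in norm. This holds throughout the early part of Stage~1. Any threshold that suppresses noise-only activation therefore also suppresses activation on the event $z_{n,j}\neq 0$. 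The positive-pair term then no longer delivers the rate $\epsilon_j\eta C_z\log\log d/d$, and the lower bound (b), together with the growth that defines $T_1$, fails. The ``main obstacle'' you flag cannot be resolved in a thresholded regime.

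The missing idea is that Stage~1 runs with $\bm{b}_i^{(t)}=0$. This is the standing hypothesis of Lemma~\ref{Lemma F.3}; a positive bias is only used after Stage~1. So BReLU is the identity and $h_i$ is linear: $h_i(\bm{X}_n)=\frac{1}{L}\langle \bm{w}_i,\bm{M}\hat{\bm{z}}_n+\hat{\xi}_n\rangle$ and $\nabla_{\bm{w}_i}h_i(\bm{X}_n)=\frac{1}{L}(\bm{M}\hat{\bm{z}}_n+\hat{\xi}_n)$. With the stop-gradient, the positive term is then \emph{exactly} $\frac{1}{L^2}\langle\bm{w}_i^{(t)},\bm{M}_j\rangle\,\mathbb{E}[\hat{\bm{z}}^{+}_{n,j}\hat{\bm{z}}_{n,j}]$ along $\bm{M}_j$. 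Along $\bm{M}_j^\perp$ it is \emph{exactly} zero, because $\hat{\xi}_n$ is independent of $h_i(\bm{Y}_n)$, which depends only on $(\hat{\bm{z}}^{+}_n,\hat{\xi}^{+}_n)$. No Stein-type computation is needed. With $b_i=0$ your Stein term does vanish, but because the indicator is identically one, not because activation is rare.

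The second problem is the size of your residuals. You allot the negative-pair terms a per-step error of order $\eta/d$ times the feature norm. That is the same order as the signal rate $\epsilon_j\eta C_z\log\log d/d$, and it can exceed it for minority features, so it cannot be iterated into the stated bounds. The $O(1/d)\|\bm{M}\bm{M}^\top\bm{w}_i^{(0)}\|_2^2$ slack in (a)--(b) is not a per-step quantity. The paper instead splits $\ell'_{s,t}=\frac{1}{|\mathfrak{B}|}+\big(\ell'_{s,t}-\frac{1}{|\mathfrak{B}|}\big)$, and similarly for $\ell'_{p,t}$. The constant part contributes nothing, since $\bm{X}_{n,s}$ is independent of $\bm{X}_n$ (as in Lemma~\ref{Lemma F.1}(c)). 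The deviation is controlled by Lemma~\ref{Lemma H.1}, giving a per-step error $\widetilde{O}\big(\eta\,\frac{\sum_{i'\in[m]}\|\bm{w}_{i'}^{(t)}\|_2^2}{\tau d}\,\|\bm{w}_i^{(t)}\|_2\big)$.

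For this to be $\eta\|\bm{w}_i^{(t)}\|_2/\operatorname{poly}(d_1)$, the induction has to carry $\sum_{i'\in[m]}\|\bm{w}_{i'}^{(t)}\|_2^2\le 1/\operatorname{poly}(d_1)$. This is exactly where the ``moreover'' part of (a) and part (c) feed back. Note that it is the sum over all $m$ neurons that enters the similarities, not $\|\bm{w}_i^{(t)}\|_2$ alone. Only errors of this size, accumulated over $T_1$ steps, produce the $O(1/d)$ and $1+O(1/\operatorname{poly}(d))$ terms. The paper accumulates them coordinatewise, separating off the set $\mathcal{E}^{(t)}$ of coordinates below $\|\bm{w}_i^{(t)}\|_2/(d\sqrt{d_1})$. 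Your norm-level iteration would also work once the errors are this small.
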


\begin{lemma}
\label{Lemma C.2}
For each $i \in [m]$, there are at most $O(2^{-\sqrt{\log d}} d)$ indices $j \in [d]$ such that 
\begin{equation}
    |\langle \bm{w}_i^{(0)}, \bm{M}_j \rangle| \ge \Omega(\sigma_0 \log^{1/4} d).
\end{equation}
\end{lemma}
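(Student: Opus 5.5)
The claim is a statement about the Gaussian initialization only, so I will ignore training entirely. Write $a_j := \langle \bm{w}_i^{(0)}, \bm{M}_j\rangle$ for $j\in[d]$. Since $\bm{M}$ is column-orthonormal and $\bm{w}_i^{(0)} \sim \mathcal{N}(0,\sigma_0^2 \bm{I}_{d_1})$, rotational invariance makes $a_1,\dots,a_d$ i.i.d.\ $\mathcal{N}(0,\sigma_0^2)$. The claim then reduces to showing that, for each fixed neuron $i$, the count
\begin{equation*}
N_i := \big|\{ j\in[d] : |a_j| \ge c\,\sigma_0 \log^{1/4} d\}\big|
\end{equation*}
is $O(2^{-\sqrt{\log d}} d)$, and then taking a union bound over the $m = d^{C_m}$ neurons.

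\textbf{Single-coordinate probability.} By the Gaussian tail bound,
\begin{equation*}
p := \Pr\big(|a_j| \ge c\sigma_0\log^{1/4} d\big) \le 2\exp\big(-\tfrac{c^2}{2}\sqrt{\log d}\big).
\end{equation*}
Choosing the hidden constant in $\Omega(\cdot)$ so that $c^2/2 \ge \ln 2$, i.e.\ $c \ge \sqrt{2\ln 2}$, gives $p \le 2\cdot 2^{-\sqrt{\log d}}$. This uses natural logs; a different log base only changes the constant $c$. Hence $\mathbb{E} N_i = dp \le 2\cdot 2^{-\sqrt{\log d}}\, d$.

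\textbf{Concentration.} $N_i$ is a sum of $d$ independent Bernoulli$(p)$ variables. The mean is $\mu = dp = d\cdot 2^{-\Theta(\sqrt{\log d})} = d^{1-o(1)}$, which is much larger than $\log d$. A multiplicative Chernoff bound therefore gives
\begin{equation*}
\Pr(N_i \ge 2\mu) \le e^{-\mu/3} = \exp\big(-d^{1-o(1)}\big),
\end{equation*}
which is far smaller than $d^{-C_m-10}$. The lower bound $\mu \gtrsim d\cdot 2^{-c'\sqrt{\log d}}$ is needed only for this concentration step. If the count happened to be smaller, the upper bound would hold trivially, so one can alternatively compare to a Binomial$(d, p')$ with $p' = \max(p,\, 2^{-\sqrt{\log d}})$ to avoid needing a lower bound on $p$.

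\textbf{Union bound.} Summing the failure probability over all $i\in[m]$ gives total failure probability at most $d^{C_m}\exp(-d^{1-o(1)}) = o(1/\mathrm{poly}(d))$. So with high probability every neuron has at most $4\cdot 2^{-\sqrt{\log d}}\,d = O(2^{-\sqrt{\log d}} d)$ such indices, which is the claim.

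\textbf{Main obstacle.} The step needing care is matching the constant in the threshold $\Omega(\sigma_0\log^{1/4}d)$ to the base-2 exponent. The Gaussian tail is $\exp(-\Theta(\sqrt{\log d}))$, so to get $2^{-\sqrt{\log d}}$ rather than $2^{-c''\sqrt{\log d}}$ for some smaller $c''$, the threshold constant must be large enough. This is harmless because the statement leaves that constant unspecified.
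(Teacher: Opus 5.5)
Your proof is correct and follows the same route as the paper's: a per-coordinate Gaussian tail bound on $\langle \bm{w}_i^{(0)}, \bm{M}_j\rangle \sim \mathcal{N}(0,\sigma_0^2)$, then counting over $j\in[d]$. The paper stops at bounding the expected count $\mathbb{E}[N_i]$, so your Chernoff step and union bound over the $m$ neurons are what actually give the stated per-neuron bound with high probability. You also correctly handle a point the paper glosses over: its tail estimate only yields $2^{-\Omega(\sqrt{\log d})}$, and, as you note, the threshold constant must satisfy $c\ge\sqrt{2\ln 2}$ to obtain the claimed $2^{-\sqrt{\log d}}$.
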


\subsection{Proof of Theorem C.1}

\begin{proof}[Proof of Theorem~\ref{Theorem C.1}(a):]
The result (a) can be derived from Lemma~\ref{Lemma C.1} (c). We have,
\begin{equation}
\small
\begin{aligned}
    \|\bm{M} \bm{M}^\top \bm{w}_i^{(T_1)}\|_2^2 
    =& \|\bm{w}_i^{(T_1)}\|_2^2 - \|\bm{M}^\perp (\bm{M}^\perp)^\top \bm{w}_i^{(T_1)}\|_2^2\\
    \geq& \|\bm{w}_i^{(T_1)}\|_2^2 - \left(1 + \frac{1}{\operatorname{poly}(d)}\right) \| \bm{M}^\perp (\bm{M}^\perp)^\top \bm{w}_i^{(0)} \|_2^2\\
    \geq& \|\bm{w}_i^{(T_1)}\|_2^2 - \| \bm{w}_i^{(0)}\|_2^2 \\
    \geq& \|\bm{w}_i^{(T_1)}\|_2^2 - \frac{\|\bm{w}_i^{(T_1)}\|_2^2}{\left( 1 +  \epsilon_{\min} C_z \log d \right)} \\
    \geq& \frac{1}{2} \|\bm{w}_i^{(T_1)}\|_2^2.
\end{aligned}
\end{equation}
\end{proof}

\begin{proof}[Proof of Theorem~\ref{Theorem C.1}(b):]
Note that from similar gradient calculations to those in the proof of Lemma~\ref{Lemma C.1} (b), 
we have, for \( j \in [d] \) and \( i \in \mathcal{M}_j^\star \):
\begin{equation}
\begin{aligned}
    &|\langle \bm{w}_i^{(T_1)}, \bm{M}_j \rangle| \\
    =& |\langle \bm{w}_i^{(T_1 - 1)}, \bm{M}_j \rangle - \eta \langle \nabla_{\bm{w}_i} L_{\mathrm{aug}}(f_{T_1 - 1}), \bm{M}_j \rangle \;\; \pm \frac{\|\bm{w}_i^{((T_1 - 1)}\|_2}{\operatorname{poly}(d_1)}| \\
    \geq&|\langle \bm{w}_i^{(T_1 -1)}, \bm{M}_j \rangle| \left(1 - \eta \lambda + \epsilon_{j}\frac{\eta C_z \log \log d}{d} \right) - \widetilde{O} \left( \frac{\eta \|\bm{w}_i^{((T_1 - 1)}\|_2}{\operatorname{poly}(d_1)} \right)\\  
    \geq& |\langle \bm{w}_i^{(0)}, \bm{M}_j \rangle| \left( 1 - \eta \lambda + \epsilon_{j} \frac{\eta C_z \log \log d}{d}  \right)^{T_1} - \widetilde{O} \left( \frac{\eta T_1 \|\bm{w}_i^{(T_1)}\|_2}{\operatorname{poly}(d_1)} \right).
\end{aligned}
\end{equation}

These gradient descent steps above can be derived from the last few inequalities in the proof of Lemma~\ref{Lemma C.1}(b).
\begin{equation}
\small
\begin{aligned}
    &|\langle \bm{w}_i^{(T_1)}, \bm{M}_j \rangle| \\
    \overset{\circled{1}}\geq& \frac{\sqrt{c_1 \log d}}{\sqrt{d}} \|\bm{M} \bm{M}^\top \bm{w}_i^{(0)}\|_2 \left( 1 - \eta \lambda + \epsilon_{j}\frac{\eta C_z \log \log d}{d}  \right)^{T_1} - \widetilde{O} \left( \frac{\eta T_1 \|\bm{w}_i^{(T_1)}\|_2}{\operatorname{poly}(d_1)} \right)\\
    \overset{\circled{2}}\geq& \frac{\sqrt{c_1 \log d}}{\sqrt{d}} \|\bm{M} \bm{M}^\top \bm{w}_i^{(0)}\|_2 \left( 1 - \eta \lambda + \epsilon_{j}\frac{\eta C_z \log \log d}{d}  \right)^{T_1} - \widetilde{O} \left( \frac{\|\bm{w}_i^{(T_1)}\|_2}{\operatorname{poly}(d)} \right)\\
    \overset{\circled{3}}\geq& \frac{\sqrt{c_1 \log d}}{\sqrt{d}} \|\bm{M} \bm{M}^\top \bm{w}_i^{(0)}\|_2 \left( 1 - \eta \lambda + \epsilon_{j} \frac{\eta C_z \log \log d}{d}  \right)^{T_1} - \widetilde{O} \left( \frac{ \left\| \bm{w}_i^{(0)} \right\|_2}{\operatorname{poly}(d)} \right)\\
    \overset{\circled{4}}\geq& \frac{\sqrt{c_1 \log d}}{\sqrt{d}} \|\bm{M} \bm{M}^\top \bm{w}_i^{(0)}\|_2 \left( 1 - \eta \lambda + \epsilon_{j} \frac{\eta C_z \log \log d}{d}  \right)^{T_1} - \widetilde{O} \left(  \frac{ \sqrt{\frac{d_1}{d}} \|\bm{M} \bm{M}^\top \bm{w}_i^{(0)}\|_2}{\operatorname{poly}(d)} \right)\\
    {\geq}& \frac{\sqrt{c_1 \log d}}{\sqrt{d}} \|\bm{M} \bm{M}^\top \bm{w}_i^{(0)}\|_2 \left( 1 - \eta \lambda + \epsilon_{j} \frac{\eta C_z \log \log d}{d}  \right)^{T_1} - \frac{\|\bm{M} \bm{M}^\top \bm{w}_i^{(0)}\|_2}{\operatorname{poly}(d)}\\
    \overset{\circled{5}}\geq&  \sqrt{1 + \gamma} \frac{\sqrt{\log d}}{\sqrt{d}} \|\bm{w}_i^{(T_1)}\|_2.
\end{aligned}
\end{equation}

$\circled{1}$ is because Definition~\ref{Definition B.1} (b). $\circled{2}$ is because $\frac{\eta T_1}{\operatorname{poly}(d_1)} \leq \frac{1}{\operatorname{poly}(d)}$. $\circled{3}$ is because $\left\| \bm{w}_i^{(t)} \right\|_2^2 \leq O(1) \left\| \bm{w}_i^{(0)} \right\|_2^2$~(equation 258). $\circled{4}$ is because Lemma~\ref{Lemma B.2} (a) (b). $\circled{5}$ holds because the following equation is valid:
\begin{equation}
\begin{aligned}
    &|\langle \bm{w}_i^{(T_1)}, \bm{M}_j \rangle|\\
    {\geq}& \frac{\sqrt{c_1 \log d}}{\sqrt{d}} \|\bm{M} \bm{M}^\top \bm{w}_i^{(0)}\|_2 \left( 1 - \eta \lambda + \epsilon_{j} \frac{\eta C_z \log \log d}{d}  \right)^{T_1} - \frac{\|\bm{M} \bm{M}^\top \bm{w}_i^{(0)}\|_2}{\operatorname{poly}(d)}\\
    =& \frac{\sqrt{c_1 \log d}}{\sqrt{d}} \|\bm{M} \bm{M}^\top \bm{w}_i^{(0)}\|_2 \left(1 - \eta \lambda + \epsilon_j \frac{d_1}{d} C_z \log d \right) - \frac{\|\bm{M} \bm{M}^\top \bm{w}_i^{(0)}\|_2}{\operatorname{poly}(d)}\\
    \geq& (\frac{\epsilon_j}{\epsilon_{\max}}) \frac{\sqrt{c_1 \log d}}{\sqrt{d}} \|\bm{M} \bm{M}^\top \bm{w}_i^{(0)}\|_2 \left(1 - \eta \lambda + \epsilon_{\max} \frac{d_1}{d} C_z \log d \right) - \frac{\|\bm{M} \bm{M}^\top \bm{w}_i^{(0)}\|_2}{\operatorname{poly}(d)}\\
    \geq& (\frac{\epsilon_j}{\epsilon_{\max}}) \frac{\sqrt{c_1 \log d}}{\sqrt{d}} \|\bm{M} \bm{M}^\top \bm{w}_i^{(T_1)}\|_2 \\
    \overset{\circled{6}}\geq& \frac{1}{\sqrt{2}} (\frac{\epsilon_j}{\epsilon_{\max}}) \frac{\sqrt{c_1 \log d}}{\sqrt{d}} \|\bm{w}_i^{(T_1)}\|_2 \\
    \geq& \frac{1}{\sqrt{2}} (\frac{\epsilon_{\min}}{\epsilon_{\max}}) \frac{\sqrt{c_1 \log d}}{\sqrt{d}} \|\bm{w}_i^{(T_1)}\|_2 \\
    \geq& \sqrt{1 + \gamma} \frac{\sqrt{\log d}}{\sqrt{d}} \|\bm{w}_i^{(T_1)}\|_2.
\end{aligned}
\end{equation}

$\circled{6}$ holds because of the conclusion of Theorem C.1(a).

\end{proof}

\begin{proof}[Proof of Theorem~\ref{Theorem C.1}(c):]
Theorem~\ref{Theorem C.1}(c) can be verified using Definition~\ref{Definition B.1} (b), Lemma~\ref{Lemma B.2} (a) (b) together with the proof of Lemma~\ref{Lemma C.1}(a).

\begin{equation}
\begin{aligned}
    &|\langle \bm{w}_i^{(T_1)}, \bm{M}_j \rangle| \\
    \leq& |\langle \bm{w}_i^{(0)}, \bm{M}_j \rangle| \left( 1 + \epsilon_j\frac{\eta C_z \log \log d}{d}  + \widetilde{O} \left( \frac{\eta}{d^2} \right) \right)^{T_1} + \widetilde{O} \left( \frac{\eta T_1 \|\bm{w}_i^{(T_1)}\|_2}{\operatorname{poly}(d_1)} \right).
\end{aligned}
\end{equation}

The above equation can be obtained from the first inequality in the proof of Lemma~\ref{Lemma C.1}(a).
\begin{equation}
\begin{aligned}   
    &|\langle \bm{w}_i^{(T_1)}, \bm{M}_j \rangle| \\
    \leq& |\langle \bm{w}_i^{(0)}, \bm{M}_j \rangle| \left( 1 + \epsilon_j\frac{\eta C_z \log \log d}{d}  + \widetilde{O} \left( \frac{\eta}{d^2} \right) \right)^{T_1} + \frac{\|\bm{M}(\bm{M})^{\top} \bm{w}_i^{(0)}\|_2}{\operatorname{poly}(d)}\\
    \leq& \sqrt{\frac{c_2 \log d}{d}} \left\| \bm{MM}^\top \bm{w}_i^{(0)} \right\|_2 \left( 1 + \epsilon_j\frac{\eta C_z \log \log d}{d}  + \widetilde{O} \left( \frac{\eta}{d^2} \right) \right)^{T_1} + \frac{\|\bm{M}(\bm{M})^{\top} \bm{w}_i^{(0)}\|_2^2}{\operatorname{poly}(d)} \\
    \leq& \frac{\epsilon_j}{\epsilon_{\min}} \sqrt{\frac{c_2 \log d}{d}} \left\| \bm{MM}^\top \bm{w}_i^{(T_1)} \right\|_2 + O \left( \frac{\|\bm{M}(\bm{M})^{\top} \bm{w}_i^{(0)}\|_2^2}{\operatorname{poly}(d)} \right)\\
    \leq& \frac{\epsilon_j}{\epsilon_{\min}}\sqrt{\frac{c_2 \log d}{d}} \|\bm{w}_i^{(T_1)}\|_2 + O \left( \frac{\|\bm{w}_i^{(T_1)}\|_2}{\operatorname{poly}(d)} \right)\\ 
    \leq& \frac{\epsilon_{\max}}{\epsilon_{\min}}\sqrt{\frac{c_2 \log d}{d}} \|\bm{w}_i^{(T_1)}\|_2 + O \left( \frac{\|\bm{w}_i^{(T_1)}\|_2}{\operatorname{poly}(d)} \right)\\ 
    \leq& \sqrt{1 -\gamma} \frac{\sqrt{2 \log d}}{\sqrt{d}} \|\bm{w}_i^{(T_1)}\|_2.
\end{aligned}
\end{equation}
\end{proof}

\begin{proof}[Proof of Theorem~\ref{Theorem C.1}(d):]

First, by Lemma~\ref{Lemma C.2} we obtain that for each $i \in [m]$, there are at most $O(2^{-\sqrt{\log d}} d)$ indices $j \in [d]$ such that:  
\begin{equation}
    |\langle \bm{w}_i^{(0)}, \bm{M}_j \rangle| \ge \Omega(\sigma_0 \log^{1/4} d).
\end{equation}

Next, we proceed to the formal calculation:
\begin{equation}
\begin{aligned}
|\langle \bm{w}_i^{(T_1)}, \bm{M}_j \rangle|
\ge&|\langle \bm{w}_i^{(0)}, \bm{M}_j \rangle| \left(1 - \eta \lambda + \epsilon_{j}\frac{\eta C_z \log \log d}{d} \right)^{T_1}\\ 
\ge& \Omega(\sigma_0 \log^{1/4} d) \left(1 - \eta \lambda + \epsilon_{j}\frac{\eta C_z \log \log d}{d} \right)^{T_1}\\
    \ge& \Omega(\frac{\|\bm{w}_i^{(0)}\|_2}{\sqrt{d_1}} \log^{1/4} d) \Theta(\frac{d_1}{d})\\
    \ge& \frac{\log^{1/4} d}{\sqrt{d}} \|\bm{w}_i^{(T_1)}\|_2.
\end{aligned}
\end{equation}

\end{proof}

\begin{proof}[Proof of Theorem~\ref{Theorem C.1}(e)]
At initialization we have 
\begin{equation}
    \bm{w}_i^{(0)} \sim \mathcal{N}(0, \sigma_0^2 I_{d_1}).
\end{equation}
Hence for any unit vector $\bm{M}_j^\perp$, the projection satisfies
\begin{equation}
    \langle \bm{w}_i^{(0)}, \bm{M}_j^\perp \rangle \sim \mathcal{N}(0, \sigma_0^2).
\end{equation}
By the standard Gaussian tail bound (sub-Gaussian with parameter $\sigma_0$),
\begin{equation}
    \mathbb{P}\!\left( \big| \langle \bm{w}_i^{(0)}, \bm{M}_j^\perp \rangle \big| > \sigma_0 \sqrt{2 \log d} \right) \leq 2 \exp\!\left(-\frac{t^2}{2\sigma_0^2}\right) = \frac{2}{d}.
\end{equation}
Therefore, with high probability,
\begin{equation}
    \big| \langle \bm{w}_i^{(0)}, \bm{M}_j^\perp \rangle \big|
    \leq \sigma_0 \cdot O(\sqrt{\log d}).
\end{equation}
Moreover, since $\|\bm{w}_i^{(0)}\|_2 = \Theta (\sigma_0 \sqrt{d_1})$ with high probability, the above bound is equivalently
\begin{equation}
    \big| \langle \bm{w}_i^{(0)}, \bm{M}_j^\perp \rangle \big|
    \leq O\!\left( \sqrt{\frac{\log d}{d_1}} \right) \cdot \|\bm{w}_i^{(0)}\|_2.
\end{equation}
We have
\begin{equation}
\begin{aligned}
    &\langle \bm{w}_i^{(T_1)}, \bm{M}_j^\perp \rangle\\
    =& (1 - \eta \lambda) \langle \bm{w}_i^{(T_1 - 1)}, \bm{M}_j^\perp \rangle \pm \widetilde{O} \left( \frac{ \eta \sum_{i \in [m]} \left\| \bm{w}_i^{(t)} \right\|_2^2 }{ \tau d } \cdot \left\| \bm{w}_i^{(t)} \right\|_2 \right) \\
    \leq& (1 - \eta \lambda) \langle \bm{w}_i^{(T_1 - 1)}, \bm{M}_j^\perp \rangle + \widetilde{O}\left( \frac{\eta \|\bm{w}_i^{(t)}\|_2}{\operatorname{poly}(d_1)} \right)  \\
    \leq& |\langle \bm{w}_i^{(0)}, \bm{M}_j^\perp \rangle| + O(T_1 \eta) \cdot \max_{t \leq T_1} \widetilde{O}\left( \frac{\|\bm{w}_i^{(t)}\|_2}{\operatorname{poly}(d_1)} \right) \\
    \leq& O\!\left( \sqrt{\frac{\log d}{d_1}} \right) \cdot \|\bm{w}_i^{(0)}\|_2 + O(T_1 \eta) \cdot \max_{t \leq T_1} \widetilde{O}\left( \frac{\|\bm{w}_i^{(t)}\|_2}{\operatorname{poly}(d_1)} \right)\\
    \leq& O\!\left( \sqrt{\frac{\log d}{d_1}} \right) \cdot \|\bm{w}_i^{(T_1)}\|_2 + O(T_1 \eta) \cdot \max_{t \leq T_1} \widetilde{O}\left( \frac{\|\bm{w}_i^{(t)}\|_2}{\operatorname{poly}(d_1)} \right)\\
    \overset{\circled{7}}\leq& O\left( \sqrt{\frac{\log d}{d_1}} \right) \|\bm{w}_i^{(T_1)}\|_2.
\end{aligned}
\end{equation}
$\circled{7}$ is because $\frac{T_1 \eta}{\operatorname{poly}(d_1)} \ll \sqrt{ \frac{\log d}{d_1} }$
\end{proof}

Lemma~\ref{Theorem 1} can be viewed as an informal version of Theorem~\ref{Theorem C.1}. In particular, part (a) of Lemma~\ref{Theorem 1} corresponds to the first inequality in the proof of Theorem~\ref{Theorem C.1}(b), 
while part (b) of Lemma~\ref{Theorem 1} corresponds to the first inequality in the proof of Theorem~\ref{Theorem C.1}(e). Hence, Lemma~\ref{Theorem 1} is essentially a simplified restatement of the more general Theorem~\ref{Theorem C.1}.

\subsection{Proof of Lemma 3.1}

\begin{proof}[Proof of Lemma~\ref{Theorem 1}]
For $j \in [d]$ and $i \in [m]$, the following bounds hold for all $t \in [0, T_1]$:

(a) Lower bound:
\begin{equation}
\begin{aligned}
    |\langle \bm{w}_{i}^{(t+1)}, \bm{M}_{j} \rangle|\geq |\langle \bm{w}_{i}^{(t)}, \bm{M}_{j} \rangle| (1 - \eta \lambda + \epsilon_{j}\frac{\eta C_z \log \log d}{d})-\widetilde{O}\!\Big(\frac{\eta \|\bm{w}_{i}^{(t)}\|_2}{\mathrm{poly}(d_1)}\Big).
\end{aligned}
\end{equation}

(b) Orthogonal component:
\begin{equation}
\begin{aligned}
    |\langle \bm{w}_{i}^{(t+1)}, \bm{M}_{j}^\perp \rangle|\leq (1-\eta\lambda)|\langle \bm{w}_{i}^{(t)}, \bm{M}_{j}^\perp \rangle|+ \widetilde{O}\!\Big(\frac{\eta \|\bm{w}_{i}^{(t)}\|_2}{\mathrm{poly}(d_1)}\Big).
\end{aligned}
\end{equation}
\end{proof}

\section{Theorem D.1}
\label{appendix:D}

The second stage is defined as the iterations \( t \geq T_1 \) but \( t \leq T_2 \), where 
\begin{equation}
    T_2 = T_1 + \Theta\left( \frac{d \tau \log d}{\epsilon_{\max} \eta \log \log d} \right)
\end{equation}

is defined as the iteration when one of the neuron \( i \in [m] \) satisfies 
\begin{equation}
    \left\| \bm{w}_i^{(T_2)} \right\|_2^2 \geq d \left\| \bm{w}_i^{(T_1)} \right\|_2^2.
\end{equation}

\subsection{Theorem D.1}

We first provide an explanation of Theorem~\ref{Theorem D.1}: (a) If a neuron $i$ is a lucky neuron for feature $j$, then the projection of $\bm{w}_i^{(T_2)}$ onto $\bm{M}_j$ is very large, on the order of the full neuron weight $\|\bm{w}_i^{(T_2)}\|_2$. In other words, such neurons have already ``focused'' on $\bm{M}_j$. (b) The bias term $\bm{b}_i^{(T_2)}$ grows proportionally with the neuron weight $\|\bm{w}_i^{(T_2)}\|_2$, and at iteration $T_2$ it reaches at least $\tfrac{\operatorname{polylog}(d)}{\sqrt{d}} \|\bm{w}_i^{(T_2)}\|_2$. In other words, the continuously increasing bias effectively controls the activation of the neuron $\bm{w}_i^{(T_2)}$. (c) Among the lucky neurons in $\mathcal{M}^\star_j$, there exists one neuron $\bm{w}_i^{(T_2)}$ whose projection onto $\bm{M}_j$ is the largest, and this neuron has a larger projection than all the other neurons in $\mathcal{M}_j$.

\begin{theorem}[Emergence of singletons]
\label{Theorem D.1}
For each neuron $i \in [m]$, the following conditions hold at iteration $t = T_2$:

(a) For each $j \in [d]$, if $i \in \mathcal{M}_j^{\star}$, then
\begin{equation}
    \big|\langle \bm{w}_i^{(T_2)}, \bm{M}_j \rangle \big| \;\ge\; \Omega(\frac{\varepsilon_{\min}}{\varepsilon_{\max}})\,\|\bm{w}_i^{(T_2)}\|_2.
\end{equation}

(b)
\begin{equation}
    \bm{b}_i^{(T_2)} \;\ge\; \frac{\operatorname{polylog}(d)}{\sqrt{d}} \, \|\bm{w}_i^{(T_2)}\|_2 .
\end{equation}

(c) Let 
\begin{equation}
    \alpha_j^{\star} = \max_{i \in \mathcal{M}_j^{\star}} \big| \langle \bm{w}_i^{(T_2)}, \bm{M}_j \rangle \big| ,
\end{equation}

then there exists a constant $C_j = \Theta(1)$ such that
\begin{equation}
    \big|\langle \bm{w}_i^{(t)}, \bm{M}_j \rangle \big| \;\le\; C_j \alpha_j^{\star}, 
    \quad \forall i \in \mathcal{M}_j.
\end{equation}
\end{theorem}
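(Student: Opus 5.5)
The plan is to run an induction over $t\in[T_1,T_2]$, taking Theorem~\ref{Theorem C.1} at $t=T_1$ as the base case. The invariants are: the projections onto non-feature directions stay within $O(\sqrt{\log d/d_1})\|\bm{w}_i^{(t)}\|_2$, and each neuron $i\notin\mathcal{M}_j$ keeps $|\langle \bm{w}_i^{(t)},\bm{M}_j\rangle|\le \sqrt{1-\gamma}\sqrt{2\log d/d}\,\|\bm{w}_i^{(t)}\|_2$.

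Part (b) is handled first, since it drives the other two parts. The bias is adapted so that $b_i^{(t)}$ tracks the typical noise-plus-off-feature activation $|\langle\bm{w}_i^{(t)},\bm{z}_X^{(r)}\rangle|$. By Theorem~\ref{Theorem C.1}(e) and Assumption~\ref{Ass: noise}, the noise part is of order $\sigma_\xi\|\bm{w}_i\|_2=\widetilde{O}(\|\bm{w}_i\|_2/\sqrt d)$. I will show that $b_i^{(t)}/\|\bm{w}_i^{(t)}\|_2$ is nondecreasing during the stage. It starts at $\mathrm{polylog}(d)/\sqrt d$, which exceeds that noise level, so the claimed bound holds at $T_2$.

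With this bias in place, only the tokens where $z_j\neq0$ and $|\langle\bm{w}_i,\bm{M}_j\rangle|\gtrsim b_i$ pass the BReLU threshold. This gives an approximately decoupled per-feature population gradient:
\[
-\langle\nabla_{\bm{w}_i}L,\bm{M}_j\rangle\approx \epsilon_j\tfrac{C_z\log\log d}{d}\,(1-\ell'_{p,t})\,\langle\bm{w}_i,\bm{M}_j\rangle\,\mathbb{1}\{|\langle\bm{w}_i,\bm{M}_j\rangle|\ge b_i\}.
\]
The derivation uses the positive-pair sign consistency (Assumption~\ref{Ass: pn}) for the $h_i(\bm{Y})$ term. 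The negative-sample contributions vanish to first order because negatives are independent and the latent signals are symmetric (Assumption~\ref{Ass: Latent}).

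For part (a), take $i\in\mathcal{M}_j^\star$. The coordinate $j$ is activated, since Theorem~\ref{Theorem C.1}(b) gives a ratio $\sqrt{1+\gamma}\sqrt{2\log d/d}$ that lies above the threshold. Every other coordinate $j'$ satisfies Theorem~\ref{Theorem C.1}(c), so it is either below threshold or grows at most at rate $\epsilon_{j'}$. The coordinate $j$ therefore grows geometrically at rate $1+\eta\epsilon_j C_z\log\log d/d$, while the rest of $\bm{w}_i$ grows at most at the regularized rate. Over the $\Theta(d\tau\log d/(\epsilon_{\max}\eta\log\log d))$ steps of the stage, the exponent gap grows like $(\epsilon_j/\epsilon_{\max})\tau\log d$. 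This exceeds $\log d$, so the $\bm{M}_j$-coordinate comes to dominate, giving $|\langle\bm{w}_i^{(T_2)},\bm{M}_j\rangle|\ge\Omega(\epsilon_{\min}/\epsilon_{\max})\|\bm{w}_i^{(T_2)}\|_2$. The frequency ratio enters through two places: the fraction of the norm that comes from growth at rate $\epsilon_j$ rather than $\epsilon_{\max}$, and the factor $1-\ell'_{p,t}$ once the loss starts saturating.

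For part (c), take any $i\in\mathcal{M}_j$. Its $j$-coordinate obeys the same multiplicative update at the same rate $\epsilon_j$, and starts from an initial value $\langle\bm{w}_i^{(0)},\bm{M}_j\rangle$. By Lemma~\ref{Lemma B.2}(a)(b) and the extreme-value behaviour of Gaussians over the $|\mathcal{M}_j^\star|\ge\Xi_1$ lucky neurons, that initial value is within a constant factor of the maximum over $\mathcal{M}_j^\star$. The comparison then propagates through $T_1$ and $T_2$, giving $C_j=\Theta(1)$; errors from Lemma~\ref{Lemma B.1} are absorbed as $1/\mathrm{poly}(d_1)$ terms.

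The main obstacle is controlling the logits $\ell'_{p,t}$ and $\ell'_{s,t}$ once the norms grow by a factor of $d$. I would first bound the similarity $\mathrm{Sim}\le\tau\,\mathrm{polylog}$ up to $T_2$, so that $1-\ell'_{p,t}=\Omega(1)$ and the growth rates above stay valid.
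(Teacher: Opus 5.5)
\textbf{Part (a): the exponent count contradicts the definition of $T_2$.} Your argument for (a) rests on the count that $\Theta(d\tau\log d/(\epsilon_{\max}\eta\log\log d))$ steps at per-step rate $\eta\epsilon_jC_z\log\log d/d$ produce an exponent gap $(\epsilon_j/\epsilon_{\max})\tau\log d>\log d$. That count cannot hold.

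\begin{itemize}
\item $T_2$ is the first iteration at which some neuron has $\|\bm{w}_i^{(t)}\|_2^2\ge d\|\bm{w}_i^{(T_1)}\|_2^2$.
\item Lucky neurons exist for a feature with $\epsilon_j=\epsilon_{\max}$ (Lemma~\ref{Lemma B.2}(c)), and they grow by your own mechanism. Your count would give their $\bm{M}_j$-coordinate a gain of $d^{\Theta(\tau)}$ before $T_2$, which is impossible.
\item With consistent accounting, the gain over stage 2 is of order $d^{\epsilon_j/\epsilon_{\max}}$; this is what the paper's own verification of $T_2$ uses.
\item A lucky neuron may start stage 2 with $|\langle\bm{w}_i,\bm{M}_j\rangle|$ as small as $\sqrt{2(1+\gamma)\log d/d}\,\|\bm{w}_i^{(T_1)}\|_2$ (Theorem~\ref{Theorem C.1}(b)). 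Meanwhile its sub-threshold remainder stays of order $\|\bm{w}_i^{(T_1)}\|_2$.
\end{itemize}

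So the exponential race alone does not bring the $\bm{M}_j$-coordinate up to a constant fraction of the norm when $\epsilon_j/\epsilon_{\max}$ is small. Your sketch also never actually derives the factor $\epsilon_{\min}/\epsilon_{\max}$; genuine domination would give $1-o(1)$. The $1-\ell'_{p,t}$ saturation you invoke does not occur in stage 2. There $\sum_i\|\bm{w}_i^{(t)}\|_2^2\le d\sum_i\|\bm{w}_i^{(T_1)}\|_2^2\le 1/\mathrm{poly}(d)$, so the logits are essentially uniform (Lemma~\ref{Lemma H.1}).

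The paper argues differently, in three steps.
\begin{enumerate}
\item It splits stage 2 at the first time $\|\bm{w}_i^{(t)}\|_2$ reaches $(1+\epsilon_j/\epsilon_{\max})\|\bm{w}_i^{(T_1)}\|_2$.
\item Up to that time it bounds the squared mass on all other feature directions and on $\bm{M}^\perp$ by $(1+o(1))\|\bm{w}_i^{(T_1)}\|_2^2$. Subtracting gives $|\langle\bm{w}_i,\bm{M}_j\rangle|^2\ge(2-o(1))(\epsilon_j/\epsilon_{\max})\|\bm{w}_i^{(T_1)}\|_2^2$; this is where the frequency ratio comes from.
\item Afterwards it shows $|\langle\bm{w}_i^{(t+1)},\bm{M}_j\rangle|/|\langle\bm{w}_i^{(t)},\bm{M}_j\rangle|\ge\|\bm{w}_i^{(t+1)}\|_2/\|\bm{w}_i^{(t)}\|_2$, so the fraction is preserved until $T_2$.
\end{enumerate}
In either route the load-bearing step is that a lucky neuron of feature $j$ actually crosses that norm threshold before $T_2$. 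The paper asserts this by appeal to Lemma~\ref{Lemma D.1}; your exponent count does not supply it.

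\textbf{Part (c): the bias is dropped.} You claim ordinary neurons obey ``the same multiplicative update at the same rate.'' This, like your approximate gradient, drops the bias from the BReLU output. For $i\in\mathcal{M}_j^\star$ the gradient along $\bm{M}_j$ is proportional to $|\langle\bm{w}_i^{(t)},\bm{M}_j\rangle|-\bm{b}_i^{(t)}$ (Lemma~\ref{Lemma E.10}(b)). At $T_1$ you only know $|\langle\bm{w}_i,\bm{M}_j\rangle|\ge\sqrt{1+\gamma}\,\bm{b}_i$ (Lemma~\ref{Lemma D.1}(a)). So the lucky neuron's effective rate starts out reduced by the constant factor $1-1/\sqrt{1+\gamma}$. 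A sustained constant-factor deficit over $\Theta(d\log d/(\eta\log\log d))$ steps would destroy $C_j=\Theta(1)$.

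The paper closes this with a race between bias and coordinate over the first $\Theta(d/\eta)$ steps.
\begin{itemize}
\item The bias grows only by $O(1)$, since $(1+\eta/d)^{\Theta(d/\eta)}=O(1)$ and $\|\bm{w}_i\|_2$ changes by a factor $1+o(1)$.
\item The lucky coordinate gains a factor $\log\log d$.
\item After that the deficit is a $1/\mathrm{polylog}(d)$ fraction of the rate, and the ratio product stays $\Omega(1)$.
\end{itemize}
You need this step.

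The initial $\Theta(1)$ ratio should come from comparing the Gaussian maximum over all $m=d^{C_m}$ neurons, $\max_{i'\in[m]}|\langle\bm{w}_{i'}^{(0)},\bm{M}_j\rangle|=O(\sigma_0\sqrt{\log d})$, with the threshold $\sqrt{c_1\log d}\,\sigma_0$ that defines $\mathcal{M}_j^\star$. Extreme values over the lucky set are not what gives it.

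\textbf{Part (b)} is fine once you state the paper's bias rule, $\bm{b}_i^{(t+1)}=\max\{\bm{b}_i^{(t)}(1+\eta/d),\ \bm{b}_i^{(t)}\|\bm{w}_i^{(t+1)}\|_2/\|\bm{w}_i^{(t)}\|_2\}$. That rule makes $\bm{b}_i^{(t)}/\|\bm{w}_i^{(t)}\|_2$ nondecreasing immediately.
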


\subsection{Useful Lemmas}

Next, we discuss Lemma~\ref{Lemma D.1}. For example, the first item illustrates how each feature $\bm{M}_j$ can be captured by certain subsets of neurons, a process influenced by the stochastic nature of initialization. We elaborate on the full content of Lemma~\ref{Lemma D.1} below.

(a) Lucky neurons have large projection on their feature direction, which means they can be activated and are clearly aligned with that feature. (b) If a neuron does not belong to $\mathcal{M}_j$, then its projection on feature $j$ stays small, which means it cannot be activated and has only weak alignment. (c) A neuron can only be well aligned with a small number of features, not with many at the same time. (d) The projection of a neuron weight on non-feature directions is very small, which means the neuron does not learn the non-feature directions. (e) The size of each neuron weight is controlled by its bias, so the weight does not grow without limit.

\begin{lemma}
\label{Lemma D.1}
For all iterations $t \in (T_1, T_2]$, the neurons $i \in [m]$ satisfy the following properties:

(a) For $j \in [d]$, if $i \in \mathcal{M}_j^\star$, then
\begin{equation}
    \left| \langle \bm{w}_i^{(t)}, \bm{M}_j \rangle \right| \geq \sqrt{1 + \gamma}\, \bm{b}_i^{(t)}.
\end{equation}

(b) For $j \in [d]$, if $i \notin \mathcal{M}_j$, then
\begin{equation}
    \left| \langle \bm{w}_i^{(t)}, \bm{M}_j \rangle \right| \leq \sqrt{1 - \gamma}\, \bm{b}_i^{(t)},
\end{equation}
and furthermore,
\begin{equation}
    \left| \langle \bm{w}_i^{(t)}, \bm{M}_j \rangle \right| \leq \widetilde{\mathcal{O}} \!\left( \frac{\| \bm{w}_i^{(t)} \|_2}{\sqrt{d}} \right).
\end{equation}

(c) For each $i \in [m]$, there are at most $\mathcal{O}(2^{-\sqrt{\log d}} d)$ many $j \in [d]$ such that
\begin{equation}
    \langle \bm{w}_i^{(t)}, \bm{M}_j \rangle^2 \;\geq\; \frac{(\bm{b}_i^{(t)})^2}{\sqrt{\log d}}.
\end{equation}

(d) For each $i \in [m]$, and for all $j \in [d_1] \setminus [d]$,
\begin{equation}
    \left| \langle \bm{w}_i^{(t)}, \bm{M}_j^\perp \rangle \right| \leq \widetilde{\mathcal{O}} \!\left( \frac{\| \bm{w}_i^{(t)} \|_2}{\sqrt{d_1}} \right).
\end{equation}

(e) For all $i \in [m]$,
\begin{equation}
    \| \bm{w}_i^{(t)} \|_2^2 \leq \frac{d (\bm{b}_i^{(t)})^2}{\log d}.
\end{equation}
\end{lemma}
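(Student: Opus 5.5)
I would prove Lemma~\ref{Lemma D.1} by a single induction on $t \in (T_1, T_2]$, carrying all five properties (a)--(e) together as the inductive hypothesis. The base case $t = T_1$ comes from Theorem~\ref{Theorem C.1}. For that I would fix the bias schedule so that at $T_1$ we have $\bm{b}_i^{(T_1)} = \sqrt{2\log d/d}\,\|\bm{w}_i^{(T_1)}\|_2$ up to a factor $1\pm o(\gamma)$. Then:
\begin{itemize}
\item Theorem~\ref{Theorem C.1}(b) gives (a).
\item Theorem~\ref{Theorem C.1}(c) gives the first part of (b); the $\widetilde{O}(\|\bm{w}_i\|_2/\sqrt{d})$ part follows since $\bm{b}_i = \widetilde{O}(\|\bm{w}_i\|_2/\sqrt d)$.
\item Theorem~\ref{Theorem C.1}(d), together with Lemma~\ref{Lemma C.2}, gives (c).
\item Theorem~\ref{Theorem C.1}(e) gives (d).
\item (e) holds because $(\bm{b}_i^{(T_1)})^2 d/\log d = 2\|\bm{w}_i^{(T_1)}\|_2^2$.
\end{itemize}
Throughout, I would replace empirical gradients by population gradients using Lemma~\ref{Lemma B.1}, at a cost of $\|\bm{w}_i\|_2/\mathrm{poly}(d_1)$ per step. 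Over the $T_2 - T_1 = \widetilde{O}(d\tau/\eta)$ steps this error accumulates to something negligible.

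For the inductive step I would decompose $\langle \nabla_{\bm{w}_i} L, \bm{M}_j\rangle$ into three terms.
\begin{itemize}
\item \textbf{Signal term.} When feature $j$ is active in $\bm{X}_n$, the positive-pair correlation (Assumption~\ref{Ass: pn}: same support and sign) makes $(\ell'_{p,t}-1)h_i(\bm{Y}_n)\langle \bm{z_X}^{(r)}, \bm{M}_j\rangle$ push $|\langle \bm{w}_i, \bm{M}_j\rangle|$ upward at rate about $\epsilon_j \eta C_z\log\log d/d$. The gate is open only when $|\langle \bm{w}_i,\bm{z_X}^{(r)}\rangle|\ge \bm{b}_i$. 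By (a), this gate is triggered by feature $j$ alone for lucky neurons.
\item \textbf{Cross-feature term.} Contributions from other active features $j'$ are controlled using (c) and sparsity $\Theta(\log\log d/d)$.
\item \textbf{Noise and negative term.} This is bounded by independence of negatives together with (d).
\end{itemize}
I would then choose the bias growth $\bm{b}_i^{(t+1)} = \bm{b}_i^{(t)}(1+\eta\,\epsilon_{\min}C_z\log\log d/d)$ or something similar, so that it tracks the minimum growth rate.

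With this bias schedule, each property propagates as follows:
\begin{itemize}
\item \textbf{(a).} For lucky neurons, $|\langle\bm{w}_i,\bm{M}_j\rangle|$ grows at least at rate $\epsilon_j \ge \epsilon_{\min}$. So the ratio to $\bm{b}_i$ stays at least $\sqrt{1+\gamma}$.
\item \textbf{(b).} For $i\notin\mathcal{M}_j$, feature $j$ alone never opens the gate. Coordinate $j$ then grows only through coincidental activations with other strong features, which costs an extra factor $O(\log\log d/d)$, plus the $\mathrm{poly}(d_1)^{-1}$ error. So the ratio to $\bm{b}_i$ stays at most $\sqrt{1-\gamma}$.
\item \textbf{(c).} A coordinate can cross the threshold $\bm{b}_i/\log^{1/4}d$ only if it already had a large initial projection. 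The count is therefore inherited from Lemma~\ref{Lemma C.2}.
\item \textbf{(d).} The component along $\bm{M}^\perp$ only decays by $(1-\eta\lambda)$ plus negligible error, while $\|\bm{w}_i\|_2$ does not shrink.
\end{itemize}

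\textbf{Main obstacle: (e).} Bounding $\|\bm{w}_i^{(t)}\|_2^2$ by $d(\bm{b}_i^{(t)})^2/\log d$ needs an \emph{upper} bound on the growth of the entire feature mass. Majority coordinates grow at rate $\epsilon_{\max}$, faster than the bias rate.

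My first attempt would be to sum over $j$. Only $O(2^{-\sqrt{\log d}}d)$ coordinates are above $\bm{b}_i/\log^{1/4} d$ by (c), and each of these is at most a constant times $\bm{b}_i$ times a growth ratio. The ratio $(1+\epsilon_{\max}\cdot)^{t}/(1+\epsilon_{\min}\cdot)^{t}$ is bounded over the window $T_2 - T_1$ because that window scales like $d\tau/\epsilon_{\max}$. The remaining small coordinates contribute at most $d\,\bm{b}_i^2/\sqrt{\log d}$.

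If this ratio blows up, the fallback is to let $\bm{b}_i$ grow at a rate calibrated to $\|\bm{w}_i\|_2$ (normalize $\bm{b}_i^{(t)} = c\sqrt{\log d/d}\,\|\bm{w}_i^{(t)}\|_2$). In that case (e) holds by definition, and the difficulty moves into re-verifying (a), which then requires the lucky coordinate to grow at least as fast as the norm. That in turn follows from the gate being driven mainly by feature $j$.
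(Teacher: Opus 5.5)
Your skeleton matches the paper: the base case at $T_1$ from Theorem~\ref{Theorem C.1} with $\bm{b}_i^{(T_1)}=\sqrt{2\log d/d}\,\|\bm{w}_i^{(T_1)}\|_2$, the signal/cross-feature/noise split of the gradient, and a joint induction on (a)--(e). The gap is the bias schedule, which is what actually carries the lemma, and it fails in both forms you propose.

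\textbf{Geometric schedule.} Take $\bm{b}_i^{(t+1)}=\bm{b}_i^{(t)}(1+\eta\epsilon_{\min}C_z\log\log d/d)$.
\begin{itemize}
\item Property (a) breaks for minority lucky neurons. Because of the BReLU threshold (cf.\ Lemma~\ref{Lemma E.10}(b)), the lucky increment is proportional to $|\langle\bm{w}_i,\bm{M}_j\rangle|-\bm{b}_i$, not to $|\langle\bm{w}_i,\bm{M}_j\rangle|$. So when the feature rate equals the bias rate, the ratio $|\langle\bm{w}_i,\bm{M}_j\rangle|/\bm{b}_i$ decreases by order $\eta\epsilon_{\min}\log\log d/d$ every step. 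It falls below $\sqrt{1+\gamma}$ long before $T_2$. You need a growing multiplicative margin between the two rates, not equality.
\item Property (e) breaks as well. The ratio you assert is bounded equals $\exp\bigl((\epsilon_{\max}-\epsilon_{\min})\tfrac{\eta C_z\log\log d}{d}(T_2-T_1)\bigr)=d^{\Theta(\tau(1-\epsilon_{\min}/\epsilon_{\max}))}$.
\item Independently of that, the small coordinates alone, bounded through the threshold in (c), can contribute $d\,(\bm{b}_i^{(t)})^2/\sqrt{\log d}$. This already exceeds the target $d\,(\bm{b}_i^{(t)})^2/\log d$. The $\bm{M}^\perp$ mass (up to half of $\|\bm{w}_i^{(T_1)}\|_2^2$) is not controlled by (c) at all.
\end{itemize}
So (e) cannot come from summing coordinates; it has to be an invariant of the bias update.

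\textbf{Norm-tracking fallback.} Taking $\bm{b}_i\propto\sqrt{\log d/d}\,\|\bm{w}_i\|_2$ makes (e) such an invariant, but then (b) is no longer monotone. The base case leaves no slack below $\sqrt{1-\gamma}\,\bm{b}_i$. For a neuron whose norm is not yet driven by a fast lucky coordinate, $\|\bm{w}_i\|_2$ can grow more slowly than a single non-$\mathcal{M}_j$ coordinate with large $\epsilon_j$. That coordinate's ratio to $\bm{b}_i$ then creeps upward.

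\textbf{The missing idea} is the paper's combined rule $\bm{b}_i^{(t+1)}=\max\{\bm{b}_i^{(t)}(1+\eta/d),\ \bm{b}_i^{(t)}\|\bm{w}_i^{(t+1)}\|_2/\|\bm{w}_i^{(t)}\|_2\}$.
\begin{itemize}
\item The floor rate $\eta/d$ sits strictly between the two growth regimes. It dominates the non-$\mathcal{M}_j$ update factor $(1+\epsilon_jO(\eta)/(d\,\mathrm{polylog}(d)))$ together with the additive $\widetilde{O}(\eta\|\bm{w}_i\|_2/d^2)$. Hence $|\langle\bm{w}_i,\bm{M}_j\rangle|/\bm{b}_i$ is non-increasing for $i\notin\mathcal{M}_j$, and (b) is immediate.
\item The floor also lies below the lucky rate $\epsilon_jC_z\eta\log\log d/d$ by a factor $\log\log d\to\infty$. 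This absorbs the loss factor $1-\bm{b}_i/|\langle\bm{w}_i,\bm{M}_j\rangle|\approx\gamma/2$ coming from the threshold.
\item The norm-tracking branch makes $\|\bm{w}_i\|_2/\bm{b}_i$ non-increasing, so (e) is simply inherited from $T_1$.
\end{itemize}
All the work then concentrates in (a), which needs two checks. First, the lucky coordinate must outgrow the factor $(1+\eta/d)$. Second, it must grow at least as fast as $\|\bm{w}_i\|_2$. The second holds because every other feature coordinate and every $\bm{M}^\perp$ coordinate grows at relative rate at most $O(\eta)/(d\,\mathrm{polylog}(d))$. So $\|\bm{w}_i^{(t+1)}\|_2^2/\|\bm{w}_i^{(t)}\|_2^2$, a weighted average of squared coordinate ratios, is at most the lucky coordinate's squared growth ratio.
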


\begin{lemma}
\label{Lemma D.2}
For each $i \in [m]$, define
\begin{equation}
    \Lambda_i := \left\{ j \in [d] : |\langle \bm{w}_i^{(0)}, \bm{M}_j \rangle| \le \tfrac{\sigma_0}{d} \right\} \subseteq [d].
\end{equation}
Then
\begin{equation}
    |\Lambda_i| = O\!\left( \tfrac{d}{\operatorname{polylog}(d)} \right).
\end{equation}
\end{lemma}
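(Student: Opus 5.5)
The statement to prove is Lemma~\ref{Lemma D.2}: $|\Lambda_i| = O(d/\operatorname{polylog}(d))$. Since $\Lambda_i$ is defined purely through the initialization $\bm{w}_i^{(0)}\sim\mathcal{N}(0,\sigma_0^2\bm{I}_{d_1})$, the argument is a probabilistic counting argument at $t=0$; no training dynamics are involved.

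\emph{Step 1 (distribution of the projections).} Because $\bm{M}$ is column-orthonormal, the coordinates
\[
g_j := \langle \bm{w}_i^{(0)}, \bm{M}_j\rangle, \qquad j\in[d],
\]
are i.i.d.\ $\mathcal{N}(0,\sigma_0^2)$ by rotation invariance of the Gaussian. Hence
\[
\Pr\bigl(|g_j|\le \sigma_0/d\bigr) = \Pr\bigl(|\mathcal{N}(0,1)|\le 1/d\bigr) \le \sqrt{2/\pi}\cdot \frac{1}{d} =: p.
\]
This follows since the standard Gaussian density is at most $1/\sqrt{2\pi}$.

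\emph{Step 2 (count).} The count $|\Lambda_i|=\sum_{j\in[d]}\bm{1}\{|g_j|\le\sigma_0/d\}$ is a sum of $d$ independent Bernoulli variables, each with mean at most $p$. Therefore $\mathbb{E}|\Lambda_i|\le \sqrt{2/\pi}=O(1)$.

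\emph{Step 3 (high-probability bound).} A Chernoff bound for binomial tails gives
\[
\Pr\bigl(|\Lambda_i|\ge k\bigr)\le \binom{d}{k}p^k \le \Bigl(\frac{e\,d\,p}{k}\Bigr)^k \le \Bigl(\frac{e}{k}\Bigr)^k .
\]
Choose $k=\log d$, or any threshold $k\le d/\operatorname{polylog}(d)$ that grows with $d$. This makes the failure probability $d^{-\omega(1)}$.

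\emph{Step 4 (union bound).} Take a union bound over the $m=d^{C_m}$ neurons. With high probability, every $i\in[m]$ satisfies $|\Lambda_i|\le O(\log d)\le O(d/\operatorname{polylog}(d))$, which is the claimed bound (and in fact considerably stronger).

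\emph{Main obstacle.} The only step needing care is Step~3: the tail bound must beat the polynomial union bound over $m$ neurons. Taking $k$ polylogarithmic rather than constant is what achieves this. No interaction with the training dynamics arises, since $\Lambda_i$ depends only on $\bm{w}_i^{(0)}$.
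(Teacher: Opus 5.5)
Your argument is correct and follows the paper's route. The paper's proof only notes that $\langle \bm{w}_i^{(0)},\bm{M}_j\rangle\sim\mathcal{N}(0,\sigma_0^2)$ and invokes Gaussian anti-concentration near the mean; you supply the binomial-tail and union-bound steps it leaves implicit, and obtain the stronger bound $|\Lambda_i|\le\log d$ for all $i\in[m]$ w.h.p.

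The only slip is the aside in Step 3 that any slowly growing threshold $k$ yields failure probability $d^{-\omega(1)}$. For, e.g., $k=\log\log d$ the bound $(e/k)^k$ is only $d^{-o(1)}$, which is too weak for the union bound over $d^{C_m}$ neurons. Your concrete choice $k=\log d$, or any larger $k$, is fine.
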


\subsection{Proof of Theorem D.1}

\begin{proof}[Proof of Theorem~\ref{Theorem D.1}:]

We follow similar analysis as in the proof of Lemma~\ref{Lemma D.1}. In order to prove (a)-(c), we have to discuss the two substages of the learning process below.

When all \( \|\bm{w}_i^{(t)}\|_2 \le (1+\frac{\varepsilon_{\min}}{\varepsilon_{\max}})\|\bm{w}_i^{(T_1)}\|_2 \): 
From similar analysis in the proof of Lemma~\ref{Lemma D.1}, the iteration complexity for a neuron \( i \in [m] \) to reach 
\( \|\bm{w}_i^{(t)}\|_2 \ge (1+\frac{\varepsilon_{\min}}{\varepsilon_{\max}})\|\bm{w}_i^{(T_1)}\|_2 \) is no smaller than 
\begin{equation}
    T'_{i,1} := \max\left\{ T_1 + \Omega\left( \frac{d \log d}{\epsilon_{\max} \eta \log \log d} \right), T_2 \right\}.
\end{equation}

When some \( \|\bm{w}_i^{(t)}\|_2 \ge (1+\frac{\varepsilon_{j}}{\varepsilon_{\max}}) \|\bm{w}_i^{(T_1)}\|_2 \).

We first prove Theorem~\ref{Theorem D.1}(a). In the first stage, for $j \notin \mathcal{N}_i$, we have
\begin{equation}
\begin{aligned}
    \sum_{j \in [d], j \notin \mathcal{N}_i} \langle \bm{w}_i^{(T'_{i,1})}, \bm{M}_j \rangle^2 &\le \sum_{j \in [d], j \notin \mathcal{N}_i} \langle \bm{w}_i^{(T_1)}, \bm{M}_j \rangle^2 \left( 1 + \epsilon_j \frac{O(\eta)}{d \, \operatorname{polylog}(d)} \right)^{T_2} + \widetilde{O} \left( \frac{ \|\bm{w}_i^{(T_1)}\|_2^2 }{d^{3/2}} \right)\\
    &\le (1 + o(1) (\frac{\epsilon_{j}}{\epsilon_{\max}}) + o(1) (\frac{\epsilon_{j}}{\epsilon_{\max}})^2 ) \left\| \bm{MM}^\top \bm{w}_i^{(T_1)} \right\|_2^2,    
\end{aligned}
\end{equation}
where we used the fact that $\|\bm{w}_i^{(T_1)}\|_2 \lesssim \| \bm{MM}^\top \bm{w}_i^{(T_1)} \|_2$.

For $j \in [d_1] \setminus [d]$ we have
\begin{equation}
\begin{aligned}
    &\sum_{j \in [d_1] \setminus [d]} \langle \bm{w}_i^{(T'_{i,1})}, \bm{M}^{\perp}_j \rangle^2 \\
    \le& \sum_{j \in [d_1] \setminus [d]} \langle \bm{w}_i^{(T_1)}, \bm{M}^{\perp}_j \rangle^2 + O\left( \frac{\eta (T'_{i,1} - T_1)}{d} \right) e^{ -\Omega(\log^{1/4} d) } \max_{t' \in [T_1, T'_{i,1}]} \|\bm{w}_i^{(t')}\|_2^2\\
    \le& \Big(1 + o\big(\frac{\epsilon_{j}}{\epsilon_{\max}}\big)\Big) \left\| \bm{M}^{\perp} (\bm{M}^{\perp})^\top \bm{w}_i^{(T_1)} \right\|_2^2.
\end{aligned}
\end{equation}

Typically, if $i \in \mathcal{M}^\star_j,$ there exists $t \le T_2$ such that $\|\bm{w}_i^{(t)}\|_2 \ge (1+\frac{\varepsilon_{j}}{\varepsilon_{\max}}) \|\bm{w}_i^{(T_2)}\|_2$, as we have argued in the proof of Lemma~\ref{Lemma D.1}. Thus, we have
\begin{equation}\label{eqn: T_temp}
\begin{aligned}
    |\langle \bm{w}_i^{(T'_{i,1})}, \bm{M}_j \rangle|^2 &\ge \|\bm{w}_i^{(T'_{i,1})}\|_2^2 - \sum_{j \in [d], j \notin \mathcal{N}_i} \langle \bm{w}_i^{(T'_{i,1})}, \bm{M}_j \rangle^2 
    - \sum_{j \in [d_1] \setminus [d]} \langle \bm{w}_i^{(T'_{i,1})}, \bm{M}^{\perp}_j \rangle^2\\
    &\ge (1+\frac{\varepsilon_{j}}{\varepsilon_{\max}})^2 \|\bm{w}_i^{(T_1)}\|_2^2 - (1 + o(1) (\frac{\epsilon_{j}}{\epsilon_{\max}}) + o(1) (\frac{\epsilon_{j}}{\epsilon_{\max}})^2) \|\bm{w}_i^{(T_1)}\|_2^2 \\
    &\ge \frac{\varepsilon_{j}}{\varepsilon_{\max}}\cdot (2-o(1)) \|\bm{w}_i^{(T_1)}\|_2^2,
    \end{aligned}
\end{equation}
which proves the claim.

In the second stage, if \( i \in \mathcal{M}_j^\star \), then from similar calculations as above, we can prove by induction that starting from \( t = T'_{i,1} \), it holds:
\begin{equation}
\begin{aligned}
    |\langle \bm{w}_i^{(t+1)}, \bm{M}_j \rangle| &\ge |\langle \bm{w}_i^{(t)}, \bm{M}_j \rangle| \left( 1 + \Omega\left( \epsilon_j \frac{ \eta \log \log d }{ d } \right) \right) \\
    &\ge \|\bm{w}_i^{(t)}\|_2 \left( 1 + \Omega\left( \epsilon_j \frac{ \eta \log \log d }{ d } \right) \right) \\
    \sum_{j' \in [d], j' \ne j} \langle \bm{w}_i^{(t+1)}, \bm{M}_{j'} \rangle^2 &\le \sum_{j' \in [d], j' \ne j} \langle \bm{w}_i^{(t)}, \bm{M}_{j'} \rangle^2 \left(1 + \epsilon_j\frac{O(\eta)}{d \, \operatorname{polylog}(d)}\right)^2 \\
    \sum_{j \in [d_1] \setminus [d]} \langle \bm{w}_i^{(t+1)}, \bm{M}^{\perp}_j \rangle^2 &\le \sum_{j \in [d_1] \setminus [d]} \langle \bm{w}_i^{(t)}, \bm{M}^{\perp}_j \rangle^2 \left(1 + \frac{O(\eta)}{d \, \operatorname{polylog}(d)}\right)^2,
\end{aligned}
\end{equation}
which implies
\begin{equation}
    |\langle \bm{w}_i^{(t+1)}, \bm{M}_j \rangle| 
    \ge |\langle \bm{w}_i^{(t)}, \bm{M}_j \rangle| \cdot \frac{ \|\bm{w}_i^{(t+1)}\|_2 }{ \|\bm{w}_i^{(t)}\|_2 } \ge (1 - o(1)) \|\bm{w}_i^{(t+1)}\|_2.
\end{equation}

Next, we prove Theorem~\ref{Theorem D.1}(b). In the first stage, the bias growth is large, i.e.,
\begin{equation}
\begin{aligned}
    \bm{b}_i^{(T'_{i,1})} &\ge \bm{b}_i^{(T_1)} (1 + \frac{\eta}{d})^{T'_{i,1} - T_1} \ge \bm{b}_i^{(T_1)} \cdot \operatorname{polylog}(d) \\
    &\ge \frac{\operatorname{polylog}(d)}{\sqrt{d}} \|\bm{w}_i^{(T_1)}\|_2 \ge \frac{\operatorname{polylog}(d)}{\sqrt{d}} \|\bm{w}_i^{(T'_{i,1})}\|_2.
\end{aligned}
\end{equation}

In the second stage, the bias is large consistently, i.e.,
\begin{equation}
    \bm{b}_i^{(t+1)} \ge \bm{b}_i^{(t)} \cdot \frac{ \|\bm{w}_i^{(t+1)}\|_2 }{ \|\bm{w}_i^{(t)}\|_2 } \ge \frac{ \operatorname{polylog}(d) }{ \sqrt{d} } \|\bm{w}_i^{(t+1)}\|_2 \ge \frac{1}{4} \|\bm{w}_i^{(T'_{i,1})}\|_2.
\end{equation}

Finally, we prove Theorem~\ref{Theorem D.1}(c): Assuming $\langle \bm{w}_i^{(t)}, \bm{M}_j \rangle > 0$ (the opposite case is similar), from $t = T_1$, for $i \in \mathcal{M}_j^\star$, we have
\begin{equation}
\begin{aligned}
    \langle \bm{w}_i^{(t+1)}, \bm{M}_j \rangle &= \left( \langle \bm{w}_i^{(t)}, \bm{M}_j \rangle - \bm{b}_i^{(t)} \right) \left( 1 + \epsilon_j \frac{\eta C_z \log \log d}{d} \right) \pm O\left( \frac{ \eta | \langle \bm{w}_i^{(t)}, \bm{M}_j \rangle | }{ d \, \operatorname{polylog}(d)} \right)\\
    &\ge \Omega(1) \langle \bm{w}_i^{(t)}, \bm{M}_j \rangle \left( 1 + \epsilon_j \frac{\eta C_z \log \log d}{d} \left( 1 - \frac{1}{\operatorname{polylog}(d)} \right) \right)\\
    &\ge \Omega(1) \langle \bm{w}_i^{(T_1)}, \bm{M}_j \rangle \left( 1 + \epsilon_j \frac{\eta C_z \log \log d}{d} \left( 1 - \frac{1}{\operatorname{polylog}(d)} \right) \right)^{t + 1 - T_1}.
\end{aligned}
\end{equation}

which implies that after certain iteration $t = T_1 + T'$, where $T' = \Theta\left( \frac{d}{\eta} \right)$, we shall have
\begin{equation}
    |\langle \bm{w}_i^{(T_1 + T')}, \bm{M}_j \rangle| \ge \log\log d \cdot |\langle \bm{w}_i^{(T_1)}, \bm{M}_j \rangle| \ge \bm{b}_i^{(T_1)} \cdot \log\log d.
\end{equation}

However, at iteration
$t = T_1 + \Theta\left(\frac{d}{\eta}\right)$, we can see from previous analysis that $\|\bm{w}_i^{(t)}\|_2 \le (1 + o(1)) \|\bm{w}_i^{(T_1)}\|_2$, so the bias growth can be bounded as
\begin{equation}
\begin{aligned}
    \bm{b}_i^{(t)} &\le \bm{b}_i^{(T_1)} \left(1 + \frac{\eta}{d}\right)^{\Theta(\frac{d}{\eta})} \cdot \max\left\{ \frac{\|\bm{w}_i^{(t)}\|_2}{\|\bm{w}_i^{(T_1)}\|_2}, 1 \right\} \\
    &\le \bm{b}_i^{(T_1)} \left(1 + \frac{\eta}{d} \cdot{\Theta(\frac{d}{\eta})} \right) \cdot \max\left\{ (1 + o(1)), 1 \right\} \\
    &\le O(\bm{b}_i^{(T_1)}).
\end{aligned}
\end{equation}

Now from our initialization properties in Lemma~\ref{Lemma D.2}, we have that$\langle \bm{w}_{i'}^{(0)}, \bm{M}_j \rangle^2 \le O(\sigma_0^2 \log d)$ for all $i \in [m]$. Thus via similar arguments, we also have
\begin{equation}
    |\langle \bm{w}_{i'}^{(t)}, \bm{M}_j \rangle| 
    \le |\langle \bm{w}_{i'}^{(0)}, \bm{M}_j \rangle| \left( 1 + \epsilon_j \frac{\eta C_z \log \log d}{d} \left( 1 \pm \frac{1}{\operatorname{polylog}(d)} \right) \right)^t.
\end{equation}

holds for all $i' \in [m]$. Now it is easy to see that for $t \le T_2 = T_1 + \Theta\left( \frac{d \tau \log d}{\eta \log \log d} \right)$, we have
\begin{equation}
\begin{aligned}
    \frac{|\langle \bm{w}_i^{(t)}, \bm{M}_j \rangle|}{|\langle \bm{w}_{i'}^{(t)}, \bm{M}_j \rangle|} &\ge \Omega(1) \cdot \frac{|\langle \bm{w}_i^{(T_1)}, \bm{M}_j \rangle| \left( 1 + \epsilon_j \frac{\eta C_z \log \log d}{d} \left( 1 - \frac{\eta}{\operatorname{polylog}(d)} \right) \right)^{t-T_1}} {|\langle \bm{w}_{i'}^{(T_1)}, \bm{M}_j \rangle| \left( 1 + \epsilon_j\frac{\eta C_z \log \log d}{d} + \frac{\eta}{d \operatorname{polylog}(d)} \right)^{t-T_1}} \\
    & \ge \left(1 - O\left( \epsilon_j \frac{\eta \log \log d}{d \operatorname{polylog}(d))} \right) \right)^{t-T_1} \ge \Omega(1).
\end{aligned}
\end{equation}
Thus, the last claim is proved.
\end{proof}

\subsection{Proof of Lemma 3.2}

Lemma~\ref{Theorem 2} can be viewed as an informal version of Theorem~\ref{Theorem D.1}. 
In particular, part (a) of Lemma~\ref{Theorem 2} corresponds to \eqref{eqn: T_temp} and Lemma~\ref{Lemma B.2} (c), while part (b) of Lemma~\ref{Theorem 2} corresponds to another formulation of Theorem~\ref{Theorem D.1} (c).

\section{Theorem E.1}
\label{appendix:E}

\subsection{Theorem E.1}

At the final stage, we show that sparse activation of neurons naturally leads to convergence toward sparse solutions, thereby guaranteeing sparse representations. For all $t \ge T_2$:

\begin{theorem}
\label{Theorem E.1}
For all iterations $t$, the neurons $i \in [m]$ satisfy the following properties:

(a) For $j \in [d]$, if $i \in \mathcal{M}_j^\star$, then
\begin{equation}
    \bigl|\langle \bm{w}_i^{(t)}, \bm{M}_j \rangle \bigr| \;\ge\; \Omega(1)\,\|\bm{w}_i^{(t)}\|_2.
\end{equation}

(b) For $i \in [m]$, we have
\begin{equation}
    \|\bm{w}_i^{(t)}\|_2 \;\le\; O(1).
\end{equation}

(c) For each $j \in [d]$, 
\begin{equation}
    \mathfrak{F}_j^{(t)} := \sum_{i \in \mathcal{M}_j} \langle \bm{w}_i^{(t)}, \bm{M}_j \rangle^2 = \Theta((\frac{\epsilon_j}{\epsilon_{\max}})^2 \tau \log^3 d).
\end{equation}   

(d) Let $j \in [d]$ and $i \in \mathcal{M}_j^\star$, then there exists $C = \Theta(1)$ such that
\begin{equation}
    \bigl|\langle \bm{w}_i^{(t)}, \bm{M}_j \rangle\bigr|\;\ge\; C \max_{i' \in \mathcal{M}_j} \bigl|\langle w_{i'}^{(t)}, \bm{M}_j \rangle\bigr|.
\end{equation}

(e) For $i \notin \mathcal{M}_j$, it holds
\begin{equation}
    \bigl|\langle \bm{w}_i^{(t)}, \bm{M}_j \rangle\bigr|\;\le\; O\!\left(\frac{\epsilon_j}{\epsilon_{\max}}\tfrac{1}{\sqrt{d}\,\Xi^{5}_2}\right)\,\|\bm{w}_i^{(t)}\|_2.
\end{equation}

(f) For any $i \in [m]$ and any $j \in [d_1]\setminus[d]$, it holds
\begin{equation}
    \bigl|\langle \bm{w}_i^{(t)}, \bm{M}_j^\perp \rangle\bigr|\;\le\; O\!\left(\tfrac{1}{\sqrt{d_1}\,\Xi^{5}_2}\right)\,\|\bm{w}_i^{(t)}\|_2. 
\end{equation}

(g) For all $i \in [m]$, the bias satisfies
\begin{equation}
    \bm{b}_i^{(t)} \;\ge\; \tfrac{\operatorname{polylog}(d)}{\sqrt{d}}\,\|\bm{w}_i^{(t)}\|_2.
\end{equation}
\end{theorem}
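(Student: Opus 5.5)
The plan is a single induction over $t \ge T_2$, proving (a)--(g) jointly as invariants. The base case $t=T_2$ is supplied by Theorem~\ref{Theorem D.1} and Lemma~\ref{Lemma D.1}:
\begin{itemize}
\item Theorem~\ref{Theorem D.1}(a) gives (a) up to the factor $\varepsilon_{\min}/\varepsilon_{\max}$, which is a constant.
\item Theorem~\ref{Theorem D.1}(b) gives (g).
\item Theorem~\ref{Theorem D.1}(c) gives (d).
\item Lemma~\ref{Lemma D.1}(b),(d) give weaker versions of (e),(f).
\end{itemize}
For the inductive step I would assume all properties hold at every $s \le t$. I would then decompose the population gradient of $L_{\mathrm{aug}}$ (Lemma~\ref{Lemma B.1} controls the empirical error by $\|\bm w_i^{(t)}\|_2/\mathrm{poly}(d_1)$) into three components: along $\bm M_j$ for $j \in \mathcal N_i$, along $\bm M_j$ for $j\notin\mathcal N_i$, and along $\bm M^\perp$. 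The key structural fact comes from (g): the bias exceeds $\mathrm{polylog}(d)\|\bm w_i\|_2/\sqrt d$. Combined with (e),(f), this means neuron $i$ fires on a token only if that token contains some feature $j\in\mathcal M$-aligned with $i$. The indicator $\bm 1_{|\langle \bm w_i,\bm z_X^{(r)}\rangle|\ge b_i}$ therefore essentially selects tokens with $z_j\neq 0$ for such $j$. With this, the gradient projected onto $\bm M_j$ becomes $\epsilon_j\frac{\log\log d}{d}(1-\ell'_p)\,\mathbb E[z_j z_j^+]\langle\bm w_i,\bm M_j\rangle$ minus a negative-pair term. The projections onto $j\notin\mathcal M$ and onto $\bm M^\perp$ pick up only noise/feature cross terms of size $e^{-\Omega(\log^{1/4}d)}$ or $1/\mathrm{poly}(d_1)$. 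These are then absorbed by the weight decay $(1-\eta\lambda)$, which preserves (e) and (f) with the stated $\Xi_2^{-5}$ slack.

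For (b) and (c) I would use a self-balancing argument on the logits. As $\mathfrak F_j^{(t)}$ grows, the positive similarity $\langle f(X),f(Y)\rangle/\tau$ grows like $\sum_j \mathfrak F_j z_j^2/\tau$. Once $\mathfrak F_j \gtrsim (\epsilon_j/\epsilon_{\max})^2\tau\log^3 d$, the factor $(1-\ell'_p)$ becomes $1/\mathrm{poly}(d)$, so the growth rate $\epsilon_j\frac{\log\log d}{d}(1-\ell'_p)$ falls below $\lambda$ and the coordinate shrinks. Below that level the growth dominates $\lambda$. This two-sided comparison traps $\mathfrak F_j$ in the $\Theta(\cdot)$ window of (c). The $(\epsilon_j/\epsilon_{\max})^2$ scaling enters because the equilibrium balances $\epsilon_j$ against a $\lambda$ calibrated to $\epsilon_{\max}$. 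Since $|\mathcal M_j|\le\Xi_2$ (Lemma~\ref{Lemma B.2}(c)) and each neuron touches at most $O(d^{1-(\epsilon_{\min}/\epsilon_{\max})^2})$ features (Lemma~\ref{Lemma B.2}(d)), $\|\bm w_i\|_2$ stays $O(1)$ after normalizing by $\tau$ polylog factors, which gives (b).

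For (a) and (d), each $i\in\mathcal M_j$ has the same multiplicative update factor on $\bm M_j$ up to $1\pm 1/\mathrm{polylog}(d)$, as in the proof of Theorem~\ref{Theorem D.1}(c). Ratios between lucky and ordinary neurons are therefore preserved up to constants; I would make this rigorous by bounding the accumulated discrepancy over the $O(d^{2}/\eta)$ relevant steps. Lucky neurons keep $\Omega(1)$ of their norm on $\bm M_j$ because the other directions grow strictly slower. Finally, (g) follows from the bias update, which tracks $\|\bm w_i\|_2$ multiplicatively as in Theorem~\ref{Theorem D.1}(b).

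The main obstacle is (c): controlling $\ell'_p$ and $\ell'_s$ uniformly. This requires showing that the negative-pair contributions cancel in expectation and only perturb the equilibrium by lower-order terms, despite their coupling across all features through the softmax denominator. I would first handle this with a mean-field approximation that replaces the denominator by its expectation, conditioned on the sparse support via Assumption~\ref{Ass: Latent}.
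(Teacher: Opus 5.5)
Your overall structure matches the paper's: induction from $T_2$, sparse firing from (g), and saturation of the positive logit as the mechanism behind (c), which is what Lemma~\ref{Lemma E.3} encodes. But your argument for (d) fails as written. The estimate you import from Theorem~\ref{Theorem D.1}(c) knows the growth increment on $\bm{M}_j$ only up to a relative error $1/\operatorname{polylog}(d)$. So the ratio $|\langle \bm{w}_i^{(t)},\bm{M}_j\rangle|/|\langle \bm{w}_{i'}^{(t)},\bm{M}_j\rangle|$ may drift by a factor $1\pm O\bigl(\eta\epsilon_j\log\log d/(d\operatorname{polylog}(d))\bigr)$ per step. That is harmless over the $\widetilde{O}(d/\eta)$ steps of Stage~2. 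Over the $\Theta(d^{1.99}/\eta)$ steps of Stage~3, however, it compounds to $\exp\bigl(\pm d^{0.99}/\operatorname{polylog}(d)\bigr)$. Even charging the relative error only to the equilibrium rate $\eta\lambda$ leaves an exponent $d^{1.99}\lambda/\operatorname{polylog}(d)\ge d^{0.5}/\operatorname{polylog}(d)$, so no bookkeeping of a uniform relative error gives $\Theta(1)$.

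The paper instead splits at $T_j'$, the time when $\mathfrak{F}_j^{(t)}$ reaches the threshold of Lemma~\ref{Lemma E.3}. Before $T_j'$, the Stage-2 argument runs over a window of length $\widetilde{O}(d/\eta)$. After $T_j'$, two Stage-3 facts take over:
\begin{itemize}
\item The normalized signals $\Psi_{i,j}^{(t)}/\langle \bm{w}_i^{(t)},\bm{M}_j\rangle$ of different neurons in $\mathcal{M}_j$ differ only by an \emph{additive} $O(\bm{b}_i^{(t)}/d^2)/\langle \bm{w}_{i'}^{(t)},\bm{M}_j\rangle$. This is because all of them fire essentially on $\{z_j\neq 0\}$ and see the same logit weights; the $O(\bm{b}_i^{(t)}/d^2)$ term is the second bound of Lemma~\ref{Lemma E.3}.
\item Near equilibrium, $|\Psi_{i,j}^{(t)}/\langle \bm{w}_i^{(t)},\bm{M}_j\rangle-\lambda|\le O(\Xi_2/(\eta t))$.
\end{itemize}
Together these make the per-step drift $O(\Xi_2^2/(t\sqrt{d}))$. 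Then $\prod_{t'}\bigl(1-O(\Xi_2^2/(t'\sqrt{d}))\bigr)=\Omega(1)$, because the harmonic sum is only logarithmic. In short, the equilibrium you derive for (c) has to be fed back into (d).

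Your derivation of (b) also does not reach $O(1)$. From (c), (e), (f) the only per-coordinate bound is $\langle \bm{w}_i^{(t)},\bm{M}_j\rangle^2\le\mathfrak{F}_j^{(t)}=\Theta(\tau\log^3 d)$. Your ingredients therefore give only $\|\bm{w}_i^{(t)}\|_2^2\lesssim|\mathcal{N}_i|\,\tau\log^3 d$. Lemma~\ref{Lemma B.2}(d) caps this only at a polynomial in $d$, and it is still polylogarithmic even when $\mathcal{N}_i=\{j\}$. The bound $|\mathcal{M}_j|\le\Xi_2$ caps a count, so it can only lower-bound a lucky coordinate, never upper-bound one. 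No normalization by $\tau$ rescues an absolute $O(1)$ claim.

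The paper gets (b) as a corollary of (c) \emph{and} (d). By (d), each of the $|\mathcal{M}_j^\star|\ge\Xi_1$ lucky neurons (Lemma~\ref{Lemma B.2}(c)) carries at least $C\max_{i'\in\mathcal{M}_j}|\langle \bm{w}_{i'}^{(t)},\bm{M}_j\rangle|$. Since $\mathcal{M}_j^\star\subseteq\mathcal{M}_j$, this forces $\max_{i'\in\mathcal{M}_j}\langle \bm{w}_{i'}^{(t)},\bm{M}_j\rangle^2\le\mathfrak{F}_j^{(t)}/(C^2\Xi_1)$. Summed over $\mathcal{N}_i$, this is $O(1)$ once $C_m$ is large enough that $\Xi_1\gtrsim|\mathcal{N}_i|\,\tau\log^3 d$, and (e), (f) handle the remaining directions. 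So (b) must come after (d) within each inductive step.

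A smaller point concerns (e), (f). Weight decay contracts only by $\eta\lambda$ with $\lambda\le d^{-1.01}$. The Stage-2 estimates (tails $e^{-\Omega(\log^{1/4}d)}$) are too weak against this: the growth factor $1+\epsilon_j O(\eta)/(d\operatorname{polylog}(d))$ they allow on non-aligned coordinates already exceeds the decay. You need the polynomially small Stage-3 estimates instead. These are the $e^{-\Omega(\log^2 d)}$ tails of Lemma~\ref{Lemma E.9}, which give $L_1,L_2\le d^{-\Omega(\log d)}$ and hence growth $\epsilon_j\eta/d^2<\eta\lambda$, together with Lemma~\ref{Lemma E.4}. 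For the same reason, the Lemma~\ref{Lemma D.1} bounds, which are weaker than (e), (f) by a factor $\Xi_2^{5}$, cannot serve as the base case of an induction on (e), (f) themselves.
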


\subsection{Useful Lemmas}

When all the conditions in Theorem~\ref{Theorem E.1} hold for some iteration $t \geq T_2$, we have the following fact, which is a simple corollary of Lemma~\ref{Lemma E.9}.

\begin{lemma}
\label{Lemma E.1}
For any $i \in [m]$, we denote $\mathcal{N}_i = \{ j \in [d] : i \in \mathcal{M}_j \}$. Suppose Theorem~\ref{Theorem E.1} holds at iteration $t \geq T_2$, then with high probability over $x \in \mathcal{D}_x$:
\begin{equation}
    \max_{x \in \{\bm{X}_{n}, \bm{Y}_{n}\}} \bm{1}_{h_{i,t}(x) \ne 0}\;\leq\; \sum_{j \in \mathcal{N}_i} \bm{1}_{|\hat{z}_{p,j}| \ne 0},
\end{equation}

\noindent which implies that
\begin{equation}
    \max_{x \in \{\bm{X}_{n}, \bm{Y}_{n}\} \cup \mathfrak{N}}\Pr\!\big(h_{i,t}(x) \ne 0\big)\;\leq\; O\!\left(\frac{\log \log d}{d}\right).
\end{equation}
\end{lemma}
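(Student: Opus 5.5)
The first inequality is proved by bounding, for a fixed sample, the event that neuron $i$ fires by a union of events that some feature in $\mathcal{N}_i$ is active. The second then follows from a union bound over $\mathcal{N}_i$ combined with the sparsity of the latent signal.

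\textbf{Step 1: reduce to a single token.} Fix $x\in\{\bm{X}_n,\bm{Y}_n\}$. By the form of $h_i$, $h_{i,t}(x)\neq 0$ only if some token $r$ satisfies $|\langle \bm{w}_i^{(t)}, \bm{z}_x^{(r)}\rangle|\ge \bm{b}_i^{(t)}$. With $\bm{W}_Q,\bm{W}_K$ kept near identity, each attention output $\bm{z}_x^{(r)}$ is a convex combination of the tokens $\bm{M}\bm{z}^{(\ell)}+\bm{\xi}^{(\ell)}$. Write $\hat z_{p,j}$ for the attention-weighted latent coordinate $j$. Sign consistency (Assumption~\ref{Ass: Latent}(ii)) ensures $\hat z_{p,j}\neq 0$ exactly when feature $j$ is active somewhere in the sample. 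Decompose
\[
\langle \bm{w}_i^{(t)}, \bm{z}_x^{(r)}\rangle=\sum_{j\in\mathcal{N}_i}\langle \bm{w}_i^{(t)},\bm{M}_j\rangle\hat z_{p,j}+\sum_{j\notin\mathcal{N}_i}\langle \bm{w}_i^{(t)},\bm{M}_j\rangle\hat z_{p,j}+\langle \bm{w}_i^{(t)},\hat{\bm{\xi}}\rangle .
\]
The goal is to show that the last two terms together are at most $o(\bm{b}_i^{(t)})$ with high probability. Once that holds, firing forces the first sum to be nonzero, which requires some $j\in\mathcal{N}_i$ with $\hat z_{p,j}\neq0$. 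Because indicators are at most $1$ and the right-hand side is a nonnegative integer, this gives the pointwise inequality.

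\textbf{Step 2: control the off-$\mathcal{N}_i$ and noise terms.} By Theorem~\ref{Theorem E.1}(e), each coefficient with $j\notin\mathcal{N}_i$ is $O(\tfrac{1}{\sqrt d\,\Xi_2^5})\|\bm{w}_i^{(t)}\|_2$. The latent vector has $\widetilde O(\log\log d)$ active coordinates with high probability, and the $z$'s are bounded, so the middle sum is at most $\widetilde O(\|\bm{w}_i^{(t)}\|_2/(\sqrt d\,\Xi_2^5))$.

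For the noise, $\hat{\bm{\xi}}$ is a convex combination of Gaussians, so its covariance is at most $\sigma_\xi^2 I$. The projection onto $\mathrm{span}(\bm{M})$ contributes $\sigma_\xi\|\bm{M}\bm{M}^\top\bm{w}_i\|_2\sqrt{\log d}$, and Theorem~\ref{Theorem E.1}(f) bounds the orthogonal part. Both are $\widetilde O(\|\bm{w}_i^{(t)}\|_2/\sqrt d)$ with the polylog factors tracked.

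Theorem~\ref{Theorem E.1}(g) gives $\bm{b}_i^{(t)}\ge \tfrac{\mathrm{polylog}(d)}{\sqrt d}\|\bm{w}_i^{(t)}\|_2$, so both terms are $o(\bm{b}_i^{(t)})$. A union bound over the $L$ tokens and the two samples preserves high probability.

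\textbf{Step 3: the probability bound.} Taking expectations,
\[
\Pr(h_{i,t}(x)\neq0)\le\sum_{j\in\mathcal{N}_i}\Pr(\hat z_{p,j}\neq0)+d^{-\omega(1)} .
\]
Each term is $O(L\epsilon_j\log\log d/d)$ by Definition~\ref{Defi: MMF}. The same argument applies to negatives in $\mathfrak{N}$, since they share the same marginal distribution.

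\textbf{Main obstacle.} The sum carries a factor $|\mathcal{N}_i|$, and Lemma~\ref{Lemma B.2}(d) only gives $|\mathcal{N}_i|=O(d^{1-(\epsilon_{\min}/\epsilon_{\max})^2(1-\gamma)})$, which is too weak for the claimed $O(\log\log d/d)$. I would first try to replace the crude union bound with a sharper argument.

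The key observation is that firing requires a single active feature whose contribution alone is $\Omega(\bm{b}_i^{(t)})$. Only a few features can carry a coefficient of that order: Lemma~\ref{Lemma D.1}(c) and (e), propagated into Stage~3, bound how many $j$ satisfy $\langle\bm{w}_i,\bm{M}_j\rangle^2\ge \bm{b}_i^2/\sqrt{\log d}$. For the remaining features in $\mathcal{N}_i$, the coefficients are small, and by sparsity only $O(\log\log d)$ of them are active at once. Their combined contribution is therefore $o(\bm{b}_i^{(t)})$ unless $\|\bm{w}_i\|_2$ concentrates on them, which (e) rules out.

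This restricts the effective union to $O(1)$ features in the regime used downstream, where $\mathcal{N}_i$ is essentially the set of strongly aligned features. That yields the $O(\log\log d/d)$ rate, with the dependence on $|\mathcal{N}_i|$ absorbed as in Lemma~\ref{Lemma E.9}.
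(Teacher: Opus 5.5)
Your Steps 1--2 are sound and follow the paper's route. The paper treats this lemma as a ``simple corollary of Lemma~\ref{Lemma E.9},'' which is exactly your claim: outside an event of probability $e^{-\Omega(\log^2 d)}$, the off-$\mathcal{N}_i$ part plus the noise part of the pre-activation stays below $\lambda\bm{b}_i^{(t)}$, so firing forces some $j\in\mathcal{N}_i$ to be active. To use a single indicator $\bm{1}_{|\hat z_{p,j}|\neq 0}$ for both $\bm{X}_n$ and $\bm{Y}_n$, you also need the shared-support condition of Assumption~\ref{Ass: pn}. Sign consistency only handles aggregation within one sample.

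The gap is in your resolution of the ``main obstacle'': the facts you cite do not cut the union down to $O(1)$ features.
\begin{itemize}
\item Lemma~\ref{Lemma D.1}(c) allows up to $O(2^{-\sqrt{\log d}}d)$ indices with $\langle \bm{w}_i^{(t)},\bm{M}_j\rangle^2\ge(\bm{b}_i^{(t)})^2/\sqrt{\log d}$.
\item Lemma~\ref{Lemma D.1}(e), like Theorem~\ref{Theorem E.1}(g), only gives $\|\bm{w}_i^{(t)}\|_2^2\le d(\bm{b}_i^{(t)})^2/\mathrm{polylog}(d)$. That leaves room for $d/\mathrm{polylog}(d)$ features with coefficients of order $\bm{b}_i^{(t)}$.
\item Consequently, at best your restricted union bound gives something like $O(2^{-\sqrt{\log d}}\log\log d)$, not $O(\log\log d/d)$.
\end{itemize}
Nothing in Theorem~\ref{Theorem E.1} bounds the number of strongly aligned features per neuron. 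Part (c) sums over neurons for a fixed feature, and part (e) is silent on $j\in\mathcal{N}_i$.

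In fact the claim can fail under Theorem~\ref{Theorem E.1} alone. Take a non-lucky neuron with $d^{c}$ ($c<1$) features in $\mathcal{N}_i$, as Lemma~\ref{Lemma B.2}(d) permits. Give each a coefficient large enough that, when active, it alone triggers activation with constant probability. This is compatible with every conclusion of Theorem~\ref{Theorem E.1}, yet the neuron fires with probability $\Omega(d^{c-1}\log\log d)$.

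So the second inequality needs an extra hypothesis such as $\sum_{j\in\mathcal{N}_i}\epsilon_j=O(1)$, essentially $|\mathcal{N}_i|=O(1)$. This is the hypothesis of Lemma~\ref{Lemma E.9}, or the first hypothesis of Lemma~\ref{Lemma E.8}. Your phrase ``absorbed as in Lemma E.9'' imports that hypothesis rather than proving it.

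The paper does the same thing implicitly: its appeal to Lemma~\ref{Lemma E.9} assumes $|\mathcal{N}_i|=O(1)$, which sits uneasily with Lemma~\ref{Lemma B.2}(d). You should either state the assumption explicitly or restrict the claim to neurons satisfying it, for instance lucky neurons, where $\mathcal{N}_i=\{j\}$. Under that hypothesis your Step~3 union bound already finishes the proof, and the ``sharper argument'' is unnecessary.
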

Now for the simplicity of calculations, we define the following notations which are used through out this section

\begin{definition}[Expansion of gradient]
\label{Definition E.1}
For each $i \in [m],\ j \in [d]$, we expand $\langle \nabla_{\bm{w}_i} L(f_t), \bm{M}_j \rangle$ as
\begin{equation}
\begin{aligned}
    &\langle \nabla_{\bm{w}_i} L(f_t), \bm{M}_j \rangle \\
    =& \mathbb{E} \left[ \left((\ell_{p,t}^\prime - 1) h_{i,t}(\bm{Y}_{n}) + \sum_{\bm{X}_{n,s} \in \mathfrak{N}} \ell_{s,t}^\prime h_{i,t}(\bm{X}_{n,s})\right) \sum_{r=1}^{L} \bm{1}_{|\langle \bm{w}_i, \bm{\bm{z}_X}^{(r)} \rangle| \geq \bm{b}_i} \langle \bm{\bm{z}_X}^{(r)}, \bm{M}_j \rangle \right],
\end{aligned}
\end{equation}
and
\begin{equation}
    \langle \nabla_{\bm{w}_i} L(f_t), \bm{M}_j \rangle= \Psi_{i,j}^{(t)} + \Phi_{i,j}^{(t)} + \mathcal{E}_{i,j}^{(t)},
\end{equation}
where the $\Psi^{(t)}$, $\Phi^{(t)}$, $\mathcal{E}^{(t)}$ are defined as follows.  
For each 
\begin{equation}
    \bm{\bm{z}_X} = \frac{1}{L} \left( \sum_j \bm{M}_j \tilde{\bm{z}}_{n,j} + \tilde{\xi}_n \right) \sim \mathcal{D}_{\bm{\bm{z}_X}}, \quad \bm{\bm{z}_Y} = \frac{1}{L} \left( \sum_j \bm{M}_j \tilde{\bm{z}}_{n,j}^{+} + \tilde{\xi}_{n}^{+} \right) \sim \mathcal{D}_{\bm{\bm{z}_Y}},
\end{equation}
we write
\begin{equation}
\begin{aligned}
    \psi_{i,j}^{(t)}(\bm{Y}_{n}) &= \sum_{s=1}^{L} \big[ \left( \frac{1}{L} \langle \bm{w}_i^{(t)}, \bm{M}_j \rangle \tilde{\bm{z}}_{n,j}^{+(s)} - \bm{b}_i^{(t)} \right) \bm{1}_{\langle \bm{w}_i^{(t)}, \bm{\bm{z}_Y}^{(s)} \rangle > \bm{b}_i^{(t)}} \\
    &- \left( \frac{1}{L} \langle \bm{w}_i ^{(t)}, \bm{M}_j \rangle \tilde{\bm{z}}_{n,j}^{+(s)} + \bm{b}_i^{(t)} \right) \bm{1}_{\langle \bm{w}_i^{(t)}, \bm{\bm{z}_Y}^{(s)} \rangle < -\bm{b}_i^{(t)}} \big],    
\end{aligned}
\end{equation}

\begin{equation}
    \phi_{i,j}^{(t)}(\bm{Y}_{n}) = \sum_{s=1}^{L} \langle \bm{w}_i^{(t)}, \bm{z}_Y^{(s) \setminus j} \rangle \bm{1}_{\langle \bm{w}_i^{(t)}, \bm{\bm{z}_Y}^{(s)} \rangle > \bm{b}_i^{(t)}} - \langle \bm{w}_i^{(t)}, \bm{z}_Y^{(s)\setminus j} \rangle \bm{1}_{\langle \bm{w}_i^{(t)}, \bm{\bm{z}_Y}^{(s)} \rangle < -\bm{b}_i^{(t)}}.
\end{equation}

Now we define
\begin{equation}
    \Psi_{i,j}^{(t)} := \mathbb{E} \left[ \left(
    (\ell_{p,t}^\prime - 1) \cdot \psi_{i,j}^{(t)}(\bm{Y}_{n}) + \sum_{\bm{X}_{n,s} \in \mathfrak{N}} \ell_{s,t}^\prime \cdot \psi_{i,j}^{(t)}(\bm{X}_{n,s})\right)\sum_{r=1}^{L} \bm{1}_{|\langle \bm{w}_i, \bm{\bm{z}_X}^{(r)} \rangle| \geq \bm{b}_i}  \tilde{\bm{z}}_{n,j}^{(r)}\right],
\end{equation}

\begin{equation}
    \Phi_{i,j}^{(t)} := \mathbb{E} \left[ \left((\ell_{p,t}^\prime - 1) \cdot \phi_{i,j}^{(t)}(\bm{Y}_{n}) + \sum_{\bm{X}_{n,s} \in \mathfrak{N}} \ell_{s,t}^\prime \cdot \phi_{i,j}^{(t)}(\bm{X}_{n,s})\right)\sum_{r=1}^{L} \bm{1}_{|\langle \bm{w}_i, \bm{\bm{z}_X}^{(r)} \rangle| \geq \bm{b}_i} \tilde{\bm{z}}_{n,j}^{(r)}\right],
\end{equation}

\begin{equation}
    \mathcal{E}_{i,j}^{(t)} := \mathbb{E} \left[ \left((\ell_{p,t}^\prime - 1) \cdot h_{i,t}(\bm{Y}_{n}) + \sum_{\bm{X}_{n,s} \in \mathfrak{N}} \ell_{s,t}^\prime \cdot h_{i,t}(\bm{X}_{n,s})\right)\sum_{r=1}^{L} \bm{1}_{|\langle \bm{w}_i, \bm{\bm{z}_X}^{(r)} \rangle| \geq \bm{b}_i} \langle \bm{M}_j, \tilde{\xi}^{(r)}_n \rangle\right].
\end{equation}

Moreover, for $j \in [d_1] \setminus [d]$, we can similarly define
\begin{equation}
    \Psi_{i,j}^{(t)},\ \Phi_{i,j}^{(t)} \equiv 0,
\end{equation}
\begin{equation}
    \mathcal{E}_{i,j}^{(t)} := \mathbb{E} \left[\left((\ell_{p,t}^\prime - 1) \cdot h_{i,t}(\bm{Y}_{n}) + \sum_{\bm{X}_{n,s} \in \mathfrak{N}} \ell_{s,t}^\prime \cdot h_{i,t}(\bm{X}_{n,s})\right)\sum_{r=1}^{L} \bm{1}_{|\langle \bm{w}_i, \bm{\bm{z}_X}^{(r)} \rangle| \geq \bm{b}_i}\langle \bm{M}_j^\perp,\ \tilde{\xi}^{(r)}_n \rangle\right].
\end{equation}
\end{definition}

Equipped with the above definition, we are ready to characterize the training process at the final stage.

\begin{lemma}[Lower bound for $\Psi^{(t)}_1$]
\label{Lemma E.2}
Suppose Theorem~\ref{Theorem E.1} holds at iteration $t$.  
For $j \in [d]$ and $i \in \mathcal{M}_j^\star$, there exists $G_1 = \Theta(1)$ such that if
\begin{equation}
    \mathfrak{F}_j^{(t)} := \sum_{i' \in \mathcal{M}_j} \langle \bm{w}_{i'}^{(t)}, \bm{M}_j \rangle^2 \left(\sum_{r=1}^{L} \tilde{\bm{z}}_{n,j}^{(r)}\right)^2 \;\leq\; \left(\frac{\epsilon_{j}}{\epsilon_{\max}}\right)^2 G_1 \tau \log d,
\end{equation}
then we have
\begin{equation}
\begin{aligned}
    &\Psi_{i,j}^{(t)} \cdot \mathrm{sign}\!\left(\sum_{s=1}^{L} \langle \bm{w}_i^{(t)}, \bm{M}_j \rangle \tilde{\bm{z}}_{n,j}^{+(s)} \right) \\
    \geq& \frac{\mathbb{E}\!\left[\sum_{r=1}^{L}\big|\tilde{\bm{z}}_{n,j}^{(r)}\big|\right]}{\operatorname{polylog}(d)} \left( 1 - O\!\left( \frac{1}{\Xi^{3}_2} \right) \right) \left(\sum_{s=1}^{L} \big| \langle \bm{w}_i^{(t)}, \bm{M}_j \rangle \tilde{\bm{z}}_{n,j}^{+(s)} - \bm{b}_i^{(t)} \big| \right).    
\end{aligned}
\end{equation}
\end{lemma}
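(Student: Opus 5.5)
We prove the bound by splitting $\Psi_{i,j}^{(t)}$ into the positive-pair part $\mathbb{E}[(\ell'_{p,t}-1)\psi_{i,j}^{(t)}(\bm{Y}_n)\,S_X]$ and the negative-sample part $\mathbb{E}[\sum_s \ell'_{s,t}\psi_{i,j}^{(t)}(\bm{X}_{n,s})\,S_X]$. Here $S_X:=\sum_r \bm{1}_{|\langle \bm{w}_i,\bm{z}_X^{(r)}\rangle|\ge \bm{b}_i}\tilde{\bm{z}}_{n,j}^{(r)}$. Note that the gradient step is $-\eta\nabla$, and $\ell'_{p,t}-1<0$. So for the sign-corrected quantity in the statement, the positive-pair part supplies the signal, while the negative part and the sign of $1-\ell'_{p,t}$ must be controlled.

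\textbf{Step 1: condition on feature $j$ being active.} Work on the event that $\tilde{\bm{z}}_{n,j}\neq 0$ in $\bm{X}_n$. Since $i\in\mathcal{M}_j^\star$, Theorem~\ref{Theorem E.1}(a),(g) and Lemma~\ref{Lemma D.1}(a) give that the $j$-component exceeds $\bm{b}_i^{(t)}$ when $|\tilde z_{n,j}|$ is of order one. The remaining features and noise contribute only $\widetilde O(\|\bm{w}_i\|_2/\sqrt d)\ll \bm{b}_i$, by Theorem~\ref{Theorem E.1}(e),(f) and the sparsity of Assumption~\ref{Ass: Latent}. Hence the indicator in $S_X$ fires and $\mathrm{sign}(S_X)=\mathrm{sign}(\tilde z_{n,j})$. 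By Assumption~\ref{Ass: pn}, $\bm{Y}_n$ shares the support and sign of feature $j$, so $\psi_{i,j}^{(t)}(\bm{Y}_n)$ has sign $\mathrm{sign}(\langle\bm{w}_i,\bm{M}_j\rangle\tilde z^{+}_{n,j})$ and magnitude $\sum_s|\langle\bm{w}_i,\bm{M}_j\rangle\tilde z^{+(s)}_{n,j}-\bm{b}_i|$. Sign consistency across tokens (Assumption~\ref{Ass: Latent}(ii)) makes the product of $\psi$ and $S_X$ coherent. This yields the factor $\mathbb{E}[\sum_r|\tilde z^{(r)}_{n,j}|]$ times the $\psi$-magnitude.

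\textbf{Step 2: lower bound $1-\ell'_{p,t}$.} This is where the hypothesis on $\mathfrak F_j^{(t)}$ enters. The similarity contributed by feature $j$ is at most $\mathfrak F_j^{(t)}$ plus the contributions of other neurons. Other neurons are inactive with high probability, by Lemma~\ref{Lemma E.1} and Theorem~\ref{Theorem E.1}(e). So $\mathrm{Sim}(\bm{X}_n,\bm{Y}_n)/\tau\le G_1\log d+o(1)$, and the positive logit is at most $1-1/\mathrm{poly}(d)$ against the $|\mathfrak N|$ negatives with similarity $\approx 0$. Choosing $G_1$ small makes $1-\ell'_{p,t}\ge 1/\mathrm{polylog}(d)$, which produces the $\mathrm{polylog}(d)$ denominator.

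\textbf{Step 3: negative part and error terms.} Negative samples are independent of $\bm{X}_n$, so $\psi_{i,j}^{(t)}(\bm{X}_{n,s})$ is nonzero only if feature $j$ is independently active in $\bm{X}_{n,s}$, which has probability $O(\epsilon_j\log\log d/d)$. Its sign is symmetric by Assumption~\ref{Ass: Latent}(i), so after multiplying by $\ell'_{s,t}$ and averaging, the expectation essentially cancels. What remains is a residual of relative size $O(1/\Xi_2^3)$, arising from correlations of $\ell'_{s,t}$ with the sign through the at most $\Xi_2$ neurons in $\mathcal{M}_j$, each controlled by Theorem~\ref{Theorem E.1}(d),(e). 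There are also events where feature $j$ is active but the indicator fails, or where $\bm{Y}_n$ tokens miss the threshold; these are charged to the same $(1-O(\Xi_2^{-3}))$ factor using the tail bounds on other-feature and noise projections.

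\textbf{Main obstacle.} I expect the hardest step to be controlling the dependence of $\ell'_{p,t},\ell'_{s,t}$ on $\tilde z_{n,j}$ itself, since the logits are not independent of the signal being measured. I would first try to condition on all coordinates except $j$, and then use monotonicity of the logit in $\mathfrak F_j^{(t)}$ to keep $1-\ell'_{p,t}$ bounded below uniformly.
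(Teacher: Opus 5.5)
Your decomposition into a positive-pair part and a negative part matches the paper, and so does the sign coherence of Step~1. However, Step~2 has a genuine gap.

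\textbf{The gap in Step~2.} The claim that ``other neurons are inactive with high probability'' holds for each fixed neuron, since Lemma~\ref{Lemma E.1} gives activation probability $O(\log\log d/d)$. It does not hold jointly.
\begin{itemize}
\item A typical positive pair has $\Theta(\log\log d)$ active features. For every active $j'\neq j$, the neurons of $\mathcal{M}_{j'}$ (at least the lucky ones in $\mathcal{M}^\star_{j'}$) fire, contributing on the order of $\mathfrak{F}_{j'}^{(t)}$ to $\mathrm{Sim}(\bm{X}_n,\bm{Y}_n)$.
\item Your hypothesis controls only $\mathfrak{F}_j^{(t)}$. By Theorem~\ref{Theorem E.1}(c), the other $\mathfrak{F}_{j'}^{(t)}$ may be $\Theta(\tau\log^3 d)$.
\item Hence on most of the event ``feature $j$ is active'' you have $\mathrm{Sim}/\tau\gg\log|\mathfrak{N}|$, and $1-\ell'_{p,t}$ is super-polynomially small.
\end{itemize}
So no uniform lower bound on $1-\ell'_{p,t}$ exists. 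The monotonicity-in-$\mathfrak{F}_j^{(t)}$ fix from your last paragraph cannot help, because the obstruction comes from the other features.

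There is also a quantitative slip. Even granting $\mathrm{Sim}/\tau\le G_1\log d$, you get $1-\ell'_{p,t}\ge|\mathfrak{N}|/(d^{G_1}+|\mathfrak{N}|)$. This is either $\Omega(1)$ or $1/\operatorname{poly}(d)$; a constant $G_1$ does not produce $1/\operatorname{polylog}(d)$.

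\textbf{The missing idea.} The paper splits the positive-pair part on the event $\mathfrak{B}_j:=\{\sum_{r}|\tilde{\bm z}^{(r)}_{n,j'}|=0\ \forall j'\neq j\}$, where $j$ is the only active feature.
\begin{itemize}
\item On $\mathfrak{B}_j^c$ no bound on $1-\ell'_{p,t}$ is needed. Your Step~1 sign coherence together with $1-\ell'_{p,t}\ge 0$ makes that contribution nonnegative in the sign-corrected sense, so it is simply dropped.
\item On $\mathfrak{B}_j$ the positive similarity really is at most $\mathfrak{F}_j^{(t)}+O(\Xi_2/\sqrt{d})$. The hypothesis then makes $\ell'_{p,t}$ small: comparing with the logit against $\mathfrak{N}\cup\{\bm{Y}_n^{\setminus j}\}$, the paper bounds the $\ell'_{p,t}$-weighted piece by an $O(\Xi_2^{-5})$ fraction. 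So $1-\ell'_{p,t}=1-o(1)$ there.
\item The $1/\operatorname{polylog}(d)$ factor comes from $\Pr(\mathfrak{B}_j\mid j\text{ active})$, not from $1-\ell'_{p,t}$. Each of the $d-1$ other features is active with probability $\Theta(\log\log d/d)$, so this conditional probability is $e^{-\Theta(\log\log d)}=1/\operatorname{polylog}(d)$.
\end{itemize}

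\textbf{A secondary issue in Step~3.} The correlation you mention is exactly the term opposing the signal. A negative sample sharing feature $j$ with the anchor's sign has its similarity raised by about $\mathfrak{F}_j^{(t)}$, which inflates $\ell'_{s,t}$. What makes this $O(\Xi_2^{-3})$ is the $\mathfrak{F}_j^{(t)}$ hypothesis, not Theorem~\ref{Theorem E.1}(d),(e). The paper handles it as follows:
\begin{itemize}
\item It writes $\ell'_{s,t}(\bm{X}_n)=\ell'_{s,t}(\bm{X}_n^{\setminus j})+[\text{difference}]$.
\item The first piece vanishes by independence from $\tilde{\bm z}_{n,j}$ and symmetry.
\item The difference is bounded by Newton--Leibniz. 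This is combined with the logit-ratio bound $e^{2(\epsilon_j/\epsilon_{\max})^2G_1'\log d+o(1)}$, obtained by replacing $\bm{X}_{n,s}$ with $\bm{X}_{n,s}^{\setminus j}$, and with $\sum_s\ell'_{s,t}\le 1$.
\end{itemize}
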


\begin{lemma}[Upper bound for $\Psi^{(t)}_{i,j}$]
\label{Lemma E.3}
Let $j \in [d]$ and $i \in \mathcal{M}_j^{\star}$.  
Suppose Theorem~\ref{Theorem E.1} holds at iteration $t$, then there exists a constant $G_2 = \Theta(1)$ such that if
\begin{equation}
    \mathfrak{F}_j^{(t)} := \sum_{j:\, i \in \mathcal{M}_j} \langle \bm{w}_i^{(t)}, \bm{M}_j \rangle^2 \left(\sum_{s=1}^{L} \tilde{\bm{z}}_{n,j}^{+(s)}\right)^2 \;\geq\; \left(\frac{\epsilon_{j}}{\epsilon_{\max}}\right)^2 G_2 \tau \log d,
\end{equation}
we have
\begin{equation}
    \Psi_{i,j}^{(t)} \;\leq\; \frac{1}{\operatorname{poly}(d)} \sum_{s=1}^{L} \left| \langle \bm{w}_i^{(t)}, \bm{M}_j \rangle \tilde{\bm{z}}_{n,j}^{+(s)} \right|.
\end{equation}

Similarly, for $i \in \mathcal{M}_j$, we have
\begin{equation}
    \Psi_{i,j}^{(t)} \;\leq\; \frac{1}{\operatorname{poly}(d)} \sum_{s=1}^{L} \left| \langle \bm{w}_i^{(t)}, \bm{M}_j \rangle \tilde{\bm{z}}_{n,j}^{+(s)} \right| + O\!\left( \frac{1}{d^2} \right) \bm{b}_i^{(t)}.
\end{equation}
\end{lemma}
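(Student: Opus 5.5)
The plan is to reproduce the argument behind Lemma~\ref{Lemma E.2}, but run it in the opposite regime: once the cross-neuron feature energy $\mathfrak{F}_j^{(t)}$ is large, the positive logit saturates on samples that contain feature $j$. I would first decompose $\Psi_{i,j}^{(t)}$ as in Definition~\ref{Definition E.1} and condition on the event $E_j=\{\sum_r \tilde{\bm z}^{(r)}_{n,j}\neq 0\}$. Off $E_j$, the factor $\tilde{\bm z}^{(r)}_{n,j}$ vanishes, so only samples containing feature $j$ contribute. On $E_j$, I would split the integrand into the positive term $(\ell'_{p,t}-1)\psi_{i,j}^{(t)}(\bm Y_n)$ and the negative terms $\ell'_{s,t}\psi_{i,j}^{(t)}(\bm X_{n,s})$.

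\textbf{Positive term.} The first step is to lower bound the similarity gap on $E_j$. By Assumption~\ref{Ass: pn}, $\bm X_n$ and $\bm Y_n$ share the sign and support of feature $j$. By Theorem~\ref{Theorem E.1}(a),(d), the lucky and ordinary neurons in $\mathcal M_j$ are all activated with consistent sign. Hence $\mathrm{Sim}(\bm X_n,\bm Y_n)\gtrsim \mathfrak{F}_j^{(t)}\ge (\epsilon_j/\epsilon_{\max})^2G_2\tau\log d$.

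For a negative sample $\bm X_{n,s}$, which is independent of $\bm X_n$, Lemma~\ref{Lemma E.1} shows that each neuron fires with probability $O(\log\log d/d)$. Together with Theorem~\ref{Theorem E.1}(e),(f),(g) (small off-feature and noise components, large bias), this gives $\mathrm{Sim}(\bm X_n,\bm X_{n,s})\le o(\tau\log d)$ with high probability.

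Choosing $G_2$ large, the difference divided by $\tau$ exceeds $C\log d$, so $1-\ell'_{p,t}\le |\mathfrak N| e^{-C\log d}=1/\mathrm{poly}(d)$. Since $|\psi_{i,j}^{(t)}(\bm Y_n)|\le\sum_s|\langle\bm w_i,\bm M_j\rangle\tilde{\bm z}^{+(s)}_{n,j}|$ (the bias only decreases magnitude inside the active region), this term is bounded by $\frac{1}{\mathrm{poly}(d)}\sum_s|\cdot|$. The same bound applies to each $\ell'_{s,t}\le 1/\mathrm{poly}(d)$, with $|\psi_{i,j}(\bm X_{n,s})|$ handled identically.

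\textbf{Failure events.} The remaining contribution comes from the low-probability events where concentration fails, namely large noise or many simultaneously active features. I would bound these by Gaussian tails (Assumption~\ref{Ass: noise}) and the sparsity in Assumption~\ref{Ass: Latent}, giving another $1/\mathrm{poly}(d)$ factor times the bounded $|\psi|$.

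\textbf{Main obstacle.} The hard step is the sign issue when $\psi_{i,j}$ of a negative sample is nonzero only because of other features. This is where the lucky/ordinary distinction matters. For $i\in\mathcal M_j^\star$, the neuron is essentially dedicated to $j$ by Theorem~\ref{Theorem E.1}(a), so cross-feature activation contributes only at the $1/\mathrm{poly}(d)$ level. For a general $i\in\mathcal M_j$, the neuron may fire on a negative sample through another feature $j'\in\mathcal N_i$ while $\bm X_n$ carries $j$. That sample's logit is no longer negligible, and I would bound its contribution crudely. The probability that both $j$ and some $j'$ co-occur in the relevant way is $O((\log\log d/d)^2)$, up to the $|\mathcal N_i|$ bound from Lemma~\ref{Lemma B.2}(d). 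Inside the active region $|\psi|\lesssim \bm b_i^{(t)}$ after subtracting the bias, which produces the extra $O(1/d^2)\bm b_i^{(t)}$ term. Making this co-occurrence bound rigorous without losing a factor of $|\mathcal N_i|$ is the delicate point. I would try absorbing that factor into the polylog slack of Theorem~\ref{Theorem E.1}(g) first.
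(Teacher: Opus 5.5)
The step that fails is the saturation claim $1-\ell'_{p,t}\le 1/\mathrm{poly}(d)$, together with $\ell'_{s,t}\le 1/\mathrm{poly}(d)$. Both rest on $\mathrm{Sim}(\bm X_n,\bm X_{n,s})\le o(\tau\log d)$ holding for every negative, and that is not true w.h.p. A negative that also carries feature $j$ with the same sign is not a concentration failure. Conditionally on $E_j$, it occurs with probability $\Theta(\epsilon_j\log\log d/d)$ per negative. That rate is fixed by the data model and does not shrink as you enlarge $G_2$, and such negatives are typical once $|\mathfrak N|\gtrsim d$.

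On that event the neurons of $\mathcal M_j$ fire on both $\bm X_n$ and $\bm X_{n,s}$. So $\mathrm{Sim}(\bm X_n,\bm X_{n,s})$ contains essentially the same $\mathfrak F_j^{(t)}$-sized term as $\mathrm{Sim}(\bm X_n,\bm Y_n)$, and the negative's logit is not negligible next to the positive's; when $j$ is the only feature of $\bm X_n$, the two are of the same order. Hence $1-\ell'_{p,t}$ is not small. Bounding $(\ell'_{p,t}-1)\psi_{i,j}^{(t)}(\bm Y_n)$ and $\ell'_{s,t}\psi_{i,j}^{(t)}(\bm X_{n,s})$ separately in absolute value then leaves an error of relative order $|\mathfrak N|\log\log d/d$, not a $1/\mathrm{poly}(d)$ whose exponent you can tune through $G_2$. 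This same-feature case already arises for lucky neurons, so it is the real obstacle, not the cross-feature firing you flag.

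The missing idea is cancellation, not saturation. For $i\in\mathcal M_j^\star$, Lemma~\ref{Lemma E.1} lets you replace $\psi_{i,j}^{(t)}$ by a function $\widetilde{\psi}_{i,j}^{(t)}$ of the feature-$j$ coordinate only. A negative whose $j$-coordinate matches $\tilde{\bm z}^{+}_{n,j}$ therefore contributes the same value $\widetilde{\psi}_{i,j}^{(t)}(\tilde{\bm z}^{+}_{n,j})$ as the positive. The paper groups those negatives with the positive, $I_1=\mathbb{E}_{\mathfrak N}\big[\ell'_{p,t}\,\widetilde{\psi}_{i,j}^{(t)}(\tilde{\bm z}^{+}_{n,j})+\sum_{s}\ell'_{s,t}\,\widetilde{\psi}_{i,j}^{(t)}(\tilde{\bm z}_{n,s,j})\,\bm{1}_{\tilde{\bm z}_{n,s,j}=\tilde{\bm z}^{+}_{n,j}}\big]$, puts the remaining negatives into $I_2$, and writes the integrand as $\widetilde{\psi}_{i,j}^{(t)}(\tilde{\bm z}^{+}_{n,j})-I_1-I_2$.

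The hypothesis on $\mathfrak F_j^{(t)}$ enters through the comparison of $\bm X_n$ with $\bm X_n^{\setminus j}$, as in Lemma~\ref{Lemma E.2}. It shows that non-matching samples have similarity lower by at least $G_2(\epsilon_j/\epsilon_{\max})^2\tau\log d$, so their total softmax weight is $1/\mathrm{poly}(d)$ with exponent controlled by $G_2$. Jensen's inequality, moving the expectation over $\mathfrak N$ into the denominator, then gives $I_1/\widetilde{\psi}_{i,j}^{(t)}\ge 1-1/\mathrm{poly}(d)$ and $|I_2/\widetilde{\psi}_{i,j}^{(t)}|\le 1/\mathrm{poly}(d)$.

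For the $\mathcal M_j$ case you do not need to control $|\mathcal N_i|$. Lemma~\ref{Lemma E.1} already gives that the activation of $i$ disagrees with the feature-$j$ indicator with probability $\widetilde{O}(1/d)$. On that event $\Psi_{i,j}^{(t)}$ picks up at most $O(1/d)\,\bm b_i^{(t)}$, which yields the $O(1/d^2)\,\bm b_i^{(t)}$ term.
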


\begin{lemma}
\label{Lemma E.4}
At iteration $t \geq T_2$, let $j \in [d]$ and $i \in [m]$. Suppose Theorem~\ref{Theorem E.1} holds at $t$.  
Then for each $j \in [d_1]$, we have
\begin{equation}
    \left| \mathcal{E}_{i,j}^{(t)} \right| \;\leq\; O\!\left( \frac{\Xi_2^2 \, \|\bm{w}_i^{(t)}\|_2}{d^2 \tau} \right) \cdot \max_{i' \in [m]} \left( \left| \langle \bm{w}_{i'}^{(t)}, \bm{M}_j \rangle \right| \right).
\end{equation}
\end{lemma}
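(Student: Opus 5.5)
We aim to bound the noise term $\mathcal{E}_{i,j}^{(t)}$ of Definition~\ref{Definition E.1}. The key fact is that the noise direction carried by the activated token is independent of the latent support and nearly uncorrelated with the activation pattern. We therefore split $\mathcal{E}_{i,j}^{(t)}$ into a positive-pair part $(\ell'_{p,t}-1)h_{i,t}(\bm{Y}_n)$ and a negative part $\sum_s \ell'_{s,t} h_{i,t}(\bm{X}_{n,s})$. We bound each part by the product of three ingredients: (i) the magnitude of the scalar prefactor, (ii) the probability that the prefactor and the indicator $\bm{1}_{|\langle \bm{w}_i,\bm{z_X}^{(r)}\rangle|\ge \bm{b}_i}$ are simultaneously nonzero, and (iii) the size of the conditional expectation of $\langle \bm{M}_j,\tilde\xi_n^{(r)}\rangle$ given the activation events.

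For (i), we use Theorem~\ref{Theorem E.1}(b),(e),(f),(g). Since $\|\bm{w}_{i'}^{(t)}\|_2\le O(1)$ and the biases dominate the off-feature components, whenever $h_{i',t}$ is active its value is at most $O(L\max_{i'}|\langle \bm{w}_{i'}^{(t)},\bm{M}_j\rangle|)$ up to polylog factors. The logits satisfy $|\ell'_{p,t}-1|,\ \ell'_{s,t}\le 1$. Moreover, in the negative sum the total weight $\sum_s\ell'_{s,t}$ is at most $1$, and each $\ell'_{s,t}$ carries an extra $1/\tau$ through the softmax sensitivity. This should supply the $1/\tau$ factor.

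For (ii), we invoke Lemma~\ref{Lemma E.1}. Neuron $i$ fires on $\bm{X}_n$ only if some $j'\in\mathcal{N}_i$ is present, which happens with probability $O(|\mathcal{N}_i|\log\log d/d)$. The partner factor $h_{i,t}(\bm{Y}_n)$ or $h_{i,t}(\bm{X}_{n,s})$ likewise requires a feature of $\mathcal{N}_i$. For negatives these events are independent, giving a product of two $O(\Xi_2/d)$-type probabilities. For the positive pair we use the shared support. Using $|\mathcal{N}_i|\lesssim\Xi_2$ via Lemma~\ref{Lemma B.2}(c),(d), we get the prefactor $\Xi_2^2/d^2$.

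For (iii), we condition on the latent variables and on the event that the neuron fires. We decompose $\tilde\xi_n^{(r)}$ into its component along $\bm{w}_i$ and an orthogonal independent Gaussian part. The orthogonal part has zero conditional mean. The parallel part contributes $\sigma_\xi\,|\langle \bm{w}_i,\bm{M}_j\rangle|/\|\bm{w}_i\|$ times a Gaussian-tail factor, and Theorem~\ref{Theorem E.1}(g) keeps this factor bounded. Combining the pieces yields a bound proportional to $\|\bm{w}_i^{(t)}\|_2$, and Lemma~\ref{Lemma B.1} absorbs the difference from the empirical gradient.

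The main obstacle is (iii), which must rigorously control the correlation between noise and the activation indicator through the softmax logits, since the logits depend on all neurons' noise projections. We first try to replace the logits by their noise-free versions. The error from this replacement is Lipschitz in the noise with constant $O(1/\tau)$, and it is small by Theorem~\ref{Theorem E.1}(f). After that replacement, Gaussian integration by parts (Stein's lemma) handles the indicator term.
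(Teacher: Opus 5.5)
Your architecture (split $\mathcal{E}_{i,j}^{(t)}$ into its positive and negative parts, decorrelate the logits from the noise, then integrate by parts in the Gaussian coordinate) is sound. It is essentially the argument of \citet{wen2021toward} to which the paper defers for the negative part. The paper treats the positive part only through the $\widetilde O(1/d)$ activation probability of Lemma~\ref{Lemma E.1} and the noise bound of Lemma~\ref{Lemma E.7}, obtaining $\widetilde O(\|\bm w_i^{(t)}\|_2/d^2)$.

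The gap is in how you produce the factor $\frac{1}{\tau}\max_{i'}|\langle\bm w_{i'}^{(t)},\bm M_j\rangle|$ (read $\bm M_j^\perp$ for $j\in[d_1]\setminus[d]$).
\begin{itemize}
\item Step (i) is false. An active $h_{i',t}$ is driven by the features in $\mathcal N_{i'}$, not by $\bm M_j$. For $j\in[d_1]\setminus[d]$, your bound together with Theorem~\ref{Theorem E.1}(f) would cap every activation at $\widetilde O(\|\bm w_{i'}^{(t)}\|_2/(\sqrt{d_1}\,\Xi_2^5))$. This fails for lucky neurons, whose activations are of order $|\langle\bm w_{i'}^{(t)},\bm M_{j'}\rangle|=\Omega(\|\bm w_{i'}^{(t)}\|_2)$ by Theorem~\ref{Theorem E.1}(a).
\item The logit $\ell'_{s,t}$ itself carries no factor $1/\tau$; only its derivative in the similarities does.
\item For the positive pair, the shared support gives only one factor $\widetilde O(1/d)$ in (ii), not $\Xi_2^2/d^2$.
\end{itemize}
So multiplying (i), (ii), (iii) does not yield the stated right-hand side.

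The factor actually comes from the sensitivity of the similarities to the single coordinate $g_r=\langle\bm M_j,\tilde\xi_n^{(r)}\rangle$, so you must perturb only that coordinate. Let $\widehat{\bm X}_n$ be $\bm X_n$ with $g_r\bm M_j$ removed from the noise of every token $r$; this is the noise analogue of the $\bm X_n^{\setminus j}$ device in the proof of Lemma~\ref{Lemma E.2}. Since $\mathrm{BReLU}$ is $1$-Lipschitz, $|h_{i',t}(\bm X_n)-h_{i',t}(\widehat{\bm X}_n)|\le\frac{1}{L}\sum_r|\langle\bm w_{i'}^{(t)},\bm M_j\rangle|\,|g_r|$. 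Newton--Leibniz on the softmax then gives
\[
\bigl|\ell'(\bm X_n)-\ell'(\widehat{\bm X}_n)\bigr|\le\frac{2}{\tau}\max_{x\in\mathfrak B}\bigl|\langle f_t(\bm X_n)-f_t(\widehat{\bm X}_n),f_t(x)\rangle\bigr|,
\]
where only the $\widetilde O(\Xi_2)$ neurons active on the sample contribute (Lemma~\ref{Lemma E.1}). This is where $\frac{\Xi_2}{\tau}\max_{i'}|\langle\bm w_{i'}^{(t)},\bm M_j\rangle|$ comes from. The softmax is $O(1/\tau)$-Lipschitz in the similarities, not in the noise, so your constant $O(1/\tau)$ drops exactly the factor the lemma is about.

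This decorrelation error is the main term, not a negligible one. For $j\in[d]$, Theorem~\ref{Theorem E.1}(f) does not apply, and $\sum_{i'\in\mathcal M_j}\langle\bm w_{i'}^{(t)},\bm M_j\rangle^2=\Theta((\epsilon_j/\epsilon_{\max})^2\tau\log^3 d)$ by Theorem~\ref{Theorem E.1}(c). If you instead freeze the logits at fully noise-free values, the Lipschitz constant is governed by $\sum_{i'}|\langle\bm w_{i'}^{(t)},\tilde\xi_n^{(r)}\rangle|$ and the $\max_{i'}$ factor disappears. With it goes the $1/\sqrt{d_1}$ decay that the proof of Theorem~\ref{Theorem E.1}(f) draws from this lemma.

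The same one-coordinate perturbation must also be applied to $\ell'_{p,t}$, since the bound $|\ell'_{p,t}-1|\le 1$ supplies no $1/\tau$. Your Stein step (cf.\ Lemma~\ref{Lemma E.11}) is suited only to the leftover term with frozen logits $\ell'(\widehat{\bm X}_n)$, which are independent of $g_r$.
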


\begin{lemma}[Reduction of $\Phi^{(t)}$ to the bounds of $\Psi^{(t)}$]
\label{Lemma E.5}
Let $j \in [d]$ and $i \in \mathcal{M}_j$.  
Suppose Theorem~\ref{Theorem E.1} holds for all iterations before $t \in \left[ \tfrac{d^{1.01}}{\eta}, \tfrac{d^{1.99}}{\eta} \right]$ and after $T_2$.  
Also suppose that for all $l \in [d]$, we have
\begin{equation}
    \mathfrak{F}^{(t')}_l = \Omega\!\left(\left(\frac{\epsilon_{j}}{\epsilon_{\max}}\right)^2 \tau \log d\right)\quad \text{at some } t' = \Theta(T_2).
\end{equation}

Then the following bounds hold:

For iteration $t \in \left[ \tfrac{d^{1.01}}{\eta}, \tfrac{d^{1.495}}{\eta} \right]$,
\begin{equation}
\Phi^{(t)}_{i,j} \;\leq\; \widetilde{O}\!\left( \frac{\epsilon_{j}}{\epsilon_{\max}} \cdot \frac{\Xi_2^2}{d^{3/2}} \right) \|\bm{w}_i^{(t)}\|_2 .
\end{equation}

For iteration $t \in \left[ \tfrac{d^{1.495}}{\eta}, \tfrac{d^{1.99}}{\eta} \right]$,
\begin{equation}
\Phi^{(t)}_{i,j} \;\leq\; \widetilde{O}\!\left( \frac{1}{d^{1.98}} \right) \|\bm{w}_i^{(t)}\|_2 .
\end{equation}
\end{lemma}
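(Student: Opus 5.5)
We need to prove Lemma E.5, which bounds the cross term $\Phi^{(t)}_{i,j}$ by reducing it to the bounds already obtained for $\Psi^{(t)}$. The starting point is the definition of $\phi^{(t)}_{i,j}$ in Definition~\ref{Definition E.1}. It collects the contribution $\langle \bm{w}_i^{(t)}, \bm{z}_Y^{(s)\setminus j}\rangle$ of all directions other than $\bm{M}_j$ on activated tokens. This contribution is multiplied by the indicator-weighted factor $\sum_r \bm{1}_{|\langle \bm{w}_i,\bm{z}_X^{(r)}\rangle|\ge \bm{b}_i}\tilde{\bm{z}}^{(r)}_{n,j}$.

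\textbf{Step 1: decompose $\phi^{(t)}_{i,j}$.} Split $\phi^{(t)}_{i,j}$ into a feature part $\sum_{j'\ne j}\langle \bm{w}_i^{(t)},\bm{M}_{j'}\rangle \tilde{\bm{z}}^{+(s)}_{n,j'}$ and a noise part $\langle \bm{w}_i^{(t)},\tilde{\xi}^{+(s)}\rangle$.
\begin{itemize}
\item \emph{Noise part.} Theorem~\ref{Theorem E.1}(f) gives $|\langle \bm{w}_i,\bm{M}^\perp_{j'}\rangle|\le O(\|\bm{w}_i\|_2/(\sqrt{d_1}\Xi_2^5))$. Together with Gaussian concentration of $\tilde\xi$ at scale $\sigma_\xi$, the noise inner product is at most $\widetilde O(\|\bm{w}_i\|_2/\sqrt{d})$ times a factor smaller than the bias. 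It therefore contributes only lower-order terms.
\item \emph{Feature part, $j'\notin\mathcal{N}_i$.} Theorem~\ref{Theorem E.1}(e) bounds each coefficient by $O(\frac{\epsilon_{j'}}{\epsilon_{\max}}\frac{1}{\sqrt d\,\Xi_2^5})\|\bm{w}_i\|_2$. Sparsity of $\bm{z}$ (only $O(\log\log d)$ active coordinates) then makes this sum small.
\item \emph{Feature part, $j'\in\mathcal{N}_i\setminus\{j\}$.} This is the dominant piece. Lemma~\ref{Lemma E.1} says a neuron fires only if some $j'\in\mathcal{N}_i$ is active. Hence $\tilde{\bm z}_{n,j}\ne0$ together with a nonzero contribution from another $j'\in\mathcal{N}_i$ requires two simultaneously active coordinates in the support. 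That joint event has probability $O(\epsilon_j\epsilon_{j'}(\log\log d/d)^2)$. Summing over $|\mathcal{N}_i|\le \Xi_2$ indices via Lemma~\ref{Lemma B.2}(d) gives an extra $\Xi_2/d$ factor.
\end{itemize}

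\textbf{Step 2: control the logit factor.} The bracket $(\ell'_{p,t}-1)\phi(\bm Y_n)+\sum_s\ell'_{s,t}\phi(\bm X_{n,s})$ is handled by bounding $|\ell'-1|,\ell'_s\le1$ and using independence of negatives. For negatives, $\tilde z_{n,j}$ is independent of $\bm X_{n,s}$, so the mean-zero symmetry in Assumption~\ref{Ass: Latent} kills the leading term. What remains is a covariance driven by the logit, bounded by $O(\|\bm{w}_i\|_2\,\Xi_2/(d\tau))$.

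Combining Steps 1 and 2 yields, for $t\in[\frac{d^{1.01}}{\eta},\frac{d^{1.495}}{\eta}]$, the bound $\widetilde O(\frac{\epsilon_j}{\epsilon_{\max}}\frac{\Xi_2^2}{d^{3/2}})\|\bm{w}_i\|_2$. Here $\epsilon_j$ comes from the activation probability of $j$, one $1/\sqrt d$ comes from the weight magnitude, and $\Xi_2^2/d$ comes from the overlap count.

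\textbf{Step 3: the later window.} For $t\ge d^{1.495}/\eta$, use the hypothesis that every $\mathfrak{F}_l^{(t')}=\Omega((\epsilon_j/\epsilon_{\max})^2\tau\log d)$ at $t'=\Theta(T_2)$. Lemma~\ref{Lemma E.3} then puts all features in the saturated regime. There $\ell'_{p,t}\to1$ and the negative logits are $1/\mathrm{poly}(d)$ on samples activating neuron $i$, so the bracket decays.

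To turn this into a rate, I would sum the per-step gradients over the first window, roughly $d^{1.495}/\eta$ steps. The regularization $(1-\eta\lambda)$ then shrinks the cross coefficients $\langle\bm w_i,\bm M_{j'}\rangle$ for $j'\notin$ the dominant set. Feeding this improved bound back into Step 1 gives $\widetilde O(d^{-1.98})\|\bm w_i\|_2$.

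\textbf{Main obstacle.} I expect Step 3 to be the hardest part: obtaining the exponent $1.98$ requires a careful bootstrapping of the cross-coefficients between the two time windows. My first attempt would be an induction on $t$. It would use Lemma~\ref{Lemma E.4} to absorb the noise error at each step and show that $\sum_{j'\ne j}\langle\bm w_i,\bm M_{j'}\rangle^2$ decays geometrically once the positive logit saturates.
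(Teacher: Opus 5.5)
Your Steps 1--2 reproduce what the paper calls its crude bound. The paper writes $\Phi^{(t)}_{i,j}=H_1+H_2$. Here $H_2$ is the noise part, bounded by $O(\Xi_2^2/d^2)\|\bm w_i^{(t)}\|_2$ via Lemma~\ref{Lemma E.3}/\ref{Lemma E.4}-type arguments. The other piece is $H_1=\sum_{j'\neq j}\langle\bm w_i^{(t)},\bm M_{j'}\rangle\,E^{(t)}_{j'}$, where $E^{(t)}_{j'}$ is the logit-weighted correlation, over positive and negative samples together, of $\tilde{\bm z}_{n,j'}$ with $\sum_r\bm 1_{|\langle\bm w_i,\bm z_X^{(r)}\rangle|\ge\bm b_i}\tilde{\bm z}^{(r)}_{n,j}$. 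Theorem~\ref{Theorem E.1}(e) plus sparsity gives $|H_1|\le\widetilde O(\frac{\epsilon_j}{\epsilon_{\max}}d^{-3/2})\|\bm w_i^{(t)}\|_2$ on the whole time range. For $i\in\mathcal M_j\setminus\mathcal M_j^\star$ one also uses the rare joint activation for $j'\in\mathcal N_i$, as you do. This settles the first window.

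One correction there. As written, your estimate $O(\|\bm w_i\|_2\Xi_2/(d\tau))$ for the negative-sample part carries no factor of the small cross coefficients. It is not below $\Xi_2^2d^{-3/2}\|\bm w_i\|_2$ unless $\Xi_2\gtrsim\sqrt d/\tau$. Keep the negative terms inside the $j'$-sum so they inherit the prefactor $\langle\bm w_i^{(t)},\bm M_{j'}\rangle$.

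The genuine gap is Step 3, for three reasons.

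\emph{Saturation does not persist.} The hypothesis controls $\mathfrak F_l$ only at one time $t'=\Theta(T_2)$. Even if that puts it above the Lemma~\ref{Lemma E.3} threshold, above the threshold the gradient along $\bm M_l$ is $1/\mathrm{poly}(d)$, so the factor $(1-\eta\lambda)$ pulls $\mathfrak F_l$ back down. This is exactly how Theorem~\ref{Theorem E.1}(c) is proved. So in the late window the normalized gradient along each feature hovers around $\lambda$; it does not vanish with $\ell'_{p,t}\to1$.

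\emph{Weight decay does not improve the cross coefficients.} The factor $(1-\eta\lambda)$ only erases the initial condition. After that, $\langle\bm w_i,\bm M_{j'}\rangle$ for $j'\notin\mathcal N_i$ sits at the level sustained by the driving terms of its update, which are controlled only as $\eta\,\widetilde O(\Xi_2^2/d^2)\|\bm w_i\|_2$ per step. The resulting fixed-point bound $\widetilde O(\Xi_2^2/(\lambda d^2))\|\bm w_i\|_2$ is, for $\lambda$ near $d^{-1.49}$, of order $\Xi_2^2d^{-0.51}\|\bm w_i\|_2$. That is no better than Theorem~\ref{Theorem E.1}(e). Moreover, for $i\in\mathcal M_j\setminus\mathcal M_j^\star$ the coefficients on $j'\in\mathcal N_i\setminus\{j\}$ are learned features that stay at the size in Theorem~\ref{Theorem 3}. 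So $\sum_{j'\neq j}\langle\bm w_i,\bm M_{j'}\rangle^2$ cannot decay geometrically.

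\emph{The noise part does not improve either.} Your magnitude bound on it stays at order $d^{-3/2}\|\bm w_i\|_2$ whatever the coefficients do, since $\langle\bm w_i,\tilde\xi\rangle$ has scale $\sigma_\xi\|\bm w_i\|_2$.

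The missing idea is the reduction named in the lemma's title. Keep the coefficients at their Theorem~\ref{Theorem E.1} size and show instead that the gradient factor $E^{(t)}_{j'}$ is small. Do this by tying $E^{(t)}_{j'}$ to the normalized gradient $\Psi^{(t)}_{i',j'}/\langle\bm w_{i'}^{(t)},\bm M_{j'}\rangle$ of the neurons in $\mathcal M_{j'}$. Then use the Lemma~\ref{Lemma E.3} threshold as a barrier for $\mathfrak F_{j'}$. If that ratio exceeded $\Omega(\Xi_2/(\sqrt d\,t\eta))$ throughout a window of length $\Theta(t/\sqrt{\Xi_2})$, $\mathfrak F_{j'}$ would overshoot the threshold by $\delta\tau\log d$. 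But there the gradient is negligible, a contradiction.

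This gives a bound decaying like $1/(t\eta)$, about $\Xi_2 d^{-1.995}$ once $t\eta\ge d^{1.495}$. Up to $\Xi_2$ and polylogarithmic factors, that is where the exponent $1.98$ comes from. Past $d^{1.498}/\eta$, the paper uses that $\Psi^{(t)}_{i',j'}-\lambda\langle\bm w_{i'}^{(t)},\bm M_{j'}\rangle$ oscillates around zero. You invoke the right ingredients, namely the hypothesis on $\mathfrak F_l$ and Lemma~\ref{Lemma E.3}, but as a persistent-saturation assumption rather than as a barrier. An induction on the cross coefficients has no mechanism that produces this $t$-dependent rate.
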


\begin{definition}[Optimal Learner]
\label{Definition E.2}
We define a learner network that we deem as the optimal feature map for this task. Let $\kappa > 0$, we define $\theta^{\star} := \{ \theta_i^{\star} \}_{i \in [m]}$ as follows:
\begin{equation}
    \theta_i^{\star} =
\begin{cases}
    \dfrac{\sqrt{\tau}\,\kappa}{|\mathcal{M}_j^{\star}|}\, \bm{M}_j \cdot \operatorname{sign}\!\bigl(\langle \bm{w}_i^{(T_2)}, \bm{M}_j \rangle\bigr), & \text{if } i \in \mathcal{M}_j^{\star}, \\[2ex]
    0, & \text{if } i \notin \bigcup_{j \in [d]} \mathcal{M}_j^{\star}.
\end{cases}
\end{equation}

Furthermore, we define the optimal feature map $f_t^{\star}$ as follows.  
For $i \in [m]$, the $i$-th neuron of $f_{t,\theta}$ given weight $\theta_i \in \mathbb{R}^{d_1}$ is
\begin{equation}
    f_{t, \theta, i}(\bm{X}_{n}) \;=\; \sum_{r=1}^{L} \left[\bigl(\langle \theta_i, \bm{\bm{z}_X}^{(r)} \rangle - \bm{b}_i \bigr) \bm{1}_{\langle \bm{w}_i^{(t)}, \bm{\bm{z}_X}^{(r)} \rangle \geq \bm{b}_i}- \bigl(-\langle \theta_i, \bm{\bm{z}_X}^{(r)} \rangle - \bm{b}_i \bigr) \bm{1}_{-\langle \bm{w}_i^{(t)}, \bm{\bm{z}_X}^{(r)} \rangle \geq \bm{b}_i}\right].
\end{equation}

Finally, we write $f_{t, \theta}$ as the concatenation
\begin{equation}
    f_{t, \theta}(\cdot) \;=\; \bigl(f_{t, \theta, 1}(\cdot), \ldots, f_{t, \theta, m}(\cdot)\bigr)^{\top}.
\end{equation}
\end{definition}

\begin{lemma}[Optimality]
\label{Lemma E.6}
Let $\{\theta_i^\star\}_{i \in [m]}$ and $f_{t, \theta}$ be defined as in Definition~\ref{Definition E.1}. When Theorem~\ref{Theorem E.1}, define the pseudo loss function
\begin{equation}
    \widetilde{L}(f_{t, \theta^\star}, f_t):= \mathbb{E} \left[ -\tau \log \left(\frac{e^{\langle f_{t, \theta^\star}(\bm{X}_{n}), f_t(\bm{Y}_{n}) \rangle / \tau}}{\sum_{\bm{X} \in \mathfrak{B}} e^{\langle f_{t, \theta^\star}(\bm{X}_{n}), f_t(\bm{X}) \rangle / \tau}}\right)\right].
\end{equation}
Then by choosing $\kappa = \Theta(\Xi_2)$, and assuming
\begin{equation}
    \sum_{i \in \mathcal{M}_j^\star} |\langle \bm{w}_i^{(t)}, \bm{M}_j \rangle|\;\geq\; \Omega\!\left( \frac{\sqrt{\tau}}{\Xi_2} \right),
\end{equation}
we obtain the following loss guarantee:
\begin{equation}
    \widetilde{L}(f_{t, \theta^\star}, f_t)\;\leq\; O\!\left( \tfrac{1}{\log d} \right).
\end{equation}
\end{lemma}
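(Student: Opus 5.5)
The plan is to bound the pseudo loss $\widetilde{L}(f_{t,\theta^\star}, f_t)$ sample by sample. The goal is to show that, on a high-probability event over $(\bm{X}_n,\bm{Y}_n,\mathfrak{N})$, the positive similarity $\langle f_{t,\theta^\star}(\bm{X}_n), f_t(\bm{Y}_n)\rangle$ exceeds every negative similarity $\langle f_{t,\theta^\star}(\bm{X}_n), f_t(\bm{X}_{n,s})\rangle$ by an additive margin of order $\tau\log d$.

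Write $\Delta_s$ for the gap between the positive similarity and the $s$-th negative similarity. Then
\[
-\tau\log\frac{e^{\mathrm{pos}/\tau}}{\sum e^{\cdot/\tau}} = \tau\log\Big(1+\sum_s e^{-\Delta_s/\tau}\Big) \le \tau|\mathfrak{N}|e^{-\min_s\Delta_s/\tau}.
\]
This is $O(1/\log d)$ once $\min_s\Delta_s \ge \tau\log(\tau|\mathfrak{N}|\log d)$. On the complementary low-probability event we only need a crude bound, since the loss is at most polynomial in $d$ because $\|\bm{w}_i^{(t)}\|_2\le O(1)$ by Theorem~\ref{Theorem E.1}(b).

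\textbf{Step 1: decompose the positive similarity over features.} Because $\theta_i^\star$ vanishes off $\bigcup_j\mathcal{M}_j^\star$ and equals $\frac{\sqrt{\tau}\kappa}{|\mathcal{M}_j^\star|}\bm{M}_j\,\mathrm{sign}(\cdot)$ on $\mathcal{M}_j^\star$, we have
\[
\langle f_{t,\theta^\star}(\bm{X}_n), f_t(\bm{Y}_n)\rangle=\sum_j\sum_{i\in\mathcal{M}_j^\star} f_{t,\theta^\star,i}(\bm{X}_n)\,h_{i,t}(\bm{Y}_n).
\]
For a lucky neuron, Theorem~\ref{Theorem E.1}(a),(e),(f),(g) show that the activation indicator $\bm{1}_{|\langle \bm{w}_i^{(t)},\bm{z}\rangle|\ge \bm{b}_i}$ fires essentially only when $\tilde z_j\neq0$. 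The reason is that the off-feature and noise contributions total at most $\widetilde O(\|\bm{w}_i\|_2/\sqrt d)$, which is far below the bias $\mathrm{polylog}(d)\|\bm{w}_i\|_2/\sqrt d$. By Assumption~\ref{Ass: Latent}(ii) and Assumption~\ref{Ass: pn}, when feature $j$ is present in $\bm{X}_n$ it is present with the same sign in $\bm{Y}_n$. The signs of $\theta^\star_i$ and $\langle\bm{w}_i^{(t)},\bm{M}_j\rangle$ agree, since the sign established at $T_2$ persists by Theorem~\ref{Theorem E.1}(a). So each active feature $j$ contributes a nonnegative term of order $\frac{\sqrt\tau\kappa}{|\mathcal{M}_j^\star|}\sum_{i\in\mathcal{M}_j^\star}|\langle\bm{w}_i^{(t)},\bm{M}_j\rangle|\,|\tilde z_j|^2$ (minus bias effects). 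Using the hypothesis $\sum_{i\in\mathcal{M}_j^\star}|\langle \bm{w}_i^{(t)},\bm{M}_j\rangle|\ge\Omega(\sqrt\tau/\Xi_2)$, the choice $\kappa=\Theta(\Xi_2)$, and $|\mathcal{M}_j^\star|\le|\mathcal{M}_j|\le\Xi_2$, this contribution is at least of order $\tau\,\mathrm{polylog}(d)$. A sample has at least one active feature with high probability by Assumption~\ref{Ass: Latent}(i), so the positive similarity is $\Omega(\tau\log d)$ after tuning constants.

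\textbf{Step 2: bound the negative similarities.} The latent signals of $\bm{X}_{n,s}$ are independent of those of $\bm{X}_n$, and each has support of size $O(\log\log d)$. By Lemma~\ref{Lemma E.1}, a lucky neuron of $j$ is nonzero on $\bm{X}_{n,s}$ only if $\bm{X}_{n,s}$ contains $j$ (or some $j'\in\mathcal{N}_i$, which for lucky neurons is essentially $\{j\}$). So $f_{t,\theta^\star,i}(\bm{X}_n)h_{i,t}(\bm{X}_{n,s})\neq0$ requires a shared feature, which has probability $O((\log\log d)^2/d)$. A union bound over $|\mathfrak{N}|=\mathrm{poly}$ negatives is not enough by itself, so the plan is to state the bound as an expectation instead. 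On overlap events, the negative similarity is at most the positive-type quantity with possibly the wrong sign correlation; the only fixed-sign risk is when $\tilde z$ has matching sign, and then its contribution is bounded by the same per-feature quantity as in Step 1. Summing with the exponential weighting gives an expected contribution of $O(\tau\cdot|\mathfrak{N}|\cdot(\log\log d)^2/d)$, which is $O(1/\log d)$ provided $|\mathfrak{N}|$ is not too large, in the regime implicit in the paper's $\tau=\mathrm{polylog}(d)$ choices.

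\textbf{Step 3: control cross-talk and noise terms.} Non-lucky neurons and the $\alpha'$ and $\beta$ components must be shown to change each similarity by $o(\tau)$. This uses Theorem~\ref{Theorem E.1}(e),(f), which carry the $\Xi_2^{-5}$ factors, together with the Gaussian noise variance of Assumption~\ref{Ass: noise} and $\|\bm{M}_j\|_\infty$ bounds.

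\textbf{Main obstacle.} I expect the hardest part to be Step 1's treatment of the bias: the term $-\bm{b}_i$ inside $f_{t,\theta^\star,i}$ is taken at scale $\bm{b}_i^{(t)}$, not at the scale of $\theta^\star$. I would handle this by comparing $\langle\bm{w}_i^{(t)},\bm{M}_j\rangle|\tilde z_j|$ to $\bm{b}_i$ via Theorem~\ref{Theorem E.1}(a),(g) and Lemma~\ref{Lemma D.1}(e), absorbing the bias into a constant-factor loss, or else, if needed, by enlarging $\kappa$ by a constant.
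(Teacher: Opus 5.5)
Your overall plan is the paper's: a per-sample log-sum-exp bound driven by a margin $\Delta_s$, plus separate accounting for bad events. But Step~1 does not deliver the margin it claims, and as written this is fatal.

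Because $\theta_i^\star=\frac{\sqrt{\tau}\kappa}{|\mathcal{M}_j^\star|}\bm{M}_j\,\mathrm{sign}(\cdot)$, the feature-$j$ part of the positive similarity is $\sqrt{\tau}\kappa$ times the \emph{average} $\frac{1}{|\mathcal{M}_j^\star|}\sum_{i\in\mathcal{M}_j^\star}|\langle\bm{w}_i^{(t)},\bm{M}_j\rangle|$, times latent factors. The hypothesis only bounds the \emph{sum}. With $\kappa=\Theta(\Xi_2)$ you therefore get $\Omega(\tau/|\mathcal{M}_j^\star|)$, and using $|\mathcal{M}_j^\star|\le\Xi_2$ gives only $\Omega(\tau/\Xi_2)$. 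Even $|\mathcal{M}_j^\star|=1$ would give only $\Omega(\tau)$, short of the $\tau\log(\tau|\mathfrak{N}|\log d)$ you need. A margin of that size yields only $e^{-\Delta_s/\tau}\le e^{-\Omega(1/\Xi_2)}$, so you cannot conclude anything better than a loss of order $\tau\log|\mathfrak{B}|$.

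The missing ingredient is per-neuron control. The lemma's hypothesis does not carry it, but Theorem~\ref{Theorem E.1} does:
\begin{itemize}
\item by (d), every $i\in\mathcal{M}_j^\star$ has $|\langle\bm{w}_i^{(t)},\bm{M}_j\rangle|\ge C\max_{i'\in\mathcal{M}_j}|\langle\bm{w}_{i'}^{(t)},\bm{M}_j\rangle|$;
\item by (c) together with $|\mathcal{M}_j|\le\Xi_2$, that maximum is at least $\sqrt{\mathfrak{F}_j^{(t)}/\Xi_2}=\Omega(\frac{\epsilon_j}{\epsilon_{\max}}\sqrt{\tau\log^3 d/\Xi_2})$.
\end{itemize}
Together these make the per-feature margin $\Omega(\frac{\epsilon_j}{\epsilon_{\max}}\tau\sqrt{\Xi_2}\log^{3/2}d)$. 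The paper's proof never confronts this point. It simply asserts a margin of order $\kappa\tau\sum_{i\in\mathcal{M}_j^\star}|\langle\bm{w}_i^{(t)},\bm{M}_j\rangle|$, without the $1/|\mathcal{M}_j^\star|$ normalization, by ``similar calculations'' to Lemma~\ref{Lemma E.2}.

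Your bad-event accounting has two further holes, which the paper handles differently.

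First, ``at least one active feature'' is not a high-probability event. Under Assumption~\ref{Ass: Latent}(i) it fails with probability of order $1/\mathrm{polylog}(d)$, not $1/\mathrm{poly}(d)$, so a crude $\mathrm{poly}(d)$ bound on the complement is useless. The paper instead charges that event a loss of order $\log|\mathfrak{B}|$. This is valid because when $\bm{X}_n$ carries no feature the lucky neurons do not fire, so $f_{t,\theta^\star}(\bm{X}_n)\approx 0$ and all similarities vanish.

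Second, Step~2 charges each overlap event an $O(\tau)$ loss, which is not justified. Assumption~\ref{Ass: pn} matches only support and sign between $\bm{X}_n$ and $\bm{Y}_n$. A negative that shares feature $j$ with the same sign and a larger latent magnitude than $\bm{Y}_n$ therefore makes $\Delta_s$ negative by up to a full per-feature contribution. After the fix above, that contribution is polynomially larger than $\tau\log d$, so your expectation bound fails. It also needs an upper bound on $|\mathfrak{N}|$ that the statement does not supply.

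The paper's device is different. It conditions on $\bm{X}_n$ having $\Omega(\log\log d)$ active features and uses Bernoulli concentration to show that each negative shares at most a constant fraction of them. This holds simultaneously for all $\bm{X}_{n,s}\in\mathfrak{N}$, with failure probability $d^{-\Omega(\log\log d)}$. The unshared features then give a positive margin against every negative at once, even when overlaps occur. It is this superpolynomially small failure probability that makes both the union bound over negatives and a crude bound on the failure event legitimate.
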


\begin{lemma}[Pre-activation size I]
\label{Lemma E.7}
Let $\bm{\bm{z}_X}^{(r)} = \frac{1}{L} \left( \bm{M} \tilde{\bm{z}}^{(r)}_n + \tilde{\xi}^{(r)}_n \right) \sim \mathcal{D}_{\bm{\bm{z}_X}}, \quad \bm{w}_i \in \mathbb{R}^{d_1}$.
Define $\bm{z}^{(r) \setminus j}_X = \frac{1}{L} \left( \sum_{j' \neq j,\, j' \in [d]} \bm{M}_{j'} \tilde{\bm{z}}^{(r)}_{n,j'} + \tilde{\xi}^{(r)}_n \right).$ Then the following results hold:

(a) Naive Chebyshev bound: For any $\lambda > 0$,
\begin{equation}
    \Pr_{\tilde{\bm{z}}^{(r)\setminus j}_n,\, \tilde{\xi}^{(r)}_n} \!\left( \left( \langle \bm{w}_i, \bm{z}^{(r) \setminus j}_X \rangle +  \tfrac{1}{L} \langle \bm{w}_i, \bm{M}_j \rangle \tilde{\bm{z}}^{(r)}_{n,j} \right)^2 > \tfrac{\lambda \|\bm{w}_i\|_2^2 \sqrt{\log d}}{d} \right)\leq O\!\left( \tfrac{1}{\lambda} \right).
\end{equation}

The same tail bound applies to $\langle \bm{w}_i,  \bm{\bm{z}_X}^{(r)} \rangle$, $\langle \bm{w}_i, \tfrac{\bm{z}_{Y}^{(s)} - \bm{z}_{X}^{(r)}}{2} \rangle$, and $\langle \bm{w}_i, \tilde{\xi}^{(r)}_n \rangle$.

(b) High probability bound for sparse signal:
\begin{equation}
    \Pr \!\left( \langle \bm{w}_i, \bm{M} \tilde{\bm{z}}^{(r)}_{n} \rangle^2 > \|\bm{w}_i\|_2^2 \cdot \max_{j \in [d]} \|\bm{M}_j\|_{\infty}^2 \log^4 d \right) \lesssim e^{-\Omega (\log^2 d)}.
\end{equation}

(c) High probability bound for dense signal:  
Let $Z = \langle \bm{w}_i, \tilde{\xi}^{(r)}_n \rangle$. Then
\begin{equation}
    \Pr\!\left( \bm{z}^2 \geq \tfrac{\|\bm{w}_i\|_2^2 \log^4 d}{d} \right) \lesssim e^{ -\Omega(\log^2 d) }.
\end{equation}
\end{lemma}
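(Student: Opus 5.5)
The plan is to prove the three parts of Lemma~\ref{Lemma E.7} separately. Each part reduces to a second-moment or tail computation for the linear form $\langle \bm{w}_i, \bm{z}_X^{(r)}\rangle$. I will decompose $\bm{w}_i$ along the orthonormal dictionary, $\bm{w}_i = \sum_{j\in[d]}\langle \bm{w}_i,\bm{M}_j\rangle \bm{M}_j + \bm{M}^\perp(\bm{M}^\perp)^\top\bm{w}_i$. This gives
\[
\langle \bm{w}_i,\bm{M}\tilde{\bm{z}}^{(r)}_n\rangle=\sum_{j}\langle \bm{w}_i,\bm{M}_j\rangle\tilde{\bm{z}}^{(r)}_{n,j}.
\]
The noise term $\langle \bm{w}_i,\tilde{\xi}^{(r)}_n\rangle$ is a centered Gaussian with variance $\Theta(\sigma_\xi^2)\|\bm{w}_i\|_2^2$, after accounting for the identity-initialized attention, which averages tokens and only rescales the variance by a constant depending on $L$.

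\textbf{Part (a).} I will bound the second moment and then apply Chebyshev (Markov on the square). By Assumption~\ref{Ass: Latent} the coordinates $\tilde{\bm{z}}_{n,j'}$ are bounded and symmetric, and by Definition~\ref{Defi: MMF} each is nonzero with probability $O(\epsilon_{\max}\log\log d/d)$. I treat the coordinates as independent across $j'$, as the sparse coding model implicitly assumes. Cross terms then vanish by symmetry, so
\[
\mathbb{E}\langle \bm{w}_i,\bm{M}\tilde{\bm{z}}\rangle^2 \le O\!\left(\tfrac{\log\log d}{d}\right)\sum_j\langle \bm{w}_i,\bm{M}_j\rangle^2 \le O\!\left(\tfrac{\log\log d}{d}\right)\|\bm{w}_i\|_2^2.
\]
Adding the noise variance $\sigma_\xi^2\|\bm{w}_i\|_2^2=\Theta(\sqrt{\log d}/d)\|\bm{w}_i\|_2^2$ from Assumption~\ref{Ass: noise}, the total second moment is $O(\sqrt{\log d}/d)\|\bm{w}_i\|_2^2$. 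The signal and noise are independent, so there is no cross term. Markov's inequality then gives the $O(1/\lambda)$ bound. The same computation applies verbatim to $\langle\bm{w}_i,\bm{z}_X^{(r)}\rangle$, to $\langle \bm{w}_i,\tilde{\xi}\rangle$ alone, and to $\langle \bm{w}_i,(\bm{z}_Y^{(s)}-\bm{z}_X^{(r)})/2\rangle$. In the last case $(a-b)^2/4\le (a^2+b^2)/2$, so the two second moments add and no joint control of the positive pair is needed.

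\textbf{Part (b).} Here I need exponential concentration, and I will apply Bernstein/Hoeffding to the sum $\sum_j a_j\tilde{\bm{z}}_{n,j}$ with $a_j=\langle\bm{w}_i,\bm{M}_j\rangle$. To produce the $\max_j\|\bm{M}_j\|_\infty$ factor, I would instead write $\langle \bm{w}_i,\bm{M}\tilde{\bm{z}}\rangle=\sum_k w_{i,k}(\bm{M}\tilde{\bm{z}})_k$. The ambient coordinate $(\bm{M}\tilde{\bm{z}})_k=\sum_j M_{kj}\tilde{\bm{z}}_j$ is a sum of independent bounded symmetric terms, each of size at most $\|\bm{M}_j\|_\infty$. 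Cleanest is to first condition on the support $S=\mathrm{supp}(\tilde{\bm{z}})$. A Chernoff bound shows $|S|\le \log^2 d$ except with probability $e^{-\Omega(\log^2 d)}$, since the expected support size is $O(\log\log d)$. On that event, by symmetry the random signs make the sum sub-Gaussian with variance proxy $\sum_{j\in S}a_j^2 B^2$, where $B$ bounds $|\tilde{\bm{z}}_j|$. Bounding each $|a_j|$ crudely by $\|\bm{w}_i\|_2\|\bm{M}_j\|$ would be too weak. Instead I use $|a_j|\le \|\bm{w}_i\|_1\|\bm{M}_j\|_\infty$. This is the step I consider the main obstacle, since that inequality loses a $\sqrt{d_1}$ factor. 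The alternative I would try first is to combine the sub-Gaussian tail at level $\log^2 d$ standard deviations with the sparse-support bound, so that the threshold $\|\bm{w}_i\|_2^2\max_j\|\bm{M}_j\|_\infty^2\log^4 d$ absorbs the $|S|\le\log^2 d$ count. Getting the exact $\|\bm{M}_j\|_\infty$ dependence to come out requires committing to the coordinatewise bound $|a_j|$ via Hölder against the sparse support, and I will accept polylog slack there.

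\textbf{Part (c).} This is immediate once the variance is known. $Z$ is Gaussian with variance $\Theta(\sqrt{\log d}/d)\|\bm{w}_i\|_2^2$, so
\[
\Pr(Z^2\ge \|\bm{w}_i\|_2^2\log^4 d/d)\le 2\exp\!\left(-\Omega(\log^4 d/\sqrt{\log d})\right)\le e^{-\Omega(\log^2 d)}.
\]
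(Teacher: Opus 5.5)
Your parts (a) and (c) are fine and follow the paper's route: a second-moment bound plus Chebyshev/Markov, and a Gaussian tail. In (a), your explicit split into $O(\log\log d/d)\|\bm w_i\|_2^2$ for the signal and $\Theta(\sqrt{\log d}/d)\|\bm w_i\|_2^2$ for the noise is exactly what makes the ratio come out as $O(1/\lambda)$; the paper's $\widetilde O(\|\bm w_i\|_2^2/d)$ glosses over this. Two small remarks.
\begin{itemize}
\item Like the paper's variance computation, you also average over $\tilde{\bm z}^{(r)}_{n,j}$. That is the only reading under which (a) holds. If $\tilde{\bm z}^{(r)}_{n,j}$ were frozen, as the subscript on $\Pr$ suggests, the term $\tfrac1L\langle\bm w_i,\bm M_j\rangle\tilde{\bm z}^{(r)}_{n,j}$ would be a deterministic shift, and the bound would fail for feature-aligned $\bm w_i$.
\item The softmax weights produced by $\bm W_Q=\bm W_K=\bm I$ depend on the data; they are not a fixed average. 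So $\tilde\xi^{(r)}_n$ is neither exactly Gaussian nor independent of $\tilde{\bm z}^{(r)}_n$. Use instead that the weights are convex, so $|\langle\bm w_i,\tilde\xi^{(r)}_n\rangle|\le\max_\ell|\langle\bm w_i,\bm\xi^{(\ell)}_n\rangle|$, and similarly for the signal. Then run your bounds token by token, with $(a+b)^2\le 2a^2+2b^2$ and a union bound over the $L$ tokens.
\end{itemize}

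The genuine gap is in (b), and it is not a matter of polylog slack. H\"older's $|a_j|\le\|\bm w_i\|_1\|\bm M_j\|_\infty$ loses the factor $\|\bm w_i\|_1/\|\bm w_i\|_2$. For vectors spread over the ambient coordinates this is of order $\sqrt{d_1}=\mathrm{poly}(d)$. No sharper inequality can recover it, because (b) is false as stated. Take $\bm w_i=\bm M_1$. Column orthonormality gives $\langle\bm w_i,\bm M\tilde{\bm z}^{(r)}_n\rangle=\tilde{\bm z}^{(r)}_{n,1}$, while the threshold is $\max_j\|\bm M_j\|_\infty^2\log^4 d=\widetilde O(\log^4 d/d_1)=o(1)$. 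So the event contains $\{|\tilde{\bm z}^{(r)}_{n,1}|\ge c\}$. That event has probability $\Omega(\log\log d/d)\gg e^{-\Omega(\log^2 d)}$ whenever the nonzero latents have constant size. This holds in the $\{-1,0,1\}$ model, and the paper's constant $C_z$ in $\mathbb E[\hat{\bm z}^{+}_{n,j}\hat{\bm z}_{n,j}]=\Theta(\epsilon_j\log\log d/d)$ implicitly requires it.

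What your support-conditioning argument actually proves, for every $\bm w_i$, is the following. Off an event of probability $e^{-\Omega(\log^2 d)}$ one has $|S|\le\log^2 d$, and hence deterministically $\langle\bm w_i,\bm M\tilde{\bm z}^{(r)}_n\rangle^2\le B^2\log^4 d\,\max_j\langle\bm w_i,\bm M_j\rangle^2$. This gives the stated threshold only under an extra delocalization hypothesis, such as $\max_j|\langle\bm w_i,\bm M_j\rangle|\le\|\bm w_i\|_2\max_j\|\bm M_j\|_\infty$ up to polylogs. That hypothesis fails precisely for the feature-aligned (lucky) neurons to which the lemma is later applied.

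The paper's own proof does not supply a missing idea here. It gives $\sum_j M_{j,s}\tilde{\bm z}^{(r)}_{n,j}$ the proxy $\max_j\|\bm M_j\|_\infty^2$ instead of $\sum_j M_{j,s}^2$. It then adds proxies $\sum_s(\bm w_i)_s^2(\cdot)$ across the coordinates of $\bm M\tilde{\bm z}^{(r)}_n$, which are not independent. That is where the same $\sqrt{d_1}$ silently disappears; the correct proxy of $\sum_j a_j\tilde{\bm z}^{(r)}_{n,j}$ is of order $\sum_j a_j^2\le\|\bm w_i\|_2^2$. So (b) must either be restated with threshold $\max_j\langle\bm w_i,\bm M_j\rangle^2\log^4 d$ or be given the extra hypothesis, and its downstream uses must be rechecked.
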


\begin{lemma}[Pre-activation size II]
\label{Lemma E.8}
Suppose the following conditions hold:

\begin{equation}
    \langle \bm{w}_i^{(t)}, \bm{M}_j \rangle^2 \geq \Omega\!\big((\bm{b}_i^{(t)})^2\big) \quad \text{for at most } O(1) \text{ indices } j \in [d],
\end{equation}

\begin{equation}
    \langle \bm{w}_i^{(t)}, \bm{M}_j \rangle^2 \geq \Omega\!\left( \frac{(\bm{b}_i^{(t)})^2}{\sqrt{\log d}} \right) \quad \text{for at most } O\!\big(e^{-\Omega(\sqrt{\log d})} d\big) \text{ indices } j \in [d],
\end{equation}

\begin{equation}
    \| \bm{w}_i^{(t)} \|_2^2 \leq O\!\left( \frac{d(\bm{b}_i^{(t)})^2}{\log d} \right).
\end{equation}

Then, for any $\lambda \geq 0.0001$,
\begin{equation}
    \Pr\!\left( \big| \langle \bm{w}_i^{(t)}, \bm{\bm{z}_X}^{(r)} \rangle \big| \geq \lambda \bm{b}_i^{(t)} \right) \lesssim e^{-\Omega(\log^{1/4} d)},
\end{equation}
and
\begin{equation}
    \Pr\!\left( \Big| \Big\langle \bm{w}_i^{(t)}, \tfrac{\bm{\bm{z}_X}^{(r)} + \bm{z}^{(s)}_X}{2} \Big\rangle \Big| \geq \lambda \bm{b}_i^{(t)} \right) \lesssim e^{-\Omega(\log^{1/4} d)}.
\end{equation}
\end{lemma}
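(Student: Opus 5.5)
The plan is to decompose the pre-activation into a signal part along $\bm{M}_j$ for the few ``large'' coordinates and a remainder. Write
\[
\langle \bm{w}_i^{(t)}, \bm{z}_X^{(r)}\rangle = \tfrac{1}{L}\Big(\sum_{j\in S_1}\langle \bm{w}_i^{(t)},\bm{M}_j\rangle\tilde{z}^{(r)}_{n,j} + \sum_{j\in S_2}\langle \bm{w}_i^{(t)},\bm{M}_j\rangle\tilde{z}^{(r)}_{n,j} + \sum_{j\in S_3}\langle \bm{w}_i^{(t)},\bm{M}_j\rangle\tilde{z}^{(r)}_{n,j} + \langle \bm{w}_i^{(t)},\tilde{\xi}^{(r)}_n\rangle\Big).
\]
Here $S_1$ is the set of $O(1)$ indices with $\langle \bm{w}_i^{(t)},\bm{M}_j\rangle^2\ge\Omega((\bm{b}_i^{(t)})^2)$. $S_2$ is the set of $O(e^{-\Omega(\sqrt{\log d})}d)$ indices with squared projection between $(\bm{b}_i^{(t)})^2/\sqrt{\log d}$ and $(\bm{b}_i^{(t)})^2$. $S_3$ is the rest. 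We bound the probability that each of the four pieces exceeds $\lambda\bm{b}_i^{(t)}/4$ and conclude with a union bound.

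\textbf{Sparse pieces $S_1$ and $S_2$.} For $S_1$, each coordinate is active with probability $\Theta(\epsilon_j\log\log d/d)$. The union over $O(1)$ indices therefore gives probability $O(\log\log d/d)\le e^{-\Omega(\log^{1/4}d)}$. For $S_2$, the expected number of active coordinates is $O(e^{-\Omega(\sqrt{\log d})}\log\log d)$. A Chernoff/union bound then shows that with probability $1-e^{-\Omega(\sqrt{\log d})}$ none is active, which suffices since $e^{-\Omega(\sqrt{\log d})}\le e^{-\Omega(\log^{1/4}d)}$.

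\textbf{Small piece $S_3$.} Each coefficient satisfies $|\langle \bm{w}_i,\bm{M}_j\rangle|\le \bm{b}_i/\log^{1/4}d$. The variance of the sum is at most
\[
\sum_j\langle \bm{w}_i,\bm{M}_j\rangle^2\,\Theta\!\Big(\tfrac{\log\log d}{d}\Big)\le \|\bm{w}_i\|_2^2\,\tfrac{\log\log d}{d}\le O\!\Big(\tfrac{(\bm{b}_i)^2\log\log d}{\log d}\Big),
\]
using the third hypothesis. Apply Bernstein's inequality for bounded, symmetric, independent summands: the range term is $\bm{b}_i/\log^{1/4}d$ (times the bound on $|z|$), and the variance term is $(\bm{b}_i)^2/\mathrm{polylog}$. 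Exceeding $\lambda\bm{b}_i/4$ with $\lambda\ge 10^{-4}$ then has probability $\exp(-\Omega(\min\{\log^{1/4}d,\ \log d/\log\log d\}))=e^{-\Omega(\log^{1/4}d)}$. This is exactly where the exponent $\log^{1/4}d$ comes from, so getting this step right is the main obstacle. The bounded-range term dominates, and it is essential to use the $S_2$/$S_3$ threshold $(\bm{b}_i)^2/\sqrt{\log d}$ rather than Lemma E.7(a)'s Chebyshev bound, which is too weak.

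\textbf{Noise piece.} $\langle \bm{w}_i,\tilde{\xi}^{(r)}_n\rangle$ is Gaussian with variance $\sigma_\xi^2\|\bm{w}_i\|_2^2=O(\sqrt{\log d}/d)\cdot d(\bm{b}_i)^2/\log d=O((\bm{b}_i)^2/\sqrt{\log d})$. Also the projection onto $\bm{M}^\perp$ components is already small by Theorem E.1(f). The Gaussian tail therefore gives $\exp(-\Omega(\sqrt{\log d}))$.

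The second claim, for $(\bm{z}_X^{(r)}+\bm{z}_X^{(s)})/2$, follows identically. The average of two tokens has latent coordinates that are still independent across $j$, bounded, and active with the same order of probability, and its noise is Gaussian with smaller variance. The same four-part decomposition and bounds therefore apply verbatim. Sign-sharing across tokens (Assumption 3.2(ii)) only affects correlations within a fixed coordinate $j$, not the independence across $j$ that Bernstein's inequality requires.
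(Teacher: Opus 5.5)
Your proposal is correct and is essentially the standard argument from Allen-Zhu--Li and Wen--Li that the paper defers to without writing out. You kill the few large coordinates by a union bound over their activation events, control the small coordinates by Bernstein with range $O(\bm{b}_i^{(t)}/\log^{1/4} d)$ (the source of the $e^{-\Omega(\log^{1/4}d)}$ rate), and bound the Gaussian noise through its variance $O((\bm{b}_i^{(t)})^2/\sqrt{\log d})$, which follows from the third hypothesis. One cleanup: drop the appeal to Theorem~\ref{Theorem E.1}(f). It is unnecessary, because the variance of $\langle \bm{w}_i^{(t)},\tilde{\xi}^{(r)}_n\rangle$ depends only on $\|\bm{w}_i^{(t)}\|_2$. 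It would also import a hypothesis that is unavailable when the lemma is applied in Lemma~\ref{Lemma D.1}, and circular inside the induction for Theorem~\ref{Theorem E.1}.
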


\begin{lemma}[Pre-activation size III]
\label{Lemma E.9}
Let $i \in [m]$. Suppose there exists a set $\mathcal{N}_i \subseteq [d]$ with $|\mathcal{N}_i| = O(1)$ such that

\begin{equation}
    \langle \bm{w}_i^{(t)}, \bm{M}_j \rangle^2 \leq O\!\left(\frac{(\bm{b}_i^{(t)})^2}{\operatorname{polylog}(d)}\right),\quad \forall j \notin \mathcal{N}_i,
\end{equation}

and
\begin{equation}
    \|\bm{w}_i^{(t)}\|_2^2 \leq O\!\left(\frac{d (\bm{b}_i^{(t)})^2}{\operatorname{polylog}(d)}\right).
\end{equation}

Then, for any $\lambda \in [0.01, 0.99]$,
\begin{equation}
    \Pr\!\left[ \left| \sum_{j \notin \mathcal{N}_i} \langle \bm{w}_i^{(t)}, \bm{M}_j \rangle \tilde{\bm{z}}^{(r)}_{n,j} + \langle \bm{w}_i, \tilde{\xi}^{(r)}_n \rangle \right| \geq \lambda \bm{b}_i^{(t)} \right]\lesssim e^{ - \Omega(\log^2 d) }.
\end{equation}
\end{lemma}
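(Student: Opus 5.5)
The statement to prove is Lemma~\ref{Lemma E.9}. The plan is to treat $S:=\sum_{j\notin\mathcal{N}_i}\langle \bm{w}_i^{(t)},\bm{M}_j\rangle \tilde{\bm{z}}^{(r)}_{n,j}+\langle \bm{w}_i,\tilde{\xi}^{(r)}_n\rangle$ as a sum of two independent pieces: a sparse-signal part $S_1$ and a Gaussian noise part $S_2$. We will show that each satisfies $|S_k|\le \tfrac{\lambda}{2}\bm{b}_i^{(t)}$ outside an event of probability $e^{-\Omega(\log^2 d)}$, and then conclude by a union bound. Since $\lambda\ge 0.01$ is a constant, it suffices to show $|S_k|\le \bm{b}_i^{(t)}/\mathrm{polylog}(d)$ with that probability, for a polylog factor that beats any constant.

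\emph{Noise part.} The variable $S_2=\langle \bm{w}_i,\tilde{\xi}^{(r)}_n\rangle$ is Gaussian with variance at most $O(\sigma_\xi^2\|\bm{w}_i^{(t)}\|_2^2)$. The $1/L$ scaling only helps, and the transformer aggregation of tokens keeps it Gaussian with comparable variance. This is exactly the content of Lemma~\ref{Lemma E.7}(c), which gives $S_2^2\le \|\bm{w}_i\|_2^2\log^4 d/d$ with probability $1-e^{-\Omega(\log^2 d)}$. Plugging in the hypothesis $\|\bm{w}_i^{(t)}\|_2^2\le O(d(\bm{b}_i^{(t)})^2/\mathrm{polylog}(d))$ yields $S_2^2\le (\bm{b}_i^{(t)})^2\log^4 d/\mathrm{polylog}(d)$. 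We therefore need the polylog in the hypothesis to be at least a larger power than $\log^4 d$. We interpret the hypothesis's $\mathrm{polylog}(d)$ as sufficiently large, which is consistent with the bias bound in Theorem~\ref{Theorem E.1}(g) giving $\bm{b}_i\ge \mathrm{polylog}(d)\|\bm{w}_i\|_2/\sqrt d$ with an arbitrary polylog.

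\emph{Sparse signal part.} Write $a_j=\langle \bm{w}_i^{(t)},\bm{M}_j\rangle$ for $j\notin\mathcal{N}_i$, so that $|a_j|\le \bm{b}_i/\mathrm{polylog}(d)$ and $\sum_j a_j^2\le\|\bm{w}_i\|_2^2$. The summands $a_j\tilde{\bm{z}}_{n,j}^{(r)}$ are independent across $j$, symmetric, bounded by $|a_j|\cdot O(1)$, and nonzero with probability $p_j=\Theta(\epsilon_j\log\log d/d)$. We apply Bernstein's inequality. The variance proxy is
\[
\sum_j a_j^2 p_j\le O\!\left(\frac{\log\log d}{d}\right)\|\bm{w}_i\|_2^2\le \frac{(\bm{b}_i^{(t)})^2}{\mathrm{polylog}(d)},
\]
and the range is $\max_j|a_j|\le \bm{b}_i/\mathrm{polylog}(d)$. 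With deviation level $u=\tfrac{\lambda}{2}\bm{b}_i$, Bernstein gives the tail $\exp\!\big(-\Omega(\min\{u^2/\mathrm{var},\,u/\max|a_j|\})\big)=\exp(-\Omega(\mathrm{polylog}(d)))$. This is at most $e^{-\Omega(\log^2 d)}$ once the polylog is at least $\log^2 d$. The boundedness of $z$ comes from Assumption~\ref{Ass: Latent}(i), and the token-sign coupling in (ii) is irrelevant here because we work with a single token $r$.

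The main obstacle is bookkeeping the polylog exponents consistently. The hypothesis only says $\mathrm{polylog}(d)$, but the argument needs both $u/\max_j|a_j|\gtrsim\log^2 d$ and $\|\bm{w}_i\|_2^2\log^4 d/d\ll \lambda^2\bm{b}_i^2$. I would make the hypotheses quantitative: require the polylog factors to be at least $\log^{6} d$, and check that Theorem~\ref{Theorem E.1}(e),(g) deliver this. A secondary subtlety is that after the attention layer, $\tilde{\bm{z}}$ and $\tilde\xi$ are averages over tokens rather than raw tokens. With identity $\bm{W}_Q,\bm{W}_K$ the attention weights depend on the data, so I would either condition on the attention scores, which are near-uniform with high probability and give the $1/L$ form used in Definition~\ref{Definition E.1}, or absorb the deviation into the polylog slack.
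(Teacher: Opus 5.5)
Your proposal is correct and follows the same route as the argument the paper defers to \citep{allen2022feature, wen2021toward}: Bernstein's inequality on the sparse part, a Gaussian tail bound on the noise part, and a union bound. Your remarks that the hypothesis's $\mathrm{polylog}(d)$ must be a sufficiently large power (with $\sigma_\xi^2=\Theta(\sqrt{\log d}/d)$, roughly $\log^{2.5} d$ already suffices if you use the Gaussian tail directly rather than Lemma~\ref{Lemma E.7}(c)), and that the attention weights are implicitly treated as fixed, match the conventions the paper relies on without stating them.
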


\begin{lemma}[Gradient for sparse features]
\label{Lemma E.10}
Suppose~\ref{Lemma D.1} holds at iteration $t \geq 0$. For $j \in [d]$, we denote events
\begin{equation}
\begin{aligned}
    A_1 &:= \left\{ S_{i,t}^{\setminus j} \geq \bm{b}_i^{(t)} - \alpha_{i,j}^{(t)} C_{\tilde{\bm{z}}} \right\}, \\
    A_2 &:= \left\{ \bar{S}_{i,t}^{\setminus j} \geq \bm{b}_i^{(t)} - \bar{\alpha}_{i,j}^{(t)} C_{\tilde{\bm{z}}} \right\}, \\
    A_3 &:= \left\{ \left| \bar{S}_{i,t}^{\setminus j} + \bar{\alpha}_{i,j}^{(t)} C_{\tilde{\bm{z}}} \right| 
    \geq \tfrac{1}{2}\!\left( \alpha_{i,j}^{(t)} C_{\tilde{\bm{z}}} - \bm{b}_i^{(t)} \right)\right\}, \\
    A_4 &:= \left\{ S_{i,t}^{\setminus j} \geq \tfrac{1}{2}\!\left( \alpha_{i,j}^{(t)} C_{\tilde{\bm{z}}}- \bm{b}_i^{(t)} \right) \right\};
\end{aligned}
\end{equation}
and quantities $L_1, L_2, L_3, L_4$ as
\begin{equation}
\begin{aligned}
    L_1 &:= \sqrt{ \frac{\mathbb{E}[|\bar{S}_{i,t}^{\setminus j}|^2 (\bm{1}_{A_1} + \bm{1}_{A_2})]}{\mathbb{E}[\langle \bm{w}_i^{(t)}, \tilde{\xi} \rangle^2]} }, \quad L_2 := \Pr(A_1),\\ 
    L_3 &:= \sqrt{ \frac{\mathbb{E}[|\bar{S}_{i,t}^{\setminus j}|^2 (\bm{1}_{A_3} + \bm{1}_{A_4})]}
    {\mathbb{E}[\langle \bm{w}_i^{(t)}, \tilde{\xi} \rangle^2]} }, \quad L_4 := \Pr(A_3).
\end{aligned}
\end{equation}
Then we have the following results:

\textbf{(a) (all features)} 
For all $i \in [m]$, if $\alpha_{i,j}^{(t)} \geq 0$, we have (when $\alpha_{i,j}^{(t)} \leq 0$ the opposite inequality holds)
\begin{equation}
\begin{aligned}
    &\mathbb{E} \!\left[ h_i(\bm{Y}_{n}) \sum_{r=1}^{L} \bm{1}_{|\langle \bm{w}_i, \bm{\bm{z}_X}^{(r)} \rangle| \geq \bm{b}_i} \tilde{\bm{z}}_{n,j}^{(r)} \right] \\
    \leq& \frac{1}{L} \alpha_{i,j}^{(t)} \cdot \mathbb{E} \!\left[ \sum_{s=1}^{L} \tilde{\bm{z}}_{n,j}^{+(s)} \sum_{r=1}^{L} \tilde{\bm{z}}_{n,j}^{(r)} \bm{1}_{|\langle \bm{w}_i^{(t)}, \tfrac{\bm{z}_{X} + \bm{z}_{Y}}{2} \rangle| \geq \bm{b}_i + |\langle \bm{w}_i^{(t)}, \bm{z}_X - \tfrac{\bm{z}_{X} + \bm{z}_{Y}}{2} \rangle|} \right] \\
    \pm& \left( \alpha_{i,j}^{(t)} + O\!\left( \sqrt{\mathbb{E} |\bar{\alpha}_{i,j}^{(t)}|^2} \right) \right) \cdot \mathbb{E}\!\left[\sum_{s=1}^{L} \sum_{r=1}^{L} \Big|\tfrac{\tilde{\bm{z}}_{n,j}^{(r)} + \tilde{\bm{z}}_{n,j}^{+(s)}}{2}\Big| |\tilde{\bm{z}}_{n,j}^{(r)}|\right] \cdot O(L_1 + L_2).
\end{aligned}
\end{equation}

\textbf{(b) (lucky features)} 
If $\alpha_{i,j}^{(t)} > \bm{b}_i^{(t)}$, we have
\begin{equation}
\begin{aligned}
    &\mathbb{E} \!\left[ h_i(\bm{Y}_{n}) \sum_{r=1}^{L} \bm{1}_{|\langle \bm{w}_i, \bm{\bm{z}_X}^{(r)} \rangle| \geq \bm{b}_i} \tilde{\bm{z}}_{n,j}^{(r)} \right] \\
    \leq& \frac{1}{L} \!\left(\alpha_{i,j}^{(t)}-b_{i}^{(t)}\right) \cdot \mathbb{E} \!\left[ \sum_{s=1}^{L} \tilde{\bm{z}}_{n,j}^{+(s)} \sum_{r=1}^{L} \tilde{\bm{z}}_{n,j}^{(r)} \bm{1}_{|\langle \bm{w}_i^{(t)}, \tfrac{\bm{z}_{X} + \bm{z}_{Y}}{2} \rangle| \geq \bm{b}_i + |\langle \bm{w}_i^{(t)}, \bm{z}_X - \tfrac{\bm{z}_{X} + \bm{z}_{Y}}{2} \rangle|} \right] \\
    \pm& \left( \alpha_{i,j}^{(t)} + O\!\left( \sqrt{\mathbb{E} |\bar{\alpha}_{i,j}^{(t)}|^2} \right) \right) \cdot \mathbb{E}\!\left[\sum_{s=1}^{L} \sum_{r=1}^{L}\Big|\tfrac{\tilde{\bm{z}}_{n,j}^{(r)} + \tilde{\bm{z}}_{n,j}^{+(s)}}{2}\Big| |\tilde{\bm{z}}_{n,j}^{(r)}|\right] \cdot O(L_3 + L_4).
\end{aligned}
\end{equation}
If $\alpha_{i,j}^{(t)} < -\bm{b}_i^{(t)}$, then the opposite inequality holds with $(\alpha_{i,j}^{(t)} - \bm{b}_i^{(t)})$ replaced by $(\alpha_{i,j}^{(t)} + \bm{b}_i^{(t)})$.
\end{lemma}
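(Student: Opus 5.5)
The plan is to write $h_i(\bm{Y}_n)$ explicitly and split the preactivation along $\bm{M}_j$. For each token $s$, set $\langle \bm{w}_i^{(t)}, \bm{z}_Y^{(s)}\rangle = \tfrac1L \alpha_{i,j}^{(t)} \tilde{\bm z}^{+(s)}_{n,j} + \bar S^{\setminus j}_{i,t}$. Here $\alpha_{i,j}^{(t)}=\langle \bm{w}_i^{(t)},\bm{M}_j\rangle$, and $\bar S^{\setminus j}_{i,t}$ collects the other features and the noise; $S^{\setminus j}_{i,t}$ is defined analogously for $\bm{X}_n$. The first step is to note that $\operatorname{BReLU}_b(u)$ equals $u\,\bm{1}_{|u|\ge b}$ up to the shift $\mp b$. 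That shift is exactly the $-\bm{b}_i^{(t)}$ correction appearing in part (b) and is dropped (bounded by $|\alpha|$) in part (a). The quantity to control therefore becomes
\[
\mathbb{E}\Big[\sum_{s}\big(\tfrac1L\alpha_{i,j}^{(t)}\tilde{\bm z}^{+(s)}_{n,j}+\bar S^{\setminus j}_{i,t}\big)\bm 1_{|\cdot|\ge b}\sum_r \bm 1_{|\langle \bm{w}_i,\bm{z}_X^{(r)}\rangle|\ge b}\,\tilde{\bm z}^{(r)}_{n,j}\Big].
\]

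The main term comes from the $\alpha_{i,j}^{(t)}\tilde{\bm z}^{+}_{n,j}\tilde{\bm z}_{n,j}$ part. I would replace the two separate activation indicators by the single symmetric indicator $\bm 1\{|\langle \bm{w}_i, \tfrac{\bm z_X+\bm z_Y}{2}\rangle|\ge b+|\langle \bm{w}_i,\bm z_X-\tfrac{\bm z_X+\bm z_Y}{2}\rangle|\}$. This indicator lower-bounds both indicators at once, since $|u|\ge b+|u-v|$ implies $|v|\ge b$ and $|u|\ge b$ up to symmetry. Its expectation against $\tilde{\bm z}^{+}\tilde{\bm z}$ is nonnegative because of sign consistency (Assumption~\ref{Ass: Latent}(ii)) and Assumption~\ref{Ass: pn}. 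This yields the stated leading term with coefficient $\tfrac1L\alpha_{i,j}^{(t)}$, or $\tfrac1L(\alpha_{i,j}^{(t)}-\bm b_i^{(t)})$ in the lucky case.

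The discrepancy between the true and surrogate indicators, together with the cross term involving $\bar S^{\setminus j}$, contributes only when the off-feature part is large. For part (a) that means event $A_1$ or $A_2$ holds: $S^{\setminus j}$ or $\bar S^{\setminus j}$ exceeds $\bm b_i-\alpha C_{\tilde z}$. For part (b) it means event $A_3$ or $A_4$: the off-feature part is comparable to the margin $\tfrac12(\alpha C_{\tilde z}-b)$. I would bound the feature-$j$ piece on these events by $(\alpha+O(\sqrt{\mathbb E|\bar\alpha|^2}))\,\mathbb E|\tfrac{\tilde z+\tilde z^+}{2}||\tilde z|\cdot\Pr(A_\cdot)$, which gives the $L_2$ and $L_4$ terms. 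The $\bar S^{\setminus j}$ cross term I would bound by Cauchy–Schwarz as $\sqrt{\mathbb E[|\bar S|^2\bm 1_{A}]}\sqrt{\mathbb E[\tilde z^2]}$, normalized by $\mathbb E\langle \bm w_i,\tilde\xi\rangle^2$. This uses the independence of $\tilde{\bm z}_{n,j}$ from the other coordinates and from the noise, and it produces the $L_1$ and $L_3$ terms. The sign of the inequality flips when $\alpha<0$ by the symmetry $z\mapsto -z$.

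I expect the main obstacle to be justifying the surrogate-indicator replacement with an error of only $O(L_1+L_2)$. Positive pairs share support but differ token by token, so the two indicators are correlated but not equal. I would handle this by conditioning on the $j$-th coordinates, using the symmetry of the noise and of the remaining coordinates around zero, and invoking Lemma~\ref{Lemma E.7} and Lemma~\ref{Lemma E.8} (which follow from Lemma~\ref{Lemma D.1}) to ensure that the off-$j$ preactivation rarely reaches scale $\bm b_i$.
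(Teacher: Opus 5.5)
Your treatment of the off-feature cross term, and in part (a) of the $\mp\bm b_i$ shift, has a genuine gap. Write $\alpha=\alpha_{i,j}^{(t)}$, $\bar\alpha=\bar\alpha_{i,j}^{(t)}$, $b=\bm b_i^{(t)}$.

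\textbf{What the lemma needs.} It asserts that every non-leading contribution is proportional to $\alpha+O(\sqrt{\mathbb{E}|\bar\alpha|^2})$, normalised by $1/\sqrt{\mathbb{E}\langle\bm{w}_i^{(t)},\tilde{\bm{\xi}}\rangle^2}$ inside $L_1,L_3$. In part (a) with $\alpha C_{\tilde{\bm{z}}}<b$ this proportionality is essentially the whole content. The factor $\bm{1}_{|\langle\bm{w}_i,\bm{z}_X^{(r)}\rangle|\ge b}$ already forces the off-$j$ preactivation to reach $b-\alpha C_{\tilde{\bm{z}}}$. So the entire expectation, main term included, is supported on $A_1$ (and its mirror event). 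What matters is that the coefficient in front of $\Pr(A_1)$ and $L_1$ is $O(\alpha)$.

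\textbf{Why your bound falls short.} A Cauchy--Schwarz bound using only independence of $\tilde{\bm{z}}_{n,j}$, such as $\sqrt{\mathbb{E}[|\bar S|^2\bm{1}_{A}]}\sqrt{\mathbb{E}[\tilde{\bm{z}}_{n,j}^2]}$, has three defects. It has no factor $\alpha$. It does not vanish as $\alpha\to0$. It carries $\sqrt{\mathbb{E}\tilde{\bm{z}}_{n,j}^2}$ where the claim has $\mathbb{E}|\tilde{\bm{z}}_{n,j}\tilde{\bm{z}}^{+}_{n,j}|$. It is larger than the claimed error by roughly $\sqrt{\mathbb{E}\langle\bm{w}_i,\tilde{\bm{\xi}}\rangle^2/\mathbb{E}\tilde{\bm{z}}_{n,j}^2}\,/\,\alpha$. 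This is $\gg1$: $\alpha\le\|\bm{w}_i^{(t)}\|_2$, while under Assumptions~\ref{Ass: Latent} and~\ref{Ass: noise} that ratio of second moments is of order $\|\bm{w}_i^{(t)}\|_2^2\sqrt{\log d}/(\epsilon_j\log\log d)$, up to $L$-dependent constants. So this route gives neither the stated inequality nor the multiplicative updates it feeds, e.g.\ in the proof of Lemma~\ref{Lemma D.1}(b). The same objection applies to dropping the shift in part (a) ``with an $O(|\alpha|)$ error''.

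\textbf{The missing idea.} What you need is a sign-flip cancellation in the $j$-th coordinate, combined with anti-concentration of the noise.
\begin{enumerate}
\item $\tilde{\bm{z}}_{n,j}$ is symmetric, shares its sign with $\tilde{\bm{z}}^{+}_{n,j}$, and is independent of the off-$j$ coordinates and the noise. So you may average each realisation with the one in which $(\tilde{\bm{z}}_{n,j},\tilde{\bm{z}}^{+}_{n,j})$ is negated.
\item Under this flip, $S^{(s)\setminus j}-b$ is unchanged while the weight $\tilde{\bm{z}}^{(r)}_{n,j}$ changes sign. The cross term becomes an expectation of $(S^{(s)\setminus j}-b)\,|\tilde{\bm{z}}^{(r)}_{n,j}|$ times the \emph{difference} of the two activation indicators.
\item Their thresholds differ by at most $2(\alpha+|\bar\alpha|)C_{\tilde{\bm{z}}}$. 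Hence the term vanishes when $\alpha=\bar\alpha=0$, and otherwise lives on a thin window inside $A_1$.
\item On that window, $|S^{(s)\setminus j}-b|\le O((\alpha+|\bar\alpha|)C_{\tilde{\bm{z}}})+O(|\bar S|)$. The first piece gives the $L_2=\Pr(A_1)$ term.
\item For the second piece, the window probability is controlled by anti-concentration of the Gaussian part $\langle\bm{w}_i,\tilde{\bm{\xi}}\rangle$ (width over standard deviation). Together with Cauchy--Schwarz against $|\bar S|^2\bm{1}_{A_1}$, this produces $L_1$ and its denominator.
\end{enumerate}
Your remark about symmetry of the noise and the other coordinates is the right instinct. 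By oddness of $\operatorname{BReLU}$, flipping the off-$j$ part is equivalent to flipping $\tilde{\bm{z}}_{n,j}$. But it must be used to create this cancellation, not only to show the off-$j$ preactivation rarely reaches scale $b$.

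\textbf{The step you flagged is not the obstacle.} The surrogate-indicator replacement is exact. With $u=\langle\bm{w}_i,\bm{z}_X^{(r)}\rangle$ and $v=\langle\bm{w}_i,\bm{z}_Y^{(s)}\rangle$, one has $\min(u,v)=\tfrac{u+v}{2}-\tfrac{|u-v|}{2}$. So the symmetric indicator is precisely ``both tokens active with the same sign'', and no positivity argument is needed. What remains is the opposite-sign co-activation term. There the $\pm b$ parts cancel by exchangeability of $\sum_r\bm{z}_X^{(r)}$ and $\sum_s\bm{z}_Y^{(s)}$. The rest forces the half-difference $\bar S^{(r,s)\setminus j}=\tfrac12(S^{(s)\setminus j}-S^{(r)\setminus j})$ above $b-\bar\alpha C_{\tilde{\bm{z}}}$ (event $A_2$), and the same window argument applies. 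This half-difference is what $\bar S$ denotes in $A_2$, $L_1$, $L_3$, not the $Y$-side off-feature projection you call $\bar S^{\setminus j}_{i,t}$.
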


\begin{lemma}[Gradient from dense signals]
\label{Lemma E.11}
Let $i \in [m]$ and $j \in [d]$. Suppose~\ref{Lemma D.1} holds for the current iteration $t$. Then
\begin{equation}
    \left| \mathbb{E} \left[  h_i(\bm{Y}_{n}) \sum_{r=1}^{L} \bm{1}_{|\langle w^{(t)}_i, \bm{\bm{z}_X}^{(r)} \rangle| \geq b^{(t)}_i} \langle \tilde{\xi}_{n}^{(r)}, \bm{M}_j \rangle \right] \right|\le \widetilde{\mathcal{O}}\!\left( \frac{\|w^{(t)}_i\|_2}{d^2} \right)\cdot \Pr\!\big(h_{i,t}(\bm{Y}_{n}) \neq 0\big).
\end{equation}

For dense features $\bm{M}_j^{\perp}$, $j \in [d_1] \setminus [d]$, we have a similar result:
\begin{equation}
    \left| \mathbb{E} \left[  h_i(\bm{Y}_{n}) \sum_{r=1}^{L} \bm{1}_{|\langle w^{(t)}_i, \bm{\bm{z}_X}^{(r)} \rangle| \geq b^{(t)}_i} \langle \tilde{\xi}_{n}^{(r)}, \bm{M}_j^{\perp} \rangle \right] \right|\le \widetilde{O}\!\left( \frac{ \| \bm{w}_i^{(t)} \|_2 }{ d \sqrt{d_1} } \right)\cdot \Pr\!\big(h_{i,t}(\bm{Y}_{n}) \neq 0\big).
\end{equation}
\end{lemma}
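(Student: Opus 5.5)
The bound on $|\mathbb{E}[h_i(\bm{Y}_n)\sum_r \bm{1}_{|\langle \bm{w}_i,\bm{z_X}^{(r)}\rangle|\ge b_i}\langle\tilde\xi_n^{(r)},\bm{M}_j\rangle]|$ comes from decomposing the noise along $\bm{M}_j$. Since $\tilde\xi_n^{(r)}$ is isotropic Gaussian, we split it as
\[
\tilde\xi_n^{(r)} = g_r\,\bm{u}_i + \zeta_r,\qquad \bm{u}_i=\bm{w}_i^{(t)}/\|\bm{w}_i^{(t)}\|_2 ,
\]
where $\zeta_r$ is independent of $g_r$ and orthogonal to $\bm{u}_i$. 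This gives $\langle\tilde\xi_n^{(r)},\bm{M}_j\rangle = g_r\langle \bm{u}_i,\bm{M}_j\rangle + \langle\zeta_r,\bm{M}_j\rangle$.

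The pre-activations of neuron $i$ on $\bm{X}_n$ depend on the noise only through $g_r$. The output $h_i(\bm{Y}_n)$ depends on independent noise $\tilde\xi^{+}$ and on the latent $\tilde{\bm{z}}^{+}$, which is correlated with $\tilde{\bm{z}}$ only through supports and signs. Hence $\langle\zeta_r,\bm{M}_j\rangle$ is mean-zero and independent of everything else in the expectation, and its contribution vanishes exactly. What remains is
\[
\langle \bm{u}_i,\bm{M}_j\rangle\,\mathbb{E}\Big[h_i(\bm{Y}_n)\sum_r \bm{1}_{|\langle \bm{w}_i,\bm{z_X}^{(r)}\rangle|\ge b_i}\, g_r\Big].
\]

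To bound this term, Lemma~\ref{Lemma D.1}(b) and (c) give $|\langle \bm{u}_i,\bm{M}_j\rangle|\le\widetilde O(1/\sqrt d)$ for every $j$ outside a small set. For the few coordinates where the projection is of order $b_i$, we use the bias bound $b_i\ge \mathrm{polylog}(d)\|\bm{w}_i\|/\sqrt d$ together with $\|\bm{w}_i\|^2\le d\,b_i^2/\log d$ (Lemma~\ref{Lemma D.1}(e)). The factor $\mathbb{E}[h_i(\bm{Y}_n)\bm{1}\,g_r]$ is controlled by Cauchy--Schwarz on the event $\{h_i(\bm{Y}_n)\ne0\}$. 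The magnitude of $h_i(\bm{Y}_n)$ on that event is at most $\widetilde O(\|\bm{w}_i\|/\sqrt d)$ by the pre-activation tail bounds of Lemma~\ref{Lemma E.7}(b),(c). The Gaussian $g_r$ has scale $\sigma_\xi/L$. Since $g_r$ is symmetric, we can further subtract the symmetrized expectation (replace $g_r$ by $-g_r$). The indicator then changes only when the noise term flips the threshold crossing, and Lemma~\ref{Lemma E.8} bounds the probability of that as $e^{-\Omega(\log^{1/4}d)}$, which gives an additional $1/\mathrm{polylog}(d)$ factor. Collecting the factors $1/\sqrt d$ (projection), $\|\bm{w}_i\|/\sqrt d$ (activation size), $\sigma_\xi$, and $\Pr(h_{i,t}(\bm{Y}_n)\ne0)$ yields $\widetilde O(\|\bm{w}_i\|/d^2)\Pr(h\ne0)$. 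The $\mathrm{polylog}$ and $\sigma_\xi=\Theta(\sqrt{\log d/d})^{1/2}$ factors are absorbed into $\widetilde O$ and into the bias gap.

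For $\bm{M}_j^\perp$ the argument is identical, except that Lemma~\ref{Lemma D.1}(d) gives $|\langle \bm{u}_i,\bm{M}_j^\perp\rangle|\le\widetilde O(1/\sqrt{d_1})$ in place of $1/\sqrt d$. This produces $\widetilde O(\|\bm{w}_i\|/(d\sqrt{d_1}))$.

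The main obstacle is getting the full $1/d^2$ rather than $1/d^{3/2}$. Doing so requires coupling the $X$-side noise $g_r$ with the activation indicator through the symmetrization step, and showing via the anti-concentration and threshold arguments of Lemmas~\ref{Lemma E.8}--\ref{Lemma E.9} that the correlation between $g_r$ and the indicator costs an extra $\widetilde O(1/\sqrt d)$. I would first attempt that bound by writing the indicator difference as a boundary event of width $|g_r|\,|\langle\bm{w}_i,\bm{u}_i\rangle|$ near $b_i$.
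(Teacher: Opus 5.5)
Your reduction is sound, and it differs genuinely from the paper's. Splitting $\tilde{\xi}_n^{(r)}$ along $\bm{u}_i=\bm{w}_i/\|\bm{w}_i\|_2$ makes the orthogonal part vanish exactly and leaves $\langle\bm{u}_i,\bm{M}_j\rangle\sum_r\mathbb{E}\big[h_i(\bm{Y}_n)\bm{1}_{|\langle\bm{w}_i,\bm{z_X}^{(r)}\rangle|\ge b_i}\,g_r\big]$. The paper instead splits along $\bm{M}_j$, symmetrizes in $\langle\tilde{\xi}_n^{(r)},\bm{M}_j\rangle$, and needs anti-concentration of $(\bm{I}-\bm{M}_j\bm{M}_j^\top)\tilde{\xi}_n^{(r)}$. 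The gap is that you never bound the remaining factor at the stated rate, and the estimates you feed into it are wrong.

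(i) On $\{h_i(\bm{Y}_n)\neq 0\}$ the activation is not $\widetilde{O}(\|\bm{w}_i\|_2/\sqrt{d})$. By Lemma~\ref{Lemma E.1} the neuron fires essentially only when some $k\in\mathcal{N}_i$ is active. Then $h_i(\bm{Y}_n)$ is typically of order $|\langle\bm{w}_i,\bm{M}_k\rangle|$, which is $\Omega(\|\bm{w}_i\|_2)$ for lucky neurons (Theorem~\ref{Theorem D.1}(a), Theorem~\ref{Theorem E.1}(a); this is the paper's $\clubsuit$ term). So no tail bound of the type in Lemma~\ref{Lemma E.7}(b),(c) applies on that event.

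(ii) Lemma~\ref{Lemma E.8} bounds the unconditional probability of a large $X$-side pre-activation. Since $\bm{X}_n$ and $\bm{Y}_n$ share supports and signs (Assumption~\ref{Ass: pn}), that event is driven by the same active features as $\{h_i(\bm{Y}_n)\neq 0\}$. Its smallness is therefore already contained in $\Pr(h_{i,t}(\bm{Y}_n)\neq 0)$ and cannot be multiplied on top of it. Moreover, the event your symmetrization produces is a layer around $\pm b_i$, not the firing event, and $e^{-\Omega(\log^{1/4}d)}$ is in any case sub-polynomial.

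(iii) For $j\in\mathcal{N}_i$ the prefactor $\langle\bm{u}_i,\bm{M}_j\rangle$ can be $\Theta(1)$ (e.g.\ for lucky neurons), and lower bounds on $b_i$ do not shrink it.

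(iv) $\sigma_\xi=\Theta((\log d)^{1/4}/\sqrt{d})$, so its $d^{-1/2}$ cannot be absorbed into $\widetilde{O}$. The factors you list multiply to $d^{-3/2}$, not $d^{-2}$.

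With the correct magnitudes your chain yields only $\widetilde{O}(\|\bm{w}_i\|_2/d)\Pr(h_{i,t}(\bm{Y}_n)\neq 0)$ for $j\notin\mathcal{N}_i$, and $\widetilde{O}(\|\bm{w}_i\|_2/\sqrt{d})\Pr(h_{i,t}(\bm{Y}_n)\neq 0)$ for $j\in\mathcal{N}_i$.

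Your planned fix cannot close this. The layer you describe has width $|g_r|\,\langle\bm{w}_i,\bm{u}_i\rangle=|g_r|\,\|\bm{w}_i\|_2$, which is the full noise scale of the pre-activation, so it is not thin. The thinness the paper exploits is a layer of width proportional to $|\langle\bm{w}_i,\bm{M}_j\rangle|\,|\langle\tilde{\xi}_n^{(r)},\bm{M}_j\rangle|$ against noise of standard deviation proportional to $\|\bm{w}_i\|_2\sigma_\xi$. That ratio is exactly what you have already spent in the prefactor $\langle\bm{u}_i,\bm{M}_j\rangle$. Indeed, Gaussian integration by parts in $g_r$ turns your identity into
\[
\mathbb{E}\Big[h_i(\bm{Y}_n)\sum_{r=1}^{L}\bm{1}_{|\langle\bm{w}_i,\bm{z_X}^{(r)}\rangle|\ge b_i}\langle\tilde{\xi}_n^{(r)},\bm{M}_j\rangle\Big]=\frac{\langle\bm{w}_i,\bm{M}_j\rangle}{L}\sum_{r=1}^{L}\mathrm{Var}(g_r)\,\mathbb{E}\big[h_i(\bm{Y}_n)\big(p_r(b_i)-p_r(-b_i)\big)\big],
\]
where $p_r$ is the density of $\langle\bm{w}_i,\bm{z_X}^{(r)}\rangle$ conditional on everything except $g_r$, and $p_r\le L/(\sqrt{2\pi\,\mathrm{Var}(g_r)}\,\|\bm{w}_i\|_2)$.

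Any decay beyond the bounds above must therefore come from a separate fact. You would need to show that, on $\{h_i(\bm{Y}_n)\neq 0\}$, the $X$-side pre-activation rarely lies within a few noise standard deviations ($\widetilde{O}(\|\bm{w}_i\|_2/\sqrt{d})$) of $\pm b_i$. This is an anti-concentration property of the latent $\tilde{\bm{z}}$ at the threshold. None of Lemmas~\ref{Lemma E.7}--\ref{Lemma E.9} supplies it, and your proposal does not address it.

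The paper gets its rate at the corresponding point in two steps. It bounds the slab probability over $(\bm{I}-\bm{M}_j\bm{M}_j^\top)\tilde{\xi}_n^{(r)}$ by the squared width over $\mathbb{E}\langle\bm{w}_i,\tilde{\xi}_n^{(r)}\rangle^2$. It then uses the $\clubsuit+\spadesuit+\heartsuit$ split of $\langle\bm{w}_i,\bm{z_Y}^{(s)}\rangle-b_i$. If you import that step, note that a Gaussian slab lying in the bulk has probability linear in width over standard deviation. The squared form is therefore itself only justified when this same boundary-layer event is rare.
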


\subsection{Proof of Theorem E.1}

\begin{proof}[Proof of Theorem~\ref{Theorem E.1}:]

First we need to prove all the Theorem~\ref{Theorem E.1} hold for $t = T_2$. Indeed, (1), (4), (5), (6), (7) is valid at $T_2$ from Lemma~\ref{Lemma E.9}. and Theorem~\ref{Theorem D.1}; (2) and (3) holds at $T_2$ obviously. 

Now suppose it hold for some $t \geq T_2$, we will prove that it still hold for $t+1$.
We first deal with the case where $j \in [d]$ and $i \notin \mathcal{M}_j$, where it holds that
\begin{equation}
\begin{aligned}
    \langle \bm{w}_i^{(t+1)}, \bm{M}_j \rangle &= \langle \bm{w}_i^{(t)}, \bm{M}_j \rangle (1 - \eta \lambda) + \eta \mathbb{E}[h_{i,t}(\bm{Y}_{n}) \sum_{r=1}^{L} \bm{1}_{|\langle \bm{w}_i, \bm{\bm{z}_X}^{(r)} \rangle| \geq \bm{b}_i} \langle \bm{\bm{z}_X}^{(r)}, \bm{M}_j \rangle]\\
    &-  \eta\mathbb{E} \left[ \sum_{\bm{X}_{n,s} \in \mathfrak{N}} \ell_{s,t}' \cdot h_{i,t}(\bm{X}_{n,s}) \sum_{r=1}^{L} \bm{1}_{|\langle \bm{w}_i, \bm{\bm{z}_X}^{(r)} \rangle| \geq \bm{b}_i} \langle \bm{\bm{z}_X}^{(r)}, \bm{M}_j \rangle \right] \pm \frac{\eta}{\operatorname{poly}(d_1)}.
\end{aligned}
\end{equation}

In this case, to calculate the expectation, we need to use Lemma~\ref{Lemma E.10}, Lemma~\ref{Lemma E.4}.  
First we compute the probability of events $A_1 - A_4$ by using Lemma~\ref{Lemma E.7}, Lemma~\ref{Lemma E.8}, Lemma~\ref{Lemma E.9} and our Theorem~\ref{Theorem E.1} to obtain
\begin{equation}
    \Pr(A_1), \Pr(A_2) \leq \frac{1}{\operatorname{poly}(d)^{\Omega(\log d)}},
\end{equation}

which implies
\begin{equation}
    L_1, L_2 \leq \frac{1}{\operatorname{poly}(d)^{\Omega(\log d)}}.
\end{equation}
Furthermore, from Fact~\ref{Lemma E.1}, we also have
\begin{equation}
\begin{aligned}
    \mathbb{E} \left[\sum_{s=1}^{L} \tilde{\bm{z}}_{n,j}^{+(s)} \sum_{r=1}^{L} \tilde{\bm{z}}_{n,j}^{(r)} \bm{1}_{|\langle \bm{w}_i, \frac{\bm{z}_{X}^{(r)} + \bm{z}_{Y}^{(s)}}{2} \rangle| \geq \bm{b}_i + |\langle \bm{w}_i, \bm{\bm{z}_X}^{(r)} - \frac{\bm{z}_{X}^{(r)} + \bm{z}_{Y}^{(s)}}{2} \rangle|} \right] \leq \epsilon_{j} \frac{1}{\operatorname{poly}(d)^{\Omega(\log d)}}.
\end{aligned}
\end{equation}

Now we further take into considerations Lemma~\ref{Lemma E.11}, Lemma~\ref{Lemma E.4}. We can obtain
\begin{equation}
\begin{aligned}
    |\langle \bm{w}_i^{(t+1)}, \bm{M}_j \rangle| \leq \langle \bm{w}_i^{(t)}, \bm{M}_j \rangle (1 - \eta \lambda)+ \tilde{\mathcal{O}} \left( \frac{\eta\Xi_2^2 \|\bm{w}_i^{(t)}\|_2}{d^2} \right) \pm \frac{\eta}{\operatorname{poly}(d_1)}.
\end{aligned}
\end{equation}

Indeed, since we have chosen learning rate $\eta = \frac{1}{\operatorname{poly}(d)}$ and $\lambda \in \left[\frac{1}{d^{1.01}}, \frac{1}{d^{1.49}}\right]$, it is easy to prove (5) as follows:

$\bullet$ For $i \notin \mathcal{M}_j$, $|\langle \bm{w}_i^{(t)}, \bm{M}_j \rangle| \leq O\left( \frac{\epsilon_{j}}{\epsilon_{\max}} \frac{\|\bm{w}_i^{(t)}\|_2}{\sqrt{d }\Xi_2^5}\right)$:  
This is easy since by using Lemma~\ref{Lemma E.10}, Lemma~\ref{Lemma E.4}, we can prove the following inequality by contradiction
\begin{equation}
\begin{aligned}
    |\langle \bm{w}_i^{(t)}, \bm{M}_j \rangle| 
    \leq& |\langle \bm{w}_i^{(t-1)}, \bm{M}_j \rangle| (1 + \epsilon_{j} \frac{\eta}{d^2} - \eta \lambda) + \tilde{\mathcal{O}}\left( \frac{\eta \Xi_2^2}{d^2} \right) \|\bm{w}_i^{(t)}\|_2\\
    &\leq \cdots
    \leq O\left( \frac{\epsilon_{j}}{\epsilon_{\max}}\frac{\|\bm{w}_i^{(t)}\|_2}{\sqrt{d} \Xi_2^5} \right).
\end{aligned}
\end{equation}

Now we begin to prove (6). For all $i \in [m]$, we have $\max_{j \in [d_1] \setminus [d]} |\langle \bm{w}_i^{(t)}, \bm{M}_j^{\perp} \rangle| \leq O\left( \frac{\|\bm{w}_i^{(t)}\|_2}{\sqrt{d_1} \Xi_2^5} \right)$ at iteration $t = T_2$.  
Now, by expanding the gradient updates of $\langle \bm{w}_i^{(t)}, \bm{M}_j^{\perp} \rangle$, we can see that
\begin{equation}
\begin{aligned}
    |\langle \bm{w}_i^{(t+1)}, \bm{M}_j^{\perp} \rangle| &\leq |\langle \bm{w}_i^{(t)}, \bm{M}_j^{\perp} \rangle| (1 - \eta \lambda) + |\Psi_{i,j}^{(t)}| + |\Phi_{i,j}^{(t)}|  + |\mathcal{E}_{i,j}^{(t)}| \\
    &\leq |\langle \bm{w}_i^{(t)}, \bm{M}_j^{\perp} \rangle| (1 - \eta \lambda) + \tilde{\mathcal{O}}\left( \frac{\Xi_2^5}{\tau \sqrt{d_1} d^2} \right) \|\bm{w}_i^{(t)}\|_2.
\end{aligned}
\end{equation}

where the last inequality are obtained as follows: From Lemma~\ref{Lemma E.4} we have
\begin{equation}
\begin{aligned}
    |\mathcal{E}_{i,j}^{(t)}| &\leq O \left( \frac{\|\bm{w}_i^{(t)}\|_2 \Xi_2^2}{d^2 \tau} \right) \cdot \max_{i' \in [m]} \left( |\langle w_{i'}^{(t)}, \bm{M}_j^{\perp} \rangle|\right) \\
    &\leq O \left( \frac{\|\bm{w}_i^{(t)}\|_2 \Xi_2^2}{d^2 \tau} \right) \cdot \tilde{\mathcal{O}}\left( \frac{1}{\sqrt{d_1}} \right)  \quad (since \max_{i' \in [m]} |\langle w_{i'}^{(t)}, \bm{M}_j^{\perp} \rangle| \leq \tilde{\mathcal{O}}\left( \frac{1}{\sqrt{d_1} \Xi_2^5} \right) \\
    &\leq \tilde{\mathcal{O}}\left( \frac{\Xi_2^5}{\tau \sqrt{d_1} d^2} \right) \|\bm{w}_i^{(t)}\|_2.
\end{aligned}
\end{equation}

After (5) and (6) are proven, it is easy to observe (1) is true at t. Below we shall prove (2), (3) and (4), after which (7) can be also trivially proven. 

Indeed, (2) is a corollary of (3) and (4), since if $\mathfrak{F}_j^{(t)} \leq O(\tau \log^3 d)$ and (4) holds, we simply have
\begin{equation}
\begin{aligned}
    \|\bm{w}_i^{(t)}\|_2^2 &= \sum_{j \in \mathcal{N}_i} \langle \bm{w}_i^{(t)}, \bm{M}_j \rangle^2 + \sum_{j \notin \mathcal{N}_i, j \in [d]} \langle \bm{w}_i^{(t)}, \bm{M}_j \rangle^2 + \sum_{j \in [d_1] \setminus [d]} \langle \bm{w}_i^{(t)}, \bm{M}_j^{\perp} \rangle^2\\
    &\leq \sum_{j \in \mathcal{N}_i} \langle \bm{w}_i^{(t)}, \bm{M}_j \rangle^2 + O(d) \cdot O\left( (\frac{\epsilon_{j}}{\epsilon_{\max}})^2 \frac{\|\bm{w}_i^{(t)}\|_2^2}{d \Xi_2^{10}} \right) + O(d_1) \cdot O\left( \frac{\|\bm{w}_i^{(t)}\|_2^2}{d_1 \Xi_2^{10}} \right)\\
    &\leq \sum_{j \in \mathcal{N}_i} \langle \bm{w}_i^{(t)}, \bm{M}_j \rangle^2 + o\left( (\frac{\epsilon_{j}}{\epsilon_{\max}})^2 \frac{1}{\Xi_2^{10}} \|\bm{w}_i^{(t)}\|_2^2 \right),
\end{aligned}
\end{equation}

which implies (2). 
\begin{equation}
\begin{aligned}
    \|\bm{w}_i^{(t)}\|_2^2 &\leq \sum_{j \in \mathcal{N}_i} \langle \bm{w}_i^{(t)}, \bm{M}_j \rangle^2 + o\left( (\frac{\epsilon_{j}}{\epsilon_{\max}})^2 \frac{1}{\Xi_2^{10}} \|\bm{w}_i^{(t)}\|_2^2 \right) \\
     &\leq \sum_{j \in \mathcal{N}_i} \langle \bm{w}_i^{(t)}, \bm{M}_j \rangle^2\\
     &\leq O(1) O(\frac{\operatorname{polylod}(d)}{d^c})\\
      &\leq O(1).
\end{aligned}
\end{equation}

Thus we only need to prove (3) and (4). Indeed, for (3), letting $i \in \mathcal{M}_j$, we 
proceed as follows: we first write the updates of $\langle \bm{w}_i^{(t)}, \bm{M}_j \rangle$ as
\begin{equation}
\begin{aligned}
    \langle \bm{w}_i^{(t+1)}, \bm{M}_j \rangle &= \langle \bm{w}_i^{(t)}, \bm{M}_j \rangle (1 - \eta \lambda) + \Psi_{i,j}^{(t)} + \Phi_{i,j}^{(t)} + \mathcal{E}_{i,j}^{(t)}\\
    &= \langle \bm{w}_i^{(t)}, \bm{M}_j \rangle (1 - \eta \lambda) + \Psi_{i,j}^{(t)} + \widetilde{\mathcal{O}}\left( \frac{\Xi_2^2}{d^2} \right) \| \bm{w}_i^{(t)} \|_2.
\end{aligned}
\end{equation}

where the last inequality comes again from Lemma~\ref{Lemma E.4}. Now suppose for some t we have $\mathfrak{F}_j^{(t)} \geq \Omega( (\frac{\epsilon_{j}}{\epsilon_{\max}})^2 \tau \log^3 d )$, by Lemma~\ref{Lemma E.3}, we have
\begin{equation}
\begin{aligned}
    \langle \bm{w}_i^{(t+1)}, \bm{M}_j \rangle =& \langle \bm{w}_i^{(t)}, \bm{M}_j \rangle \left( 1 + \epsilon_{j} \frac{1}{\operatorname{poly}(d)} - \eta \lambda \right) + \widetilde{\mathcal{O}}\left( \frac{\Xi_2^2}{d^2} \right) \| \bm{w}_i^{(t)} \|_2 \\
    \leq& \langle \bm{w}_i^{(t)}, \bm{M}_j \rangle \left( 1 + \epsilon_{j} \frac{1}{\operatorname{poly}(d)} - \frac{\eta \lambda}{2} \right).
\end{aligned}
\end{equation}
which means that $\langle \bm{w}_i^{(t+1)}, \bm{M}_j \rangle \leq \langle \bm{w}_i^{(t)}, \bm{M}_j \rangle$. This in fact gives 
$\mathfrak{F}_j^{(t+1)} \leq \mathfrak{F}_j^{(t)}$, so that (3) is proven.

Now for (4), we need to induct as follows: for $t \leq T_j' := \frac{d \log d}{\eta \log \log d}$  which is the specific iteration when $\mathfrak{F}_j^{(t)} \geq G_2 \tau \log d$, where $G_2$ is defined in Lemma~\ref{Lemma E.3}. The induction of (4) follows from similar proof in Theorem~\ref{Theorem D.1}. After $T_j'$, we discuss as follows

$\bullet$ When $t \in \left[ T_j', \frac{d^{1.49}}{\eta} \right]$, from above calculations, for each $i' \in \mathcal{M}_j$, we have
\begin{equation}
\begin{aligned}
    \frac{|\langle \bm{w}_i^{(t+1)}, \bm{M}_j \rangle|}{|\langle \bm{w}_{i'}^{(t+1)}, \bm{M}_j \rangle|} = \frac{|\langle \bm{w}_i^{(t)}, \bm{M}_j \rangle| (1 - \eta \lambda) + \eta \Psi_{i,j}^{(t)} \pm \mathcal{O}\left( \frac{\sqrt{\Xi_2}}{t \sqrt{d}} \right) \| \bm{w}_i^{(t)} \|_2}{|\langle \bm{w}_{i'}^{(t)}, \bm{M}_j \rangle| (1 - \eta \lambda) + \eta \Psi_{i',j}^{(t)} \pm \mathcal{O}\left( \frac{\sqrt{\Xi_2}}{t \sqrt{d}} \right) \| \bm{w}_{i'}^{(t)} \|_2}.
\end{aligned}
\end{equation}

On one hand, for those $i' \in \mathcal{M}_j \text{ such that } 
|\langle \bm{w}_{i'}^{(t)}, \bm{M}_j \rangle| \leq \bm{b}_i^{(t)} \Xi_2^2 \leq \mathcal{O}\left( \frac{\Xi_2^2}{\sqrt{d}} \| \bm{w}_i^{(t)} \|_2 \right)$, we can safely get $\left| \langle \bm{w}_i^{(t+1)}, \bm{M}_j \rangle \right| \gg \left| \langle \bm{w}_{i'}^{(t+1)}, \bm{M}_j \rangle \right|$. On the other hand, if $\left| \langle \bm{w}_{i'}^{(t)}, \bm{M}_j \rangle \right| \geq \bm{b}_i^{(t)} \Xi_2^2$, then we have
\begin{equation}
\begin{aligned}
    \left| \frac{ \Psi_{i,j}^{(t)} }{ \langle \bm{w}_i^{(t)}, \bm{M}_j \rangle } - \frac{ \Psi_{i',j}^{(t)} }{ \langle \bm{w}_{i'}^{(t)}, \bm{M}_j \rangle } \right| = \frac{O(\frac{\bm{b}_i^{(t)}}{d^2})}{\langle \bm{w}_{i'}^{(t)}, \bm{M}_j \rangle} \leq O(\frac{1}{d^2 \Xi^2_2}) \leq O \left( \frac{ \Xi_2 }{ t \sqrt{d} \eta } \bm{b}_i^{(t)} \right).
\end{aligned}
\end{equation}

Thus by letting$ \widetilde{\Psi}_j := \frac{ \Psi_{i,j}^{(t)} }{ \langle \bm{w}_i^{(t)}, \bm{M}_j \rangle }$, then
\begin{equation}
\begin{aligned}
    \frac{ \left| \langle \bm{w}_i^{(t+1)}, \bm{M}_j \rangle \right| }{ \left| \langle \bm{w}_{i'}^{(t+1)}, \bm{M}_j \rangle \right| } = \frac{ \left| \langle \bm{w}_i^{(t)}, \bm{M}_j \rangle \right| (1 + \eta \widetilde{\Psi}_j^{(t)} - \eta \lambda) \pm O\left( \frac{ \Xi_2 }{ t \sqrt{d} } \right) \| \bm{w}_i^{(t)} \|_2 }{ \left| \langle \bm{w}_{i'}^{(t)}, \bm{M}_j \rangle \right| (1 + \eta \widetilde{\Psi}_j^{(t)} - \eta \lambda) \pm O\left( \frac{ \Xi_2 }{ t \sqrt{d} } \right) \| \bm{w}_{i'}^{(t)} \|_2 }.
\end{aligned}
\end{equation}

Since at iteration $t \in \left[ T_j', \frac{ d^{1.49} }{ \eta } \right], \text{ it is easy to obtain that } \left| \widetilde{\Psi}_j^{(t)} - \lambda \right| \leq O\left( \frac{ \Xi_2 }{ \eta t } \right)$.

Thus we have
\begin{equation}
\begin{aligned}
    &\frac{ \left| \langle \bm{w}_i^{(t+1)}, \bm{M}_j \rangle \right| }{ \left| \langle \bm{w}_{i'}^{(t+1)}, \bm{M}_j \rangle \right|}\\
    \geq& \frac{ \left| \langle \bm{w}_i^{(t)}, \bm{M}_j \rangle \right| (1 + \eta (\widetilde{\Psi}_j^{(t)} - \lambda)(1 - \frac{ \Xi_2^2 }{ \sqrt{d} })) }{ \left| \langle \bm{w}_{i'}^{(t)}, \bm{M}_j \rangle \right| (1 + \eta (\widetilde{\Psi}_j^{(t)} - \lambda)(1 + \frac{ \Xi_2 }{ \sqrt{d} })) }\\
    \geq& \left( 1 + \eta (\widetilde{\Psi}_j^{(t)} - \lambda)(1 - \frac{ \Xi_2^2 }{ \sqrt{d} }) - \eta (\widetilde{\Psi}_j^{(t)} - \lambda)(1 + \frac{ \Xi_2 }{ \sqrt{d} })\right) \cdot \frac{ \left| \langle \bm{w}_i^{(t)}, \bm{M}_j \rangle \right| }{ \left| \langle \bm{w}_{i'}^{(t)}, \bm{M}_j \rangle \right| } \\
    \geq& \left( 1 - \eta (\widetilde{\Psi}_j^{(t)} - \lambda)(\frac{ \Xi_2^2 }{ \sqrt{d} })\right) \cdot \frac{ \left| \langle \bm{w}_i^{(t)}, \bm{M}_j \rangle \right| }{ \left| \langle \bm{w}_{i'}^{(t)}, \bm{M}_j \rangle \right| } \\
    \geq& \left( 1 - \frac{ \Xi^2_2 }{ t \sqrt{d} }\right) \cdot \frac{ \left| \langle \bm{w}_i^{(t)}, \bm{M}_j \rangle \right| }{ \left| \langle \bm{w}_{i'}^{(t)}, \bm{M}_j \rangle \right| } \\
    \geq& \prod_{ t' = T_j' }^{t - 1} \left( 1 - O\left( \frac{ \Xi_2^2 }{ t' \sqrt{d} } \right) \right)\cdot\frac{ \left| \langle \bm{w}_i^{(T_j')}, \bm{M}_j \rangle \right| }{ \left| \langle \bm{w}_{i'}^{(T_j')}, \bm{M}_j \rangle \right| }\geq \Omega(1).
\end{aligned}
\end{equation}

where in the last inequality we have used our Theorem~\ref{Theorem E.1} at $T_j'$

$\bullet \quad \text{The proof for iterations } t \in \left[ \frac{d^{1.49}}{\eta}, \frac{d^{1.99}}{\eta} \right]$ is largely similar to the above. The only difference here is that we rely on a slightly different comparison here: Indeed, we have
\begin{equation}
\begin{aligned}
    \frac{ \left| \langle \bm{w}_i^{(t+1)}, \bm{M}_j \rangle \right| }{ \left| \langle \bm{w}_{i'}^{(t+1)}, \bm{M}_j \rangle \right| }=\frac{ \left| \langle \bm{w}_i^{(t)}, \bm{M}_j \rangle \right| (1 + \eta \widetilde{\Psi}_j^{(t)} - \eta \lambda) \pm O\left( \frac{\Xi_2}{d^2} \right) \| \bm{w}_i^{(t)} \|_2 }{ \left| \langle \bm{w}_{i'}^{(t)}, \bm{M}_j \rangle \right| (1 + \eta \widetilde{\Psi}_j^{(t)} - \eta \lambda) \pm O\left( \frac{\Xi_2}{d^2} \right) \| \bm{w}_{i'}^{(t)} \|_2 }.
\end{aligned}
\end{equation}

Here we can use similar techniques as above to require $\left| \widetilde{\Psi}_j^{(t)} - \lambda \right| \leq O \left( \frac{ \Xi_2 }{ t \eta } \right) $ Now we also have
\begin{equation}
\begin{aligned}
    \frac{ \left| \langle \bm{w}_i^{(t+1)}, \bm{M}_j \rangle \right| }{ \left| \langle \bm{w}_{i'}^{(t+1)}, \bm{M}_j \rangle \right| }
    \geq& \frac{ \left| \langle \bm{w}_i^{(t)}, \bm{M}_j \rangle \right| (1 + \eta ( \widetilde{\Psi}_j^{(t)} - \lambda)(1 - \frac{ \Xi_2^2 }{ \sqrt{d} }))}{ \left| \langle \bm{w}_{i'}^{(t)}, \bm{M}_j \rangle \right| (1 + \eta (\widetilde{\Psi}_j^{(t)} - \lambda)(1 + \frac{ \Xi_2^2 }{ \sqrt{d}}))} \\ \geq& \left( 1 - \frac{ \Xi_2^2 }{ t \sqrt{d} } \right) \cdot \frac{ \left| \langle \bm{w}_i^{(t)}, \bm{M}_j \rangle \right| }{\left| \langle \bm{w}_{i'}^{(t)}, \bm{M}_j \rangle \right| } \\
    \geq& \prod_{ t' = d^{1.49} / \eta }^{ t - 1 }\left( 1 - \frac{ \Xi_2^2 }{ t' d^{0.01} } \right)\cdot \frac{ \left| \langle \bm{w}_i^{(d^{1.49} / \eta)}, \bm{M}_j \rangle \right| }{ \left| \langle \bm{w}_{i'}^{(d^{1.49} / \eta)}, \bm{M}_j \rangle \right|}\geq \Omega(1).
\end{aligned}
\end{equation}
Now (4) are proven. (7) is an immediate result of our update scheme.
\end{proof}

\subsection{Proof of Theorem 3.1}

The first part proves the convergence of the loss function. The second part is a further extension of Theorem E.1.

\begin{proof}[Proof of Theorem~\ref{Theorem 3}]

We start with the proof of convergence~(\eqref{eq:Theorem 3 (1)} in Theorem~\ref{Theorem 3}).

Denote $w^{(t)} = (w_1^{(t)}, \ldots, w_m^{(t)})$, since our update is

\begin{equation}
    w^{(t+1)} = w^{(t)} - \nabla_w L_{\mathrm{aug}}(f_t) + \tfrac{1}{\operatorname{poly}(d_1)},
\end{equation}

we have
\begin{equation}
\begin{aligned}
    &\eta \langle \nabla_w L_{\mathrm{aug}}(f_t), w^{(t)} - \theta^\star \rangle \\
    =& \tfrac{\eta^2}{2} \| \nabla_w L_{\mathrm{aug}}(f_t) \|_F^2 + \tfrac{1}{2} \| w^{(t)} - \theta^\star \|_F^2 - \tfrac{1}{2} \| w^{(t+1)} - \theta^\star \|_F^2 + \tfrac{\eta^2}{\operatorname{poly}(d_1)} \\
    \leq& \eta^2 \,\operatorname{poly}(d) + \tfrac{1}{2} \| w^{(t)} - \theta^\star \|_F^2 - \tfrac{1}{2} \| w^{(t+1)} - \theta^\star \|_F^2 + \tfrac{\eta^2}{\operatorname{poly}(d_1)},
\end{aligned}
\end{equation}

where the inequality comes from
\begin{equation}
\left\| \nabla_w L_{\mathrm{aug}}(f_t) \right\|_F^2 
= \sum_{i=1}^m \left\| \nabla_{\bm{w}_i} L_{\mathrm{aug}}(f_t) \right\|^2.
\end{equation}
Each term is $O(1)$, and since $m = \operatorname{poly}(d)$, the overall complexity is $\operatorname{poly}(d)$.  

Now we will use the tools from online learning to obtain a loss guarantee: define a pseudo objective for parameter $\theta$
\begin{equation}
\begin{aligned}
    \widetilde{L}_{\mathrm{aug}_t} (\theta) &:= \widetilde{L}(f_{t,\theta}, f_t) + \tfrac{\lambda}{2} \sum_{i \in [m]} \| \theta_i \|_2^2 \\
    &= \mathbb{E} \left[ -\tau \log \left( \frac{e^{\langle f_{t,\theta}(\bm{X}_{n}), f_t(\bm{Y}_{n}) \rangle / \tau}}{\sum_{\bm{X} \in \mathfrak{B}} e^{\langle f_{t,\theta}(\bm{X}_{n}), f_t(\bm{X}) \rangle / \tau}}\right) \right] + \tfrac{\lambda}{2} \sum_{i \in [m]} \| \theta_i \|_2^2.
\end{aligned}
\end{equation}

Which is a convex function over $\theta$ since it is linear in $\theta$ (for a fixed $f_t$, we can consider $\widetilde{L}(f_{t, \theta}, f_t)$ to be convex with respect to $\theta$, because $f_{t,\theta}(x)$ is linear, and softmax + log is a convex composition; the regularization term is convex).

Moreover, we have
\begin{equation}
    \widetilde{L}_{\mathrm{aug}_t}(w^{(t)}) = L_{\mathrm{aug}}(f_t),
\end{equation}

and
\begin{equation}
    \nabla_{\theta_i} \widetilde{L}_{\mathrm{aug}_t}(\bm{w}_i^{(t)}) = \nabla_{\bm{w}_i} L_{\mathrm{aug}}(f_t).
\end{equation}

Thus we have
\begin{equation}
\begin{aligned}
    &\eta \langle \nabla_w L_{\mathrm{aug}}(f_t), w^{(t)} - \theta^\star \rangle \\
    =& \eta \langle \nabla_\theta \widetilde{L}_{\mathrm{aug}_t}(w^{(t)}), w^{(t)} - \theta^\star \rangle  \\
     \overset{\circled{1}}{\geq}& \widetilde{L}_{\mathrm{aug}_t}(w^{(t)}) - \widetilde{L}_{\mathrm{aug}_t}(\theta^\star)  \\
    \geq& \widetilde{L}_{\mathrm{aug}_t}(w^{(t)}) - \mathbb{E} \left[ -\tau \log \left( \frac{e^{\langle f_{t,\theta^\star}(\bm{X}_{n}), f_t(\bm{Y}_{n}) \rangle / \tau}}{\sum_{\bm{X} \in \mathfrak{B}} e^{\langle f_{t,\theta^\star}(\bm{X}_{n}), f_t(\bm{X}) \rangle / \tau}}\right) \right] - \tfrac{\lambda}{2} \sum_{i \in [m]} \| \theta_i^\star \|_2^2  \\
     \overset{\circled{2}}{\geq}& \widetilde{L}_{\mathrm{aug}_t}(w^{(t)}) - O\!\left(\tfrac{1}{\log d} \right) - \sum_{i \in [m]} O(\lambda \| \theta_i^\star \|_2^2)  \\
    \geq& L_{\mathrm{aug}}(f_t) - O\!\left(\tfrac{1}{\log d}\right).
\end{aligned}
\end{equation}

$\circled{1}$ is because the surrogate objective function \( \widetilde{L}_{\mathrm{aug}_t} \) is a convex function with respect to \( \theta \), so we can use a first-order convex lower bound: $ f(\theta) - f(\theta') \leq \langle \nabla f(\theta), \theta - \theta' \rangle$. $\circled{2}$ is because $\sum_{i \in [m]} \lambda \| \theta_i^\star \|_2^2 = \sum_{j \in [d]} \sum_{i \in \mathcal{M}_j^\star} \lambda \| \theta_i^\star \|_2^2 = \sum_{j \in [d]} \sum_{i \in \mathcal{M}_j^\star} \lambda \frac{\tau \kappa^2}{|\mathcal{M}_j^{\star}|^2} = \sum_{j \in [d]} \lambda \frac{\tau \kappa^2}{|\mathcal{M}_j^{\star}|} =\frac{ \lambda  \tau \kappa^2}{|\mathcal{M}_j^{\star}|}$

Now choosing $\kappa = \Theta(\Xi_2) \leq \tfrac{1}{\lambda d}$ (so that $\sum_{i \in [m]} \lambda \| \theta_i^\star \|_2^2 < \tfrac{1}{\log d}$), and by a telescoping summation, we have

\begin{equation}
\begin{aligned}
    \frac{1}{T} \sum_{t=T_3}^{T_3+T-1} \left( L_{\mathrm{aug}}(f_t) - O\!\left(\tfrac{1}{\log d}\right) \right) 
    &\leq \frac{1}{T} \sum_{t=T_3}^{T_3+T-1} \eta \langle \nabla_w L_{\mathrm{aug}}(f_t), w^{(t)} - \theta^\star \rangle  \\
    &\leq \frac{O(\|w^{(T_3)} - \theta^\star\|_F^2)}{T\eta}  \\
    &= \frac{O\!\left(\|w^{(T_3)}\|_F^2 + \|\theta^\star\|_F^2 - 2\, \mathrm{Tr}((w^{(T_3)})^\top \theta^\star)\right)}{T\eta}  \\
    &\leq \frac{O\!\left(\|w^{(T_3)}\|_F^2 + \|\theta^\star\|_F^2\right)}{T\eta}  \\
    &\leq \frac{O\!\left(m \|\bm{w}_i^{(T_3)}\|_2^2\right)}{T\eta}  \\
    &\leq O\!\left( \tfrac{m \Xi_2}{T\eta} \right).
\end{aligned}
\end{equation}

Since $T\eta \;\geq\; m \Xi_2^{10}$, this proves the claim.

For~\eqref{eq:Theorem 3 (2)} in Theorem~\ref{Theorem 3}, we have 

\begin{equation}
\begin{aligned}
    \bm{w}_i^{(t)} 
    &= \sum_{j \in \mathcal{N}_i,\, j \in [d]} \langle \bm{w}_i^{(t)}, \bm{M}_j \rangle \bm{M}_j 
    + \sum_{j \notin \mathcal{N}_i,\, j \in [d]} \langle \bm{w}_i^{(t)}, \bm{M}_j \rangle \bm{M}_j 
    + \sum_{j \in [d_1] \setminus [d]} \langle \bm{w}_i^{(t)}, \bm{M}_j^\perp \rangle \bm{M}_j^\perp \\[6pt]
    &\le \sum_{j \in \mathcal{N}_i,\, j \in [d]} \langle \bm{w}_i^{(t)}, \bm{M}_j \rangle \bm{M}_j 
    + \sum_{j \notin \mathcal{N}_i,\, j \in [d]} O\!\left( \frac{\epsilon_j}{\epsilon_{\max}}\tfrac{\| \bm{w}_i^{(t)} \|_2}{\sqrt{d}\,\Xi_2^5} \right) \bm{M}_j 
    + \sum_{j \in [d_1] \setminus [d]} O\!\left( \tfrac{\| \bm{w}_i^{(t)} \|_2}{\sqrt{d_1}\,\Xi_2^5} \right) \bm{M}_j^\perp \\[6pt]
    &= \sum_{j \in \mathcal{N}_i,\, j \in [d]} \alpha_{i,j} \bm{M}_j 
    + \sum_{j \notin \mathcal{N}_i,\, j \in [d]} \alpha'_{i,j} \bm{M}_j 
    + \sum_{j \in [d_1] \setminus [d]} \beta_{i,j} \bm{M}_j^\perp. 
\end{aligned}
\end{equation}

From Lemma~\ref{Lemma B.2}(c), we know that for each $j \in [d]$, there is at least one neuron that can fully learn the feature $\bm{M}_j$, and at most $\Xi_2$ neurons can learn the feature $\bm{M}_j$. Combining this with Theorem~\ref{Theorem E.1}(c):
\begin{equation}
    \sum_{i \in \mathcal{M}_j} \langle \bm{w}_i^{(t)}, \bm{M}_j \rangle^2 = \Theta\!\left( \left(\frac{\epsilon_j}{\epsilon_{\max}}\right)^2 \tau \log^3 d \right),
\end{equation}
we can conclude that the range of $\langle \bm{w}_i^{(t)}, \bm{M}_j \rangle$ is $[ \tfrac{\epsilon_j}{\epsilon_{\max}} \tfrac{\tau}{\Xi_2}, \; \tfrac{\epsilon_j}{\epsilon_{\max}} \tau ]$,  
and hence the range of $\alpha_{i,j}$ is $[ \tfrac{\epsilon_j}{\epsilon_{\max}} \frac{\tau}{\Xi_2}, \; \frac{\epsilon_j}{\epsilon_{\max}} \tau]$. Furthermore, from Theorem~\ref{Theorem E.1} (e) and (f), we can obtain that $\alpha'_{i,j} \le o( \frac{\epsilon_j}{\epsilon_{\max}}\frac{1}{\sqrt{d}})$ and $\beta_{i,j} \le o( \frac{1}{\sqrt{d_1}} )$ respectively.

Next, we compute the upper bound of $|\mathcal{N}_i|$. As a first step, we calculate the expectation of $|\mathcal{N}_i|$.
\begin{equation}
\begin{aligned}
    \mathbb{E}[|\mathcal{N}_i|] &= \frac{1}{m} \sum_{i=1}^{m} |\mathcal{N}_i| = \frac{1}{m} \sum_{j=1}^{d} |\mathcal{M}_j| \leq \frac{1}{m} \cdot d \cdot O(d^{\omega_2}) \\ 
    &= \frac{1}{m} \cdot O(d^{1 + \omega_2}) 
    = \frac{O(d^{1 + \omega_2})}{d^{C_m}} = O\!\left(d^{1 + \omega_2 - C_m}\right) \\
    &=  O\!\left(d^{1 -\left( \frac{\epsilon_{\min}}{\epsilon_{\max}} \right)^2 \cdot (1 - \gamma)}\right).
\end{aligned}
\end{equation}

Fix a neuron $i$, we have: $\mu_i := \mathbb E[|\mathcal N_i|$. By Bernstein’s inequality,
\begin{equation}\
    \Pr\!\left[\,|\mathcal N_i| \ge \mu_i+t\,\right]\;\le\; \exp\!\left(-\frac{t^2}{2(\mu_i+t/3)}\right), \qquad t\ge 0.
\end{equation}

We set $t = 3\big(\sqrt{\mu_i L} + L\big)$ and plug this into the inequality above. Then we obtain
\begin{equation}
    \Pr\!\left[\,|\mathcal{N}_i| \;\ge\; \mu_i + 3\big(\sqrt{\mu_i L} + L\big)\,\right] 
    \;\le\; e^{-L}.
\end{equation}

Hence, for any constant $c>0$, taking $L=c\log d$ yields
\begin{equation}
\label{eq:single-i}
    \Pr\!\left[\,|\mathcal N_i| \le \mu_i+3(\sqrt{\mu_i c\log d}+c\log d)\,\right] \;\ge\; 1-d^{-c}.
\end{equation}

Next, we apply the union bound. For the event
\begin{equation}
    A_i := \Big\{\,|\mathcal N_i| \le \mu_i+3(\sqrt{\mu_i L}+L)\,\Big\},
\end{equation}

the union bound gives
\begin{equation}
\Pr\!\left[\bigcap_{i=1}^m A_i\right]
\;\ge\; 1-\sum_{i=1}^m \Pr(A_i^c)
\;\ge\; 1-me^{-L}.
\end{equation}

Taking $L=c\log(md)$, we obtain
\begin{equation}
\label{eq:all-i}
    \Pr\!\left[\,\forall i\in[m],\ |\mathcal N_i|\le \mu_i+3 \big(\sqrt{\mu_i c\log(md)}+c\log(md)\big)\,\right]\;\ge\; 1-(md)^{-c}.
\end{equation}

We know $\mu_i \gg \log(md)$, so we have
\begin{equation}
\begin{aligned}
    |\mathcal N_i| &= \mu_i\Big(1\pm O\!\big(\sqrt{\tfrac{\log(md)}{\mu_i}}\big)\Big)\\
    &= \mu_i\Big(1\pm o(1) \Big)\\
    &\leq O\!\left(d^{1 -\left( \frac{\epsilon_{\min}}{\epsilon_{\max}} \right)^2 \cdot (1 - \gamma)}\right)
    \quad\text{with probability at least } 1-(md)^{-c}
\end{aligned}
\end{equation}

Finally, for each dictionary atom $\bm{M}_{j}$, there are at least $\Omega(d^{\omega_1})$ neurons $i \in [m]$ such that $\mathcal{N}_i = \{j\}$. From Lemma~\ref{Lemma B.2} (c), we recall that $|\mathcal{M}_j^{\star}| \geq \Omega(d^{\omega_1})$. Moreover, if a neuron belongs to $\mathcal{M}_j^{\star}$, then it cannot belong to $\mathcal{M}_{j'}$.

For \eqref{eq:Theorem 3 (2)} in Theorem~\ref{Theorem 3}, our proof is complete.
\end{proof}

\section{Theorem F.1}
\label{appendix:F}

From Lemma~\ref{Lemma B.2}(c), we know that for each $j \in [d]$, there is at least one neuron that can fully learn the minority feature $\bm{M}_{j^{\star}}$. When we prune out the lucky neurons that learn these minority features during the forward pass, the network will force the lucky neurons to further strengthen their feature learning ability on the minority features during the backward pass.

After magnitude pruning, neurons encoding a specific minority feature are removed. Pruning these lucky neurons reduces $\mathrm{sim}_{\bm{f_\theta}}(\bm{X}_{n}, \bm{Y}_{n})$ during the forward pass. The decrease in similarity reduces the positive logit $\ell'_{p,\bm{\theta}^{(t)}_{\mathrm{mask}}}$, which in turn increases the gradient of the loss function, thereby encouraging these lucky neurons to further enhance their learning ability on the minority features.

Fix one specific minority feature $\bm{M}_{j^\star}$, and let $\mathcal{M}^\star_{j^\star} \subseteq [m]$ denote the subset of neurons primarily aligned with it, with $|\mathcal{M}^\star_{j^\star}|=n$. For a pruning rate $\alpha \in[1/m,\,n/m]$, the number of pruned neurons is $\alpha m\le n$. Let $\mathcal{P}\subseteq \mathcal{M}^\star_{j^\star}$ be the pruned set with $|\mathcal{P}|=\alpha m$.

\subsection{Theorem F.1}

\begin{theorem}[Feature Dynamics After Pruning]
\label{Theorem F.1}
Starting from the pruning stage $T_4$ with pruning ratio $\alpha$, the following statements hold.

(a) When $i^\star \in \mathcal{M}^\star_{j^\star}$, we have
\begin{equation}
\small
\begin{aligned}
    \langle \bm{w}_{i^\star}^{(t+1)}, \bm{M}_{j^\star}\rangle \geq \left(1-\eta\lambda + \eta \epsilon_{j^\star}\,\frac{C_z\log\log d}{d} \left( \Theta\!\left(\tfrac{1}{\operatorname{polylog}(d)}\right) + \Omega\!\left(\tfrac{\alpha m \epsilon_{j^\star} \log\log d}{d \Xi^2_2}\right) \right) \right)\langle \bm{w}_{i^\star}^{(t)}, \bm{M}_{j^\star}\rangle.
\end{aligned}
\end{equation}

(b) When $i \notin \mathcal{M}^\star_{j^\star}$ and $j \neq j^\star$, we have
\begin{equation}
\small
\begin{aligned}
    \langle \bm{w}_i^{(t+1)}, \bm{M}_{j}\rangle \leq \Bigg(1-\eta\lambda &+ \eta \epsilon_{j}\,\frac{C_z\log\log d}{d} \bigg( \Theta\!\left(\tfrac{1}{\operatorname{polylog}(d)}\right)\\
    &+ \epsilon_{j^\star} \frac{\log\log d}{d} \Big( \Theta\!\left(\tfrac{1}{\operatorname{polylog}(d)}\right)+ O\!\Big(\tfrac{\alpha m \epsilon_{j^\star} \log\log d}{d \Xi^2_2}\Big)\Big)\bigg) \Bigg)\langle \bm{w}_i^{(t)}, \bm{M}_{j}\rangle.
\end{aligned}
\end{equation}

(c) For each neuron $i \in \mathcal{P}$ and $t \in [T_4, T_5]$, contrastive learning learns the following decomposition:
\begin{equation}
    \bm{w}_i^{(t)} = \alpha_{i,j^\star} \bm{M}_{j^\star} + \sum_{j \notin \mathcal{N}_i} \alpha'_{i,j} \bm{M}_j + \sum_{j \in [d_1] \setminus [d]} \beta_{i,j} \bm{M}_j^\perp,
\end{equation}
where
\begin{equation}
\small
    \alpha_{i,j^\star} \in \Bigg[\frac{\tau}{\Xi_2},\tau\;\Bigg],\quad\alpha'_{i,j} \le o\!\left( \big(1 + \frac{1}{d} \big) \frac{1}{\sqrt{d}}\right)\|\bm{w}_i^{(t)}\|_2,\quad|\beta_{i,j}| \le o\!\left(\frac{1}{\sqrt{d_1}}\right)\|\bm{w}_i^{(t)}\|_2.
\end{equation}
\end{theorem}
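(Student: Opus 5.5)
The plan is to re-run the Stage-3 gradient decomposition of Definition~\ref{Definition E.1} with the logits $\ell'_{p,\bm{\theta}_{\mathrm{mk}}^{(t)}}$ and $\ell'_{s,\bm{\theta}_{\mathrm{mk}}^{(t)}}$ of the masked network in \eqref{eqn: pruning_gradient}. Throughout, I take as given the structural invariants of Theorem~\ref{Theorem E.1} at $t=T_4$: lucky-neuron dominance, small off-feature and $\bm{M}^\perp$ components, bias lower bound, and bounded norms. By Remark~1 after Theorem~\ref{Theorem 4} and Theorem~\ref{Theorem 3}, the pruned set $\mathcal{P}$ lies in $\mathcal{M}^\star_{j^\star}$, since these neurons have the smallest magnitude $\tfrac{\epsilon_{j^\star}}{\epsilon_{\max}}\tau$.

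The first step is a \emph{logit perturbation lemma}. Consider a sample $\bm{X}_n$ whose latent support contains $j^\star$, which happens with probability $\Theta(\epsilon_{j^\star}\log\log d/d)$. Removing the $\alpha m$ neurons of $\mathcal{P}$ deletes their contribution $\sum_{i\in\mathcal{P}} h_i(\bm{X}_n)h_i(\bm{Y}_n)$ from $\mathrm{Sim}(\bm{X}_n,\bm{Y}_n)$. Using $\alpha_{i,j^\star}\ge \tau/\Xi_2$ and the activation estimate of Lemma~\ref{Lemma E.1}, this drop is $\Omega(\alpha m\,\tau/\Xi_2^2)$. Dividing by $\tau$, the positive logit therefore decreases, so $1-\ell'_p$ increases, by $\Omega\big(\tfrac{\alpha m\epsilon_{j^\star}\log\log d}{d\,\Xi_2^2}\big)$ after averaging over the event that $j^\star$ is present. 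On samples without $j^\star$, neurons in $\mathcal{P}$ are inactive with high probability (Lemma~\ref{Lemma E.1}, Lemma~\ref{Lemma E.9}), so the logits are unchanged up to $e^{-\Omega(\log^2 d)}$.

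For part (a), I expand $\langle\nabla_{\bm{w}_{i^\star}},\bm{M}_{j^\star}\rangle=\Psi+\Phi+\mathcal{E}$ and bound $\Psi$ from below as in Lemma~\ref{Lemma E.2} and Lemma~\ref{Lemma E.10}(b). The vanilla part gives the residual $\Theta(1/\mathrm{polylog}(d))$ factor, because near convergence $\ell'_p$ is already close to $1$. The additional factor $(1-\ell'_p)$ from the perturbation lemma multiplies $\mathbb{E}[h_i(\bm{Y}_n)\sum_r \tilde z_{n,j^\star}^{(r)}\mathbf{1}\{\cdot\}]$, which scales like $\epsilon_{j^\star}\tfrac{C_z\log\log d}{d}\langle\bm{w}_{i^\star},\bm{M}_{j^\star}\rangle$. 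This yields the claimed $\Omega(\cdot)$ term. The terms $\Phi$ and $\mathcal{E}$ are absorbed using Lemma~\ref{Lemma E.5} and Lemma~\ref{Lemma E.4}.

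For part (b), with $j\neq j^\star$ and $i\notin\mathcal{M}^\star_{j^\star}$, the same decomposition applies. The only change relative to vanilla training comes from samples containing both $j$ and $j^\star$. By independence of the coordinates (Assumption~\ref{Ass: Latent}), this event has an extra factor $\epsilon_{j^\star}\log\log d/d$, which produces the nested $O(\cdot)$ term. The bounds of Lemma~\ref{Lemma E.3} then cap the vanilla growth at $\Theta(1/\mathrm{polylog}(d))$.

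For part (c), I iterate (a) from $T_4$ to $T_5$. The growth continues until $\mathfrak{F}_{j^\star}$ reaches the saturation level of Lemma~\ref{Lemma E.3}, now without the $(\epsilon_{j^\star}/\epsilon_{\max})^2$ factor, because the enlarged effective gradient compensates for the lower frequency. This gives $\alpha_{i,j^\star}\in[\tau/\Xi_2,\tau]$. Iterating (b) over the same window, of length $O(d/(\eta\,\epsilon_{j^\star}\alpha))$-scale, inflates the off-feature coefficients by at most a factor $(1+1/d)$. This is what produces $\alpha'_{i,j}\le o((1+1/d)/\sqrt d)\|\bm{w}_i\|_2$, while $\beta_{i,j}$ is handled exactly as in Theorem~\ref{Theorem E.1}(f). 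The induction that preserves the Theorem~\ref{Theorem E.1} invariants throughout $[T_4,T_5]$ follows that proof verbatim.

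\textbf{Main obstacle.} I expect the main difficulty to be the perturbation lemma. It requires controlling $\ell'_p$ and the negative logits simultaneously under the mask, and ensuring that the drop in $\mathrm{Sim}$ does not also push the negatives to dominate in a way that flips the sign of $\Psi$. I would first try to bound the change in the denominator by showing that negatives rarely contain $j^\star$.
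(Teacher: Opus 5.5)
Your skeleton matches the paper's. It has a first-order perturbation of the positive logit on samples containing $j^\star$ (the paper's Lemma~\ref{Lemma F.2}, built on Lemma~\ref{Lemma F.1}), the $j^\star$-present/absent split ($D_1/D_2$), the joint-occurrence factor for (b), and compounding for (c).

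There is one bookkeeping slip in the perturbation step. On $D_1$ each pruned neuron removes $h_i(\bm{X}_n)h_i(\bm{Y}_n)\gtrsim\tau^2/\Xi_2^2$ from $\mathrm{Sim}$, not $\tau/\Xi_2^2$. Also, $\ell'_p$ moves by the softmax Jacobian $\tau^{-1}\ell'_p(1-\ell'_p)=\Theta(\tau^{-2})$ times that drop, not by the drop divided by $\tau$. The two errors cancel, so your final order is right.

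Where you depart from the paper is in evaluating the gradient with the Stage-3 machinery, and that is where (a) breaks. Lemma~\ref{Lemma E.2} requires $\mathfrak{F}^{(t)}_{j^\star}\le(\epsilon_{j^\star}/\epsilon_{\max})^2G_1\tau\log d$. But among the Theorem~\ref{Theorem E.1} invariants you take as given is (c), which gives $\mathfrak{F}^{(T_4)}_{j^\star}=\Theta((\epsilon_{j^\star}/\epsilon_{\max})^2\tau\log^3 d)$. So feature $j^\star$ is in the saturated regime of Lemma~\ref{Lemma E.3}, where $\Psi_{i^\star,j^\star}$ is only bounded \emph{above}, by $1/\mathrm{poly}(d)$ times the coefficient.

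You invoke Lemma~\ref{Lemma E.3} for $j\neq j^\star$ in (b) and Lemma~\ref{Lemma E.2} for $j^\star$ in (a), yet at $T_4$ both features sit in the same regime. Only the mask could separate them. You would have to show that the masked similarity, which omits $\mathcal{P}$, pushes the effective $\mathfrak{F}_{j^\star}$ below the $G_1$ threshold throughout $[T_4,T_5]$, and your plan has no such step.

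The paper avoids this by decoupling:
\begin{itemize}
\item It treats the masked logits as fixed numbers.
\item On $D_1$ and $D_2$ it factors $\mathbb{E}[(1-\ell'_{p,\mathrm{mk}})\,h_{i^\star}(\bm{Y}_n)\langle\nabla h_{i^\star}(\bm{X}_n),\bm{M}_{j^\star}\rangle]$ into $\mathbb{E}[1-\ell'_{p,\mathrm{mk}}]$ times the bias-free correlation of Lemma~\ref{Lemma F.3}(a). The first factor is $\Theta(1/\tau)$ by convergence, plus the Lemma~\ref{Lemma F.2} increment on $D_1$.
\item It drops the negative-sample term outright via Lemma~\ref{Lemma F.1}(c), instead of estimating it as in $\Psi_{i,j,2}$.
\end{itemize}

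Your step for (c) is also unsupported. A larger gradient prefactor does not move the point where $\Psi$ switches off; Lemma~\ref{Lemma E.3} ties that point to saturation of the positive softmax. Moreover, neurons in $\mathcal{P}$ do not enter the masked similarity, so that saturation would not stop their growth at all. Only leaving the bottom-$\alpha$ set would, and you do not track it. So nothing justifies dropping the $(\epsilon_{j^\star}/\epsilon_{\max})^2$ factor from the threshold.

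The paper uses no saturation argument here. It fixes the horizon $T_5-T_4=\frac{((\epsilon_{\max}/\epsilon_{j^\star})-1)d}{\eta\alpha\epsilon_{j^\star}^2C_z\log\log d}$. Over this window, the simplified rate $\Omega(\eta\epsilon_{j^\star}^2\alpha\frac{C_z\log\log d}{d})$ from (a) multiplies $\alpha_{i,j^\star}$ by a factor of order $\epsilon_{\max}/\epsilon_{j^\star}$, cancelling the ratio in the interval of Theorem~\ref{Theorem 3}. The rate $O(\eta\epsilon_{j^\star}^3\alpha\frac{C_z\log\log d}{d^2})$ from (b) accumulates only to $1+O(1/d)$ on $\alpha'_{i,j}$. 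You should replace your saturation step with this explicit-horizon compounding.
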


\subsection{Useful Lemmas}

\begin{lemma}[Expected values of neuron activations after $T_4$]
\label{Lemma F.1}
From $T_4$ onward, the following results hold:

(a) For positive pair,
\begin{equation}
    \mathbb{E}\!\Bigg[\sum_{i\in\mathcal{P}} h_i\!\big(\bm{X}_{n}\big)\,h_i\!\big(\bm{Y}_{n}\big)\Bigg]\;\ge\;\Omega\!\left(\alpha m\,\frac{\tau^{2}}{\Xi_{2}^{2}}\,\epsilon_{j^\star} \frac{\log \log d}{d} \right).
\end{equation}

(b) For negative pair,
\begin{equation}
    \mathbb{E}\!\Bigg[\sum_{i\in\mathcal{P}} h_i\!\big(\bm{X}_{n}\big)\,h_i(\bm{X}_{n,s})\Bigg]\;=\;0.
\end{equation}

(c) For negative pair,
\begin{equation}
    \mathbb{E} \Big [h_{i,t}(\bm{X}_{n,s})\,\langle \nabla_{\bm{w}_i} h_i(\bm{X}_{n}),\, \bm{M}_{j^\star}\rangle \Big]\;=\;0.
\end{equation}
\end{lemma}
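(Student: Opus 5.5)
The statement to prove is Lemma~\ref{Lemma F.1}. I will treat its three parts separately, working throughout in the post-$T_4$ regime. There, Theorem~\ref{Theorem E.1} and Theorem~\ref{Theorem 3} describe every neuron: for $i\in\mathcal{P}\subseteq\mathcal{M}^\star_{j^\star}$ we have $\mathcal{N}_i=\{j^\star\}$, $|\langle \bm{w}_i^{(t)},\bm{M}_{j^\star}\rangle|\in[\tau/\Xi_2,\tau]$ (up to the factor $\epsilon_{j^\star}/\epsilon_{\max}$), and the bias satisfies $\bm{b}_i^{(t)}\ge \frac{\operatorname{polylog}(d)}{\sqrt d}\|\bm{w}_i^{(t)}\|_2$. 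The off-feature and noise projections are small, as in Theorem~\ref{Theorem E.1}(e),(f).

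\textbf{Part (a).} Fix $i\in\mathcal{P}$. By Lemma~\ref{Lemma E.1} combined with Lemma~\ref{Lemma E.9}, with probability $1-e^{-\Omega(\log^2 d)}$ the pre-activation $\langle \bm{w}_i,\bm{z}_X^{(r)}\rangle$ equals $\frac1L\langle \bm{w}_i,\bm{M}_{j^\star}\rangle \tilde z^{(r)}_{n,j^\star}$ up to an error of at most $\lambda \bm{b}_i$. Hence $h_i(\bm{X}_n)\neq 0$ essentially only when $\tilde z_{n,j^\star}\neq 0$.

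On the event $E=\{\sum_r z^{(r)}_{n,j^\star}\neq0\}$, Assumption~\ref{Ass: pn} says $\bm{Y}_n$ has the same support and the same sign on coordinate $j^\star$. The sign consistency in Assumption~\ref{Ass: Latent}(ii) then makes $h_i(\bm{X}_n)$ and $h_i(\bm{Y}_n)$ share a sign. Their product is therefore nonnegative, and on the subevent where $|\tilde z_{j^\star}|$ is bounded below by a constant it is at least of order $(|\langle \bm{w}_i,\bm{M}_{j^\star}\rangle|-\bm{b}_i)^2 = \Omega(\tau^2/\Xi_2^2)$, using $\bm{b}_i=o(\|\bm{w}_i\|)$.

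By Definition~\ref{Defi: MMF}, $\Pr(E)=\Theta(\epsilon_{j^\star}\log\log d/d)$. The contribution off $E$ is nonnegative up to an $e^{-\Omega(\log^2 d)}$ error, which can be absorbed. Summing over the $|\mathcal{P}|=\alpha m$ neurons gives the bound.

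\textbf{Part (b).} $\bm{X}_{n,s}$ is independent of $\bm{X}_n$, so the expectation factorises as $\mathbb{E}[h_i(\bm{X}_n)]\,\mathbb{E}[h_i(\bm{X}_{n,s})]$. Next I use oddness. $\operatorname{BReLU}_b$ is odd, the attention with $\bm{W}_Q=\bm{W}_K=\bm{I}$ is equivariant under $\bm{X}\mapsto-\bm{X}$ (the scores $\bm{x}^\top\bm{x}'$ are invariant and the values flip sign), and the latent signals are symmetric (Assumption~\ref{Ass: Latent}(i)) while the Gaussian noise is symmetric. Together these imply $h_i(-\bm{X})=-h_i(\bm{X})$ and $\bm{X}\overset{d}{=}-\bm{X}$, so $\mathbb{E}[h_i(\bm{X})]=0$ and the product of expectations vanishes.

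\textbf{Part (c).} This follows the same pattern. By independence the expectation factors into $\mathbb{E}[h_{i,t}(\bm{X}_{n,s})]\cdot\mathbb{E}[\langle\nabla h_i(\bm{X}_n),\bm{M}_{j^\star}\rangle]$, and the first factor is $0$ by the symmetry argument. The second factor is finite, since the gradient is bounded by Theorem~\ref{Theorem E.1}(b) and the boundedness of $z$.

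\textbf{Main obstacle.} The step I expect to be delicate is the symmetry claim: the attention layer must preserve oddness, and the softmax must be genuinely sign-invariant. This needs the explicit form $\mathrm{softmax}(\bm{X}^\top\bm{x}^{(r)})$, which is invariant under global negation. A second point is that the time-$t$ weights depend on training data but not on the fresh samples in the expectation, so independence can be applied at fixed $\bm{\theta}^{(t)}$.

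In part (a) the hardest point is the lower bound on $|h_i(\bm{X}_n)|$. It requires $|\langle \bm{w}_i,\bm{M}_{j^\star}\rangle\tilde z_{j^\star}|/L$ to exceed $\bm{b}_i$ by a constant factor. I would get this from Theorem~\ref{Theorem E.1}(a),(g) together with boundedness of $z$ away from zero on a constant-probability subset of its support. If that fails, I would settle for the weaker $\Omega$ obtained by restricting to tokens where $|\tilde z_{j^\star}|$ is maximal.
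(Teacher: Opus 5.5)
Your route differs from the paper's in how it handles the nonlinearity. The paper's proof simply writes $h_i(\bm{X}_n)=\frac1L\langle\bm{w}_i,\bm{M}\hat{\bm{z}}_n+\hat{\xi}_n\rangle$, i.e.\ it treats the neuron as linear, with no BReLU and no bias. Part (a) then becomes the second-moment identity $\mathbb{E}[h_i(\bm{X}_n)h_i(\bm{Y}_n)]=\frac{1}{L^2}\sum_j\langle\bm{w}_i,\bm{M}_j\rangle^2\,\mathbb{E}[\hat z_{n,j}\hat z^{+}_{n,j}]$. Every term is nonnegative by sign agreement of positive pairs, so one keeps $j=j^\star$ and inserts $|\langle\bm{w}_i,\bm{M}_{j^\star}\rangle|\ge\Omega(\tau/\Xi_2)$. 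Parts (b) and (c) use only independence and zero means.

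For (b) and (c), your argument is valid and in fact more faithful, since it survives a nonzero bias: you factor by independence at fixed $\bm{\theta}^{(t)}$, then get $\mathbb{E}[h_i(\bm{X})]=0$ from oddness of $\operatorname{BReLU}_b$ and of the attention map. Two small points. It needs the \emph{joint} law of the latents (across features and tokens) to be invariant under global negation, which is slightly more than the marginal symmetry literally stated in Assumption~\ref{Ass: Latent}(i). And in (c) the gradient factor is integrable rather than bounded, because it contains Gaussian noise; integrability is all you actually use.

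The gap is in your (a). You need $h_i(\bm{X}_n)h_i(\bm{Y}_n)\gtrsim\tau^2/\Xi_2^2$ on an event of probability $\Omega(\epsilon_{j^\star}\log\log d/d)$. That requires a quantitative margin: the $j^\star$-part of the pre-activation must exceed $\bm{b}_i$ plus the Lemma~\ref{Lemma E.9} residual by $\Omega(|\langle\bm{w}_i,\bm{M}_{j^\star}\rangle|)$.

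You justify this by $\bm{b}_i=o(\|\bm{w}_i\|_2)$ via Theorem~\ref{Theorem E.1}(a),(g), but the paper provides no such bound. Part (g) is a \emph{lower} bound on $\bm{b}_i$. Moreover, the bias update $\bm{b}_i^{(t+1)}=\max\{\bm{b}_i^{(t)}(1+\eta/d),\,\bm{b}_i^{(t)}\|\bm{w}_i^{(t+1)}\|_2/\|\bm{w}_i^{(t)}\|_2\}$ makes $\bm{b}_i/\|\bm{w}_i\|_2$ nondecreasing, so no upper bound comes for free after $T_2$.

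The only usable fact is Lemma~\ref{Lemma D.1}(a), $|\langle\bm{w}_i,\bm{M}_{j^\star}\rangle|\ge\sqrt{1+\gamma}\,\bm{b}_i$, which the paper asserts continues to hold in stage 3. It gives at best a margin of order $\gamma|\langle\bm{w}_i,\bm{M}_{j^\star}\rangle|$. Even that holds only on tokens where the coefficient multiplying $\langle\bm{w}_i,\bm{M}_{j^\star}\rangle$ is essentially at its largest value. You would therefore also need the nonzero latents to put $\Omega(1)$ conditional mass near their maximal magnitude, which is never assumed. Your fallback of restricting to maximal tokens meets the same requirement.

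So either state this invariant and the distributional condition explicitly, or use the paper's linearized computation. The latter avoids the issue, but only by ignoring the bias at $t\ge T_4$.
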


\begin{lemma}[Effect of Pruning on Positive Logit Weight]
\label{Lemma F.2}
At the pruning stage, for the data following distribution $D_1$, the post-pruning positive logit $\ell'_{p,\bm{\theta}^{(t)}_{\mathrm{mask}}}$ satisfies
\begin{equation}
\label{eq:Lemma F.2 (1)}
    \mathbb{E}\!\left[1 - \ell'_{p,\bm{\theta}^{(t)}_{\mathrm{mask}}}\right] 
    \geq \Theta\!\left(\frac{1}{\tau}\right) 
    + \Omega\!\left(\frac{\alpha m}{\Xi_2^{2}}\,\epsilon_{j^\star}\,\frac{\log \log d}{d}\right).
\end{equation}
\end{lemma}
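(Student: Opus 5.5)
The plan is to write $1-\ell'_{p}$ as the sum of the negative logits, $1-\ell'_{p,\bm{\theta}^{(t)}_{\mathrm{mask}}}=\sum_{\bm{X}_{n,s}\in\mathfrak{N}}\ell'_{s,\bm{\theta}^{(t)}_{\mathrm{mask}}}$, and compare the masked similarities with the unmasked ones. Pruning only removes coordinates $i\in\mathcal{P}$, so for every $\bm{X}\in\mathfrak{B}$
\begin{equation}
\mathrm{Sim}_{\mathrm{mask}}(\bm{X}_n,\bm{X})=\mathrm{Sim}_{\bm{f}_t}(\bm{X}_n,\bm{X})-\sum_{i\in\mathcal{P}}h_i(\bm{X}_n)h_i(\bm{X}).
\end{equation}
By Lemma~\ref{Lemma F.1}(b) the correction has zero mean for negatives, while by Lemma~\ref{Lemma F.1}(a) it has mean at least $\Omega\big(\alpha m\,\tfrac{\tau^2}{\Xi_2^2}\epsilon_{j^\star}\tfrac{\log\log d}{d}\big)$ for the positive pair. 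Hence pruning lowers the positive similarity and leaves the negatives unchanged on average.

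\textbf{Step 1: the unpruned baseline.} At $T_4$, Theorem~\ref{Theorem E.1} and the loss guarantee of Theorem~\ref{Theorem 3} apply. Similarities are $O(\tau)$-scaled (bounded neuron norms and $\tau=\mathrm{polylog}(d)$), and there are $|\mathfrak{N}|$ negatives of comparable order. So the unpruned softmax satisfies $\mathbb{E}[1-\ell'_{p,t}]\ge\Theta(1/\tau)$, meaning the positive logit is not saturated. I will obtain this from the fact that $\mathrm{Sim}/\tau$ stays bounded, using Theorem~\ref{Theorem E.1}(b),(c) to bound $\|\bm{f}_t\|$. This gives the first term.

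\textbf{Step 2: the pruning increment.} Write $1-\ell'_p=g(\Delta)$ with $g(\Delta)=\frac{S_-}{S_-+e^{(s_p-\Delta)/\tau}}$, where $\Delta=\sum_{i\in\mathcal{P}}h_i(\bm{X}_n)h_i(\bm{Y}_n)\ge0$. Since $g$ is increasing and convex-type in $\Delta$, we have $g(\Delta)\ge g(0)+g'(0)\Delta$ with $g'(0)=\ell'_p(1-\ell'_p)/\tau$. Taking expectations and using the bound $\ell'_p(1-\ell'_p)=\Theta(1/\tau)\cdot\Theta(1)$ from Step~1 gives an increment of order $\frac{1}{\tau^2}\mathbb{E}[\Delta]$. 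Lemma~\ref{Lemma F.1}(a) then yields $\Omega\big(\frac{\alpha m}{\Xi_2^2}\epsilon_{j^\star}\frac{\log\log d}{d}\big)$, since the $\tau^2$ factors cancel. The negative-side fluctuations from pruning enter the denominator with zero mean (Lemma~\ref{Lemma F.1}(b)). I will handle them through a second-order term controlled by $\mathbb{E}[\Delta^2]$, which is lower order because $\Pr(h_i\ne0)\le O(\log\log d/d)$ by Lemma~\ref{Lemma E.1}.

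\textbf{Main obstacle.} The difficulty is that the linearization needs $\ell'_p(1-\ell'_p)$ bounded below jointly with $\Delta$, not merely in expectation. $\Delta$ is nonzero only on the rare event that $z_{j^\star}$ is active, and I must check that on this event the positive logit is still not saturated. I would first condition on that event and rerun the Step~1 bound, noting that minority-feature neurons have small magnitude ($\alpha_{i,j^\star}\ge\tau/\Xi_2$ scaled by $\epsilon_{j^\star}/\epsilon_{\max}$), so their contribution to $\mathrm{Sim}$ does not saturate the softmax. If this fails, I would fall back on a mean-value bound restricted to the event $\{\Delta\le\tau\}$ and absorb the complement via Lemma~\ref{Lemma E.8}-type tail bounds.
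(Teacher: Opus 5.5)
Your plan is essentially the paper's proof. The paper also expands $\ell'_p$ to first order in the pruning shifts $\Delta u_p=-\sum_{i\in\mathcal{P}}h_i(\bm{X}_n)h_i(\bm{Y}_n)$ and $\Delta u_s$, evaluates the slope $\ell'_p(1-\ell'_p)/\tau$ at $\ell'_p=1-\Theta(1/\tau)$, and uses Lemma~\ref{Lemma F.1}(a),(b) to cancel the $\tau^2$ and drop the negative shifts in expectation. It justifies that baseline by convergence of the loss at $T_4$, not by a bound on $\mathrm{Sim}/\tau$.

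The joint-control issue you single out is not addressed in the paper, which simply treats $\ell'_p$ and $\ell'_s$ as fixed constants independent of the perturbation. Your proposed fix is not supported by the paper's own Stage~3 bounds. That fix relies on minority neurons being too small to saturate the softmax when $z_{j^\star}$ is active. However, Theorem~\ref{Theorem E.1}(c) gives $\mathfrak{F}_{j^\star}=\Theta\big((\epsilon_{j^\star}/\epsilon_{\max})^2\tau\log^3 d\big)$, which is exactly the saturated regime used in Lemma~\ref{Lemma E.3}.
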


\begin{lemma}[Positive gradient]
\label{Lemma F.3}
Let $h_{i,t}(\cdot)$ denote the $i$-th neuron at iteration $t \leq T_1$ (so that $\bm{b}_i^{(t)} = 0$). Then the following hold:

(a) For each $j \in [d]$,
\begin{equation}
    \mathbb{E}\!\left[ h_{i,t}(\bm{Y}_{n}) \,\langle \nabla_{\bm{w}_i} h_{i,t}(\bm{X}_{n}), \bm{M}_j \rangle \right] = \frac{1}{L^2} \langle \bm{w}_i^{(t)}, \bm{M}_j \rangle \,\mathbb{E} \!\left[ \hat{\bm{z}}_{n,j}^{+} \hat{\bm{z}}_{n,j} \right].
\end{equation}

(b) For each $j \in [d_1] \setminus [d]$,
\begin{equation}
    \mathbb{E}\!\left[ h_{i,t}(\bm{Y}_{n}) \,\langle \nabla_{\bm{w}_i} h_{i,t}(\bm{X}_{n}), \bm{M}_j^\perp \rangle \right] = 0.
\end{equation}
\end{lemma}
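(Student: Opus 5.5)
The plan is to write the population gradient at $t\le T_1$ explicitly. Since $\bm{b}_i^{(t)}=0$, every indicator in $\nabla_{\bm{w}_i}h_{i,t}$ equals one and the activation is linear, so $h_{i,t}(\bm{X})=\sum_{r}\langle \bm{w}_i^{(t)},\bm{z_X}^{(r)}\rangle$. With the attention weights initialized as $\bm{W}_Q=\bm{W}_K=\bm{I}$, we adopt the averaged-token representation $\bm{z_X}=\frac{1}{L}(\bm{M}\hat{\bm{z}}_n+\hat{\bm{\xi}}_n)$ used in Definition~\ref{Definition E.1}, where $\hat{\bm{z}}_n$ and $\hat{\bm{\xi}}_n$ are the aggregated latent signal and aggregated noise after attention. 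In this form $h_{i,t}(\bm{Y}_n)=\frac{1}{L}\langle \bm{w}_i^{(t)},\bm{M}\hat{\bm{z}}^{+}_n+\hat{\bm{\xi}}^{+}_n\rangle$ and $\langle\nabla_{\bm{w}_i}h_{i,t}(\bm{X}_n),\bm{u}\rangle=\frac{1}{L}\langle \bm{M}\hat{\bm{z}}_n+\hat{\bm{\xi}}_n,\bm{u}\rangle$ for any unit vector $\bm{u}$. The expectation to compute is therefore a bilinear form, $\frac{1}{L^2}\mathbb{E}\big[\langle \bm{w}_i^{(t)},\bm{M}\hat{\bm{z}}^{+}_n+\hat{\bm{\xi}}^{+}_n\rangle\,\langle \bm{M}\hat{\bm{z}}_n+\hat{\bm{\xi}}_n,\bm{u}\rangle\big]$.

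For part (a), take $\bm{u}=\bm{M}_j$ and use column-orthonormality of $\bm{M}$, so that $\langle \bm{M}\hat{\bm{z}}_n,\bm{M}_j\rangle=\hat z_{n,j}$. Expanding $\langle \bm{w}_i^{(t)},\bm{M}\hat{\bm{z}}^{+}_n\rangle=\sum_{j'}\langle \bm{w}_i^{(t)},\bm{M}_{j'}\rangle\hat z^{+}_{n,j'}$ produces four kinds of terms.
\begin{itemize}
\item The diagonal term $j'=j$ gives exactly $\frac{1}{L^2}\langle \bm{w}_i^{(t)},\bm{M}_j\rangle\mathbb{E}[\hat z^{+}_{n,j}\hat z_{n,j}]$, which is the claimed value.
\item Each cross term $j'\neq j$ has the factor $\mathbb{E}[\hat z^{+}_{n,j'}\hat z_{n,j}]$. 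It vanishes because distinct coordinates are independent and each is symmetric around zero (Assumption~\ref{Ass: Latent}(i)). The positive-pair coupling of Assumption~\ref{Ass: pn} ties only the same coordinate $j$ across $\bm{X}_n$ and $\bm{Y}_n$, so a sign flip of coordinate $j'$ alone leaves the joint law invariant and kills the term.
\item Terms pairing noise with signal vanish because the noise is independent of the signal and has mean zero (Assumption~\ref{Ass: noise}).
\item The noise–noise term $\mathbb{E}[\langle \bm{w}_i,\hat{\bm{\xi}}^{+}_n\rangle\langle\hat{\bm{\xi}}_n,\bm{M}_j\rangle]$ vanishes because $\bm{\xi}^{+}$ and $\bm{\xi}$ are drawn independently.
\end{itemize}

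For part (b), take $\bm{u}=\bm{M}_j^\perp$. Its inner product with the signal, $\langle \bm{M}\hat{\bm{z}}_n,\bm{M}_j^\perp\rangle$, is zero by orthogonality. The only surviving factor is $\langle\hat{\bm{\xi}}_n,\bm{M}_j^\perp\rangle$, a mean-zero Gaussian independent of everything in the $\bm{Y}_n$ factor, so the expectation is zero.

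The main obstacle is justifying that the attention output has this averaged form at $t\le T_1$, and that the symmetry argument survives the possible nonlinearity of the softmax weights. The first fallback is to note that with $\bm{W}_Q=\bm{W}_K=\bm{I}$ the attention weights depend on the data only through the Gram matrix of the tokens. That Gram matrix is invariant under flipping the sign of coordinate $j'$ simultaneously in all tokens of both views, which is admissible by the sign consistency in Assumption~\ref{Ass: Latent}(ii). Under this joint flip the cross terms change sign while their law is preserved, so they still vanish, and the attention is treated as a fixed, symmetric averaging operator throughout.
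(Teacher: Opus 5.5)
Your argument is correct and is essentially the paper's proof. With $\bm{b}_i^{(t)}=0$ the neuron is linear, the expectation reduces to the bilinear form $\frac{1}{L^2}\mathbb{E}\bigl[\langle \bm{w}_i^{(t)},\bm{M}\hat{\bm{z}}_n^{+}+\hat{\bm{\xi}}_n^{+}\rangle\,\langle \bm{M}\hat{\bm{z}}_n+\hat{\bm{\xi}}_n,\bm{u}\rangle\bigr]$, and orthonormality isolates $\hat z_{n,j}$ (or kills the signal when $\bm{u}=\bm{M}_j^\perp$). The cross-feature and noise terms then drop out; you supply the independence and sign-symmetry justifications the paper leaves implicit, under its same convention of treating attention as a fixed averaging.

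The one flaw is in your fallback remark. Flipping $z_{j'}$ alone does not leave the token Gram matrix invariant, because the signal--noise entries $\langle \bm{M}\bm{z}^{(r)},\bm{\xi}^{(s)}\rangle$ change sign. You would have to reflect entire tokens by $\bm{I}-2\bm{M}_{j'}\bm{M}_{j'}^\top$ (using isotropy of the noise). Even then, data-dependent attention weights mean the noise terms need not vanish exactly, so the identity genuinely rests on the fixed-averaging convention the paper also adopts.
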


\subsection{Proof of Theorem F.1}

Overview of the proof: first, the data can be divided into two parts: the samples that contain $\bm{M}_{j^\star}$ and those that do not. The former follow distribution $D_1$, while the latter follow distribution $D_2$. Next, let us examine $\ell'_{p,\bm{\theta}^{(t)}_{\mathrm{mask}}}$. The values of $\ell'_{p,\bm{\theta}^{(t)}_{\mathrm{mask}}}$ differ depending on the distribution: for samples from $D_1$, we have $\ell'_{p,\bm{\theta}^{(t)}_{\mathrm{mask}}} = 1 - \Theta(\frac{1}{\tau}) -\Omega(\frac{\alpha m}{\Xi^2_2}\epsilon_{j^\star} \frac{\log \log d}{d})$, whereas for samples from $D_2$, $\ell'_{p,\bm{\theta}^{(t)}_{\mathrm{mask}}} = 1 - \Theta(\frac{1}{\tau})$. Since the latter do not contain $\bm{M}_{j^\star}$, pruning does not affect them.

\begin{proof}[Proof of Theorem F.1]

For any neuron $i^\star \in \mathcal{P}$ we have
\begin{equation}
\begin{aligned}
    &\langle \bm{w}_{i^\star}^{(t+1)}, \bm{M}_{j^\star} \rangle \\
    =& \langle \bm{w}_{i^\star}^{(t)}, \bm{M}_{j^\star} \rangle
    - \eta \,\big\langle \nabla_{w_{i^\star}} L_{\mathrm{aug}}(f_t), \bm{M}_{j^\star}\big\rangle \pm \frac{\|\bm{w}_{i^\star}^{(t)}\|_2}{\operatorname{poly}(d)} \\
    =& (1-\eta\lambda)\langle \bm{w}_{i^\star}^{(t)}, \bm{M}_{j^\star}\rangle\\
    +& \eta\,\mathbb{E}_{\bm{X}_{n},\bm{Y}_{n}}\!\Big[(1-\ell'_{p,\bm{\theta}^{(t)}_{\mathrm{mask}}}(\bm{X}_{n},\mathfrak B)) \cdot h_{{i^\star},t}(\bm{Y}_{n})\,\langle \nabla_{\bm{w}_{i^\star}} h_{i^\star}(\bm{X}_{n}), \bm{M}_{j^\star}\rangle
    \Big]\\
    -& \eta \sum_{\bm{X}_{n,s}\in\mathfrak N} \mathbb{E}\!\Big[l'_{s,t}(\bm{X}_{n},\mathfrak B)\,h_{{i^\star},t}(\bm{X}_{n,s})\,\langle \nabla_{\bm{w}_{i^\star}} h_{i^\star}(\bm{X}_{n}), \bm{M}_{j^\star}\rangle\Big] \pm \frac{\|\bm{w}_{i^\star}^{(t)}\|_2}{\operatorname{poly}(d)} 
\end{aligned}
\end{equation}

At stage $T_4$, pruning is applied. We regard $\ell'_{p,\bm{\theta}^{(t)}_{\mathrm{mask}}}$ and $\ell'_{s,\bm{\theta}^{(t)}_{\mathrm{mask}}}$ as fixed, and by combining Lemma~\ref{Lemma F.1}(c) with the law of total probability, we obtain

\begin{equation}
\begin{aligned}
    &\langle \bm{w}_{i^\star}^{(t+1)}, \bm{M}_{j^\star} \rangle \\
    =& (1-\eta\lambda)\langle \bm{w}_{i^\star}^{(t)}, \bm{M}_{j^\star}\rangle \\+&\eta\,\mathbb{E}_{\bm{X}_{n},\bm{Y}_{n}}\Big[(1-\ell'_{p,\bm{\theta}^{(t)}_{\mathrm{mask}}})\Big] \mathbb{E}_{\bm{X}_{n},\bm{Y}_{n}}\Big[ h_{{i^\star},t}(\bm{Y}_{n})\,\langle \nabla_{\bm{w}_{i^\star}} h_{i^\star}(\bm{X}_{n}), \bm{M}_{j^\star}\rangle\Big]\\
    =& (1-\eta\lambda)\langle \bm{w}_{i^\star}^{(t)}, \bm{M}_{j^\star}\rangle\\ +&\eta\,\mathbb{E}_{\bm{X}_{n},\bm{Y}_{n}\sim D_1}\Big[(1-\ell'_{p,\bm{\theta}^{(t)}_{\mathrm{mask}}})\Big] \mathbb{E}_{\bm{X}_{n},\bm{Y}_{n}\sim D_1}\Big[ h_{{i^\star},t}(\bm{Y}_{n})\,\langle \nabla_{\bm{w}_{i^\star}} h_{i^\star}(\bm{X}_{n}), \bm{M}_{j^\star}\rangle\Big] \cdot \mathbb{P}_{\bm{X}_{n},\bm{Y}_{n}\sim D_1}\\
    +&\eta\,\mathbb{E}_{\bm{X}_{n},\bm{Y}_{n}\sim D_2}\Big[(1-\ell'_{p,\bm{\theta}^{(t)}_{\mathrm{mask}}})\Big] \mathbb{E}_{\bm{X}_{n},\bm{Y}_{n}\sim D_2}\Big[ h_{{i^\star},t}(\bm{Y}_{n})\,\langle \nabla_{\bm{w}_{i^\star}} h_{i^\star}(\bm{X}_{n}), \bm{M}_{j^\star}\rangle\Big] \cdot \mathbb{P}_{\bm{X}_{n},\bm{Y}_{n}\sim D_2}\\
\end{aligned}
\end{equation}

Combining Lemma~\ref{Lemma F.3}(a) with~\eqref{eq:Lemma F.2 (1)} in Lemma~\ref{Lemma F.2}, we obtain

\begin{equation}
\label{eq:Proof of Theorem F.1. (1)}
\begin{aligned}
    &\langle \bm{w}_{i^\star}^{(t+1)}, \bm{M}_{j^\star} \rangle \\
    =& (1-\eta\lambda)\langle \bm{w}_{i^\star}^{(t)}, \bm{M}_{j^\star}\rangle\\
    +&\eta\,\mathbb{E}_{\bm{X}_{n},\bm{Y}_{n}\sim D_1}\Big[(1-\ell'_{p,\bm{\theta}^{(t)}_{\mathrm{mask}}})\Big] \mathbb{E}_{\bm{X}_{n},\bm{Y}_{n}\sim D_1}\Big[ \frac{1}{L^2}\langle \bm{w}_i^{(t)}, \bm{M}_j \rangle \,\mathbb{E} \!\left[ \hat{\bm{z}}_{n,j^\star}^{+} \hat{\bm{z}}_{n,j^\star} \right]\rangle\Big] \cdot \mathbb{P}_{\bm{X}_{n},\bm{Y}_{n}\sim D_1}\\
    +&\eta\,\mathbb{E}_{\bm{X}_{n},\bm{Y}_{n}\sim D_2}\Big[(1-\ell'_{p,\bm{\theta}^{(t)}_{\mathrm{mask}}})\Big] \mathbb{E}_{\bm{X}_{n},\bm{Y}_{n}\sim D_2}\Big[ \frac{1}{L^2}\langle \bm{w}_i^{(t)}, \bm{M}_j \rangle \,\mathbb{E} \!\left[ \hat{\bm{z}}_{n,j^\star}^{+} \hat{\bm{z}}_{n,j^\star} \right] \Big] \cdot \mathbb{P}_{\bm{X}_{n},\bm{Y}_{n}\sim D_2}\\
    =& (1-\eta\lambda)\langle \bm{w}_{i^\star}^{(t)}, \bm{M}_{j^\star}\rangle\\
    +& \eta \left(\Theta \left(\frac{1}{\tau}\right) + \Omega\!\left(\frac{\alpha m}{\Xi_2^{2}}\,\epsilon_{j^\star}\,\frac{\log \log d}{d}\right)\right) \cdot \langle \bm{w}_i^{(t)}, \bm{M}_j \rangle \cdot \epsilon_{j^\star} \frac{\log \log d}{d} \\
    +&\eta \cdot \Theta \left(\frac{1}{\tau}\right) \cdot 0 \cdot 1 \cdot \langle \bm{w}_{i^\star}^{(t)}, \bm{M}_{j^\star}\rangle\\
    =& (1-\eta\lambda)\langle \bm{w}_{i^\star}^{(t)}, \bm{M}_{j^\star}\rangle + \eta \left(\Theta \left(\frac{1}{\tau}\right) + \Omega\!\left(\frac{\alpha m}{\Xi_2^{2}}\,\epsilon_{j^\star}\,\frac{\log \log d}{d}\right)\right) \cdot \langle \bm{w}_{i^\star}^{(t)}, \bm{M}_{j^\star}\rangle \cdot \epsilon_{j^\star} \frac{\log \log d}{d}
\end{aligned}
\end{equation}

Hence, the post-pruning one-step update along $\bm{M}_{j^\star}$ is
\begin{equation}
\small
\begin{aligned}
    \langle \bm{w}_{i^\star}^{(t+1)}, \bm{M}_{j^\star}\rangle 
    &\geq \left(1-\eta\lambda + \eta \epsilon_{j^\star}\,\frac{C_z\log\log d}{d} 
    \left( \Theta\!\left(\tfrac{1}{\operatorname{polylog}(d)}\right) 
    + \Omega\!\left(\tfrac{\alpha m \epsilon_{j^\star} \log\log d}{d \Xi^2_2}\right) \right) \right)\langle \bm{w}_{i^\star}^{(t)}, \bm{M}_{j^\star}\rangle.
\end{aligned}
\end{equation}

Similarly to~\eqref{eq:Proof of Theorem F.1. (1)}, for any neuron $i \notin \mathcal{P}$, we have:

\begin{equation}
\begin{aligned}
    &\langle \bm{w}_{i}^{(t+1)}, \bm{M}_{j} \rangle \\
    =& (1-\eta\lambda)\langle \bm{w}_{i}^{(t)}, \bm{M}_{j}\rangle\\
    &+\eta\,\mathbb{E}_{\bm{X}_{n},\bm{Y}_{n}\sim D_1}\Big[(1-\ell'_{p,\bm{\theta}^{(t)}_{\mathrm{mask}}})\Big] \mathbb{E}_{\bm{X}_{n},\bm{Y}_{n}\sim D_1}\Big[ \frac{1}{L^2}\langle \bm{w}_i^{(t)}, \bm{M}_j \rangle \,\mathbb{E} \!\left[ \hat{\bm{z}}_{n,j}^{+} \hat{\bm{z}}_{n,j} \right]\rangle\Big] \cdot \mathbb{P}_{\bm{X}_{n},\bm{Y}_{n}\sim D_1}\\
    &+\eta\,\mathbb{E}_{\bm{X}_{n},\bm{Y}_{n}\sim D_2}\Big[(1-\ell'_{p,\bm{\theta}^{(t)}_{\mathrm{mask}}})\Big] \mathbb{E}_{\bm{X}_{n},\bm{Y}_{n}\sim D_2}\Big[ \frac{1}{L^2}\langle \bm{w}_i^{(t)}, \bm{M}_j \rangle \,\mathbb{E} \!\left[ \hat{\bm{z}}_{n,j}^{+} \hat{\bm{z}}_{n,j} \right] \Big] \cdot \mathbb{P}_{\bm{X}_{n},\bm{Y}_{n}\sim D_2}\\
    =& (1-\eta\lambda)\langle \bm{w}_{i}^{(t)}, \bm{M}_{j}\rangle\\
    &+ \eta \left(\Theta\!\left(\frac{1}{\tau}\right) + \Omega\!\left(\frac{\alpha m}{\Xi_2^{2}}\,\epsilon_{j^\star}\,\frac{\log \log d}{d}\right)\right) \cdot \langle \bm{w}_i^{(t)}, \bm{M}_j \rangle \cdot \epsilon_j \frac{\log \log d}{d} \epsilon_{j^\star} \frac{\log \log d}{d}\\
    &+\eta \cdot \Theta \left(\frac{1}{\tau}\right) \cdot \epsilon_j \frac{\log \log d}{d} \cdot 1 \cdot \langle \bm{w}_i^{(t)}, \bm{M}_j \rangle
\end{aligned}
\end{equation}

Hence, the post-pruning one-step update along $M_{j}$ is
\begin{equation}
\begin{aligned}
    \langle \bm{w}_i^{(t+1)}, \bm{M}_{j}\rangle 
    \leq \bigg(1&-\eta\lambda + \eta \epsilon_{j}\,\frac{C_z\log\log d}{d} 
    \Big( \Theta\!\left(\tfrac{1}{\operatorname{polylog}(d)}\right)\\
    &+ \epsilon_{j^\star} \frac{\log\log d}{d} 
    \Big( \Theta\!\left(\tfrac{1}{\operatorname{polylog}(d)}\Big)
    + O\!\left(\tfrac{\alpha m \epsilon_{j^\star} \log\log d}{d \Xi^2_2}\right)\right)\Big) \bigg)\langle \bm{w}_i^{(t)}, \bm{M}_{j}\rangle.
\end{aligned}
\end{equation}

The above constitutes the proof of Theorem F.1 regarding pruning.
\end{proof}

\subsection{Proof of Theorem 3.2}
Theorem~\ref{Theorem 4} (a) and (b) can be derived as simplifications of Theorem~\ref{Theorem F.1} (a) and (b). Theorem~\ref{Theorem 4} (c) coincides with Theorem~\ref{Theorem F.1} (c). By taking the elapsed time $T = \frac{((\epsilon_{\max}/\epsilon_{j^\star})-1)d}{\eta \alpha \epsilon_{j^\star}^{2}C_z\log\log d}$ and simplifying (a) and (b), then substituting into the conclusion of Theorem 3.1, the proof follows.

\subsection{Proof of Lemma~\ref{Lemma F.1}:}

\begin{proof}[Proof of Lemma~\ref{Lemma F.1}:]

The alignment with the target minority feature $\bm{M}_{j^\star}$ is $\langle \bm{w}_{i}, \bm{M}_{j^\star}\rangle$, and we have $|\langle \bm{w}_{i}, \bm{M}_{j^\star}\rangle|\ge \Omega(\frac{\tau}{\Xi_2})$ at $T_4$ (This is the conclusion of Theorem~\ref{Theorem 3}, which can be found in the second part of the proof of Theorem~\ref{Theorem 3}. For the positive pair $(\bm{X}_{n},\bm{Y}_{n})$, the latent variables $\bm{z}_{n,j^\star}$ and $\bm{z}_{n,j^\star}^{+}$ are correlated through the augmentation process. For a negative sample $\bm{X}_{n,s}$, its latent variable $\bm{z}_{n,s,j^\star}$ is independent of those of the positive pair $(\bm{z}_{n,j^\star}, \bm{z}_{n,j^\star}^{+})$, so we have:
\begin{equation} 
    (\bm{z}_{n,j^\star}, \bm{z}_{n,j^\star}^{+}) \perp\!\!\!\perp \bm{z}_{n,s,j^\star}.
\end{equation}

For the anchor $\bm{X}_{n}$ and its positive $\bm{Y}_{n}$, we have

\begin{equation}
    h_i(\bm{X}_{n}) = \sum_{r=1}^{L}\Big\langle \bm{w}_i,  z^{(r)}_Y \Big\rangle = \frac{1}{L}\Big\langle \bm{w}_i,\; \bm{M} \sum_{r=1}^{L}\tilde{\bm{z}}_n^{(r)}+\sum_{r=1}^{L}\tilde{\xi}_n^{(r)}\Big\rangle,
\end{equation}

\begin{equation}
    h_i(\bm{Y}_{n}) \;=  \sum_{s=1}^{L}\Big\langle \bm{w}_i,  z^{(s)}_Y \Big\rangle = \frac{1}{L}\Big\langle \bm{w}_i,\; \bm{M} \sum_{s=1}^{L}\tilde{z}_{n}^{+(s)}+\sum_{s=1}^{L}\tilde{\xi}_{n}^{+(s)}\Big\rangle,
\end{equation}

\begin{equation}
    \hat{\bm{z}}_{n}:=\sum_{r=1}^{L}\tilde{\bm{z}}_n^{(r)},\qquad
    \hat{\bm{z}}_{n}^{+}:=\sum_{s=1}^{L}\tilde{z}_{n}^{+(s)},\qquad \hat{\xi}_{n}:=\sum_{r=1}^{L}\tilde{\xi}_n^{(r)},\qquad \hat{\xi}_{n}^{+}:=\sum_{s=1}^{L}\tilde{\xi}_{n}^{+(s)}.
\end{equation}

We can write the outputs as:
\begin{equation}
    h_i(\bm{X}_{n}) = \frac{1}{L} \big\langle \bm{w}_i,\; \bm{M} \hat{\bm{z}}_{n}+\hat \xi_{n}\big\rangle, \qquad
    h_i(\bm{Y}_{n}) = \frac{1}{L} \big\langle \bm{w}_i,\; \bm{M} \hat{\bm{z}}_{n}^{+}+\hat{\xi}_{n}^{+}\big\rangle. 
\end{equation}
For a negative sample $\bm{X}_{n,s}$:
\(
\hat{\bm{z}}_{n,s}:=\sum_{q=1}^{L}\tilde{z}_{n,s}^{(q)}, \quad
\hat{\xi}_{n,s}:=\sum_{q=1}^{L}\tilde{\xi}_{n,s}^{(q)},
\)
the output is:
\begin{equation}
    h_i(\bm{X}_{n,s}) = \frac{1}{L} \big\langle \bm{w}_i, \bm{M} \hat{\bm{z}}_{n,s}+\hat \xi_{n,s}\big\rangle.
\end{equation}

We first establish a lower bound for $\mathbb{E}[h_i(\bm{X}_{n}) h_i(\bm{Y}_{n})]$.

Expanding and using zero-mean and independence of latent variables and noises, we have
\begin{equation}
\begin{aligned}
    \mathbb{E}[h_i(\bm{X}_{n}) h_i(\bm{Y}_{n})] &= \frac{1}{L^2} \mathbb{E}\!\big[\langle \bm{w}_i,\bm{M} \hat{\bm{z}}_{n}\rangle\,\langle \bm{w}_i,\bm{M} \hat{\bm{z}}_{n}^{+}\rangle\big] \\
    &= \frac{1}{L^2} \sum_{j=1}^{d} \langle \bm{w}_i,\bm{M}_j \rangle^{\,2}\;\mathbb{E}\!\big[\hat{\bm{z}}_{n,j}\,\hat{\bm{z}}_{n,j}^{+}\big]\\
    &\ge \frac{1}{L^2} \langle \bm{w}_i,\bm{M}_{j^\star} \rangle^{\,2}\;\mathbb{E}\!\big[\hat{\bm{z}}_{n,j^\star}\,\hat{\bm{z}}_{n,j^\star}^{+}\big]\\
    &\ge \Omega(\frac{\tau^2}{\Xi^2_2} \epsilon_{j^\star} \frac{\log \log d}{d}).
\end{aligned}
\end{equation}

Therefore
\begin{equation}
    \mathbb{E}[h_i(\bm{X}_{n}) h_i(\bm{Y}_{n})] \;\ge\; \Omega(\frac{\tau^2}{\Xi^2_2} \epsilon_{j^\star} \frac{\log \log d}{d}).
\end{equation}

Next, we compute the expectation of $h_i(\bm{X}_{n}) , h_i(\bm{X}_{n,s})$.$h_i(\bm{X}_{n}) \, h_i(\bm{X}_{n,s}),$
\begin{equation}
    \mathbb{E}[h_i(\bm{X}_{n}) h_i(\bm{X}_{n,s})] = \mathbb{E}\!\Big[ \big(\frac{1}{L} \big\langle  \bm{w}_i,\; \bm{M} \hat{\bm{z}}_{n}+\hat \xi_{n}\big\rangle\big)\big(\frac{1}{L} \big \langle \bm{w}_i,\; \bm{M} \hat{\bm{z}}_{n,s}+\hat{\xi}_{n,s}\big\rangle \big)\Big].
\end{equation}
By the assumption, the latent variables of $\bm{X}_{n}$ are independent of those of the negative $\bm{X}_{n,s}$, and all noises are mean-zero and independent. Therefore,
\begin{equation}
    \mathbb{E}\!\big[\langle \bm{w}_i,\bm{M} \hat{\bm{z}}_{n}\rangle\,\langle \bm{w}_i,\bm{M} \hat{\bm{z}}_{n,s}\rangle\big] = 0,\qquad \mathbb{E}[\langle \bm{w}_i,\hat \xi_{n}\rangle\,\langle \bm{w}_i,\hat{\xi}_{n,s}\rangle] = 0
\end{equation}

Therefore, we conclude that
\begin{equation}
    \mathbb{E}[h_i(\bm{X}_{n}) h_i(\bm{X}_{n,s})] = 0
\end{equation}

Let $\mathcal{P}$ be the pruned set with $|\mathcal{P}|=\alpha m$. Summing the per-neuron bounds over $i\in\mathcal{P}$, we obtain

\begin{equation}
    \mathbb{E}\!\Big[\sum_{i\in\mathcal{P}} h_i(\bm{X}_{n}) h_i(\bm{Y}_{n})\Big] \ge \Omega (\alpha m \frac{\tau^2}{\Xi^2_2} p_{j^\star}),
\end{equation}

\begin{equation}
    \mathbb{E} \Big[\sum_{i\in\mathcal{P}} h_i(\bm{X}_{n}) h_i(\bm{X}_{n,s})\Big] = 0.
\end{equation}
This completes the proof of Lemma~\ref{Lemma F.1} (a)(b).

Finally, we compute the expectation of $h_{i,t}, (\bm{X}_{n,s}),\langle \nabla_{\bm{w}_i} h_i(\bm{X}_{n}), \bm{M}_{j^\star}\rangle$, and we have

\begin{equation}
\begin{aligned}
    &\mathbb{E} \Big[ h_{i,t}(\bm{X}_{n,s})\,\langle \nabla_{\bm{w}_i} h_i(\bm{X}_{n}), \bm{M}_{j^\star} \rangle\Big]\\
    =& \mathbb{E}\Big[ \big(\frac{1}{L} \big\langle \bm{w}_i,\; \bm{M}\hat{\bm{z}}_{n,s}+\hat{\xi}_{n,s}\big\rangle \big) \big(\frac{1}{L} \big\langle \bm{M}\hat{\bm{z}}_{n} + \hat \xi_{n} , \bm{M}_{j^\star} \big\rangle \big)\Big] \\
    =& \mathbb{E}\Big[ \big(\frac{1}{L^2} \big\langle \bm{w}_i,\; \bm{M}\hat{\bm{z}}_{n,s}+\hat{\xi}_{n,s}\big\rangle \big) \big(\hat{\bm{z}}_{n,j^\star} + \big\langle \hat{\xi}_{n} , \bm{M}_{j^\star} \big\rangle \big)\Big] \\
    =& \frac{1}{L^2} \mathbb{E} \left[ \langle \bm{w}_i , \bm{M} \hat{\bm{z}}_{n,s} \rangle \cdot \hat{\bm{z}}_{n,j^\star} \right]\\
    =&0.
\end{aligned}
\end{equation}

This completes the proof of Lemma~\ref{Lemma F.1}(c),

\begin{equation}
    \mathbb{E} \Big[ h_{i,t}(\bm{X}_{n,s})\,\langle \nabla_{\bm{w}_i} h_i(\bm{X}_{n}), \bm{M}_{j^\star} \rangle\Big] = 0.
\end{equation}

\end{proof}

\subsection{Proof of Lemma~\ref{Lemma F.2}:}

\begin{proof}[Proof of Lemma~\ref{Lemma F.2}:]

We link the logit to the pruning ratio and plug it into the gradient growth. Recall the softmax weights and partial derivatives
\begin{equation}
    \ell'_p = \frac{e^{u_p/\tau}}{e^{u_p/\tau}+\sum_{s=1}^S e^{u_s/\tau}},\qquad
    \ell'_s = \frac{e^{u_s/\tau}}{e^{u_p/\tau}+\sum_{s=1}^S e^{u_s/\tau}},\qquad\sum_{s=1}^S \ell'_s = 1-\ell'_p, 
\end{equation}

\begin{equation}
u_p=\mathrm{Sim}_f(\bm{X}_{n},\bm{Y}_{n}),\quad
u_s=\mathrm{Sim}_f(\bm{X}_{n},\bm{X}_{n,s}),
\end{equation}

\begin{equation}
    \frac{\partial \ell'_p}{\partial u_p}=\frac{1}{\tau}\,\ell'_p(1-\ell'_p),\qquad
    \frac{\partial \ell'_p}{\partial u_s}=-\frac{1}{\tau}\,\ell'_p\,\ell'_s.
\end{equation}

Pruning the size $\alpha m$ changes the similarities by
\begin{equation}
    \Delta u_p = - \sum_{i\in\mathcal P} h_i(\bm{X}_{n}) h_i(\bm{Y}_{n}),\qquad\Delta u_s = - \sum_{i\in\mathcal P} h_i(\bm{X}_{n}) h_i(\bm{X}_{n,s}).
\end{equation}

Next, calculate the first order change of $\ell'_p$, we know:

\begin{equation}
    \bm{u} = (u_p, u_1, \dots, u_S), \qquad \Delta \bm{u} = (\Delta u_p, \Delta u_1, \dots, \Delta u_S).
\end{equation}

Using multivariate Taylor expansion up to second order with remainder:
\begin{equation}
    \ell_p'(\bm{u} + \Delta \bm{u}) - \ell_p'(\bm{u})= \nabla \ell_p'(\bm{u})^\top \Delta \bm{u}+ \frac{1}{2} \Delta \bm{u}^\top H_p(\bm{u}) \Delta \bm{u} + o(\|\Delta \bm{u}\|^2), \Delta \bm{u} \to 0.
\end{equation}

By a first order Taylor expansion, we have
\begin{equation}
\begin{aligned}
    \Delta \ell'_p &=\frac{\partial \ell'_p}{\partial u_p}\,\Delta u_p+\sum_{s=1}^S \frac{\partial \ell'_p}{\partial u_s}\,\Delta u_s + o\left(\|\Delta \bm{u}\|\right)\\
    &= \frac{1}{\tau}\,\ell'_p(1-\ell'_p) \Delta u_p - \frac{1}{\tau}\,\ell'_p\,\sum_{s=1}^S \ell'_s \Delta u_s.
\end{aligned}
\end{equation}

We note that at $T_4$, by the convergence of the loss function, we obtain $\ell'_p = 1-\Theta(\frac{1}{\tau})$, and both $\ell'_p$ and $\ell'_s$ take fixed values. Then, by taking expectations over $\Delta \ell'_p$ and using the relation $\sum_{s} \ell'_s = 1 - \ell'_p,$ we obtain:
\begin{equation}
    \mathbb{E}[\Delta \ell'_p]= -\Theta(\frac{1}{\tau^2}) \Big(\mathbb{E}\big[\sum_{i\in\mathcal P} h_i(\bm{X}_{n}) h_i(\bm{Y}_{n})\big]-\mathbb{E}\big[\sum_{i\in\mathcal P} h_i(\bm{X}_{n}) h_i(\bm{X}_{n,s})\big]\Big).
\end{equation}

Also, by Lemma~\ref{Lemma F.1}, given that
\begin{equation}
    \mathbb{E}\Big[\sum_{i\in\mathcal P} h_i(\bm{X}_{n}) h_i(\bm{Y}_{n})\Big]-\mathbb{E}\Big[\sum_{i\in\mathcal P} h_i(\bm{X}_{n}) h_i(\bm{X}_{n,s})\Big]\geq \Omega(\alpha m \frac{\tau^2}{\Xi^2_2} \epsilon_{j^\star} \frac{\log \log d}{d}).
\end{equation}

Hence,
\begin{equation}
    \mathbb{E}[\Delta \ell'_p]= - \Omega(\frac{\alpha m}{\Xi^2_2} \epsilon_{j^\star} \frac{\log \log d}{d}) < 0.
\end{equation}

Hence,
\begin{equation}
\begin{aligned}
    \mathbb{E}[\ell'_{p,\bm{\theta}^{(t)}_{\mathrm{mask}}}]&= \mathbb{E}[\ell'_{p,\bm{\theta}^{(t)}}] + \mathbb{E}[\Delta \ell_p']\\ 
    &= \mathbb{E}[\ell'_{p,\bm{\theta}^{(t)}}] -\Omega(\frac{\alpha m}{\Xi^2_2}\epsilon_{j^\star} \frac{\log \log d}{d})\\
    &= 1-\Theta(\frac{1}{\tau}) -\Omega(\frac{\alpha m}{\Xi^2_2}\epsilon_{j^\star} \frac{\log \log d}{d}).
\end{aligned}
\end{equation}

Now, converting to the form of $1 - \ell'$:
\begin{equation}
    \mathbb{E}[1 - \ell'_{p,\bm{\theta}^{(t)}_{\mathrm{mask}}}] = 1 - \mathbb{E}[\ell'_{p,\bm{\theta}^{(t)}_{\mathrm{mask}}}].
\end{equation}

Substituting the previous expression gives
\begin{equation}
    \mathbb{E}[1 - \ell'_{p,\bm{\theta}^{(t)}_{\mathrm{mask}}}]= \Theta(\frac{1}{\tau}) + \Omega(\frac{\alpha m}{\Xi^2_2} \epsilon_{j^\star} \frac{\log \log d}{d}).
\end{equation}
\end{proof}

\subsection{Proof of Lemma~\ref{Lemma F.3}(a):}

\begin{proof}[Proof of Lemma~\ref{Lemma F.3}(a):]

\begin{equation}
\begin{aligned}
    &\mathbb{E} \left[ h_i(\bm{Y}_{n}) \langle \nabla_{\bm{w}_i} h_i (\bm{X}_{n}), \bm{M}_j \rangle \right] \\
    =& \mathbb{E} \left[ h_i(\bm{Y}_{n}) \langle \sum_{r=1}^{L} \bm{1}^{(r)}_{\left| \left\langle \bm{w}_i^{(t)}, \bm{z_X}^{(r)} \right\rangle \right| \geq 0} \cdot \bm{z_X}^{(r)}, \bm{M}_j \rangle \right]\\
    =& \mathbb{E} \left[ \sum_{s=1}^{L} \langle \bm{w}_i , \bm{z_Y}^{(s)} \rangle \cdot \left( \sum_{r=1}^{L} \langle \bm{z_X}^{(r)}, \bm{M}_j \rangle \right) \right]\\
    =& \frac{1}{L^2} \mathbb{E} \left[ \sum_{s=1}^{L} \langle \bm{w}_i , \bm{M} \tilde{z}_{n}^{+(s)} + \tilde{\xi}_{n}^{+(s)} \rangle \cdot \left( \sum_{r=1}^{L} \langle \bm{M} \tilde{z}_{n}^{(r)} + \tilde{\xi}_{n}^{(r)} , \bm{M}_j \rangle \right) \right] \\ 
    =& \frac{1}{L^2} \mathbb{E} \left[ \sum_{s=1}^{L} \langle \bm{w}_i , \bm{M} \tilde{z}_{n}^{+(s)} \rangle \cdot \left( \sum_{r=1}^{L} \langle \bm{M} \tilde{z}_{n}^{(r)}, \bm{M}_j \rangle \right) \right] \\
    =& \frac{1}{L^2} \mathbb{E} \left[ \langle \bm{w}_i , \bm{M} \hat{\bm{z}}_{n}^{+} \rangle \cdot \left( \langle M \hat{\bm{z}}_{n}, \bm{M}_j \rangle \right) \right]\\
    =& \frac{1}{L^2} \mathbb{E} \left[ \langle \bm{w}_i , \bm{M} \hat{\bm{z}}_{n}^{+}\rangle  \cdot \langle\sum_{j' \in [d]} \bm{M}_{j'} \hat{\bm{z}}_{n,j'}, \bm{M}_j \rangle\right]\\
    =& \frac{1}{L^2} \mathbb{E} \left[ \langle \bm{w}_i , \bm{M} \hat{\bm{z}}_{n}^{+} \rangle \cdot \sum_{j' \in [d]} \langle \bm{M}_{j'} , \bm{M}_j \rangle \hat{\bm{z}}_{n,j'} \right]\\
    =& \frac{1}{L^2} \mathbb{E} \left[ \langle \bm{w}_i , \bm{M} \hat{\bm{z}}_{n}^{+} \rangle \cdot \hat{\bm{z}}_{n,j} \right]\\
    =& \frac{1}{L^2} \mathbb{E} \left[ \sum_{j'' \in [d]} \langle \bm{w}_i, \bm{M}_{j''} \rangle \hat{\bm{z}}_{n,j''}^{+} \hat{\bm{z}}_{n,j} \right]\\
    =& \frac{1}{L^2} \sum_{j'' \in [d]} \langle \bm{w}_i, \bm{M}_{j''} \rangle \mathbb{E} \left[ \hat{\bm{z}}_{n,j''}^{+} \hat{\bm{z}}_{n,j} \right].
\end{aligned}
\end{equation}
In the final step, we have
\begin{equation}
    \frac{1}{L^2} \sum_{j'' \in [d]} \langle \bm{w}_i, \bm{M}_{j''} \rangle \mathbb{E} \left[ \hat{\bm{z}}_{n,j''}^{+} \hat{\bm{z}}_{n,j} \right] = \frac{1}{L^2} \langle \bm{w}_i, \bm{M}_{j} \rangle \mathbb{E} \left[ \hat{\bm{z}}_{n,j}^{+} \hat{\bm{z}}_{n,j} \right].
\end{equation}
This completes the proof.
\end{proof}

\subsection{Proof of Lemma~\ref{Lemma F.3}(b):}

\begin{proof}[Proof of Lemma~\ref{Lemma F.3}(b):]
\begin{equation}
\begin{aligned}
    &\mathbb{E} \left[ h_i(\bm{Y}_{n}) \langle \nabla_{\bm{w}_i} h_i (\bm{X}_{n}), \bm{M}_j^{\perp} \rangle \right] \\
    =& \mathbb{E} \left[ h_i(\bm{Y}_{n}) \langle \sum_{r=1}^{L} \bm{1}^{(r)}_{\left| \left\langle \bm{w}_i^{(t)}, \bm{z_X}^{(r)} \right\rangle \right| \geq 0} \cdot \bm{z_X}^{(r)}, \bm{M}_j^{\perp} \rangle \right]\\
    =& \mathbb{E} \left[ h_i(\bm{Y}_{n}) \cdot \left( \sum_{r=1}^{L} \langle \bm{1}^{(r)}_{\left| \left\langle \bm{w}_i^{(t)}, \bm{z_X}^{(r)} \right\rangle \right| \geq 0} \cdot \bm{z_X}^{(r)}, \bm{M}_j^{\perp} \rangle \right) \right]\\
    =& \mathbb{E} \left[ \sum_{s=1}^{L} \langle \bm{w}_i , \bm{z_Y}^{(s)} \rangle \bm{1}^{(s)}_{\left| \left\langle \bm{w}_i^{(t)}, z_X^{(s)} \right\rangle \right| \geq 0} \cdot \left( \sum_{r=1}^{L} \langle \bm{1}^{(r)}_{\left| \left\langle \bm{w}_i^{(t)}, \bm{z_X}^{(r)} \right\rangle \right| \geq 0} \cdot \bm{z_X}^{(r)}, \bm{M}_j^{\perp} \rangle \right) \right]\\
    =& \mathbb{E} \left[ \sum_{s=1}^{L} \langle \bm{w}_i , \bm{z_Y}^{(s)} \rangle \cdot \left( \sum_{r=1}^{L} \langle \bm{z_X}^{(r)}, \bm{M}_j^{\perp} \rangle \right) \right]\\
    =& \frac{1}{L^2} \mathbb{E} \left[ \sum_{s=1}^{L} \langle \bm{w}_i , \bm{M} \tilde{z}_{n}^{+(s)} + \tilde{\xi}_{n}^{+(s)} \rangle \cdot \left( \sum_{r=1}^{L} \langle \bm{M} \tilde{z}_{n}^{(r)} + \tilde{\xi}_{n}^{(r)} , \bm{M}_j^{\perp} \rangle \right) \right] \\ 
    =& \frac{1}{L^2} \mathbb{E} \left[ \sum_{s=1}^{L} \langle \bm{w}_i , \bm{M} \tilde{z}_{n}^{+(s)} \rangle \cdot \left( \sum_{r=1}^{L} \langle \bm{M} \tilde{z}_{n}^{(r)}, \bm{M}_j^{\perp} \rangle \right) \right] \\
    =& \frac{1}{L^2} \mathbb{E} \left[ \langle \bm{w}_i , \bm{M} \sum_{s=1}^{L} \tilde{z}_{n}^{+(s)} \rangle \cdot \left( \langle \bm{M} \sum_{r=1}^{L} \tilde{z}_{n}^{(r)}, \bm{M}_j^{\perp} \rangle \right) \right] \\
    =& \frac{1}{L^2} \mathbb{E} \left[ \langle \bm{w}_i , \bm{M} \hat{\bm{z}}_{n}^{+} \rangle \cdot \left( \langle \bm{M} \hat{\bm{z}}_{n}, \bm{M}_j^{\perp} \rangle \right) \right]\\
    =& \frac{1}{L^2} \mathbb{E} \left[ \langle \bm{w}_i , \bm{M} \hat{\bm{z}}_{n}^{+}\rangle  \cdot \langle\sum_{j' \in [d]} \bm{M}_{j'} \hat{\bm{z}}_{n,j'}, \bm{M}_j^{\perp} \rangle\right]\\
    =& \frac{1}{L^2} \mathbb{E} \left[ \langle \bm{w}_i , \bm{M} \hat{\bm{z}}_{n}^{+} \rangle \cdot \langle \sum_{j' \in [d]} \bm{M}_{j'} , \bm{M}_j^{\perp} \rangle \hat{\bm{z}}_{n,j'} \right]\\
    =& \frac{1}{L^2} \mathbb{E} \left[ \langle \bm{w}_i , \bm{M} \hat{\bm{z}}_{n}^{+} \rangle \cdot \sum_{j' \in [d]} \langle \bm{M}_{j'} , \bm{M}_j^{\perp} \rangle \hat{\bm{z}}_{n,j'} \right]\\
    =& \frac{1}{L^2} \mathbb{E} \left[ \langle \bm{w}_i , \bm{M} \hat{\bm{z}}_{n}^{+} \rangle \cdot 0 \right]\\
    =& 0.
\end{aligned}
\end{equation}
\end{proof}

\section{Proof of Lemmas in Appendix B}
\label{appendix:G}

\subsection{Proof of Lemma~\ref{Lemma B.2}(a):}

\begin{proof}[Proof of Lemma~\ref{Lemma B.2}(a):]

At initialization, the neuron weight \( \bm{w}_i^{(0)} \) is a high dimensional Gaussian vector :

\begin{equation}
    \bm{w}_i^{(0)} \sim \mathcal{N}(0, \sigma_0^2 I_{d_1}),
\end{equation}

with \( \bm{w}_i^{(0)} \in \mathbb{R}^{d_1} \) and each coordinate \( \bm{w}_i^{(0)}(k) \sim \mathcal{N}(0, \sigma_0^2) \), i.i.d.

\begin{equation}
    \left\| \bm{w}_i^{(0)} \right\|_2^2 = \sum_{k=1}^{d_1} \left( \bm{w}_i^{(0)}(k) \right)^2.
\end{equation}

We know that \( \bm{w}_i^{(0)}(k) \sim \mathcal{N}(0, \sigma_0^2) \), so:

\begin{equation}
\begin{aligned}
    \frac{1}{\sigma_0^2} \left\| \bm{w}_i^{(0)} \right\|_2^2 \sim \chi^2(d_1).
\end{aligned}
\end{equation}

According to the concentration inequality of the chi-square distribution:

If \( X \sim \chi^2(d_1) \), then for any \( 0 < \varepsilon < 1 \), we have:
\begin{equation}
    \Pr\left[\left| \frac{X}{d_1} - 1 \right| \geq \varepsilon \right] \leq 2 \exp\left( -\frac{d_1 \varepsilon^2}{4} \right).
\end{equation}

Therefore, we have:
\begin{equation}
    \Pr\left[ \left| \frac{\left\| \bm{w}_i^{(0)} \right\|_2^2}{\sigma_0^2 d_1} - 1 \right| \geq \varepsilon \right] \leq 2 \exp\left( -\frac{d_1 \varepsilon^2}{4} \right).
\end{equation}

Choose a suitable \( \varepsilon \) to derive the precision range and we choose: $\varepsilon = \widetilde{O}\left( \frac{1}{\sqrt{d_1}} \right)$.

At this time, the probability of deviation is:
\begin{equation}
   \Pr\left[ \left| \left\| \bm{w}_i^{(0)} \right\|_2^2 - \sigma_0^2 d_1 \right| \leq \widetilde{O}(\sigma_0^2 \sqrt{d_1}) \right] \geq 1 - \frac{1}{\operatorname{poly}(d)}. 
\end{equation}

That is:
\begin{equation}
    \left\| \bm{w}_i^{(0)} \right\|_2^2 \in \left[ \sigma_0^2 d_1 \left( 1 - \widetilde{O}\left( \frac{1}{\sqrt{d_1}} \right) \right), \, \sigma_0^2 d_1 \left( 1 + \widetilde{O}\left( \frac{1}{\sqrt{d_1}} \right) \right) \right].
\end{equation}
This holds with high probability ($1 - \frac{1}{\operatorname{poly}(d)}$).
\end{proof}

\subsection{Proof of Lemma~\ref{Lemma B.2}(b):}

\begin{proof}[Proof of Lemma~\ref{Lemma B.2}(b):]

Let:
\begin{equation}
    \bm{Z}_i := \frac{1}{\sigma_0} \bm{w}_i^{(0)} \sim \mathcal{N}(0, \bm{I}_{d_1}).
\end{equation}

Then we have:
\begin{equation}
    \left\| \bm{M M}^\top \bm{w}_i^{(0)} \right\|_2^2 = \sigma_0^2 \cdot \left\| \bm{M M}^\top \bm{Z}_i \right\|_2^2.
\end{equation}

We regard \(\bm{MM}^\top\) as a rank-\(d\) projection matrix, projecting \(\bm{Z}_i \in \mathbb{R}^{d_1}\) onto the column space of \(\bm{M}\) so we can use the following property:

If \(\bm{M M}^\top\) is a fixed rank-\(d\) projection matrix, and \(\bm{Z}_i \sim \mathcal{N}(0, \bm{I}_{d_1})\), then:
\begin{equation}
    \|\bm{M M}^\top \bm{Z}_i \|_2^2 = \bm{Z}_i^\top (\bm{M M}^\top)^\top \bm{M M}^\top \bm{Z}_i = \bm{Z}_i^\top \bm{M M}^\top \bm{Z}_i = \| \bm{M}^\top \bm{Z}_i \|_2^2,
\end{equation}

\begin{equation}
\begin{aligned}
    \bm{M}^\top \bm{Z}_i \sim \mathcal{N}(0, I_{d}).
\end{aligned}
\end{equation}

Therefore, we can conclude:
\begin{equation}
    \left\| \bm{MM}^\top \bm{Z}_i \right\|_2^2 \sim \chi^2(d)\quad \Longrightarrow \quad\mathbb{E} \left[ \left\| \bm{MM}^\top \bm{Z}_i \right\|_2^2 \right] = d.
\end{equation}

And it satisfies the following Chi-square concentration inequality:
\begin{equation}
    \mathbb{P} \left( \left| \left\| \bm{MM}^\top \bm{Z}_i \right\|_2^2 - d \right| \leq \varepsilon d \right)\geq 1 - 2 \exp\left( -c \varepsilon^2 d \right).
\end{equation}

Choose \( \varepsilon = \tilde{\mathcal{O}}(1 / \sqrt{d}) \), and the result holds with high probability. We substitute back \( \bm{w}_i^{(0)} \)
\begin{equation}
    \left\| \bm{M}\bm{M}^\top \bm{w}_i^{(0)} \right\|_2^2 = \sigma_0^2 \cdot \left\| \bm{M}\bm{M}^\top \bm{Z}_i \right\|_2^2\in 
    \left[\sigma_0^2 d \left( 1 -\widetilde{\mathcal{O}}\left( \frac{1}{\sqrt{d}} \right) \right),\sigma_0^2 d \left( 1 + \widetilde{\mathcal{O}}\left( \frac{1}{\sqrt{d}} \right) \right)\right].
\end{equation}
\end{proof}

\subsection{Proof of Lemma~\ref{Lemma B.2}(c):}

\begin{proof}[Proof of Lemma~\ref{Lemma B.2}(c):]
Recall if $g$ is standard Gaussian, then for every $t > 0$,
\begin{equation}
    \frac{1}{\sqrt{2\pi}} \cdot \frac{t}{t^2 + 1} e^{-t^2/2}< \Pr_{g \sim \mathcal{N}(0,1)} \left[ g > t \right]< \frac{1}{\sqrt{2\pi}} \cdot \frac{1}{t} e^{-t^2/2}.
\end{equation}

Therefore, for every $i \in [m]$ and $j \in [d]$,
\begin{equation}
\begin{aligned}
    p_1 &= \Pr\left[ \langle \bm{w}_i^{(0)}, \bm{M}_j \rangle^2 \geq \frac{c_1 \log d}{d} \|\bm{MM}^\top \bm{w}_i^{(0)} \|_2^2 \right]\\
    &= \Pr \left[ \frac{\langle \bm{w}_i^{(0)}, \bm{M}_j \rangle}{\sigma_0} \geq \sqrt{c_1 \log d} \right] \\
    &\geq \Omega\left( \frac{1}{d^{c_1/2}} \right) \\
    &= \Omega\left(  \frac{1}{d^{(\frac{\epsilon_{\max}}{\epsilon_{\min}})^2 \cdot (1 + \gamma) }} \right), 
\end{aligned}
\end{equation}
and 
\begin{equation}
\begin{aligned}
     p_2 &= \Pr\left[ \langle \bm{w}_i^{(0)}, \bm{M}_j \rangle^2 \geq \frac{c_2 \log d}{d} \|\bm{MM}^\top \bm{w}_i^{(0)} \|_2^2 \right]\\
     &= \Pr \left[ \frac{\langle \bm{w}_i^{(0)}, \bm{M}_j \rangle}{\sigma_0} \geq \sqrt{c_2 \log d} \right] \\
     &\le  O\left( \frac{1}{\sqrt{\log d}} \right) \cdot \frac{1}{d^{c_2/2}} \\
     &= O\left( \frac{1}{\sqrt{\log d}} \right) \cdot \frac{1}{d^{(\frac{\epsilon_{\min}}{\epsilon_{\max}})^2 \cdot (1 - \gamma)} }.
\end{aligned}
\end{equation}

We define the following events in definition~\ref{Definition B.1}:
\begin{itemize}
  \item $A_i$: Lucky neuron $i$ satisfies conditions 1(i.e., the response is large enough and in the correct direction)
  \item $B_i$: for all $j' \ne j$, lucky neuron $i$ satisfies condition 2 (i.e., small responses in other directions)
\end{itemize}
We now compute the probability of the intersection event $A_i \cap B_i$:
\begin{equation}
\begin{aligned}
    \Pr[A_i] &= \frac{p_1}{2} =\Omega\left( d^{ -\left( \frac{\epsilon_{\max}}{\epsilon_{\min}} \right)^2 \cdot (1 + \gamma)} \right),\\
    \Pr[B_i] &= (1 - p_2)^{d - 1} = e^{-(d-1)p_2} = e^{-(d-1)d^{-a}} = e^{-d^{1-a}} = 1, \\
    \Pr[A_i \cap B_i] &= \frac{p_1}{2} \cdot (1 - p_2)^{d - 1} = \Omega\left( \frac{1}{\sqrt{\log d}} \cdot d^{ -\left( \frac{\epsilon_{\max}}{\epsilon_{\min}} \right)^2 \cdot (1 + \gamma)}  \right).
\end{aligned}
\end{equation}

(1) We now have $m = d^{C_m}$ neurons. Therefore, the expected number is:
\begin{equation}
\begin{aligned}
    \mathbb{E}\left[|\mathcal{M}_j^\star|\right] &= m \cdot \Pr[A_i \cap B_i] = d^{C_m} \cdot \Omega\left(d^{ -\left( \frac{\epsilon_{\max}}{\epsilon_{\min}} \right)^2 \cdot (1 + \gamma)} \right) \\ 
    &= \Omega\left( d^{C_m -\left( \frac{\epsilon_{\max}}{\epsilon_{\min}} \right)^2 \cdot (1 + \gamma) } \right).
\end{aligned}
\end{equation}

Chernoff bound (Lower-tail form): For any \( \delta \in (0, 1) \), we have:
\begin{equation}
    \Pr\left[\sum X_i < (1 - \delta)\mu \right] \leq e^{-\frac{\delta^2}{2} \mu}.
\end{equation}

Let \( \delta = \frac{1}{2} \), we obtain:
\begin{equation}
\begin{aligned}
    \Pr\left[\sum X_i < \frac{1}{2} \mu \right] &\leq e^{-\mu / 8}\\
    \Pr\left[|\mathcal{M}_j^\star| < O\left(d^{\omega_1}\right) \right] &\leq e^{-\Omega\left(d^{\omega_1}\right)}\\
    \Pr\left[|\mathcal{M}_j^\star| > \Omega\left( d^{\omega_1}\right) \right] &\geq 1 - e^{-\Omega\left(d^{\omega_1}\right)}.    
\end{aligned}
\end{equation}

(2) We now have $m = d^{C_m}$ neurons. Therefore, the expected number is:
\begin{equation}
\begin{aligned}
    \mathbb{E}\left[\left|\mathcal{M}_j\right|\right] &= m \cdot p_2 = d^{C_m} \cdot \mathcal{O}\left( \frac{1}{\sqrt{\log d}} \cdot d^{ -\left( \frac{\epsilon_{\min}}{\epsilon_{\max}} \right)^2 \cdot (1 - \gamma)} \right) \\
    &= \mathcal{O}\left( \frac{1}{\sqrt{\log d}} \cdot d^{C_m -\left( \frac{\epsilon_{\min}}{\epsilon_{\max}} \right)^2 \cdot (1 - \gamma) } \right).    
\end{aligned}
\end{equation}

Chernoff bound (upper tail) tells us that for any \( 0 < \delta < 1 \), we have:
\begin{equation}
\begin{aligned}
    \Pr\left[\sum X_i > (1 + \delta)\mu \right] &\leq e^{-\Omega (\delta^2 \mu)}\\
    \Pr\left[\left|\mathcal{M}_j\right| > \Omega (\frac{1}{\sqrt{\log d}} d^{\omega_2}) \right] &\leq e^{-\Omega (\frac{1}{\sqrt{\log d}} d^{\omega_2})} = o\left( \frac{1}{d^4} \right)\\
    \Pr\left[\left|\mathcal{M}_j\right| < O \left( \frac{1}{\sqrt{\log d}}d^{\omega_2} \right) \right] &\geq 1 -  o\left( \frac{1}{d^4} \right)\\
    \Pr\left[\left|\mathcal{M}_j\right| < O \left( d^{\omega_2} \right) \right] &\geq 1 -  o\left( \frac{1}{d^4} \right).
\end{aligned}
\end{equation}
\end{proof}

\subsection{Proof of Lemma~\ref{Lemma B.2}(d):}

\begin{proof}[Proof of Lemma~\ref{Lemma B.2}(d):]
We know: $|\mathcal{M}_j| \leq O(d^{\omega_2})$. There are \( d \) indices \( j \in [d] \). Therefore, the total number of pairs \( (i, j) \) such that \( i \in \mathcal{M}_j \) is at most:
\begin{equation}
    \sum_{j=1}^{d} |\mathcal{M}_j| \leq d \cdot O(d^{\omega_2}) = O(d^{1 + \omega_2}).
\end{equation}

On the other hand, the total number of neurons is \( m = d^{C_m} \). So for any fixed \( i \), we define:
\begin{equation}
     \mathcal{N}_i := \{ j \in [d] : i \in \mathcal{M}_j \}.
\end{equation}

Then,
\begin{equation}
    \sum_{i=1}^{m} |N_i| = \sum_{j=1}^{d} |\mathcal{M}_j| \leq O(d^{1 + \omega_2}).
\end{equation}

Therefore,
\begin{equation}
    \mathbb{E}[|N_i|] = \frac{1}{m} \sum_{i=1}^{m} |N_i| \leq O\!\left(d^{1 + \omega_2 - C_m}\right) =  O\left(d^{1 -\left( \frac{\epsilon_{\min}}{\epsilon_{\max}} \right)^2 \cdot (1 - \gamma)}\right).
\end{equation}

Then:
\begin{equation}
\begin{aligned}
    \Pr\left[ \left| \langle \bm{w}_i^{(0)}, \bm{M}_j \rangle \right| \geq \Omega(\sigma_0 \log^{1/4} d) \right] \leq 2 \exp\left( -\frac{t^2}{2\sigma_0^2} \right) = 2^{-\Omega(\sqrt{\log d})}.
\end{aligned}
\end{equation}

Fix \( i \in [m] \), and consider \( d \) different \( j \). Each has probability \( 2^{-\Omega(\sqrt{\log d})} \) to exceed the threshold. Therefore, the expectation is:
\begin{equation}
    \mathbb{E}\left[\left|\left\{ j \in [d] \;\middle|\; \left| \langle \bm{w}_i^{(0)}, \bm{M}_j \rangle \right| \geq \Omega\left( \sigma_0 \log^{1/4} d \right) \right\}\right|\right] = O\left(2^{-\sqrt{\log d}} \cdot d \right).
\end{equation}
\end{proof}

\section{Proof of Lemmas in Appendix C}
\label{appendix:H}

\subsection{Useful Lemmas}

\begin{lemma}[Logits near initialization]
\label{Lemma H.1}
Let $\bm{w}_i \in \mathbb{R}^{d_1}$ for each $i \in [m]$. Suppose
\begin{equation}
    \sum_{i \in [m]} \left\| \bm{w}_i^{(t)} \right\|_2^2 \leq o\!\left(\tfrac{\tau}{d}\right).
\end{equation}
Then, with high probability over the randomness of $\bm{X}_{n}, \bm{Y}_{n},$ and $\mathfrak{N}$, it holds that
\begin{equation}
    \left| \ell'_{p,t}(\bm{X}_{n}, \mathfrak{B}) - \tfrac{1}{|\mathfrak{B}|} \right|\cdot\left| \ell'_{s,t}(\bm{X}_{n}, \mathfrak{B}) - \tfrac{1}{|\mathfrak{B}|} \right|\leq \widetilde{\mathcal{O}} \!\left( \frac{\sum_{i \in [m]} \left\| \bm{w}_i^{(t)} \right\|_2^2}{\tau |\mathfrak{B}|} \right).
\end{equation}
\end{lemma}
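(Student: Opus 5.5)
The plan is to linearize the softmax around the uniform point. Write $u_p=\mathrm{Sim}(\bm{X}_n,\bm{Y}_n)$ and $u_s=\mathrm{Sim}(\bm{X}_n,\bm{X}_{n,s})$, so that
\[
\ell'_{p,t}=\frac{e^{u_p/\tau}}{\sum_{\bm{X}\in\mathfrak{B}}e^{u_{\bm{X}}/\tau}}.
\]
If every similarity satisfies $|u_{\bm{X}}|\le U$ with $U/\tau\le o(1)$, then each exponential equals $1\pm O(U/\tau)$. Hence $\ell'_{p,t}=\frac{1}{|\mathfrak{B}|}\big(1\pm O(U/\tau)\big)$, and the same holds for every $\ell'_{s,t}$. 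Each deviation $|\ell'-1/|\mathfrak{B}||$ is therefore at most $O\big(U/(\tau|\mathfrak{B}|)\big)$. The product of two such deviations is $O\big(U^2/(\tau^2|\mathfrak{B}|^2)\big)$, which is dominated by $U/(\tau|\mathfrak{B}|)$ once $U/\tau\le 1$. So it suffices to show $U\le\widetilde{O}\big(\sum_i\|\bm{w}_i^{(t)}\|_2^2\big)$ with high probability. The hypothesis $\sum_i\|\bm{w}_i^{(t)}\|_2^2\le o(\tau/d)$ then guarantees $U/\tau=o(1)$, which justifies the linearization.

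Step two bounds the similarities. By Cauchy--Schwarz,
\[
|\mathrm{Sim}(\bm{X}_n,\bm{X})|=\Big|\sum_i h_i(\bm{X}_n)h_i(\bm{X})\Big|\le\sum_i |h_i(\bm{X}_n)||h_i(\bm{X})|,
\]
and each $|h_i(\cdot)|\le\sum_r|\langle\bm{w}_i,\bm{z}^{(r)}\rangle|$ because $|\mathrm{BReLU}_b(s)|\le|s|$. At this stage the attention matrices are still close to the identity, so the transformer output tokens are averages of the input tokens, $\bm{z}^{(r)}=\frac1L(\bm{M}\tilde{\bm{z}}^{(r)}+\tilde{\bm\xi}^{(r)})$. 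I will apply Lemma~\ref{Lemma E.7}(b),(c) with a union bound over the $\mathrm{poly}(d)$ samples, neurons and tokens. This gives $|\langle\bm{w}_i,\bm{z}^{(r)}\rangle|\le\widetilde{O}(\|\bm{w}_i\|_2/\sqrt d)$ for the noise part and $\widetilde{O}(\|\bm{w}_i\|_2/\sqrt{d_1})$ for the sparse part, using $\|\bm{M}_j\|_\infty\le\widetilde O(1/\sqrt{d_1})$. Combining the two factors yields
\[
|u_{\bm{X}}|\le\widetilde{O}\Big(\sum_i\|\bm{w}_i\|_2^2\Big)
\]
up to a factor of at most $\widetilde O(1)$, and in fact with an extra $1/d$ factor.

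The main obstacle is making the high-probability bound uniform: the $\log^4 d$ tails of Lemma~\ref{Lemma E.7} must survive the union bound over all of $\mathfrak{B}$, all $K$ samples and $L=O(1)$ tokens. They do, because the failure probability $e^{-\Omega(\log^2 d)}$ beats any polynomial count. A secondary point is confirming that the attention layer is still the identity, or at least norm-nonexpansive, in this regime; if not, I would bound the attention output as a convex combination of value vectors, which preserves the same tail bounds. The remaining work is routine bookkeeping in $\widetilde O$.
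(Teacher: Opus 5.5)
Your proposal is correct and takes essentially the same route as the paper. The paper also linearizes the softmax via $|1-e^{a}|\le O(|a|)$ for small $a$ and bounds the similarity differences $\langle \bm{f}_t(\bm{X}_n), \bm{f}_t(\bm{X})-\bm{f}_t(\bm{X}_{n,s})\rangle$ by $\widetilde{O}\bigl(\sum_{i}\|\bm{w}_i^{(t)}\|_2^2\bigr)$, so each single logit deviation is $\widetilde{O}\bigl(\sum_i\|\bm{w}_i^{(t)}\|_2^2/(\tau|\mathfrak{B}|)\bigr)$.

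One small caution: drop the claimed extra $1/d$ factor. It rests on Lemma~\ref{Lemma E.7}(b), whose $1/\sqrt{d_1}$ bound fails for feature-aligned $\bm{w}_i$. You do not need it: $|\langle\bm{w}_i,\bm{M}\tilde{\bm{z}}^{(r)}\rangle|\le\|\bm{w}_i\|_2\|\tilde{\bm{z}}^{(r)}\|_2=\widetilde{O}(\|\bm{w}_i\|_2)$ holds w.h.p.\ by sparsity, and that already gives the bound you need.
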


\subsection{Proof of Lemma~\ref{Lemma C.1}:}

\begin{proof}[Proof of Lemma~\ref{Lemma C.1}:]
First, we must determine the precise gradient expression for each feature $\bm{M}_j$ and $\bm{M}_j^\perp$. The gradient descent update for the projection of $\bm{w}_i^{(t)}$ onto $\bm{M}_j$ can be written as
\begin{equation}
\begin{aligned}
    &\langle \bm{w}_i^{(t+1)}, \bm{M}_j \rangle \\
    =& \langle \bm{w}_i^{(t)}, \bm{M}_j \rangle - \eta \langle \nabla_{\bm{w}_i} L_{\mathrm{aug}}(f_t), \bm{M}_j \rangle \;\; \pm \frac{\|\bm{w}_i^{(t)}\|_2}{\operatorname{poly}(d_1)} \\
    \overset{\circled{1}}=& (1 - \eta\lambda)\langle \bm{w}_i^{(t)}, \bm{M}_j \rangle + \eta \,\mathbb{E}_{\bm{X}_{n}, \bm{Y}_{n}} \!\left[ (1 - \ell'_{p,t}(\bm{X}_{n}, \mathfrak{B})) \cdot h_{i,t}(\bm{Y}_{n}) \,\langle \nabla_{\bm{w}_i} h_i(\bm{X}_{n}), \bm{M}_j \rangle \right] \\
    -& \eta \sum_{\bm{X}_{n,s} \in \mathfrak{N}} \mathbb{E} \!\left[ \ell'_{s,t}(\bm{X}_{n}, \mathfrak{B}) \, h_{i,t}(\bm{X}_{n,s}) \,\langle \nabla_{\bm{w}_i} h(\bm{X}_{n}), \bm{M}_j \rangle \right] \;\; \pm \frac{\|\bm{w}_i^{(t)}\|_2}{\operatorname{poly}(d_1)}.
\end{aligned}
\end{equation}
$\circled{1}$ is because $\lambda$ is the coefficient of the regularization term, as well as the gradient formula obtained earlier. $\frac{\|\bm{w}_i^{(t)}\|_2}{\operatorname{poly}(d_1)}$ is due to the approximation between population gradients and empirical gradients.

For the positive term: we can use Lemma~\ref{Lemma F.3} and Lemma~\ref{Lemma H.1} to obtain that:
\begin{equation}
    \mathbb{E} \!\left[ (1 - \ell'_{p,t}(\bm{X}_{n}, \mathfrak{B})) \cdot h_{i,t}(\bm{Y}_{n}) \,\langle \nabla_{\bm{w}_i} h_{i,t}(\bm{X}_{n}), \bm{M}_j \rangle \right] = \frac{1}{L^2} \,\langle \bm{w}_i^{(t)}, \bm{M}_{j} \rangle \,\mathbb{E} \!\left[ \hat{z}_{n,j}^{+} \hat{z}_{n,j} \right].
\end{equation}

For the negative term: Here, the bound needs to be verified because Lemma~\ref{Lemma H.1}.
\begin{equation}
\begin{aligned}
   &\mathbb{E} \!\left[ \sum_{X \in \mathfrak{N}} \ell'_{s,t} \, h_{i,t}(X) 
   \langle \nabla_{\bm{w}_i} h(\bm{X}_{n}), \bm{M}_j \rangle \right] \\
   =& \sum_{X \in \mathfrak{N}} 
   \mathbb{E} \!\left[ \left(\ell'_{s,t} - \tfrac{1}{|\mathfrak{B}|} \right) 
   h_{i,t}(X) \langle \nabla_{\bm{w}_i} h(\bm{X}_{n}), \bm{M}_j \rangle \right] \\
   \overset{\circled{1}}{\leq}& \sum_{X \in \mathfrak{N}} 
   \mathbb{E} \!\left[ \left| \ell'_{s,t} - \tfrac{1}{|\mathfrak{B}|} \right| 
   \cdot |h_{i,t}(X)| \cdot 
   \big| \langle \nabla_{\bm{w}_i} h(\bm{X}_{n}), \bm{M}_j \rangle \big| \right] \\
   \overset{\circled{2}}{\leq}& \;\widetilde{O} \!\left( 
   \frac{\sum_{i \in [m]} \|\bm{w}_i^{(t)}\|_2^2}{\tau d} 
   \cdot \|\bm{w}_i^{(t)}\|_2 \right).
\end{aligned}
\end{equation}
$\circled{1}$ is because The product of each term is less than the product of their absolute values. $\circled{2}$ is applied Lemma~\ref{Lemma H.1} to $\lvert \left(\ell'_{s,t} - \frac{1}{|\mathfrak{B}|} \right) \rvert$, Lemma~\ref{Lemma E.7} to $h_{i,t}(X)$ and $|\langle \nabla_{\bm{w}_i} h_{i,t}(\bm{X}_{n}), \bm{M}_j \rangle| = |\langle \bm{M}_j, \sum_{r=1}^{L} \bm{z_X}^{(r)} \rangle|$.

Putting all the above calculations together, we have
\begin{equation}
\begin{aligned}
   \langle \bm{w}_i^{(t+1)}, \bm{M}_j \rangle 
   &= \left( 1 - \eta \lambda + \epsilon_{j} \,\frac{\eta C_z \log \log d}{d} \right) 
   \langle \bm{w}_i^{(t)}, \bm{M}_j \rangle \\
   &\pm \;\widetilde{O}\!\left( 
   \frac{\eta \sum_{i \in [m]} \|\bm{w}_i^{(t)}\|_2^2}{\tau d} 
   \cdot \|\bm{w}_i^{(t)}\|_2 \right) 
   \;\pm \;\widetilde{O}\!\left( 
   \frac{\eta \|\bm{w}_i^{(t)}\|_2}{\operatorname{poly}(d_1)} \right).
\end{aligned}
\end{equation}
Prior to the induction step, we establish, by a similar method, the stochastic gradient descent update of $\bm{w}_i$ along the dense feature direction $\bm{M}_j^\perp$. Specifically, we obtain the following update equation:
\begin{equation}
\begin{aligned}
   &\langle \bm{w}_i^{(t+1)}, \bm{M}_j^\perp \rangle \\
   =& \langle \bm{w}_i^{(t)}, \bm{M}_j^\perp \rangle 
   - \eta \langle \nabla_{\bm{w}_i} L_{\mathrm{aug}}(f_t), \bm{M}_j^\perp \rangle \\
   =& (1 - \eta \lambda) \langle \bm{w}_i^{(t)}, \bm{M}_j^\perp \rangle 
   + \eta \,\mathbb{E} \!\left[ (1 - \ell'_{p,t}) \, h_{i,t}(\bm{Y}_{n}) 
   \langle \nabla_{\bm{w}_i} h_{i,t}(\bm{X}_{n}), \bm{M}_j^\perp \rangle \right] \\
   -& \eta \sum_{\bm{X}_{n,s} \in \mathfrak{N}} 
   \mathbb{E} \!\left[ \ell'_{s,t} \, h_{i,t}(\bm{X}_{n,s}) 
   \langle \nabla_{\bm{w}_i} h(\bm{X}_{n}), \bm{M}_j^\perp \rangle \right] 
   + \frac{\eta \|\bm{w}_i^{(t)}\|_2}{\operatorname{poly}(d_1)} \\
   =& (1 - \eta \lambda) \langle \bm{w}_i^{(t)}, \bm{M}_j^\perp \rangle 
   \;\pm \;\widetilde{O}\!\left( 
   \frac{\eta \sum_{i \in [m]} \|\bm{w}_i^{(t)}\|_2^2}{\tau d} 
   \cdot \|\bm{w}_i^{(t)}\|_2 \right) 
   \;\pm \;\widetilde{O}\!\left( 
   \eta \frac{\|\bm{w}_i^{(t)}\|_2}{\operatorname{poly}(d_1)} \right).
\end{aligned}
\end{equation}

Then we can begin to perform our induction: at \( t = 0 \), our properties holds trivially. Now suppose before iteration \( t = t_1 \), the claimed properties holds, then we can easily obtain that for all \( t \leq t_1 \):
\begin{equation}
    \|\bm{w}_i^{(t)}\|_2^2 = \|\bm{M} \bm{M}^\top \bm{w}_i^{(t)}\|_2^2 + \|\bm{M}^\perp (\bm{M}^\perp)^\top \bm{w}_i^{(t)}\|_2^2 \overset{\text{(1)}} \leq O(1) \|\bm{w}_i^{(0)}\|_2^2 \overset{\text{(2)}} \leq \frac{1}{\operatorname{poly}(d_1)}.
\end{equation}
(1) is Using Lemma~\ref{Lemma C.1} along with the Lemma~\ref{Lemma B.2}. 
\begin{equation}
\label{eq:Proof of Lemma C.1 (1)}
\begin{aligned}
    \|\bm{w}_i^{(t)}\|_2^2 &= \|\bm{M} \bm{M}^\top \bm{w}_i^{(t)}\|_2^2 + \|\bm{M}^\perp (\bm{M}^\perp)^\top \bm{w}_i^{(t)}\|_2^2 \\
    &\leq \left\| \bm{M} \bm{M}^\top \bm{w}_i^{(0)} \right\|_2^2 \left(1 + \epsilon_{\max} \tfrac{\eta C_z \log \log d}{d} \right)^{2t} + \left\| \bm{M}^\perp (\bm{M}^\perp)^\top \bm{w}_i^{(0)} \right\|_2^2 \\
    &= \left\| \bm{M} \bm{M}^\top \bm{w}_i^{(0)} \right\|_2^2 \cdot \Theta(\frac{d_1}{d}) + \left\| \bm{M}^\perp (\bm{M}^\perp)^\top \bm{w}_i^{(0)} \right\|_2^2 \\
    &= \frac{d}{d_1} \|\bm{w}_i^{(0)}\|_2^2 \cdot \Theta(\frac{d_1}{d}) + \|\bm{w}_i^{(0)}\|_2^2 \\
    &= \left( 1 +  \epsilon_{\max} C_z \log d \right) \|\bm{w}_i^{(0)}\|_2^2\\
    &\leq O(1) \|\bm{w}_i^{(0)}\|_2^2.
\end{aligned} 
\end{equation}

(2) is because:
\begin{equation}
\begin{aligned}
    \|\bm{w}_i^{(0)}\|_2^2 &\leq \sigma_0^2 d_1 \left( 1 + \widetilde{O} \left( \frac{1}{\sqrt{d_1}} \right) \right)\\
    &\leq \Theta \left( \frac{1}{\operatorname{poly}(d_1)} \right) d_1 \left( 1 + \widetilde{O} \left( \frac{1}{\sqrt{d_1}} \right) \right)\\
    &= \frac{1}{\operatorname{poly}(d_1)}.
\end{aligned} 
\end{equation}

Thus we have $\sum_{i \in [m]} \|\bm{w}_i^{(t)}\|_2^2 \leq \frac{1}{\operatorname{poly}(d_1)}$. We now begin to verify all the properties for \( t = t_1 + 1 \), until \( t_1 \) reaches \( T_1 \).

We first derive an upper bound for 
\begin{equation}
    \left\| \bm{M} \bm{M}^\top \bm{w}_i^{(t+1)} \right\|_2^2,
\end{equation}

at iterations \( t \leq t_1 \). For each \( j \in [d] \), as long as 
\begin{equation}
    \langle \bm{w}_i^{(t)}, \bm{M}_j \rangle \geq \Omega\left( \frac{ \left\| \bm{w}_i^{(t)} \right\|_2 }{d \sqrt{d_1}} \right),
\end{equation}
then
\begin{equation}
\begin{aligned}
    |\langle \bm{w}_i^{(t+1)}, \bm{M}_j \rangle| &= \left(1 - \eta\lambda + \epsilon_{j} \frac{\eta C_z \log \log d}{d} \right) \langle \bm{w}_i^{(t)}, \bm{M}_j \rangle \\
    &\pm \widetilde{O} \left( \eta\frac{\sum_{i \in [m]} \|\bm{w}_i^{(t)}\|_2^2}{\tau d} \cdot \|\bm{w}_i^{(t)}\|_2 \right) \pm \widetilde{O} \left( \eta\frac{\|\bm{w}_i^{(t)}\|_2}{\operatorname{poly}(d_1)} \right)\\
    &\leq \left( 1 - \eta \lambda + \epsilon_{j} \frac{\eta C_z \log \log d}{d} \right) \langle \bm{w}_i^{(t)}, \bm{M}_j \rangle + \widetilde{O} \left( \frac{\eta \|\bm{w}_i^{(t)}\|_2}{\operatorname{poly}(d_1)} \right)\\
    &= \left( 1 + \epsilon_{j} \frac{\eta C_z \log \log d}{d} \right) \langle \bm{w}_i^{(t)}, \bm{M}_j \rangle + \widetilde{O} \left( \frac{\eta \|\bm{w}_i^{(t)}\|_2}{\operatorname{poly}(d_1)} \right) - \eta \lambda \langle \bm{w}_i^{(t)}, \bm{M}_j \rangle\\
    &\leq \left( 1 + \epsilon_{j} \frac{\eta C_z \log \log d}{d} \right) \langle \bm{w}_i^{(t)}, \bm{M}_j \rangle + \widetilde{O} \left( \frac{\eta \|\bm{w}_i^{(t)}\|_2}{d_1} \right)\\
    &= \left( 1 + \epsilon_{j} \frac{\eta C_z \log \log d}{d} \right) \langle \bm{w}_i^{(t)}, \bm{M}_j \rangle + \widetilde{O} \left( \frac{\eta \|\bm{w}_i^{(t)}\|_2}{\sqrt{d_1}\sqrt{d_1}} \right)\\ 
    &= \left( 1 + \epsilon_{j} \frac{\eta C_z \log \log d}{d} \right) \langle \bm{w}_i^{(t)}, \bm{M}_j \rangle + \widetilde{O} \left( \frac{\eta \|\bm{w}_i^{(t)}\|_2}{\sqrt{d^6}\sqrt{d_1}} \right)\\ 
    &\leq \left( 1 + \epsilon_{j} \frac{\eta C_z \log \log d}{d} \right) \langle \bm{w}_i^{(t)}, \bm{M}_j \rangle + \widetilde{O} \left( \frac{\eta}{d^2} \right) \Omega\left( \frac{\|\bm{w}_i^{(t)}\|_2}{d\sqrt{d_1}} \right)\\   
    &\leq \left( 1 + \epsilon_{j} \frac{\eta C_z \log \log d}{d} \right) \langle \bm{w}_i^{(t)}, \bm{M}_j \rangle + \widetilde{O} \left( \frac{\eta}{d^2} \right) \langle \bm{w}_i^{(t)}, \bm{M}_j \rangle\\
    &\leq \left( 1 + \epsilon_{j} \frac{\eta C_z \log \log d}{d} \right) |\langle \bm{w}_i^{(t)}, \bm{M}_j \rangle| + \widetilde{O} \left( \frac{\eta}{d^2} \right) |\langle \bm{w}_i^{(t)}, \bm{M}_j \rangle|\\
    &= \left( 1 + \epsilon_{j} \frac{\eta C_z \log \log d}{d} + \widetilde{O} \left( \frac{\eta}{d^2} \right) \right) |\langle \bm{w}_i^{(t)}, \bm{M}_j \rangle|.      
\end{aligned}
\end{equation}

Define set of features: 
\begin{equation}
    \mathcal{E}^{(t)} := \left\{ j \in [d] : \langle \bm{w}_i^{(t)}, \bm{M}_j \rangle < \widetilde{\mathcal{O}}\left( \frac{ \left\| \bm{w}_i^{(t)} \right\|_2 }{ d \sqrt{d_1} } \right) \right\},
\end{equation}
note that \( \mathcal{E}^{(t+1)} \subseteq \mathcal{E}^{(t)} \subseteq \Lambda_i \)
(where the set \( \Lambda_i \) is defined in Lemma ~\ref{Lemma D.2}) in the sense that if \( j \notin \mathcal{E}^{(t)} \), then
\begin{equation}
    \langle \bm{w}_i^{(t)}, \bm{M}_j \rangle \geq \widetilde{\mathcal{O}}\left( \frac{ \left\| \bm{w}_i^{(t)} \right\|_2 }{ d \sqrt{d_1} } \right) \Rightarrow \epsilon_{j} \frac{C_z \log \log d}{d} \left| \langle \bm{w}_i^{(t)}, \bm{M}_j \rangle \right| \geq \widetilde{\mathcal{O}}\left( \frac{ \left\| \bm{w}_i^{(t)} \right\|_2 }{ d_1 } \right),
\end{equation}
in the above calculations. Therefore:

\subsection{Proof of Lemma~\ref{Lemma C.1}(a):}

\begin{proof}[Proof of Lemma~\ref{Lemma C.1}(a):]
\begin{equation}
\begin{aligned}
    \|\bm{M} \bm{M}^{\top} \bm{w}_i^{(t+1)}\|_2^2 &= \sum_{j \in [d]} \left[ \left( 1 - \eta \lambda + \epsilon_{j} \frac{\eta C_z \log \log d}{d} \right) \langle \bm{w}_i^{(t)}, \bm{M}_j \rangle \pm \widetilde{O} \left( \frac{\eta \|\bm{w}_i^{(t)}\|_2}{\operatorname{poly}(d_1)} \right) \right]^2\\
    &\overset{\circled{1}}\leq \sum_{j \in [d]} \langle \bm{w}_i^{(0)}, \bm{M}_j \rangle^2 \left( 1 + \epsilon_{j} \frac{\eta C_z \log \log d}{d} + \widetilde{O} \left( \frac{\eta}{d^2} \right) \right)^{2t+2}\\
    &+ \sum_{j \in [d]: j \in \mathcal{E}^{(0)}} \widetilde{O} \left( \frac{(t+1)^2 \eta^2 \max_{t \leq t_1} \|\bm{w}_i^{(t)}\|_2^2}{\operatorname{poly}(d_1)} \right)\\
    &\overset{\circled{2}}\leq \|\bm{M} \bm{M}^{\top} \bm{w}_i^{(0)}\|_2^2 \left( 1 + \epsilon_{\max} \frac{\eta C_z \log \log d}{d} + \widetilde{O} \left( \frac{\eta}{d^2} \right) \right)^{2t+2}\\
    &+ O(1/d) \|\bm{M} \bm{M}^{\top} \bm{w}_i^{(0)}\|_2^2.      
\end{aligned}
\end{equation}

$\circled{1}$ is because:
\begin{equation}
\begin{aligned}
    &\|\bm{M} \bm{M}^{\top} \bm{w}_i^{(t+1)}\|_2^2 \\
    =& \sum_{j \in [d]} \langle \bm{w}_i^{(t+1)}, \bm{M}_j \rangle ^2 \\
    =&\sum_{j \in [d]} \left[ \left( 1 - \eta \lambda + \epsilon_{j} \frac{\eta C_z \log \log d}{d} \right) \langle \bm{w}_i^{(t)}, \bm{M}_j \rangle \pm \widetilde{O} \left( \frac{\eta \|\bm{w}_i^{(t)}\|_2}{\operatorname{poly}(d_1)} \right) \right]^2\\
    =& \sum_{j \in [d]: j \notin \mathcal{E}^{(t)}} \left[ \left( 1 - \eta \lambda + \epsilon_{j} \frac{\eta C_z \log \log d}{d} \right) \langle \bm{w}_i^{(t)}, \bm{M}_j \rangle \pm \widetilde{O} \left( \frac{\eta \|\bm{w}_i^{(t)}\|_2}{\operatorname{poly}(d_1)} \right) \right]^2\\
    +& \sum_{j \in [d]: j \in \mathcal{E}^{(t)}} \left[ \left( 1 - \eta \lambda + \epsilon_{j} \frac{\eta C_z \log \log d}{d} \right) \langle \bm{w}_i^{(t)}, \bm{M}_j \rangle \pm \widetilde{O} \left( \frac{\eta \|\bm{w}_i^{(t)}\|_2}{\operatorname{poly}(d_1)} \right) \right]^2\\
    \leq& \sum_{j \in [d]: j \notin \mathcal{E}^{(t)}} \left[ \left( 1 + \epsilon_{j} \frac{\eta C_z \log \log d}{d} + \widetilde{\mathcal{O}}\left(\frac{\eta}{d^2}\right) \right)  \langle \bm{w}_i^{(t)}, \bm{M}_j \rangle \right]^2\\
    +& \sum_{j \in [d]: j \in \mathcal{E}^{(t)}} \left[ \langle \bm{w}_i^{(t)}, \bm{M}_j \rangle + \widetilde{O} \left( \frac{\eta \|\bm{w}_i^{(t)}\|_2}{\operatorname{poly}(d_1)} \right) \right]^2\\
    =& \sum_{j \in [d]: j \notin \mathcal{E}^{(0)}} \langle \bm{w}_i^{(0)}, \bm{M}_j \rangle^2 \left( 1 + \epsilon_{j}\frac{\eta C_z \log \log d}{d} + \widetilde{O} \left( \frac{\eta}{d^2} \right) \right)^{2(t+1)}\\
    +& \sum_{j \in [d]: j \in \mathcal{E}^{(0)}} \widetilde{O} \left( \frac{(t+1)^2 \eta^2 \max_{t \leq t_1} \|\bm{w}_i^{(t)}\|_2^2}{\operatorname{poly}(d_1)} \right).
\end{aligned}
\end{equation}

$\circled{2}$ holds because for all $t \leq t_1 \leq T_1 = \Theta\!\left( \tfrac{d_1 \log d}{\eta \log \log d} \right)$, the final inequality follows from the calculations below, together with the bound 
$\| \bm{w}_i^{(t)} \|_2^2 \leq O(1)\, \| \bm{w}_i^{(0)} \|_2^2$ and Lemma~\ref{Lemma B.2}.
\begin{equation}
\begin{aligned}
   &\sum_{j \in [d]: j \in \mathcal{E}^{(0)}} \widetilde{O} \left( \frac{(t+1)^2 \eta^2 \max_{t \leq t_1} \|\bm{w}_i^{(t)}\|_2^2}{\operatorname{poly}(d_1)} \right) \\
   &\leq \widetilde{O} \left( \frac{d \cdot d_1^2 \cdot \max_{t \leq t_1} \|\bm{w}_i^{(t)}\|_2^2}{\operatorname{poly}(d_1)} \right) \ll \frac{\|\bm{w}_i^{(0)}\|_2^2}{d_1} = O(\frac{1}{d}) \|\mathbf{M} \mathbf{M}^{\top} \bm{w}_i^{(0)}\|_2^2.
\end{aligned}
\end{equation}

The proof of the upper bound actually implies two information: (1) $\left\| \bm{w}_i^{(t)} \right\|_2 / \left( d \sqrt{d_1} \right)$ serves as a threshold. At time step $t$, if the component of $\bm{w}_i$ along $\bm{M}_j$ exceeds this threshold, then this $\bm{w}_i$ along $\bm{M}_j$ significantly contributes to learning over the entire space; otherwise, its contribution to the learning of the overall space is negligible. (2) The number of features in $\mathcal{E}^{(t)}$ decreases over time.
\end{proof}

\subsection{Proof of Lemma~\ref{Lemma C.1}(b):}

\begin{proof}[Proof of Lemma~\ref{Lemma C.1}(b):]
\begin{equation}
\begin{aligned}
    &\|\bm{M}\bm{M}^{\top} \bm{w}_i^{(t+1)}\|_2^2 \\
    =& \sum_{j \in [d]} \left[ \left(1 - \eta \lambda + \epsilon_{j}\frac{\eta C_z \log \log d}{d} \right) \langle \bm{w}_i^{(t)}, \bm{M}_j \rangle \pm \widetilde{O} \left( \frac{\eta \|\bm{w}_i^{(t)}\|_2}{\operatorname{poly}(d_1)} \right) \right]^2\\
    =& \sum_{j \in [d]} \left[ \left(1 - \eta \lambda + \epsilon_{j}\frac{\eta C_z \log \log d}{d} \right)^2 \langle \bm{w}_i^{(t-1)}, \bm{M}_j \rangle + 2 \widetilde{O} \left( \frac{\eta \|\bm{w}_i^{(t)}\|_2}{\operatorname{poly}(d_1)} \right) \right.\\ 
    +& \left. \left( 1 - \eta \lambda + \epsilon_{j}\frac{\eta C_z \log \log d}{d}  \right) \widetilde{O} \left( \frac{\eta \|\bm{w}_i^{(t)}\|_2}{\operatorname{poly}(d_1)} \right) \right] ^2  \\
    \geq& \sum_{j \in [d]} \left[\left(1 - \eta \lambda + \epsilon_{j} \frac{\eta C_z \log \log d}{d}  \right)^2 \langle \bm{w}_i^{(t-1)}, \bm{M}_j \rangle + 2 \widetilde{O} \left( \frac{\eta \|\bm{w}_i^{(t)}\|_2}{\operatorname{poly}(d_1)} \right)\right]^2\\ 
    \geq& \sum_{j \in [d]} \left[\left(1 - \eta \lambda + \epsilon_{j} \frac{\eta C_z \log \log d}{d}  \right)^{t+1} \langle \bm{w}_i^{(0)}, \bm{M}_j \rangle + \widetilde{O} \left( \frac{(t+1)  \eta \|\bm{w}_i^{(t)}\|_2}{\operatorname{poly}(d_1)} \right)\right]^2\\  
    \geq& \sum_{j \in \mathcal{E}^{(0)}} \langle \bm{w}_i^{(0)}, \bm{M}_j \rangle^2 \left(1 - \eta \lambda + \epsilon_{j} \frac{\eta C_z \log \log d}{d}  \right)^{2t+2} \\
    -& \widetilde{O} \left( \frac{(t+1)^2 \eta^2 d \max_{t \leq t_1} \|\bm{w}_i^{(t)}\|_2^2}{\operatorname{poly}(d_1)} \right)\\
    \geq& \|\bm{M}\bm{M}^{\top} \bm{w}_i^{(0)}\|_2^2 \left(1 - \eta \lambda + \epsilon_{\min} \frac{\eta C_z \log \log d}{d}  \right)^{2t+2} - O(1/d) \|\bm{M}\bm{M}^{\top} \bm{w}_i^{(0)}\|_2^2.
\end{aligned}
\end{equation}
We aim to obtain the maximum of the lower bound. The extreme case here is when all $j$ belong to the set $\mathcal{E}^{(0)}$, meaning that the components of $\bm{w}_i$ on all feature $\bm{M}_j$ are smaller than the critical value $\left\| \bm{w}_i^{(t)} \right\|_2 / \left( d \sqrt{d_1} \right)$. where the last inequality follows from our computations of the upper bound.
\end{proof}

\subsection{Proof of Lemma~\ref{Lemma C.1}(c):}

\begin{proof}[Proof of Lemma~\ref{Lemma C.1}(c):]
Finally we give an upper bound of 
\begin{equation}
    \left\| \bm{M}^\perp (\bm{M}^\perp)^\top \bm{w}_i^{(t+1)} \right\|_2^2
\end{equation}

for iterations \( t \leq t_1 \). We can calculate similarly, by
\begin{equation}
\begin{aligned}
    &\|\bm{M}^{\perp} (\bm{M}^{\perp})^{\top} \bm{w}_i^{(t+1)}\|_2^2 \\
    =& \sum_{j \in [d_1] \setminus [d]} \left[ (1 - \eta \lambda) \langle \bm{w}_i^{(t)}, \bm{M}_j^{\perp} \rangle \pm \widetilde{O} \left( \frac{\eta \|\bm{w}_i^{(t)}\|_2}{\text{ploy}(d_1)} \right) \right]^2\\
    \leq& \sum_{j \in [d_1] \setminus [d]} \left[ (1 - \eta \lambda) \langle \bm{w}_i^{(t)}, \bm{M}_j^{\perp} \rangle \pm \widetilde{O} \left( \frac{\eta \|\bm{w}_i^{(t)}\|_2}{\operatorname{poly}(d_1)} \right) \right]^2\\
    \leq& \sum_{j \in [d_1] \setminus [d]} \left[ \langle \bm{w}_i^{(t)}, \bm{M}_j^{\perp} \rangle +\widetilde{O} \left( \frac{\eta \|\bm{w}_i^{(t)}\|_2}{\operatorname{poly}(d_1)} \right) \right]^2\\  
    \leq& \sum_{j \in [d_1] \setminus [d]} \left[ \langle \bm{w}_i^{(0)}, \bm{M}_j^{\perp} \rangle +\widetilde{O} \left( \frac{\eta (t+1) \max_{t \leq t_1} \|\bm{w}_i^{(t)}\|_2}{\operatorname{poly}(d_1)} \right) \right]^2\\ 
    =& \sum_{j \in [d_1] \setminus [d]} \langle \bm{w}_i^{(0)}, \bm{M}_j^{\perp} \rangle ^2 + \sum_{j \in [d_1] \setminus [d]} 2 \langle \bm{w}_i^{(0)}, \bm{M}_j^{\perp} \rangle \widetilde{O} \left( \frac{\eta (t+1) \max_{t \leq t_1} \|\bm{w}_i^{(t)}\|_2}{\operatorname{poly}(d_1)} \right) \\
    +& \sum_{j \in [d_1] \setminus [d]} \widetilde{O} \left( \frac{\eta (t+1) \max_{t \leq t_1} \|\bm{w}_i^{(t)}\|_2}{\operatorname{poly}(d_1)} \right)^2\\
    \leq& \|\bm{M}^{\perp} (\bm{M}^{\perp})^{\top} \bm{w}_i^{(0)}\|_2^2 + \max_{j \in [d_1] \setminus [d]} |\langle \bm{w}_i^{(0)}, \bm{M}_j^{\perp} \rangle| \widetilde{O} \left( \max_{t \leq t_1} \|\bm{w}_i^{(t)}\|_2 \right) \\
    +& \widetilde{O} \left( \frac{\eta^2 (t+1)^2 d \max_{t \leq t_1} \|\bm{w}_i^{(t)}\|_2^2}{\operatorname{poly}(d_1)} \right)\\
    \overset{\circled{1}}{\leq}& (1 + \widetilde{O}(d / \sqrt{d_1})) \|\bm{M}^{\perp} (\bm{M}^{\perp})^{\top} \bm{w}_i^{(0)}\|_2^2 + O(1 / \operatorname{poly}(d_1)) \|\bm{w}_i^{(0)}\|_2^2\\
    \overset{\circled{2}}{\leq}& \left( 1 + \frac{1}{\operatorname{poly}(d)} \right) \|\bm{M}^{\perp} (\bm{M}^{\perp})^{\top} \bm{w}_i^{(0)}\|_2^2.
\end{aligned}
\end{equation}
$\circled{1}$ is because: at initialization, we have $\left| \langle \bm{w}_i^{(0)}, \bm{M}_j^{\perp} \rangle \right| \leq \widetilde{O} \left( \| \bm{w}_i^{(0)} \| / \sqrt{d_1} \right)$ with high probability; from our Lemma~\ref{Lemma C.1} $\widetilde{O} \left( \max_{t \leq t_1} \| \bm{w}_i^{(t)} \|_2 \right) \leq O(1) \| \bm{w}_i^{(0)} \|_2^2$; at initialization we have $\| \bm{w}_i^{(0)} \|_2^2 \leq O \left( \| \bm{M}^{\perp} (\bm{M}^{\perp})^{\top} \bm{w}_i^{(0)} \|_2^2 \right)$ with high probability and $t=\Theta\left( \frac{d_1 \log d}{\eta \log \log d} \right)$

$\circled{2}$ is because: at initialization we have $\| \bm{w}_i^{(0)} \|_2^2 \leq O \left( \| \bm{M}^{\perp}(\bm{M}^{\perp})^{\top} \bm{w}_i^{(0)} \|_2^2 \right)$ with high probability

Note that for each neuron $i \in [m]$, from Lemma~\ref{Lemma B.2} combined with our upper bound and lower bound, we know when all the weights $\|\bm{w}_i^{(t)}\|_2^2$ reach $\Theta(1)\|\bm{w}_i^{(0)}\|_2^2$, the maximum: 
\begin{equation}
    \max_{i \in [m]} \|\bm{w}_i^{(t+1)}\|_2^2 \leq O(1) (\left\| \bm{M}\bm{M}^\top \bm{w}_i^{(t+1)} \right\|_2^2 + \left\| \bm{M}^\perp (\bm{M}^\perp)^\top \bm{w}_i^{(t+1)} \right\|_2^2  = O(1)\|\bm{w}_i^{(t+1)}\|_2^2)
\end{equation}
for all $t \leq t_1$. Thus we have obtained all the results for $t = t_1 + 1$, and are able to proceed induction. 
\end{proof}
\end{proof}

\subsection{Proof of Lemma~\ref{Lemma C.2}:}

\begin{proof}[Proof of Lemma~\ref{Lemma C.2}:]

\begin{equation}
\begin{aligned}
    \Pr\left[ \left| \langle \bm{w}_i^{(0)}, \bm{M}_j \rangle \right| \geq \Omega(\sigma_0 \log^{1/4} d) \right] \leq 2 \exp\left( -\frac{t^2}{2\sigma_0^2} \right) = 2^{-\Omega(\sqrt{\log d})}.
\end{aligned}
\end{equation}

Fix \( i \in [m] \), and consider \( d \) different \( j \). Each has probability \( 2^{-\Omega(\sqrt{\log d})} \) to exceed the threshold. Therefore, the expectation is:
\begin{equation}
    \mathbb{E}\left[\left|\left\{ j \in [d] \;\middle|\; \left| \langle \bm{w}_i^{(0)}, \bm{M}_j \rangle \right| \geq \Omega\left( \sigma_0 \log^{1/4} d \right) \right\}\right|\right] = O\left(2^{-\sqrt{\log d}} \cdot d \right).
\end{equation}
\end{proof}

\section{Proof of Lemmas in Appendix D}
\label{appendix:I}

\subsection{Proof of Lemma~\ref{Lemma D.1}(a):}

\begin{proof}[Proof of Lemma~\ref{Lemma D.1}(a):]
At iteration \( t = T_1 \), we have verified all the above properties in Theorem C.1. Now suppose all the properties hold for \( t < T_2 \), we will verify that it still hold for \( t+1 \). In order to calculate the gradient \( \nabla_{\bm{w}_i} L_{\mathrm{aug}} \) along each feature \( \bm{M}_j \) or \( \bm{M}_j^\perp \), we have to apply Lemma~\ref{Lemma E.10}, Lemma~\ref{Lemma E.11} and Lemma~\ref{Lemma F.3}. First we calculate parameters in Lemma~\ref{Lemma E.10}(a) and (b). In order to using Lemma~\ref{Lemma E.10}, we have the followings

\begin{equation}
    \left| \bar{S}^{(r,s) \setminus j}_{i,t} \right|^2  = \left| \langle \bm{w}_i^{(t)}, \frac{z_{Y}^{(s)\setminus j} - z_{X}^{(r)\setminus j}}{2} \rangle \right|^2 \leq \widetilde{\mathcal{O}}\left( \frac{\| \bm{w}_i^{(t)} \|_2^2}{d} \right)
\end{equation}

\begin{equation}
   \Pr(A_1), \Pr(A_2) \leq e^{-\Omega(\log^{1/4} d)} \quad \text{when } |\alpha^{(t)}_{i,j}| \leq \sqrt{(1 - \gamma)} b_i^{(t)}
\end{equation}

\begin{equation}
    \Pr(A_3), \Pr(A_4) \leq e^{-\Omega(\log^{1/4} d)} \quad \text{when } |\alpha^{(t)}_{i,j}| \geq \sqrt{(1 + \gamma)} b_i^{(t)}
\end{equation}

Which further implies that when $(|\alpha_{i,j}^{(t)}| \leq \sqrt{(1 - \gamma)} )$

\begin{equation}
    \sqrt{\mathbb{E}[|\bar{S}^{(r,s) \setminus j}|^2 (\bm{1}_{A_1} + \bm{1}_{A_2})]} \leq \sqrt{ \widetilde{\mathcal{O}} \left( \frac{\|\bm{w}_i^{(t)}\|_2^2}{d} \right) \left( \Pr(A_1) + \Pr(A_2) \right) } \overset{(1)}\leq \frac{\|\bm{w}_i^{(t)}\|_2}{\sqrt{d} \cdot \operatorname{polylog}(d)}
\end{equation}

(1) is because: $e^{-\Omega(\log^{1/4} d)} \leq \frac{1}{\operatorname{polylog}(d)}$,  

$L_1$ is because: 

\begin{equation}
    L_1 := \sqrt{\frac{\mathbb{E}[|\bar{S}_{i,t}^{(r,s) \setminus j}|^2 (\bm{1}_{A_1} + \bm{1}_{A_2})]}{\mathbb{E}[\langle \bm{w}_i^{(t)}, \frac{\tilde{\xi}_{n} + \tilde{\xi}_{n}^{+}}{2} \rangle^2]}} \le \sqrt{\frac{\frac{\|\bm{w}_i^{(t)}\|_2^2}{d \cdot \operatorname{polylog}(d)}}{\Theta\left( \frac{\|\bm{w}_i^{(t)}\|_2^2 \sqrt{\log d}}{d} \right)}} \le \frac{1}{\operatorname{polylog}(d)}
\end{equation}

$L_2 = \Pr(A_2)$ holds as well
\begin{equation}
    \Rightarrow L_1, L_2 \leq \frac{1}{\operatorname{polylog}(d)}
\end{equation}

And similarly, we also have \( L_3, L_4 \leq \frac{1}{\operatorname{polylog}(d)} \) when \( |\alpha_{i,j}^{(t)}| \geq \sqrt{(1 + \gamma)} b_i^{(t)} \).

Now we separately discuss three cases:

(a) When \( i \in \mathcal{M}_j^\star \), if \( \tilde{z}_{n,j}^{+(s)}\) and \(\tilde{z}_{n,j}^{(r)} \neq 0 \), say \( \frac{\tilde{z}_{n,j}^{+(s)} + \tilde{z}_{n,j}^{(r)}}{2} = C^{(r,s)}_{\tilde{z}} \), we simply have
\begin{equation}
\begin{aligned}
    &\Pr\left( \left| \langle \bm{w}_i, \frac{z_{X}^{(r)} + z_{Y}^{(s)}}{2} \rangle| \geq b_i + |\langle \bm{w}_i, \bm{z_X}^{(r)} - \frac{z_{X}^{(r)} + z_{Y}^{(s)}}{2} \rangle \right| \right) \\
    \geq& 1 - \Pr \left( \left| \langle \bm{w}_i^{(t)}, \frac{z_{X}^{(r) \setminus j} + z_{Y}^{(s)\setminus j}}{2} \rangle \right| \geq b_i^{(t)} - \alpha_{i,j}^{(t)} \frac{\tilde{z}_{n,j}^{+(s)} + \tilde{z}_{n,j}^{(r)}}{2} + \left| \langle \bm{w}_i^{(t)}, \bm{z_X}^{(r)} - \frac{z_{X}^{(r)} + z_{Y}^{(s)}}{2} \rangle \right| \right)
\end{aligned}
\end{equation}

From the observations that:  
(1) \( \left| \langle \bm{w}_i^{(t)}, \frac{z_{X}^{(r) \setminus j} + z_{Y}^{(s)\setminus j}}{2} \rangle \right| \leq \gamma b_i^{(t)} \) with probability \( \geq 1 - e^{-\Omega(\log^{1/4} d)} \); (2) \( \left| \langle \bm{w}_i^{(t)}, \bm{z_X}^{(r)} - \frac{z_{X}^{(r)} + z_{Y}^{(s)}}{2} \rangle \right| \leq O(\| \bm{w}_i^{(t)} \|_2 \sigma_\xi) \) with prob \( \geq 1 - e^{-\Omega(\log^{1/2} d)} \).  

So it can be easily verified that:
\begin{equation}
\begin{aligned}
    &\mathbb{E} \left[ \sum_{s=1}^{L} \tilde{z}_{n,j}^{+(s)} \sum_{r=1}^{L} \tilde{z}_{n,j}^{(r)} \bm{1}_{|\langle \bm{w}_i, \frac{z_{X}^{(r)} + z_{Y}^{(s)}}{2} \rangle| \geq b_i + |\langle \bm{w}_i, \bm{z_X}^{(r)} - \frac{z_{X}^{(r)} + z_{Y}^{(s)}}{2} \rangle|} \right]\\
    =& \mathbb{E} \left[ \sum_{s=1}^{L} \tilde{z}_{n,j}^{+(s)} \sum_{r=1}^{L} \tilde{z}_{n,j}^{(r)} \right] \Pr \left( |\langle \bm{w}_i^{(t)}, \frac{z^{(r)}_{X} + z^{(s)}_{Y}}{2} \rangle| \geq b_i + |\langle \bm{w}_i^{(t)}, \bm{z_X}^{(r)} - \frac{z^{(r)}_{X} + z^{(s)}_{Y}}{2}|  \right) \\
    =& \epsilon_j \frac{L^2 C_z \log \log d}{d} \left( 1 - \frac{1}{\operatorname{polylog}(d)}\right)
\end{aligned}
\end{equation}

Now we can compute as follows: for \( \bm{M}_j \) such that \( i \in \mathcal{M}_j^\star \), at iteration \( t+1 \):
\begin{equation}
\label{eq:Proof of Lemma D.1(a) (1)}
\begin{aligned}
    &\langle \bm{w}_i^{(t+1)}, \bm{M}_j \rangle \\
    =& \langle \bm{w}_i^{(t)}, \bm{M}_j \rangle - \eta \langle \nabla_{\bm{w}_i} L_{\mathrm{aug}}(f_t), \bm{M}_j \rangle 
    \pm \frac{\eta \| \bm{w}_i^{(t)} \|_2}{\operatorname{poly}(d_1)}\\
    =& \langle \bm{w}_i^{(t)}, \bm{M}_j \rangle (1 - \eta \lambda) 
    \pm \frac{\eta \| \bm{w}_i^{(t)} \|_2}{\operatorname{poly}(d_1)}\\
    +& \eta \mathbb{E} \left[ (1 - \ell_{p,t}^t) h_{i,t}(\bm{Y}_{n}) 
    \sum_{r=1}^{L} \bm{1}_{|\langle \bm{w}_i, \bm{z_X}^{(r)} \rangle| \geq b_i} \langle \bm{z_X}^{(r)}, \bm{M}_j \rangle \right]\\
    -& \eta \mathbb{E} \left[\sum_{X \in \mathfrak{N}} \ell_{s,t}^t h_{i,t}(X) \sum_{r=1}^{L} \bm{1}_{|\langle \bm{w}_i, \bm{z_X}^{(r)} \rangle| \geq b_i} \langle \bm{z_X}^{(r)}, \bm{M}_j \rangle \right]\\
    \overset{\circled{1}}=& \langle \bm{w}_i^{(t)}, \bm{M}_j \rangle (1 - \eta \lambda) 
    \pm \frac{\eta \| \bm{w}_i^{(t)} \|_2}{\operatorname{poly}(d_1)}\\
    +& \eta \frac{1}{L} \mathbb{E} \left[ (1 - \ell_{p,t}^t) h_{i,t}(\bm{Y}_{n}) 
    \sum_{r=1}^{L} \bm{1}_{|\langle \bm{w}_i, \bm{z_X}^{(r)} \rangle| \geq b_i} (\tilde{z}_{n,j}^{(r)} + \langle \tilde{\xi}_{n}^{(r)}, \bm{M}_j \rangle) \right]\\
    -& \eta \frac{1}{L} \mathbb{E} \left[\sum_{X \in \mathfrak{N}} \ell_{s,t}^t h_{i,t}(X) \sum_{r=1}^{L} \bm{1}_{|\langle \bm{w}_i, \bm{z_X}^{(r)} \rangle| \geq b_i} (\tilde{z}_{n,j}^{(r)} + \langle \tilde{\xi}_{n}^{(r)}, \bm{M}_j \rangle) \right]\\
    \overset{\circled{2}}\geq& \left( \langle \bm{w}_i^{(t)}, \bm{M}_j \rangle - \operatorname{sign}(\langle \bm{w}_i^{(t)}, \bm{M}_j \rangle) \cdot b_i^{(t)} \right) 
    \cdot \left(1 - \eta \lambda + \epsilon_j\frac{\eta C_z \log \log d}{d} \left(1 - \frac{1}{\operatorname{polylog}(d)} \right) \right)\\
    -& O \left( \frac{\eta | \langle \bm{w}_i^{(t)}, \bm{M}_j \rangle | }{d \cdot \operatorname{polylog}(d)} \right) 
    \pm O \left( \frac{\eta \sum_{i' \in [m]} \| w_{i'}^{(t)} \|_2^2 \| \bm{w}_i^{(t)} \|_2 }{d \tau} \right)
    \pm \widetilde{O} \left( \frac{\eta \| \bm{w}_i^{(t)} \|_2 }{d \sqrt{d_1}} \right)\\
    \overset{\circled{3}}\geq& \left( \langle \bm{w}_i^{(t)}, \bm{M}_j \rangle - \operatorname{sign}(\langle \bm{w}_i^{(t)}, \bm{M}_j \rangle) \cdot b_i^{(t)} \right)
    \cdot \left(1 + \epsilon_j \frac{\eta C_z \log \log d}{d} \left(1 - \frac{\eta}{\operatorname{polylog}(d)} \right) \right)
\end{aligned}
\end{equation}

$\circled{1}$ is because $\langle \bm{M} \tilde{z}_{n}^{(r)}, \bm{M}_j \rangle 
= \left\langle \sum_{k=1}^{d} \tilde{z}_{n,k}^{(r)} \bm{M}_k, \bm{M}_j \right\rangle = \sum_{k=1}^{d} \tilde{z}_{n,k}^{(r)} \langle \bm{M}_k, \bm{M}_j \rangle = \tilde{z}_{n,j}^{(r)}$. $\circled{2}$ is because Lemma~\ref{Lemma E.11}. $\circled{3}$ is because we have taken into consideration 
\begin{equation}
    \sum_{i \in [m]} \| \bm{w}_i^{(t)} \|_2^2 \leq \sum_{i \in [m]} \| \bm{w}_i^{(T_2)} \|_2^2 \leq \sum_{i \in [m]} d \| \bm{w}_i^{(T_1)} \|_2^2 \leq \frac{1}{\operatorname{poly}(d)}
\end{equation}

which follows from our definition of iteration \( T_2 \) and the properties at iteration \( T_1 \) in Theorem C.1, and also that 
\begin{equation}
    \frac{\langle \bm{w}_i^{(t)}, \bm{M}_j \rangle}{d} \geq  \frac{b_i^{(t)} }{d} \geq \frac{b_i^{(T_1)}}{d} \geq \sqrt{\frac{2\log d}{d^3}}\| \bm{w}_i^{(T_1)} \|_2 \geq \sqrt{\frac{2\log d}{d^4}}\| \bm{w}_i^{(T_2)} \|_2 \gg \frac{\| \bm{w}_i^{(t)} \|_2}{\sqrt{d_1}}
\end{equation}

Next we compare this growth to the growth of bias \( b_i^{(t+1)} \). Since we raise our bias by
\begin{equation}
b_i^{(t+1)} = \max \left\{ b_i^{(t)} (1 + \frac{\eta}{d}), b_i^{(t)} \frac{\| \bm{w}_i^{(t+1)} \|_2}{\| \bm{w}_i^{(t)} \|_2} \right\},
\end{equation}

The above inequality $\langle \bm{w}_i^{(t+1)}, \bm{M}_j \rangle \geq \left( \langle \bm{w}_i^{(t)}, \bm{M}_j \rangle - \operatorname{sign}(\langle \bm{w}_i^{(t)}, \bm{M}_j \rangle) \cdot b_i^{(t)} \right)
\cdot \left(1 + \frac{\eta}{d} \right)$ has already verified the case of $(1 + \frac{\eta}{d})$. as long as
\begin{equation}
    \frac{\| \bm{w}_i^{(t+1)} \|_2}{\| \bm{w}_i^{(t)} \|_2} \leq \frac{|\langle \bm{w}_i^{(t+1)}, \bm{M}_j \rangle|}{|\langle \bm{w}_i^{(t)}, \bm{M}_j \rangle|},
\end{equation}
we can obtain the desired result
$\left( \frac{\| \bm{w}_i^{(t+1)} \|_2}{\| \bm{w}_i^{(t)} \|_2} \leq \frac{|\langle \bm{w}_i^{(t+1)}, \bm{M}_j \rangle|}{|\langle \bm{w}_i^{(t)}, \bm{M}_j \rangle|} \right)$. This will be proved later, after we prove (d).
\end{proof}

\subsection{Proof of Lemma~\ref{Lemma D.1}(b):}

\begin{proof}[Proof of Lemma~\ref{Lemma D.1}(b):]
When \( i \notin \mathcal{M}_j \), we can similarly obtain that
\begin{equation}
\begin{aligned}
    &\mathbb{E} \left[ \sum_{s=1}^{L} \tilde{z}_{n,j}^{+(s)} \sum_{r=1}^{L} \tilde{z}_{n,j}^{(r)} \bm{1}_{|\langle \bm{w}_i, \frac{z_{X}^{(r)} + z_{Y}^{(s)}}{2} \rangle| \geq b_i + |\langle \bm{w}_i, \bm{z_X}^{(r)} - \frac{z_{X}^{(r)} + z_{Y}^{(s)}}{2} \rangle|} \right]\\
    \leq& \epsilon_j \frac{L^2 C_z \log \log d}{d} \left(\frac{1}{\operatorname{polylog}(d)}\right) \\
    =& O\left( \epsilon_j \frac{L^2 }{d \cdot \operatorname{polylog}(d)} \right)
\end{aligned}
\end{equation}

And similarly we can compute the gradient descent dynamics as follows:  

For \( j \in [d] \) such that \( |\langle \bm{w}_i^{(t)}, \bm{M}_j \rangle| \geq \frac{\| \bm{w}_i^{(t)} \|_2 d}{\sqrt{d_1}} \), we have  
(assume here \( \langle \bm{w}_i^{(t)}, \bm{M}_j \rangle > 0 \), the opposite is similar)
\begin{equation}
\begin{aligned}
    &\langle \bm{w}_i^{(t+1)}, \bm{M}_j \rangle \\
    =& \langle \bm{w}_i^{(t)}, \bm{M}_j \rangle 
    - \eta \langle \nabla_{\bm{w}_i} L_{\mathrm{aug}}(f_t), \bm{M}_j \rangle + \frac{\eta \| \bm{w}_i^{(t)} \|_2}{\operatorname{poly}(d_1)}\\
    \leq& \langle \bm{w}_i^{(t)}, \bm{M}_j \rangle \left(1 - \eta \lambda + \epsilon_j \frac{O(\eta)}{d \cdot \operatorname{polylog}(d)} \right)
    \pm O\left( \frac{\eta \sum_{i' \in [m]} \| w_{i'}^{(t)} \|_2^2 \| \bm{w}_i^{(t)} \|_2 }{d \tau} \right) \pm \widetilde{O} \left( \frac{\eta \| \bm{w}_i^{(t)} \|_2}{d^2} \right)\\
    \leq& \langle \bm{w}_i^{(t)}, \bm{M}_j \rangle \left(1 + \epsilon_j \frac{O(\eta)}{d \cdot \operatorname{polylog}(d)} \right) + \widetilde{O} \left( \frac{\eta \| \bm{w}_i^{(t)} \|_2}{d^2} \right)
\end{aligned}
\end{equation}

Since from our update rule 
\begin{equation}
    b_i^{(t+1)} \geq b_i^{(t)} \left(1 + \frac{\eta}{d} \right)
\end{equation}

we know that
\begin{equation}
\begin{aligned}
    b_i^{(t+1)} \geq  \left(1 + \frac{\eta}{d} \right) b_i^{(t)}  \geq \left(1 + \frac{O(\eta)}{d \cdot \operatorname{polylog}(d)} \right) b_i^{(t)}  \geq \frac{|\langle \bm{w}_i^{(t+1)}, \bm{M}_j \rangle|}{|\langle \bm{w}_i^{(t)}, \bm{M}_j \rangle|} b_i^{(t)}
\end{aligned}
\end{equation}

Thus, if \( |\langle \bm{w}_i^{(t)}, \bm{M}_j \rangle| \leq \sqrt{ 1 - \gamma} b_i^{(t)} \) at iteration \( t \), we have
\[|\langle \bm{w}_i^{(t+1)}, \bm{M}_j \rangle| \leq \frac{ b_i^{(t+1)} |\langle \bm{w}_i^{(t)}, \bm{M}_j \rangle| }{ b_i^{(t)}}\leq \sqrt{ 1 - \gamma} b_i^{(t+1)} \quad \text{if } |\langle \bm{w}_i^{(t)}, \bm{M}_j \rangle| \geq \frac{\| \bm{w}_i^{(t)} \|_2 d}{\sqrt{d_1}} \text{ at iteration } t;\]
\[|\langle \bm{w}_i^{(t+1)}, \bm{M}_j \rangle| \leq \sqrt{ 1 - \gamma} b_i^{(t+1)} 
\quad \text{if } |\langle \bm{w}_i^{(t)}, \bm{M}_j \rangle| \leq \frac{\| \bm{w}_i^{(t)} \|_2 d}{\sqrt{d_1}} \text{ at iteration } t.\]

It is also worth noting that similar calculations also lead to a lower bound:
\begin{equation}
  |\langle \bm{w}_i^{(t+1)}, \bm{M}_j \rangle| \geq |\langle \bm{w}_i^{(t)}, \bm{M}_j \rangle| (1 - \eta \lambda) 
- \widetilde{\mathcal{O}} \left( \eta \frac{\| \bm{w}_i^{(t)} \|_2}{d^2} \right)   
\end{equation}

We leave the part of proving $|\langle \bm{w}_i^{(t+1)}, \bm{M}_j \rangle| \leq \widetilde{\mathcal{O}} \left( \frac{\| \bm{w}_i^{(t+1)} \|_2}{\sqrt{d}} \right)$ to later.
\end{proof}

\subsection{Proof of Lemma~\ref{Lemma D.1}(d):}

\begin{proof}[Proof of Lemma~\ref{Lemma D.1}(d):]
Next we consider the learning dynamics for the dense features. We can use Lemma~\ref{Lemma E.11} to calculate its dynamics by
\begin{equation}
\label{eq:Proof of Lemma D.1(d) (1)}
\begin{aligned}
    \langle \bm{w}_i^{(t+1)}, \bm{M}_j^\perp \rangle
    &= \langle \bm{w}_i^{(t)}, \bm{M}_j^\perp \rangle (1 - \eta \lambda) \pm \frac{\eta \| \bm{w}_i^{(t)} \|_2}{\operatorname{poly}(d_1)}\\
    &+ \eta \mathbb{E} \left[(1 - \ell_{p,t}^\prime) h_{i,t}(\bm{Y}_{n}) 
    \sum_{r=1}^{L} \bm{1}_{|\langle \bm{w}_i, \bm{z_X}^{(r)} \rangle| \geq b_i} \langle \bm{z_X}^{(r)}, \bm{M}_j^\perp \rangle \right]\\
    &- \eta \sum_{X \in \mathfrak{N}} \mathbb{E} \left[
    \ell_{s,t}^\prime h_{i,t}(X) \sum_{r=1}^{L} \bm{1}_{|\langle \bm{w}_i, \bm{z_X}^{(r)} \rangle| \geq b_i} \langle \bm{z_X}^{(r)}, \bm{M}_j^\perp \rangle \right]\\
    &= \langle \bm{w}_i^{(t)}, \bm{M}_j^\perp \rangle (1 - \eta \lambda) \pm \frac{\eta \| \bm{w}_i^{(t)} \|_2}{\operatorname{poly}(d_1)}\\
    &+ \eta \mathbb{E} \left[(1 - \ell_{p,t}^\prime) h_{i,t}(\bm{Y}_{n}) 
    \sum_{r=1}^{L} \bm{1}_{|\langle \bm{w}_i, \bm{z_X}^{(r)} \rangle| \geq b_i} \langle \tilde{\xi}_{n}^{(r)}, \bm{M}_j^\perp \rangle \right]\\
    &- \eta \sum_{X \in \mathfrak{N}} \mathbb{E} \left[
    \ell_{s,t}^\prime h_{i,t}(X) \sum_{r=1}^{L} \bm{1}_{|\langle \bm{w}_i, \bm{z_X}^{(r)} \rangle| \geq b_i} \langle \tilde{\xi}_{n}^{(r)}, \bm{M}_j^\perp \rangle \right]\\
    &= \langle \bm{w}_i^{(t)}, \bm{M}_j^\perp \rangle (1 - \eta \lambda) + \widetilde{\mathcal{O}} \left( \frac{\eta \| \bm{w}_i^{(t)} \|_2}{d \sqrt{d_1}} \right)\cdot \Pr\left( h_{i,t}(\bm{Y}_{n}) \neq 0 \right)\\
    &\leq \langle \bm{w}_i^{(t)}, \bm{M}_j^\perp \rangle 
    + O\left( \frac{\eta \| \bm{w}_i^{(t)} \|_2}{d \sqrt{d_1}} e^{-\Omega(\log^{1/4} d)} \right)
\end{aligned}
\end{equation}
\end{proof}

\subsection{Supplement to Lemma~\ref{Lemma D.1}(a):}

After establishing the bounds of growth speed for each feature, we now calculate the proportions they contribute to each neuron weight \( i \in [m] \). Namely, we need to prove that when Lemma~\ref{Lemma D.1} holds at iteration \( t \in [T_1, T_2] \), we have to prove:
\begin{equation}
    \frac{|\langle \bm{w}_i^{(t+1)}, \bm{M}_j \rangle|}{|\langle \bm{w}_i^{(t)}, \bm{M}_j \rangle|} \geq \frac{\| \bm{w}_i^{(t+1)} \|_2}{\| \bm{w}_i^{(t)} \|_2} \quad \text{for } i \in \mathcal{M}_j^\star,
\end{equation}

we argue as follows: from previous calculations we have:
\begin{equation}
\begin{aligned}
    &\sum_{j' \in [d], j' \neq j} \langle \bm{w}_i^{(t+1)}, \bm{M}_{j'} \rangle^2 + \sum_{j' \in [d_1] \setminus [d]} \langle \bm{w}_i^{(t+1)}, \bm{M}_{j'}^\perp \rangle^2\\
    &\leq \sum_{j' \in [d], j' \neq j} \langle \bm{w}_i^{(t)}, \bm{M}_{j'} \rangle^2 \left( 1 + \epsilon_{j'} \frac{O(\eta)}{d \operatorname{polylog}(d)} \right)^2 \\
    &+ \sum_{j' \in [d_1] \setminus [d]} \langle \bm{w}_i^{(t)},  \bm{M}_{j'}^\perp \rangle^2 + \widetilde{\mathcal{O}} \left( \frac{\eta}{d} \right) e^{-\Omega(\log^{1/4} d)} \| \bm{w}_i^{(t)} \|_2^2
\end{aligned}
\end{equation}

Therefore by adding \( \langle \bm{w}_i^{(t+1)}, \bm{M}_j \rangle^2 \) to the LHS we have:
\begin{equation}
\begin{aligned}
    \| \bm{w}_i^{(t+1)} \|_2^2 &\leq \| \bm{w}_i^{(t)} \|_2^2 \left( 1 + \epsilon_{\max} \frac{O(\eta)}{d \cdot \operatorname{polylog}(d)} \right)^2 \\
    &+ \left( \frac{|\langle \bm{w}_i^{(t+1)}, \bm{M}_j \rangle|}{|\langle \bm{w}_i^{(t)}, \bm{M}_j \rangle|} - \frac{O(\eta)}{d \cdot \operatorname{polylog}(d)} \right) |\langle \bm{w}_i^{(t)}, \bm{M}_j \rangle|^2
\end{aligned}
\end{equation}

which implies
\begin{equation}
    \frac{\| \bm{w}_i^{(t+1)} \|_2^2}{\| \bm{w}_i^{(t)} \|_2^2} \leq \left( 1 + \epsilon_{\max} \frac{O(\eta)}{d \cdot \operatorname{polylog}(d)} \right)^2 + \left( \frac{|\langle \bm{w}_i^{(t+1)}, \bm{M}_j \rangle|}{|\langle \bm{w}_i^{(t)}, \bm{M}_j \rangle|}\right) \frac{|\langle \bm{w}_i^{(t)}, \bm{M}_j \rangle|^2}{\| \bm{w}_i^{(t)} \|_2^2} \
\end{equation}
Therefore, $\frac{|\langle \bm{w}_i^{(t+1)}, \bm{M}_j \rangle|}{|\langle \bm{w}_i^{(t)}, \bm{M}_j \rangle|} \geq \frac{\| \bm{w}_i^{(t+1)} \|_2}{\| \bm{w}_i^{(t)} \|_2}$ is as desired.

\subsection{Supplement to Lemma~\ref{Lemma D.1}(b):}

To prove \( |\langle \bm{w}_i^{(t+1)}, \bm{M}_j \rangle| \le \widetilde{O} \left( \frac{ \| \bm{w}_i^{(t+1)} \|_2 }{ \sqrt{d} } \right) \) if \( i \notin \mathcal{M}_j \), we first use inequality to compute
\begin{equation}
\begin{aligned}
    &\|\bm{MM}^\top \bm{w}_i^{(t+1)}\|_2 \\
    \ge& \|\bm{MM}^\top \bm{w}_i^{(T_1)}\|_2 (1 - \eta \lambda)^{t - T_1 + 1} - O\left( \frac{ \eta (t - T_1 + 1) \max_{t' \in [T_1, t+1]} \| \bm{w}_i^{(t')} \|_2 }{ \sqrt{d d_1} } \right)\\
    \ge& \|\bm{MM}^\top \bm{w}_i^{(T_1)}\|_2 (1 - \eta \lambda)^{t - T_1 + 1} - O\left( \frac{ \eta (t - T_1) \| \bm{w}_i^{(T_1)} \|_2 \sqrt{d} }{ \sqrt{d_1} } \right)\\
    \ge& \|\bm{MM}^\top \bm{w}_i^{(T_1)}\|_2 (1 - o(1)) \quad \text{for} \quad t \le \Theta\left( \frac{d \log d}{\eta \log \log d} \right)
\end{aligned}
\end{equation}

Notice that 
\begin{equation}
    |\langle \bm{w}_i^{(T_1)}, \bm{M}_j \rangle| \leq \sqrt{ 1 - \gamma} \frac{\sqrt{2 \log d}}{\sqrt{d}} \|\bm{w}_i^{(T_1)}\|_2 \leq O \left( \sqrt{ \frac{\log d}{d} } \right) \left\| \bm{MM}^\top \bm{w}_i^{(T_1)} \right\|_2
\end{equation}

From Theorem~\ref{Theorem C.1}. Suppose it also holds for iteration t, we have
\begin{equation}
\begin{aligned}
    &|\langle \bm{w}_i^{(t+1)}, \bm{M}_j \rangle| \\ \le& |\langle \bm{w}_i^{(t)}, \bm{M}_j \rangle| \left(1 + \epsilon_j \frac{O(\eta)}{d \, \operatorname{polylog}(d)} \right) + \widetilde{O} \left( \frac{\eta \|\bm{w}_i^{(t)}\|_2}{d^2} \right)\\
    \le& |\langle \bm{w}_i^{(T_1)}, \bm{M}_j \rangle| \left(1 + \epsilon_j \frac{O(\eta)}{d \, \operatorname{polylog}(d)} \right)^{t - T_1} 
    + \widetilde{O} \left( \frac{\eta(t - T_1) \|\bm{w}_i^{(T_1)}\|_2}{d^{3/2}} \right)\\
    &\text{(because} \|\bm{w}_i^{(t)}\|_2 \le \sqrt{d} \|\bm{w}_i^{(T_1)}\|_2 \text{ by the definition of } T_2)\\
    \le& |\langle \bm{w}_i^{(T_1)}, \bm{M}_j \rangle| (1 + o(1) \frac{\epsilon_j}{\epsilon_{\max}}) + \widetilde{O} \left( \frac{\|\bm{w}_i^{(T_1)}\|_2}{d} \right)\\
    \overset{\circled{1}}\le& O\left( \sqrt{ \frac{\log d}{d} } \right) \left\| \bm{MM}^\top \bm{w}_i^{(T_1)} \right\|_2 (1 + o(1) \frac{\epsilon_j}{\epsilon_{\max}}) \\
    \overset{\circled{2}} \le& O\left( \sqrt{ \frac{\log d}{d} } \right) \left\| \bm{MM}^\top \bm{w}_i^{(t+1)} \right\|_2 (1 + o(1) \frac{\epsilon_j}{\epsilon_{\max}}) \\
    \le& O\left( \sqrt{ \frac{\log d}{d} } \right) \| \bm{w}_i^{(t+1)} \|_2 (1 + o(1) \frac{\epsilon_j}{\epsilon_{\max}})
\end{aligned}
\end{equation}
$\circled{1}$ and $\circled{2}$ use the above inequality respectively.

\subsection{Supplement to Lemma~\ref{Lemma D.1}(d):}

For the dense features, we can compute as follows:
\begin{equation}
\begin{aligned}
    &|\langle \bm{w}_i^{(t+1)}, \bm{M}_j^{\perp} \rangle| \\
    \le& |\langle \bm{w}_i^{(t)}, \bm{M}_j^{\perp} \rangle| + O\left( \frac{\eta \|\bm{w}_i^{(t)}\|_2}{d \sqrt{d_1}} e^{-\Omega(\log^{1/4} d)} \right)\\
    \le& |\langle \bm{w}_i^{(T_1)}, \bm{M}_j^{\perp} \rangle| 
    + \sum_{t' = T_1}^{t} O\left( \frac{\eta \|\bm{w}_i^{(t')}\|_2}{d \sqrt{d_1}} e^{-\Omega(\log^{1/4} d)} \right)\\
    \le& O\left( \sqrt{ \frac{\log d}{d_1} } \right) \|\bm{w}_i^{(T_1+1)}\|_2 \left(1 + \frac{O(\eta)}{d \, \mathrm{polylog}(d)}\right)^2 + \sum_{t'=T_1+1}^{t} O\left( \frac{\eta \|\bm{w}_i^{(t')}\|_2}{d \sqrt{d_1}} e^{-\Omega(\log^{1/4} d)} \right)\\
    \le& O\left( \sqrt{ \frac{\log d}{d_1} } \right) \|\bm{w}_i^{(t+1)}\|_2 
    \left(1 + \frac{O(\eta)}{d \, \mathrm{polylog}(d)}\right)^{2(t - T_1 + 1)}\\
    \le& O\left( \sqrt{ \frac{\log d}{d_1} } \right) \|\bm{w}_i^{(t+1)}\|_2
\end{aligned}
\end{equation}

where we have used the assumption that
$\|\bm{w}_i^{(t)}\|_2 \le \|\bm{w}_i^{(t+1)}\|_2 \left( 1 + \frac{O(\eta)}{d \, \operatorname{polylog}(d)} \right)$ for all $i \in [m]$ and all $t \le T_2 = T_1 + \Theta\left( \frac{d \tau \log d}{\epsilon_{\max} \eta \log \log d} \right)$.

Which we prove here: First of all, from previous calculations we have
\begin{equation}
    \left\| \bm{MM}^\top \bm{w}_i^{(t+1)} \right\|_2 \ge \left\| \bm{MM}^\top \bm{w}_i^{(t)} \right\|_2 (1 - \eta \lambda) - O\left( \frac{\eta \|\bm{w}_i^{(t)}\|_2}{d^{3/2}} e^{ -\Omega(\log^{1/4} d) } \right)
\end{equation}

also the trajectory of $\left\| \bm{M}^{\perp} (\bm{M}^{\perp})^\top \bm{w}_i^{(t+1)} \right\|_2 $ can be lower bounded as
\begin{equation}
    \left\| \bm{M}^{\perp} (\bm{M}^{\perp})^\top \bm{w}_i^{(t+1)} \right\|_2 \ge \left\| \bm{M}^{\perp} (\bm{M}^{\perp})^\top \bm{w}_i^{(t)} \right\|_2 (1 - \eta \lambda) - O\left( \frac{\eta}{d} \right) e^{ -\Omega(\log^{1/4} d)} \|\bm{w}_i^{(t)}\|_2
\end{equation}

thus by combining the change of $\bm{w}_i^{(t)}$ over two subspaces, we have
\begin{equation}
\begin{aligned}
    \|\bm{w}_i^{(t+1)}\|_2^2 &\ge \|\bm{w}_i^{(t)}\|_2^2 (1 - \eta \lambda)^2 - O\left( \frac{\eta}{d} \right) e^{ -\Omega(\log^{1/4} d)} \|\bm{w}_i^{(t)}\|_2^2 \\
    &\ge \|\bm{w}_i^{(t)}\|_2^2 \left( 1 - \frac{O(\eta)}{d \, \operatorname{polylog}(d)} \right)   
\end{aligned}
\end{equation}

From the inequality above:
\begin{equation}
    \|\bm{w}_i^{(t+1)}\|_2^2 \geq \|\bm{w}_i^{(t)}\|_2^2 (1 - \varepsilon)\quad \text{where} \quad \varepsilon = \frac{O(\eta)}{d \cdot \operatorname{polylog}(d)}
\end{equation}

Divide both sides by \(1 - \varepsilon\), we get:
\begin{equation}
    \|\bm{w}_i^{(t)}\|_2^2 \leq \|\bm{w}_i^{(t+1)}\|_2^2 \cdot \frac{1}{1 - \varepsilon}
\end{equation}

Note that when \( \varepsilon \ll 1 \), we have:
\begin{equation}
    \frac{1}{1 - \varepsilon} = 1 + \varepsilon + \varepsilon^2 + \cdots \leq 1 + 2\varepsilon \quad \text{(since } \varepsilon \text{ is very small)}
\end{equation}

Therefore, we can write:
\begin{equation}
    \|\bm{w}_i^{(t)}\|_2^2 \leq \|\bm{w}_i^{(t+1)}\|_2^2 (1 + O(\varepsilon))
\end{equation}

\begin{equation}
\begin{aligned}
    \|\bm{w}_i^{(t)}\|_2 \leq& \|\bm{w}_i^{(t+1)}\|_2 \sqrt{(1 + O(\varepsilon))} \leq \|\bm{w}_i^{(t+1)}\|_2 (1 + O(\varepsilon)) \\
    =& \|\bm{w}_i^{(t+1)}\|_2 \left( 1 + \frac{O(\eta)}{d \cdot \operatorname{polylog}(d)}  \right)   
\end{aligned}
\end{equation}
which gives the desired bound.

\subsection{Supplement to Lemma~\ref{Lemma D.1}($T_2$):}

In the proof above, we have depended on the crucial assumption that $T_2 := \min \left\{ t \in \mathbb{N} : \exists i \in [m] \text{ s.t. } \|\bm{w}_i^{(t)}\|_2^2 \ge d \|\bm{w}_i^{(T_1)}\|_2^2 \right\}$ is of order $T_1
+ \Theta\left( \frac{d \tau \log d}{\epsilon_{\max} \eta \log \log d} \right)$. Now we verify it as follows. If $i \in \mathcal{M}_j^\star$ for some $j \in [d]$ (which also means $j' \notin \mathcal{N}_i \text{ for } j' \ne j)$, we have
\begin{equation}
\begin{aligned}
    |\langle \bm{w}_i^{(t)}, \bm{M}_j \rangle| &\ge |\langle \bm{w}_i^{(T_1)}, \bm{M}_j \rangle| \left( 1 + \Omega\left( \epsilon_j \frac{\eta \log \log d}{d} \right) \right)^{t - T_1} \\
    &\ge d \sqrt{\frac{2\log d}{d}} \|\bm{w}_i^{(T_1)}\|_2 
\end{aligned}
\end{equation}
    
For some $t = T_1
+ O\left( \frac{d \tau \log d}{\epsilon_{\max} \eta \log \log d}\right)$ and $\epsilon_{j} = \epsilon_{\max}$

Thus for some $t = T_1
+ O\left( \frac{d \tau \log d}{\epsilon_{\max} \eta \log \log d} \right)$, we have $|\langle \bm{w}_i^{(t)}, \bm{M}_j \rangle|^2 \ge d \|\bm{w}_i^{(T_1)}\|_2^2$, which proves that $T_2 \le T_1 + O\left( \frac{d \tau \log d}{\epsilon_{\max} \eta \log \log d} \right)$. 

These results are to verify that $\|\bm{w}_i^{(t)}\|_2^2 \ge d \|\bm{w}_i^{(T_1)}\|_2^2$ holds under the order of $T_2 = T_1
+ \Theta\left( \frac{d \tau \log d}{\epsilon_{\max} \eta \log \log d} \right)$.
\begin{equation}
\begin{aligned}
    \left( 1 + \Omega \left( \epsilon_j \frac{\eta \log \log d}{d} \right) \right)^{\Theta\left( \frac{d \log d}{\epsilon_{\max} \eta \log \log d} \right)} &= \exp\left( \Omega \left( \epsilon_{j}\frac{\eta \log \log d}{d} \right) \cdot \Theta \left( \frac{d \log d}{\epsilon_{\max} \eta \log \log d} \right) \right) \\
    &= \exp( \Omega( \frac{\epsilon_{j}}{\epsilon_{\max}} \log d)) \\
    &= d^{\Omega(\frac{\epsilon_{j}}{\epsilon_{\max}})}
\end{aligned}
\end{equation}

Conversely, we also have for all $t \le T_1
+ O\left( \frac{d \tau \log d}{\epsilon_{\max} \eta \log \log d} \right)$
\begin{equation}
\begin{aligned}
    &\sum_{j' \in [d] : j' \ne j} \langle \bm{w}_i^{(t)}, \bm{M}_{j'} \rangle^2 + \sum_{j' \in [d_1] \setminus [d]} \langle \bm{w}_i^{(t)}, \bm{M}^{\perp}_{j'} \rangle^2\\
    &\le \|\bm{w}_i^{(T_1)}\|_2^2 \left( 1 + \epsilon_{j'} \frac{O(\eta)}{d \, \operatorname{polylog}(d)} \right)^{t - T_1} + \max_{t' \le t} O\left( \frac{\eta (t - T_1)}{d} \right) e^{-\Omega(\log^{1/4} d)} \|\bm{w}_i^{(t')}\|_2^2\\
    &\le o\left( d \|\bm{w}_i^{(T_1)}\|_2^2 \right)
\end{aligned}
\end{equation}

Except for the principal direction $\bm{M}_j$ (i.e., the alignment direction of neuron i ), the total growth of squared weights along all other directions remains far below the target scale $d \cdot \left\| \bm{w}_i^{(T_1)} \right\|_2^2.$

And also
\begin{equation}
\begin{aligned}
    &|\langle \bm{w}_i^{(t)}, \bm{M}_j \rangle| \le |\langle \bm{w}_i^{(T_1)}, \bm{M}_j \rangle| \left( 1 + \epsilon_j \frac{C_z \eta \log \log d}{d} \left(1 - \frac{1}{\operatorname{polylog}(d)} \right) \right)^{t - T_1}\\
    &\le O\left( \sqrt{ \frac{\log d}{d} } \|\bm{w}_i^{(T_1)}\|_2 \right) 
    \left( 1 + \epsilon_j \frac{C_z \eta \log \log d}{d} \left(1 - \frac{1}{\operatorname{polylog}(d)} \right) \right)^{t - T_1}
\end{aligned}
\end{equation}
Therefore we at least need $T_1
+ \frac{d \log \left( \Omega \left( \sqrt{d} \sqrt{\frac{d}{\log d}} \right) \right)}{\epsilon_{\max} \eta C_z \log \log d} (1 - o(1))$ iteration to let any neuron $i \in [m]$ reach $\|\bm{w}_i^{(t)}\|_2^2 \ge d \|\bm{w}_i^{(T_1)}\|_2$, which proves that $T_2 = T_1
+ \Theta\left( \frac{d \tau \log d}{\epsilon_{\max} \eta \log \log d} \right).$

\begin{definition}[Notations]
\label{Definition I.1}
For simpler presentation, we define the following notations: given \( \bm{z_X} = \frac{1}{L} (\bm{M} \tilde{z}_{n} + \tilde{\xi}_{n}) \sim \mathcal{D}_{\bm{z_X}}\), 
\( \bm{z_Y}  = \frac{1}{L} (\bm{M} \tilde{z}_{n}^{+} + \tilde{\xi}_{n}^{+}) \sim \mathcal{D}_{\bm{z_Y}}\), we let (for each \( j \in [d] \)):

\begin{equation}
    z_{X}^{\setminus j} := \frac{1}{L} \!\left(\sum_{\substack{j' \ne j \\ j' \in [d]}} \bm{M}_{j'} \tilde{z}_{n,j'} + \tilde{\xi}_{n}\right), \quad z_{Y}^{\setminus j} := \frac{1}{L} \!\left(\sum_{\substack{j' \ne j \\ j' \in [d]}} \bm{M}_{j'} \tilde{z}_{n,j'}^{+} + \tilde{\xi}_{n}^{+}\right)
\end{equation}

\begin{equation}
    S_{i,t}^{(r)\setminus j} := \langle \bm{w}_i^{(t)}, z_{X}^{(r)\setminus j} \rangle,\quad S_{i,t}^{(s)\setminus j} := \langle \bm{w}_i^{(t)}, z_{Y}^{(s)\setminus j} \rangle
\end{equation}

\begin{equation}
    S_{i,t}^{(r,s)\setminus j} := \tfrac{1}{2}\!\left(S_{i,t}^{(r)\setminus j} + S_{i,t}^{(s)\setminus j}\right),
\quad
\bar{S}_{i,t}^{(r,s)\setminus j} := \tfrac{1}{2}\!\left(S_{i,t}^{(s)\setminus j} - S_{i,t}^{(r)\setminus j}\right)
\end{equation}

\begin{equation}
    \alpha_{i,j}^{(t)} := \langle \bm{w}_i^{(t)}, \bm{M}_j \rangle,\quad\bar{\alpha}_{i,j}^{(r,s)(t)} := \Big\langle \bm{w}_i^{(t)}, \frac{\tilde{z}_{n,j}^{+(s)} - \tilde{z}^{(r)}_{n,j}}{\tilde{z}^{(r)}_{n,j} + \tilde{z}_{n,j}^{+(s)}} \bm{M}_j \Big\rangle
\end{equation}
\end{definition}
Whenever the neuron index $i \in [m]$ is clear from the context, we drop the subscript $i$ and the time index $t$ for notational simplicity.

\subsection{Proof of Lemma~\ref{Lemma D.2}}

\begin{proof}[Proof of Lemma~\ref{Lemma D.2}:]
We have $\langle \bm{w}_i^{(0)}, \bm{M}_j \rangle \sim \mathcal{N}(0, \sigma_0^2)$ and we want to control is the event:
\begin{equation}
    \left| \left\langle \bm{w}_i^{(0)}, \bm{M}_j \right\rangle \right| \leq \frac{\sigma_0}{d}.
\end{equation}
To get Lemma~\ref{Lemma D.2}, we can use the standard Gaussian anti-concentration property near the mean.
\end{proof}

\section{Proof of Lemmas in Appendix E}
\label{appendix:J}

\subsection{Useful Lemmas}

\begin{lemma}
\label{Lemma J.1}
For any $j' \ne j$, we have
\begin{equation}
    |\mathcal{M}_{j'} \cap \mathcal{M}_j| \le O(\log d),
\end{equation}
with probability at least $1 - o(1/d^4)$.
\end{lemma}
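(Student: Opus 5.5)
The claim is Lemma~\ref{Lemma J.1}: for fixed $j'\ne j$, $|\mathcal{M}_{j'}\cap\mathcal{M}_j|\le O(\log d)$ with probability $1-o(1/d^4)$. My plan is a direct counting argument at initialization. I treat the indicators $\bm{1}\{i\in\mathcal{M}_j\cap\mathcal{M}_{j'}\}$, $i\in[m]$, as independent Bernoulli variables, since the neurons $\bm{w}_i^{(0)}$ are i.i.d.

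First I bound the per-neuron probability. Because $\bm{M}$ is column-orthonormal, $\langle \bm{w}_i^{(0)},\bm{M}_j\rangle$ and $\langle \bm{w}_i^{(0)},\bm{M}_{j'}\rangle$ are independent $\mathcal{N}(0,\sigma_0^2)$. The membership threshold in Definition~\ref{Definition B.1}(a) involves the normalizer $\|\bm{MM}^\top\bm{w}_i^{(0)}\|_2^2$, which couples the two coordinates. To remove this coupling, I will condition on the high-probability event of Lemma~\ref{Lemma B.2}(b), namely $\|\bm{MM}^\top\bm{w}_i^{(0)}\|_2^2=\sigma_0^2 d(1\pm\widetilde O(d^{-1/2}))$. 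On that event, membership in $\mathcal{M}_j$ implies $|g_j|\ge\sqrt{c_2\log d}(1-o(1))$, where $g_j$ is a standard Gaussian. Using the tail bound already computed for $p_2$ in the proof of Lemma~\ref{Lemma B.2}(c), I get
\[
q:=\Pr[i\in\mathcal{M}_j\cap\mathcal{M}_{j'}]\le p_2^2\,(1+o(1))=O\!\big(d^{-c_2}\big).
\]
A cleaner way to handle the dependence is to drop the shared $d$-dimensional norm. I would replace it by the norm of the projection onto the remaining $d-2$ coordinates, which is independent of $(g_j,g_{j'})$ and changes the threshold only by a factor $1+O(\log d/d)$.

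Next I bound the expected count: $\mu=mq=O(d^{C_m-c_2})$. The main obstacle is here. The statement's $O(\log d)$ bound needs $\mu\lesssim\log d$. However, with $m=d^{C_m}$ and $c_2=2(1-\gamma)(\epsilon_{\min}/\epsilon_{\max})^2$, this holds only when $C_m\le c_2$ up to polylog factors. For larger $C_m$ the expectation is polynomial and the claim cannot hold as stated. I will therefore make explicit the implicit regime the paper needs, $C_m\le c_2$ (equivalently $\Xi_2^2/m\le O(1)$, using $\mu\approx\Xi_2^2/m$). Failing that, I would state the bound as $O(\mu+\log d)=O(\Xi_2^2/m+\log d)$, which is the form actually used downstream.

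Finally, I apply the Chernoff upper tail in the large-deviation form $\Pr[\sum X_i\ge k]\le (e\mu/k)^k$. Taking $k=C\log d$ with $\mu\le O(1)$ (or $k=C\max(\mu,\log d)$ in general) gives a failure probability of at most $d^{-\Omega(C\log\log d)}=o(d^{-4})$. I then add the $1/\mathrm{poly}(d)$ failure probability of the normalizer event from Lemma~\ref{Lemma B.2}(b) via a union bound over $i\in[m]$. Choosing the polynomial degree large enough keeps the total failure probability at $o(1/d^4)$.
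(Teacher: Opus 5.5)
Your proposal follows the same route as the paper's proof: a Gaussian tail bound for one neuron, independence across the $m$ i.i.d.\ neurons, a bound on the expected count $\mu$, and a Chernoff upper tail. Your handling of the shared normalizer $\|\bm{MM}^\top\bm{w}_i^{(0)}\|_2^2$ (conditioning on Lemma~\ref{Lemma B.2}(b), or using the norm over the other $d-2$ coordinates) also makes rigorous the factorization $\Pr[i\in\mathcal{M}_j\cap\mathcal{M}_{j'}]=\Pr[i\in\mathcal{M}_j]\Pr[i\in\mathcal{M}_{j'}]$, which the paper merely asserts.

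Your diagnosis of the expectation step is correct and identifies a gap in the paper's own argument. The paper jumps from $\mu\le m\,d^{-\Omega(1)}$ to $\mu\le O(\log d)$ without justification, whereas in fact $\mu=\widetilde{\Theta}(d^{C_m-c_2})\approx\Xi_2^2/m$. So for $C_m>c_2$ the Chernoff lower tail forces the count to concentrate near $\mu\gg\log d$, and the lemma as stated fails. Your restriction to $C_m\le c_2$, or the weakened bound $O(\mu+\log d)$, is exactly what is needed for the conclusion to hold.
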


\subsection{Proof of Lemma~\ref{Lemma E.2}:} 

\begin{proof}[Proof of Lemma~\ref{Lemma E.2}:]

We first decompose 
\begin{equation}
    \Psi_{i,j}^{(t)} = \Psi_{i,j,1}^{(t)} + \Psi_{i,j,2}^{(t)}
\end{equation}

where

\begin{equation}
    \Psi_{i,j,1}^{(t)} = \mathbb{E} \left[(\ell_{p,t}^\prime - 1) \cdot \psi_{i,j}^{(t)}(\bm{Y}_{n}) \cdot \sum_{r=1}^{L} \bm{1}_{|\langle \bm{w}_i, \bm{z_X}^{(r)} \rangle| \geq b_i}  \tilde{z}_{n,j}^{(r)}\right]
\end{equation}

\begin{equation}
    \Psi_{i,j,2}^{(t)} = \sum_{\bm{X}_{n,s} \in \mathfrak{N}} \mathbb{E} \left[\ell_{s,t}^\prime \cdot \psi_{i,j}^{(t)}(X) \cdot \sum_{r=1}^{L} \bm{1}_{|\langle \bm{w}_i, \bm{z_X}^{(r)} \rangle| \geq b_i}  \tilde{z}_{n,j}^{(r)}\right]
\end{equation}

We first deal with $\Psi_{i,j,2}^{(t)}$. Using the notation $\bm{X}_{n}^{\setminus j}$: For each $z_X^{(r)\setminus j} := \frac{1}{L} \left( \sum_{j' \ne j} \bm{M}_{j'} \tilde{z}^{(r)}_{n,j'} + \xi^{(r)}_n \right) $, 
 
we can rewrite as:
\begin{equation}
\begin{aligned}
    &\Psi_{i,j,2}^{(t)}\\
    =& \sum_{\bm{X}_{n,s} \in \mathfrak{N}} \mathbb{E} \left[\ell_{s,t}^\prime(\bm{X}_{n}, \mathfrak{B}) \cdot \psi_{i,j}^{(t)}(\bm{X}_{n,s}) \cdot \sum_{r=1}^{L} \bm{1}_{|\langle \bm{w}_i, \bm{z_X}^{(r)} \rangle| \geq b_i}  \tilde{z}_{n,j}^{(r)} \right] \\
    =& \sum_{\bm{X}_{n,s} \in \mathfrak{N}} \mathbb{E} \left[
    \left( \ell_{s,t}^\prime(\bm{X}_{n}, \mathfrak{B}) - \ell_{s,t}^\prime(\bm{X}_{n}^{\setminus j}, \mathfrak{B}) \right) 
    \cdot \psi_{i,j}^{(t)}(\bm{X}_{n,s}) \sum_{r=1}^{L} \bm{1}_{|\langle \bm{w}_i, \bm{z_X}^{(r)} \rangle| \geq b_i}  \tilde{z}_{n,j}^{(r)} \right]\\
    +& \sum_{\bm{X}_{n,s} \in \mathfrak{N}} \mathbb{E} \left[
    \ell_{s,t}^\prime(\bm{X}_{n}^{\setminus j}, \mathfrak{B}) \cdot \psi_{i,j}^{(t)}(\bm{X}_{n,s}) \sum_{r=1}^{L} \bm{1}_{|\langle \bm{w}_i, \bm{z_X}^{(r)} \rangle| \geq b_i}  \tilde{z}_{n,j}^{(r)} \right] \\
    =& R_1 + R_2 
\end{aligned}
\end{equation}

We now deal with the term $R_1$. Denoting $\widehat{X}_n^{(v)}$: For each $\widehat{z}^{(r)}_X(v) = \widetilde{z}^{(r)}_X + \frac{v}{L} \bm{M}_j \tilde{z}^{(r)}_{n,j}$, for $v \in [0,1]$, by Newton-Leibniz formula and the basic fact that
\begin{equation}
   \frac{d}{dr} \frac{e^r}{e^r + \sum_{s \ne r} e^s} = \frac{e^r}{e^r + \sum_{s \ne r} e^s} \left( 1 - \frac{e^r}{e^r + \sum_{s \ne r} e^s} \right), 
\end{equation}

we can rewrite $\ell_{s,t}^\prime(\widehat{X}_n(v), \mathfrak{B})$ and $\ell_{p,t}^\prime(\widehat{X}_n(v), \mathfrak{B})$ as
\begin{equation}
    \widehat{\ell}^\prime_{s,t}(\nu) := \frac{e^{\langle f_t(\bm{X}_{n}^{\setminus j}) + \nu(f_t(\bm{X}_{n}) - f_t(\bm{X}_{n}^{\setminus j})), f_t(\bm{X}_{n,s}) \rangle / \tau}}{\sum_{x \in \mathfrak{B}} e^{\langle f_t(\bm{X}_{n}^{\setminus j}) + \nu(f_t(\bm{X}_{n}) - f_t(X_p^{\setminus j})), f_t(x) \rangle / \tau}}\equiv \ell_{s,t}^\prime(\widehat{X}_n(v), \mathfrak{B})
\end{equation}

\begin{equation}
\begin{aligned}
    \widehat{\ell}^\prime_{p,t}(\nu) := \frac{e^{\langle f_t(\bm{X}_{n}^{\setminus j}) + \nu(f_t(\bm{X}_{n}) - f_t(\bm{X}_{n}^{\setminus j})), f_t(\bm{X}_{n}) \rangle / \tau}}{\sum_{x \in \mathfrak{B}} e^{\langle f_t(\bm{X}_{n}^{\setminus j}) + \nu(f_t(\bm{X}_{n}) - f_t(\bm{X}_{n}^{\setminus j})), f_t(x) \rangle / \tau}}\equiv \ell_{p,t}^\prime(\widehat{X}_n(v), \mathfrak{B})
\end{aligned}
\end{equation}
and we can then proceed to calculate as follows:

\begin{equation}
\begin{aligned}
    R_1 &= \sum_{\bm{X}_{n,s} \in \mathfrak{N}} \mathbb{E} \left[
    \left( \ell_{s,t}^\prime(\bm{X}_{n}, \mathfrak{B}) - \ell_{s,t}^\prime(\bm{X}_{n}^{\setminus j}, \mathfrak{B}) \right)
    \cdot \psi_{i,j}^{(t)}(\bm{X}_{n,s}) \sum_{r=1}^{L} \bm{1}_{|\langle \bm{w}_i, \bm{z_X}^{(r)} \rangle| \geq b_i}  \tilde{z}_{n,j}^{(r)}\right] \\
    &= \sum_{\bm{X}_{n,s} \in \mathfrak{N}} \mathbb{E} \left[
    \frac{1}{\tau} \left( \int_0^1 \widehat{\ell}^\prime_{s,t}(\nu)(1 - \widehat{\ell}^\prime_{s,t}(\nu)) \langle f_t(\bm{X}_{n}) - f_t(\bm{X}_{n}^{\setminus j}), f_t(\bm{X}_{n,s}) \rangle d\nu \right. \right. \\
    &- \sum_{X_{n,u} \in \mathfrak{N} \setminus \{\bm{X}_{n,s}\}} \int_0^1 \widehat{\ell}^\prime_{s,t}(\nu) \widehat{\ell}^\prime_{u,t}(\nu) 
    \langle f_t(\bm{X}_{n}) - f_t(\bm{X}_{n}^{\setminus j}), f_t(X') \rangle d\nu \\
    &\left. \left. - \int_0^1 \widehat{\ell}^\prime_{s,t}(\nu) \widehat{\ell}^\prime_{p,t}(\nu) \langle f_t(\bm{X}_{n}) - f_t(\bm{X}_{n}^{\setminus j}), f_t(\bm{Y}_{n}) \rangle d\nu \right) \psi_{i,j}^{(t)}(\bm{X}_{n,s}) \sum_{r=1}^{L} \bm{1}_{|\langle \bm{w}_i, \bm{z_X}^{(r)} \rangle| \geq b_i}  \tilde{z}_{n,j}^{(r)} \right]\\
    &\overset{\circled{1}}{\le} \frac{|\mathfrak{N}|}{\tau} \mathbb{E} \left[\left( \int_0^1 \widehat{\ell}^\prime_{s,t}(\nu)(1 - \widehat{\ell}^\prime_{s,t}(\nu)) d\nu + \sum_{X_{n,u} \in \mathfrak{N} \setminus \{\bm{X}_{n,s}\}} \int_0^1 \widehat{\ell}^\prime_{s,t}(\nu) \widehat{\ell}^\prime_{u,t}(\nu) d\nu \right)\right. \\
    &\left.\times \max_{X_{n,u} \in \mathfrak{N} \setminus \{\bm{X}_{n,s}\}} |\langle f_t(\bm{X}_{n}) - f_t(\bm{X}_{n}^{\setminus j}), f_t(X_{n,u}) \rangle| \cdot |\psi_{i,j}^{(t)}(\bm{X}_{n,s})| \cdot \sum_{r=1}^{L} \bm{1}_{|\langle \bm{w}_i, \bm{z_X}^{(r)} \rangle| \geq b_i} |\tilde{z}_{n,j}^{(r)}| \right] \\
    &+ \mathbb{E} \left[ \frac{|\mathfrak{N}|}{\tau} \int_0^1 \widehat{\ell}^\prime_{s,t}(\nu) \widehat{\ell}^\prime_{p,t}(\nu) d\nu \cdot |\langle f_t(\bm{X}_{n}) - f_t(\bm{X}_{n}^{\setminus j}), f_t(\bm{Y}_{n}) \rangle| \cdot |\psi_{i,j}^{(t)}(\bm{X}_{n,s})| \cdot \sum_{r=1}^{L} \bm{1}_{|\langle \bm{w}_i, \bm{z_X}^{(r)} \rangle| \geq b_i} |\tilde{z}_{n,j}^{(r)}| \right] \\
    &\overset{\circled{2}}{\le} \frac{|\mathfrak{B}|}{\tau} \mathbb{E} \left[\int_0^1 \widehat{\ell}^\prime_{s,t}(\nu) d\nu \cdot \max_{x \in \mathfrak{B}} \left( \sum_{i \in \mathcal{M}_j} \langle \bm{w}_i^{(t)}, \bm{M}_j \rangle |h_{i,t}(x)| \right) \cdot |\psi_{i,j}^{(t)}(\bm{X}_{n,s})| \cdot \sum_{r=1}^{L} (\tilde{z}_{n,j}^{(r)})^2  \right] \\
    &+ \widetilde{O}(\Xi_2) \max_{i' \notin \mathcal{M}_j} |\langle w_{i'}^{(t)}, \bm{M}_j \rangle| \cdot \mathbb{E} \left[ 
    \sum_{\bm{X}_{n,s} \in \mathfrak{N}} \frac{1}{\tau} \int_0^1 \widehat{\ell}^\prime_{s,t}(\nu) d\nu \cdot |\psi_{i,j}^{(t)}(\bm{X}_{n,s})| \cdot \sum_{r=1}^{L} (\tilde{z}_{n,j}^{(r)})^2 \right] \\
    &= R_{1,1} + R_{1,2} + \frac{1}{d^{\Omega(\log d)}}
\end{aligned}
\end{equation}

where for $\circled{1}$ and $\circled{2}$, we argue as follows:

for $\circled{1}$, we used the fact that the expectations over $s \in \mathfrak{N}$ in the summation can be viewed as independently and uniformly selecting from $s \in \mathfrak{N}$, which allows us to equate $\sum_{X \in \mathfrak{N}} = |\mathfrak{N}|$.

for $\circled{2}$, we use Fact~\ref{Lemma E.1}to ensure that $\sum_{i \in [m]} \bm{1}_{h_{i,t}(\bm{X}_{n}) \ne 0} \le \widetilde{O}(\Xi_2)$ with high probability.
  
we have for any $x \in \mathfrak{B}$:
\begin{equation}
    |\langle f_t(\bm{X}_{n}) - f_t(\bm{X}_{n}^{\setminus j}), f_t(x) \rangle| \le \sum_{i' \in \mathcal{M}_j} |\langle w_{i'}^{(t)}, \bm{M}_j \rangle| \cdot |h_{i',t}(x)| + \widetilde{O}(\Xi_2) \max_{i' \in \mathcal{M}_j} |\langle w_{i'}^{(t)}, \bm{M}_j \rangle|
\end{equation}
which gives the desired inequality.

Now we proceed to deal with $R_{1,1}$, since $i \in \mathcal{M}_j^\star$; we have automatically $\bm{1}_{h_{i,t}(\bm{X}_{n,s}) \ne 0} = \bm{1}_{|\hat{z}_{n,s,j}| \ne 0}$ w.h.p., so we can transform $R_{1,1}$ as
\begin{equation}
\small
\begin{aligned}
    R_{1,1} &= \mathbb{E} \left[ \frac{|\mathfrak{N}|}{\tau} \int_0^1 \widehat{\ell}^\prime_{s,t}(\nu) d\nu \max_{x \in \mathfrak{B}} \left( \sum_{i' \in \mathcal{M}_j} \langle w_{i'}^{(t)}, \bm{M}_j \rangle |h_{i',t}(x)| \right) |\psi_{i,j}^{(t)}(\bm{X}_{n,s})| \sum_{r=1}^{L} (\tilde{z}_{n,j}^{(r)})^2 \right] + \frac{1}{d^{\Omega(\log d)}}\\
    &= \mathbb{E} \left[ \frac{|\mathfrak{B}|}{\tau} \int_0^1 \widehat{\ell}_{s,t}(\nu) d\nu \left( \sum_{i' \in \mathcal{M}_j} \langle w_{i'}^{(t)}, \bm{M}_j \rangle^2 + \Upsilon_j^{(t)} \right) |\psi_{i,j}^{(t)}(\bm{X}_{n,s})| \sum_{r=1}^{L} (\tilde{z}_{n,j}^{(r)})^2 \right] + \frac{1}{d^{\Omega(\log d)}}
\end{aligned}
\end{equation}
where $\Upsilon_j^{(t)}$ is defined as:
\begin{equation}
    \Upsilon_j^{(t)} := \max_{x \in \mathfrak{B}} \left( \sum_{i' \in \mathcal{M}_j} |h_{i',t}(x)| \cdot |\langle w_{i'}^{(t)}, \bm{M}_j \rangle| - \sum_{i' \in \mathcal{M}_j} \langle w_{i'}^{(t)}, \bm{M}_j \rangle^2 \right)
\end{equation}

We proceed to give a high probability bound for
$\sum_{i \in \mathcal{M}_j} |h_{i,t}(\bm{X}_{n,s})| \cdot |\langle \bm{w}_i^{(t)}, \bm{M}_j \rangle|$, 
which lies in the core of our proof. In order to apply Lemma~\ref{Lemma E.7}. to the pre-activation in $h_{i,t}(\bm{X}_{n,s})$, one can first expand as
\begin{equation}
\begin{aligned}
    &\sum_{i \in [m]} |h_{i,t}(\bm{X}_{n,s})| \cdot |\langle \bm{w}_i^{(t)}, \bm{M}_j \rangle| \\
    &\lesssim \sum_{j' \in [d]} \sum_{i \in \mathcal{M}_{j'}} |\langle \bm{w}_i^{(t)}, \bm{M}_{j'} \rangle| \cdot \sum_{q=1}^{L} |\tilde{z}_{n,s,j'}^{(q)}| \cdot |\langle \bm{w}_i^{(t)}, \bm{M}_{j} \rangle| 
    + \sum_{i \in [m]} \widetilde{O} \left( \frac{\|\bm{w}_i^{(t)}\|_2}{\sqrt{d}} \right) |\langle \bm{w}_i^{(t)}, \bm{M}_j \rangle|\\
    &= \sum_{j' \in [d]} \sum_{i \in \mathcal{M}_{j'} \cap \mathcal{M}_j} \langle \bm{w}_i^{(t)}, \bm{M}_{j'} \rangle \sum_{q=1}^{L} |\tilde{z}_{n,s,j'}^{(q)}|\langle \bm{w}_i^{(t)}, \bm{M}_{j} \rangle| \\
    &+ \sum_{j' \in [d]} \sum_{i \in \mathcal{M}_{j'} \setminus \mathcal{M}_j} |\langle \bm{w}_i^{(t)}, \bm{M}_{j'} \rangle| \cdot \sum_{q=1}^{L} |\tilde{z}_{n,s,j'}^{(q)}|\langle \bm{w}_i^{(t)}, \bm{M}_{j} \rangle| + \sum_{i \in [m]} \widetilde{O}\left( \frac{\|\bm{w}_i^{(t)}\|_2}{\sqrt{d}} \right) |\langle \bm{w}_i^{(t)}, \bm{M}_j \rangle| \\
    &{\le}\sum_{i \in \mathcal{M}_j} \langle \bm{w}_i^{(t)}, \bm{M}_j \rangle^2 \sum_{q=1}^{L} |\tilde{z}_{n,s,j'}^{(q)}| + \sum_{\substack{j' \ne j \\ i \in \mathcal{M}_j \cap \mathcal{M}_{j'}}} |\langle \bm{w}_i^{(t)}, \bm{M}_{j'} \rangle| \cdot \sum_{q=1}^{L} |\tilde{z}_{n,s,j'}^{(q)}| \cdot |\langle \bm{w}_i^{(t)}, \bm{M}_j \rangle|\\
    &+ \sum_{j' \in [d]} \sum_{i \in \mathcal{M}_{j'} \setminus \mathcal{M}_j}  |\langle \bm{w}_i^{(t)}, \bm{M}_{j'} \rangle| \cdot \sum_{q=1}^{L} |\tilde{z}_{n,s,j'}^{(q)}| \cdot |\langle \bm{w}_i^{(t)}, \bm{M}_j \rangle| + \sum_{i \in [m]} \widetilde{O} \left( \frac{\|\bm{w}_i^{(t)}\|_2}{\sqrt{d}} \right) |\langle \bm{w}_i^{(t)}, \bm{M}_j \rangle|
\end{aligned}
\end{equation}
And we proceed to calculate the last two terms on the RHS as follows: Firstly, from Lemma~\ref{Lemma J.1}. We know for the set of neurons $\Gamma_j := \{ j' \ne j,\ j' \in [d] : \mathcal{M}_j \cap \mathcal{M}_{j'} \ne \emptyset \}$, we have $|\Gamma_j| \le O(\log d)$, and
\begin{equation}
\begin{aligned}
    &\left| \sum_{j' \ne j} \sum_{i' \in \mathcal{M}_j \cap \mathcal{M}_{j'}} |\langle w_{i'}^{(t)}, \bm{M}_{j'} \rangle| \cdot \sum_{q=1}^{L} |\tilde{z}_{n,s,j'}^{(q)}| \cdot |\langle \bm{w}_i^{(t)}, \bm{M}_j \rangle| \right| \\
    \le& O\left( \frac{\log d}{\sqrt{\Xi_2}} \right) \cdot \sum_{j' \in \Gamma_j} O(\tau \log^2 d) \sum_{q=1}^{L} |\tilde{z}_{n,s,j'}^{(q)}| \\
    \le& O\left( \frac{\tau}{\log d} \right) 
\end{aligned}
\end{equation}

where in the last inequality we have taken into account that \( \mathbb{E}[\sum_{q=1}^{L} |\tilde{z}_{n,s,j}^{(q)}|] = \widetilde{O}(1/d) \) and have used Lemma~\ref{Lemma E.8}. The same techniques also provide the following bound:
\begin{equation}
\begin{aligned}
    \left| \sum_{j' \in [d]} \sum_{i' \in \mathcal{M}_{j'} \setminus \mathcal{M}_j} |\langle w_{i'}^{(t)}, \bm{M}_{j'} \rangle| \cdot \sum_{q=1}^{L} |\tilde{z}_{n,s,j'}^{(q)}| \cdot |\langle \bm{w}_i^{(t)}, \bm{M}_j \rangle| \right| &\le O\left( \frac{1}{\sqrt{d}} \right) \sum_{j' \ne j} O(\Xi_2) \sum_{q=1}^{L} |\tilde{z}_{n,s,j'}^{(q)}| \\
    &\le O\left( \frac{\Xi_2^2}{\sqrt{d}} \right) 
\end{aligned}
\end{equation}

Therefore via a union bound, we have
\begin{equation}
    \sum_{i' \in [m]} |h_{i',t}(\bm{X}_{n,s})| \cdot |\langle w_{i'}^{(t)}, \bm{M}_j \rangle| \le \sum_{i' \in \mathcal{M}_j} \langle w_{i'}^{(t)}, \bm{M}_j \rangle^2 \sum_{q=1}^{L} |\tilde{z}_{n,s,j'}^{(q)}| + O\left( \frac{\tau}{\log d} \right)
\end{equation}

The same arguments also gives ( + further applying Lemma~\ref{Lemma E.7}.)
\begin{equation}
    \max_{x \in \{ \bm{X}_{n}, \bm{Y}_{n} \}} \left\{\sum_{i' \in [m]} |h_{i',t}(x)| \cdot |\langle w_{i'}^{(t)}, \bm{M}_j \rangle|\right\}\le \sum_{i' \in \mathcal{M}_j} \langle w_{i'}^{(t)}, \bm{M}_j \rangle^2 \sum_{r=1}^{L} |\tilde{z}^{(r)}_{n,j'}| + O\left( \frac{\tau}{\log d} \right)
\end{equation}

which also implies that
\begin{equation}
    \Upsilon_j^{(t)} \le \max_{x \in \mathfrak{B}} \sum_{i' \in [m]} |h_{i,t}(x)| \cdot |\langle w_{i'}^{(t)}, \bm{M}_j \rangle|- \sum_{i' \in \mathcal{M}_j} |\langle w_{i'}^{(t)}, \bm{M}_j \rangle|^2 \sum_{r=1}^{L} |\tilde{z}^{(r)}_{n,j'}| \le O\left( \frac{\tau}{\log d} \right)
\end{equation}

Now we are ready to control the quantity \( R_{1,1} \). The idea here is to "decorrelate" the factor \( \widehat{\ell}_{s,t}^\prime(\nu) \) from the others. Defining 
\begin{equation}
    \mathfrak{B} \setminus^s := \{ \bm{Y}_{n} \} \cup \mathfrak{N} \setminus \{ \bm{X}_{n,s} \}, \quad \text{and} \quad \mathfrak{B}_s^\prime := \mathfrak{B} \setminus^s \cup \{ \bm{X}_{n,s}^{\setminus j} \},
\end{equation}

there exists a constant \( G_1' > 0 \) such that, if 
\begin{equation}
    \sum_{i \in \mathcal{M}_j} \langle \bm{w}_i^{(t)}, \bm{M}_j \rangle^2 (\sum_{r=1}^{L} \tilde{z}_{n,j}^{+(r)})^2 \le (\frac{\epsilon_{j}}{\epsilon_{\max}})^2 \tau G_1' \log d,
\end{equation}

we have w.h.p.
\begin{equation}
\begin{aligned}
    \frac{\ell_{s,t}^\prime(\bm{X}_{n}, \mathfrak{B})}{\ell_{s,t}^\prime(\bm{X}_{n}, \mathfrak{B}_s^\prime)} &= \frac{e^{\langle f_t(\bm{X}_{n}), f_t(\bm{X}_{n,s}) \rangle / \tau}}{e^{\langle f_t(\bm{X}_{n}), f_t(\bm{X}_{n,s}^{\setminus j}) \rangle / \tau}} \cdot \frac{\sum_{x \in \mathfrak{B}_s^\prime} e^{\langle f_t(\bm{X}_{n}), f_t(x) \rangle / \tau}}{\sum_{x \in \mathfrak{B}} e^{\langle f_t(\bm{X}_{n}), f_t(x) \rangle / \tau}} \\
    &\le \frac{e^{2 \langle f_t(\bm{X}_{n}), f_t(\bm{X}_{n,s}) \rangle / \tau}}{e^{2 \langle f_t(\bm{X}_{n}), f_t(\bm{X}_{n,s}^{\setminus j}) \rangle / \tau}}\\
    &\le e^{2 (\frac{\epsilon_{j}}{\epsilon_{\max}})^2 G_1' \log d + O(1 / \log d)} \le o\left( \frac{d}{\Xi_2^5} \right)
\end{aligned}
\end{equation}

Now we define 
\begin{equation}
    \mathfrak{N}_s' := \{X_{n,s'} \in \mathfrak{N} \setminus \{\bm{X}_{n,s}\} : \sum_{q=1}^{L} \tilde{z}_{n,u,j}^{(q)} = 0 \} \cup \{X^{\setminus j}_{n,s} \}.
\end{equation}

Note that from concentration inequality of Bernoulli variables, we know 
\begin{equation}
    |\mathfrak{N}_s'| = \Omega(|\mathfrak{N}|)
\end{equation}

Thus we have (notice that the outer factor \( |\mathfrak{N}| \) can be inserted into the expectation by sacrificing some constant factors):
\begin{equation}
\begin{aligned}
    R_{1,1} &\le O\left( \frac{|\mathfrak{N}|}{\tau} \right) \cdot \mathbb{E} \left[ \widehat{\ell}_{s,t}(1) \left( \sum_{i \in \mathcal{M}_j} \langle w_{i'}^{(t)}, \bm{M}_j \rangle^2 + \Upsilon_j^{(t)} \right) \cdot |\psi_{i,j}^{(t)}(\bm{X}_{n,s})| \cdot \sum_{r=1}^{L} (\tilde{z}_{n,j}^{(r)})^2 \right]\\
    &= O\left( \frac{|\mathfrak{N}| \log \log d}{d \tau} \right) \mathbb{E} \left[ \ell_{s,t}^\prime(\bm{X}_{n}, \mathfrak{B}) \left( \sum_{i \in \mathcal{M}_j} \langle w_{i'}^{(t)}, \bm{M}_j \rangle^2 + \Upsilon_j^{(t)} \right) \cdot |\psi_{i,j}^{(t)}(\bm{X}_{n,s})| \, \Big| \, \sum_{r=1}^{L}|\tilde{z}_{n,j}^{(r)}| \ne 0 \right] \\
    &= \widetilde{O}\left( \frac{|\mathfrak{N}|}{d \tau} \right) 
    \mathbb{E} \left[\frac{\ell_{s,t}^\prime(\bm{X}_{n}, \mathfrak{B})}{\ell_{s,t}^\prime(\bm{X}_{n}, \mathfrak{B}_s')} \cdot \ell_{s,t}^\prime(\bm{X}_{n}, \mathfrak{B}_s') \left( \sum_{i \in \mathcal{M}_j} \langle w_{i'}^{(t)}, \bm{M}_j \rangle^2 + \Upsilon_j^{(t)} \right) \, \Big| \, |\psi_{i,j}^{(t)}(\bm{X}_{n,s})|, \sum_{r=1}^{L}|\tilde{z}_{n,j}^{(r)}| \ne 0\right]\\
    &\le O\left( \frac{1}{\tau \Xi_2^4} \right) \mathbb{E} \left[
    \sum_{\bm{X}_{n,s} \in \mathfrak{N}'} \ell_{s,t}^\prime(\bm{X}_{n}, \mathfrak{B}_s') \left( \sum_{i \in \mathcal{M}_j} \langle w_{i'}^{(t)}, \bm{M}_j \rangle^2 + \Upsilon_j^{(t)} \right) 
    \cdot |\psi_{i,j}^{(t)}(\bm{X}_{n,s})| \, \Big| \, \sum_{r=1}^{L}|\tilde{z}_{n,j}^{(r)}| \ne 0\right]\\
    &\overset{\circled{1}}{\le} \sum_{q=1}^{L} | \langle \bm{w}_i^{(t)}, \bm{M}_j \rangle \tilde{z}_{n,s,j}^{(q)} - b_i^{(t)} | \cdot O\left( \frac{1}{\Xi_2^{3}} \right) \cdot \bm{Pr}( \sum_{q=1}^{L} |\tilde{z}_{n,s,j}^{(q)}| \ne 0) \\
    &\le \sum_{r=1}^{L} |\langle \bm{w}_i^{(t)}, \bm{M}_j \rangle \tilde{z}^{(r)}_{n,j}| \cdot O\left( \frac{1}{\Xi_2^{3}} \right) \cdot \bm{Pr}( \sum_{r=1}^{L} |\tilde{z}^{(r)}_{n,j}| \ne 0)
\end{aligned}
\end{equation}

Where in inequality $\circled{1}$, we have used the independence of \( \sum_{q=1}^{L} |\tilde{z}_{n,s,j}^{(q)}| \) with respect to \( \ell_{s,t}^\prime(\bm{X}_{n}, \mathfrak{B}_s') \), and the fact that 
\begin{equation}
    \sum_{\bm{X}_{n,s} \in \mathfrak{N}} \ell_{s,t}^\prime(\bm{X}_{n}, \mathfrak{B}_s') \le 1
\end{equation}

Now turn back to deal with \( R_{1,2} \) (by Newton-Leibniz). Indeed, noticing that 
\begin{equation}
    \max_{i' \notin \mathcal{M}_j} |\langle w_{i'}^{(t)}, \bm{M}_j \rangle| \le O\left( \frac{1}{\sqrt{d}} \right)\quad \text{(from Lemma~\ref{Theorem E.1})},
\end{equation}

and that
\begin{equation}
    \sum_{x \in \mathfrak{N}} \mathbb{E} \left[ \frac{1}{\tau} \int_{0}^{1}\widehat{\ell}_{s,t}^\prime(\nu) \, \mathrm{d}\nu \cdot |\psi_{i,j}^{(t)}(\bm{X}_{n,s})| \cdot \sum_{r=1}^{L} (\tilde{z}_{n,j}^{(r)})^2 \right]\le \widetilde{O}\left( \frac{1}{d} \right) \cdot \sum_{r=1}^{L} |\langle \bm{w}_i^{(t)}, \bm{M}_j \rangle \tilde{z}^{(r)}_{n,j} - b_i^{(t)}|
\end{equation}

We have \( R_{1,2} \le o(R_{1,1}) \). For \( R_2 \) (in By Newton-Leibniz), we can see from the definition of 
\begin{equation}
    \ell_{s,t}^\prime(\bm{X}_{n}^{\setminus j}, \mathfrak{B})
\end{equation}

that it is independent of \( \tilde{z}_{n,j}^{(r)} \). Notice further that
\begin{equation}
    \bm{1}_{\langle \bm{w}_i^{(t)}, \bm{z_X}^{(r)} \rangle \ge b_i^{(t)}} = \bm{1}_{\tilde{z}_{n,j}^{(r)} \ne 0}
\end{equation}

with high probability due to our assumption, and also the fact that \( \bm{1}_{\tilde{z}_{n,j}^{(r)} \ne 0} \tilde{z}_{n,j}^{(r)} \) has mean zero and is independent of \( \ell_{s,t}^\prime(\bm{X}_{n}^{\setminus j}, \bm{X}_{n,s}) \), we have
\begin{equation}
\begin{aligned}
    R_2 &\le \operatorname{poly}(d) \cdot e^{-\Omega(\log^2 d)} \\
    &= d^{O(1)} \cdot e^{-\Omega(\log^2 d)} \\
    &= d^{O(1)} \cdot (e^{\log d})^{-\Omega(\log d)} \\
    &= d^{O(1)} \cdot d^{-\Omega(\log d)} \\
    &= d^{-\Omega(\log d)} \lesssim \frac{1}{\operatorname{poly}(d)^{\Omega(\log d)}}    
\end{aligned}
\end{equation}

Combining the pieces above together, we can have
\begin{equation}
    \Psi_{i,j,2}^{(t)} \le O\left( \frac{1}{\Xi_2^3} \mathbb{E}[(\sum_{r=1}^{L} \tilde{z}^{(r)}_{n,j})^2] \right) \cdot \sum_{r=1}^{L} |\langle \bm{w}_i^{(t)}, \bm{M}_j \rangle \tilde{z}^{(r)}_{n,j}- b_i^{(t)}|
\end{equation}

Now we turn to \( \Psi_{i,j,1}^{(t)}(j) \), whose calculation is similar. Defining 
\begin{equation}
    \mathfrak{B}_j := \{ \sum_{r=1}^{L} |\tilde{z}^{(r)}_{n,j'}| = 0, \ \forall j' \ne j \},
\end{equation}

we separately discuss the cases when events \( \mathfrak{B}_j \) or \( \mathfrak{B}_j^c \) hold:

When \( \mathfrak{B}_j^c \) happens, 
\begin{equation}
    \mathrm{sign}(\psi_{i,j}^{(t)}(\bm{Y}_{n})) = \mathrm{sign}(\sum_{s=1}^{L} \langle \bm{w}_i^{(t)}, \bm{M}_j \rangle \tilde{z}_{n,j}^{+(s)})
\end{equation}

with high probability by Fact~\ref{Lemma E.1} since we assumed \( i \in \mathcal{M}_j^\star \). Thus, if \( \langle \bm{w}_i^{(t)}, \bm{M}_j \rangle > 0 \), we have
\begin{equation}
    \mathbb{E} \left[\left(\ell_{p,t}^\prime - 1\right) \psi_{i,j}^{(t)}(\bm{Y}_{n}) \cdot \sum_{r=1}^{L} \bm{1}_{|\langle \bm{w}_i, \bm{z_X}^{(r)} \rangle| \geq b_i}  \tilde{z}_{n,j}^{(r)} \, \middle| \, \mathfrak{B}_j^c\right] \ge 0
\end{equation}
If \( \langle \bm{w}_i^{(t)}, \bm{M}_j \rangle < 0 \), the opposite inequality holds as well.

When \( \mathfrak{B}_j \) happens, it is easy to derive that
\begin{equation}
\begin{aligned}
    |\langle f_t(\bm{X}_{n}), f_t(\bm{Y}_{n}) \rangle| &\le \sum_{i' \in \mathcal{M}_j} \langle w_{i'}^{(t)}, \bm{M}_j \rangle^2 (\sum_{s=1}^{L} \tilde{z}_{n,j}^{+(s)})^2 + O(\Xi_2) \cdot O\left( \frac{ \| w_{i'}^{(t)} \|_2 }{ \sqrt{d} } \right) \\
    &\le \langle \bm{w}_i^{(t)}, \bm{M}_j \rangle^2 (\sum_{s=1}^{L} \tilde{z}_{n,j}^{+(s)})^2  + O\left( \frac{\Xi_2}{\sqrt{d}} \right)
\end{aligned}
\end{equation}

and therefore
\begin{equation}
\begin{aligned}
    &|\langle f_t(\bm{X}_{n}), f_t(\bm{Y}_{n}) \rangle - \langle f_t(\bm{X}_{n}^{\setminus j}), f_t(\bm{Y}_{n}^{\setminus j}) \rangle|\\
    \le& \sum_{i' \in \mathcal{M}_j} \langle w_{i'}^{(t)}, \bm{M}_j \rangle^2 (\sum_{s=1}^{L} \tilde{z}_{n,j}^{+(s)})^2 + O\left( \frac{\Xi_2}{\sqrt{d}} \right)
\end{aligned}
\end{equation}

From previous analysis, we also have
\begin{equation}
\begin{aligned}
    &|\langle f_t(\bm{X}_{n}), f_t(\bm{X}_{n,s}) \rangle - \langle f_t(\bm{X}_{n}^{\setminus j}), f_t(\bm{X}_{n,s}) \rangle| \\
    \le& \sum_{i' \in \mathcal{M}_j} \langle w_{i'}^{(t)}, \bm{M}_j \rangle^2 (\sum_{r=1}^{L} \tilde{z}_{n,j}^{(r)})^2 + O\left( \frac{\tau}{\log d} \right) + O\left( \frac{\Xi_2}{\sqrt{d}} \right)
\end{aligned}
\end{equation}

These inequalities allow us to apply the same techniques in bounding \( \Phi_{1,2}^{(t)} \) as follows. We define \( \mathfrak{B}_p' := \mathfrak{N} \cup \{ \bm{Y}_{n}^{\setminus j} \} \).  
Then, for some \( G_2' = \Theta(1) \), we can have:
\begin{equation}
\begin{aligned}
    \frac{ \ell_{p,t}'(\bm{X}_{n}^{\setminus j}, \mathfrak{B}) }{ \ell_{p,t}'(\bm{X}_{n}^{\setminus j}, \mathfrak{B}_p') } &=
    \frac{ e^{\langle f_t(\bm{X}_{n}), f_t(\bm{Y}_{n}) \rangle / \tau} }{ e^{\langle f_t(\bm{X}_{n}^{\setminus j}), f_t(\bm{X}_{n}^{\setminus j}) \rangle / \tau} } \cdot \frac{ \sum_{x \in \mathfrak{B}_p'} e^{\langle f_t(\bm{X}_{n}^{\setminus j}), f_t(x) \rangle / \tau} } { \sum_{x \in \mathfrak{B}} e^{\langle f_t(\bm{X}_{n}), f_t(x) \rangle / \tau}}\\
    &\le e^{2 (\frac{\epsilon_{j}}{\epsilon_{\max}})^2 G_2' \log d + O(1/\log d)} \le O\left( \frac{d}{\Xi_2^5} \right)
\end{aligned}
\end{equation}

Now we can proceed to compute as follows:
\begin{equation}
\begin{aligned}
    &\mathbb{E} \left[\ell_{p,t}^\prime(\bm{X}_{n}, \mathfrak{B}) \cdot \psi_{i,j}^{(t)}(\bm{Y}_{n}) \cdot \sum_{r=1}^{L} \bm{1}_{|\langle \bm{w}_i, \bm{z_X}^{(r)} \rangle| \geq b_i}  \tilde{z}_{n,j}^{(r)} \, \middle| \, \mathfrak{B}_j\right] \\
    =& \mathbb{E} \left[
    \frac{\ell_{p,t}^\prime(\bm{X}_{n}^{\setminus j}, \mathfrak{B})}{\ell_{p,t}^\prime(\bm{X}_{n}^{\setminus j}, \mathfrak{B}_p')} \cdot \ell_{p,t}^\prime(\bm{X}_{n}^{\setminus j}, \mathfrak{B}_p') \cdot \psi_{i,j}^{(t)}(\bm{Y}_{n}) \cdot \sum_{r=1}^{L} \bm{1}_{|\langle \bm{w}_i, \bm{z_X}^{(r)} \rangle| \geq b_i}  \tilde{z}_{n,j}^{(r)} \, \middle| \, \mathfrak{B}_j\right]\\
    \le& \sum_{s=1}^{L} | \langle \bm{w}_i^{(t)}, \bm{M}_j \rangle \tilde{z}_{n,j}^{+(s)} - b_i^{(t)} |  \cdot O\left( \frac{1}{\Xi_2^5} \right) \cdot \bm{Pr}(\sum_{s=1}^{L} |\tilde{z}_{n,j}^{+(s)}| \ne 0)
\end{aligned}
\end{equation}
But from Lemma~\ref{Lemma E.10}, and the fact that Lemma~\ref{Lemma D.1} still holds for Stage III, we have:
\begin{equation}
\begin{aligned}
    &\mathbb{E} \left[\psi_{i,j}^{(t)}(\bm{Y}_{n}) \cdot \sum_{r=1}^{L} \bm{1}_{|\langle \bm{w}_i, \bm{z_X}^{(r)} \rangle| \geq b_i}  \tilde{z}_{n,j}^{(r)} \cdot \bm{1}_{\mathfrak{B}_j}\right] \\
    =& \mathrm{sign}(\sum_{s=1}^{L} \left( \langle \bm{w}_i^{(t)}, \bm{M}_j \rangle \tilde{z}_{n,j}^{+(s)}\right) )\cdot (\sum_{s=1}^{L} | \langle \bm{w}_i^{(t)}, \bm{M}_j \rangle \tilde{z}_{n,j}^{+(s)} - b_i^{(t)} | ) \cdot \frac{1}{\operatorname{polylog}(d)} \cdot \mathbb{E}[\sum_{s=1}^{L}|\tilde{z}_{n,j}^{+(s)}|]
\end{aligned}
\end{equation}

Combining both cases above gives the bound of \( \Psi_{i,j,1}^{(t)} \). Combining results for \( \Psi_{i,j,1}^{(t)} \) and \( \Psi_{i,j,2}^{(t)} \) concludes the proof. The constant \( G_1 \) in the statement can be defined as:
\begin{equation}
    G_1 := \min\{G_1', G_2'\}.
\end{equation}
\end{proof}

\subsection{Proof of Lemma~\ref{Lemma E.3}:} 

\begin{proof}[Proof of Lemma~\ref{Lemma E.3}:]

First we deal with the case of \( i \in \mathcal{M}_j^\star \), we have 
\begin{equation}
    \bm{1}_{|\langle \bm{w}_i^{(t)}, \bm{z_X}^{(r)} \rangle| \ge b_i^{(t)}} \tilde{z}^{(r)}_{n,j} = \bm{1}_{\tilde{z}^{(r)}_{n,j} \ne 0} \tilde{z}^{(r)}_{n,j}
\end{equation}

when conditions in Lemma~\ref{Theorem E.1} hold. Now by denoting
\begin{equation}
\begin{aligned}
    \widetilde{\psi}_{i,j}^{(t)}(\tilde{z}_{n,j}^{+}) :=& \sum_{s=1}^{L} \left( \langle \bm{w}_i^{(t)}, \bm{M}_j \rangle \tilde{z}_{n,j}^{+(s)} - b_i^{(t)} \right) \bm{1}_{\langle \bm{w}_i^{(t)}, \bm{M}_j \rangle \tilde{z}_{n,j}^{+(s)} > 0}\\
    &- \left( \langle \bm{w}_i^{(t)}, \bm{M}_j \rangle \tilde{z}_{n,j}^{+(s)} + b_i^{(t)} \right) \bm{1}_{\langle \bm{w}_i^{(t)}, \bm{M}_j \rangle \tilde{z}_{n,j}^{+(s)} < 0}    
\end{aligned}
\end{equation}

we can then easily rewrite \( \Psi_{i,j}^{(t)} \) as (by using Fact~\ref{Lemma E.1})
\begin{equation}
\begin{aligned}
    \Psi_{i,j}^{(t)} &= \mathbb{E} \left[
    \left( 1 - \ell_{p,t}' \right) \widetilde{\psi}_{i,j}^{(t)}(\tilde{z}_{n,j}^{+}) - \sum_{\bm{X}_{n,s} \in \mathfrak{N}} \ell_{s,t}' \cdot \widetilde{\psi}_{i,j}^{(t)}(\tilde{z}_{n,s,j})
    \right] \sum_{r=1}^{L} \bm{1}_{\tilde{z}^{(r)}_{n,j} \ne 0} \tilde{z}^{(r)}_{n,j} + \frac{1}{\operatorname{poly}(d)^{\Omega(\log d)}}\\
    &= \mathbb{E}_{\bm{X}_{n}, \bm{Y}_{n}} \left[
    \left( \widetilde{\psi}_{i,j}^{(t)}(\tilde{z}_{n,j}^{+}) - I_1 - I_2 \right) \sum_{r=1}^{L}\bm{1}_{\tilde{z}^{(r)}_{n,j} \ne 0} \tilde{z}^{(r)}_{n,j}\right] + \frac{1}{\operatorname{poly}(d)^{\Omega(\log d)}}    
\end{aligned}
\end{equation}

where \( I_1 \) and \( I_2 \) are defined as follows:
\begin{equation}
\begin{aligned}
    I_1 &:= \mathbb{E}_{\mathfrak{N}} \left[
    (\ell_{p,t}') \widetilde{\psi}_{i,j}^{(t)}(\tilde{z}_{n,j}^{+}) + \sum_{\bm{X}_{n,s} \in \mathfrak{N}} \ell_{s,t}' \cdot \widetilde{\psi}_{i,j}^{(t)}(\tilde{z}_{n,s,j}) \bm{1}_{\tilde{z}_{n,s,j} = \tilde{z}_{n,j}^{+}}
    \right] \\
    &\overset{\circled{1}}{=} \widetilde{\psi}_{i,j}^{(t)}(\tilde{z}_{n,j}^{+}) \mathbb{E}_{\mathfrak{N}} \left[\frac{|\mathfrak{N}| e^{\langle f_t(\bm{X}_{n}), f_t(\bm{X}_{n,s}) \rangle / \tau} \bm{1}_{\tilde{z}_{n,s,j} = \tilde{z}_{n,j}^{+}} + e^{\langle f_t(\bm{X}_{n}), f_t(\bm{Y}_{n}) \rangle / \tau}}{e^{\langle f_t(\bm{X}_{n}), f_t(\bm{X}_{n,s}) \rangle / \tau} + \sum_{x \in \mathfrak{N} \setminus \{\bm{X}_{n,s}\}} e^{\langle f_t(\bm{X}_{n}), f_t(x) \rangle / \tau}}\right]\\
    I_2 &:= \mathbb{E}_{\mathfrak{N}} \left[
    \sum_{\bm{X}_{n,s} \in \mathfrak{N}} \ell_{s,t}' \cdot \widetilde{\psi}_{i,j}^{(t)}(\tilde{z}_{n,s,j}) \bm{1}_{\tilde{z}_{n,s,j} \ne \tilde{z}_{n,j}^{+}}\right]
\end{aligned}
\end{equation}

where in $\circled{1}$ we used the identification \( \tilde{z}_{n,j}^{+} = \tilde{z}_{n,s,j} \). 
The tricky part here is since all the variables inside the expectation is non-negative, we can use Jensen’s inequality to move the expectation of \( e^{\langle f_t(\bm{X}_{n}), f_t(X_{n,u}) \rangle / \tau} \) to the denominator.
We let 
\begin{equation}
    V := e^{\langle f_t(\bm{X}_{n}), f_t(\bm{Y}_{n}) \rangle / \tau}
\end{equation}

and consider it fixed when computing \( I_1 \) as follows: conditioned on \( \tilde{z}_{n,j}^{+} \ne 0 \), we have
\begin{equation}
\begin{aligned}
    &\frac{I_1}{\widetilde{\psi}_{i,j}^{(t)}(\tilde{z}_{n,j}^{+})} \\
    =& \mathbb{E}_{\mathfrak{N}} \left[
    \frac{|\mathfrak{N}| \cdot e^{\langle f_t(\bm{X}_{n}), f_t(\bm{X}_{n,s}) \rangle / \tau} \bm{1}_{\tilde{z}_{n,s,j} = \tilde{z}_{n,j}^{+}} + V}
    {e^{\langle f_t(\bm{X}_{n}), f_t(\bm{X}_{n,s}) \rangle / \tau} + V + \sum_{x \in \mathfrak{N} \setminus \{\bm{X}_{n,s}\}} e^{\langle f_t(\bm{X}_{n}), f_t(X_{n,u}) \rangle / \tau}}\right]\\
    \ge& \mathbb{E}_{\bm{X}_{n,s}} \left[
    \frac{e^{\langle f_t(X_p^+), f_t(\bm{X}_{n,s}) \rangle / \tau} \cdot \bm{1}_{\tilde{z}_{n,s,j} = \tilde{z}_{n,j}^{+}} + \frac{1}{|\mathfrak{N}|} V}
    {\frac{1}{|\mathfrak{N}|} e^{\langle f_t(\bm{X}_{n}), f_t(\bm{X}_{n,s}) \rangle / \tau} + \frac{1}{|\mathfrak{N}|} V + \frac{|\mathfrak{N}|-1}{|\mathfrak{N}|}\mathbb{E}_{\bm{X}_{n}} \left[ e^{\langle f_t(\bm{X}_{n}), f_t(\bm{X}_{n}) \rangle / \tau} \right]}\right] \quad \text{(by Jensen inequality)}\\
    =& \mathbb{E}_{\bm{X}_{n,s}} \left[\frac{e^{\langle f_t(\bm{X}_{n}), f_t(\bm{X}_{n,s}) \rangle / \tau} \cdot \bm{1}_{\tilde{z}_{n,s,j} = \tilde{z}_{n,j}^{+}} + \frac{1}{|\mathfrak{N}|} V}{\frac{1}{|\mathfrak{N}|} \left( e^{\langle f_t(\bm{X}_{n}), f_t(\bm{X}_{n,s}) \rangle / \tau} + V \right) + \frac{|\mathfrak{N}|-1}{|\mathfrak{N}|} \mathbb{E}_{\bm{X}_{n}} \left[ e^{\langle f_t(\bm{X}_{n}), f_t(\bm{X}_{n}) \rangle / \tau} \left( \bm{1}_{\tilde{z}_{n,s,j} = \tilde{z}_{n,j}^{+}} + \bm{1}_{\tilde{z}_{n,s,j} \ne \tilde{z}_{n,j}^{+}} \right) \right]}\right]\\
    \overset{\circled{1}}{\ge}& \mathbb{E}_{\bm{X}_{n,s}} \left[\frac{X + \frac{1}{|\mathfrak{N}|} V}{\frac{1}{|\mathfrak{N}|}(X + V) + \frac{|\mathfrak{N}| - 1}{|\mathfrak{N}|} \left(1 + \frac{1}{\operatorname{poly}(d)}\right) \mathbb{E}_{\bm{X}_{n}}[X]}\right] \\
    &\text{where} \quad X := e^{\langle f_t(\bm{X}_{n}), f_t(\bm{X}_{n}) \rangle / \tau} \cdot \bm{1}_{\tilde{z}_{n,s,j} = \tilde{z}_{n,j}^{+}} \ge 0)\\
    \overset{\circled{2}}{\ge}& 1 - O\left( \frac{1}{\operatorname{poly}(d)} \right)
\end{aligned}
\end{equation}

\begin{itemize}
\item in $\circled{1}$, we need to go through similar analysis as in the proof of Lemma~\ref{Lemma E.2} to obtain that, with high probability over \(\bm{X}_{n}\) and \(\bm{X}_{n}^{\setminus j}\):
\begin{equation}
\begin{aligned}
    &\frac{\langle f_t(\bm{X}_{n}), f_t(\bm{X}_{n}) \rangle - \langle f_t(\bm{X}_{n}), f_t(\bm{X}_{n}^{\setminus j}) \rangle}{\tau} \\
    \ge& \frac{1}{\tau} \sum_{i \in \mathcal{M}_j} \langle \bm{w}_i^{(t)}, \bm{M}_j \rangle^2 (\sum_{s=1}^{L} \tilde{z}_{n,j}^{+(s)})^2
    - O\left( \frac{1}{\log d} \right)\\
    \ge& G_2 (\frac{\epsilon_{j}}{\epsilon_{\max}})^2 \log d - O\left( \frac{1}{\log d} \right)
\end{aligned}
\end{equation}

for some very large constant \(G_2 = \Theta(1)\), which gives (the \(\frac{1}{\operatorname{poly}(d)}\) here depends on how large \(G_2\) is):
\begin{equation}
\begin{aligned}
    &\mathbb{E}_{\bm{X}_{n}} \left[e^{\langle f_t(\bm{X}_{n}), f_t(\bm{X}_{n}) \rangle / \tau} \cdot \bm{1}_{\tilde{z}_{n,s,j} \ne \tilde{z}_{n,j}^{+}}\right] \\
    \le& \frac{1}{\operatorname{poly}(d)} \mathbb{E}_{\bm{X}_{n}} \left[e^{\langle f_t(\bm{X}_{n}), f_t(\bm{X}_{n}) \rangle / \tau} \cdot \bm{1}_{\tilde{z}_{n,s,j} = \tilde{z}_{n,j}^{+}}\right]
\end{aligned}
\end{equation}
\end{itemize}

\begin{itemize}
\item in inequality $\circled{2}$, we need to argue as follows, where 
\begin{equation}
    \widetilde{\mathbb{E}}[X] \overset{\text{abbr.}}{=} \mathbb{E}_{\bm{X}_{n}}[X]
\end{equation}
is only integrated over the randomness of \(\bm{X}_{n}\):
\begin{equation}
\begin{aligned}
    &\mathbb{E}_{\bm{X}_{n,s}} \left[\frac{X + \frac{1}{|\mathfrak{N}|} V}{\frac{1}{|\mathfrak{N}|}(X + V) + \frac{|\mathfrak{N}| - 1}{|\mathfrak{N}|} \left(1 + \frac{1}{\operatorname{poly}(d)}\right) \widetilde{\mathbb{E}}[X]}\right]\\
    =& 1 - \frac{1}{|\mathfrak{N}|} \mathbb{E}_{\bm{X}_{n,s}} \left[\frac{\widetilde{\mathbb{E}}[X]}{\frac{1}{|\mathfrak{N}|}(X + V) + \frac{|\mathfrak{N}| - 1}{|\mathfrak{N}|} \left(1 + \frac{1}{\operatorname{poly}(d)}\right) \widetilde{\mathbb{E}}[X]}\right] \\
    \ge& 1 - \frac{1}{|\mathfrak{N}|} \cdot \frac{\widetilde{\mathbb{E}}[X]}{\left( \frac{|\mathfrak{N}| - 1}{|\mathfrak{N}|} \left(1 + \frac{1}{\operatorname{poly}(d)} \right) \widetilde{\mathbb{E}}[X] \right)}\quad \text{(since } X + V \ge 0\text{)}\\
    \ge& 1 - \frac{1}{\operatorname{poly}(d)}
\end{aligned}
\end{equation}
The same analysis applies to \(I_2\), which we can bound as
\begin{equation}
    \left| \frac{I_2}{\widetilde{\psi}_{i,j}^{(t)}(-\tilde{z}_{n,j}^{+})} \right| \le \frac{1}{\operatorname{poly}(d)}
\end{equation}
\end{itemize}

Combining both \(I_1\) and \(I_2\), we have
\begin{equation}
    \Psi_{i,j}^{(t)} \leq \frac{1}{\operatorname{poly}(d)} \sum_{s=1} \left| \langle \bm{w}_i^{(t)}, \bm{M}_j \rangle \tilde{z}_{n,j}^{+(s)} \right|
\end{equation}

In the case of \(i \in \mathcal{M}_j\), we have with probability \(\leq \widetilde{O}\left( \frac{1}{d} \right)\) that
\begin{equation}
    \bm{1}_{\langle \bm{w}_i^{(t)}, \bm{z_X}^{(r)} \rangle \geq b_i^{(t)}} \neq \bm{1}_{\langle \bm{w}_i^{(t)}, \bm{M}_j \rangle \tilde{z}^{(r)}_{n,j} > 0}\quad \text{or} \quad \bm{1}_{\langle \bm{w}_i^{(t)}, z^{(s)}_Y \rangle > b_i^{(t)}} \neq \bm{1}_{\langle \bm{w}_i^{(t)}, \bm{M}_j \rangle \tilde{z}_{n,j}^{+(s)} > 0}
\end{equation}

When such events happen, we can obtain a bound of 
\begin{equation}
    O\left( \frac{1}{d} \right) b_i^{(t)}
\end{equation}
over \(\Psi_{i,j}^{(t)}\), which times the probability \(\widetilde{O}\left( \frac{1}{d} \right)\) leads to our bound. 

Combining the above observations and the analyses, we can complete the proof.
\end{proof}

\subsection{Proof of Lemma~\ref{Lemma E.4}:} 

\begin{proof}[Proof of Lemma~\ref{Lemma E.4}:]

Let $j \in [d]$, since the case of $j \in [d_1] \setminus [d]$ can be similarly dealt with. We first look at the following $\mathcal{E}_{1,i,j}^{(t)}$ term in $\mathcal{E}_{i,j}^{(t)}$:

\begin{equation}
    \mathcal{E}_{1,i,j}^{(t)} = \mathbb{E} \left[ h_{i,t}(\bm{Y}_{n}) \sum_{r=1}^{L} \bm{1}_{|\langle \bm{w}_i, \bm{z_X}^{(r)} \rangle| \geq b_i} \langle \bm{M}_j, \tilde{\xi}^{(r)}_n \rangle \right]
\end{equation}

we have
\begin{equation}
    |\mathcal{E}_{1,i,j}^{(t)}| = \mathbb{E} \left[ \sum_{s=1}^{L} \sum_{r=1}^{L} |  \langle w^{(t)}_i, \bm{z_Y}^{(s)} \rangle| \bm{1}_{|\langle w^{(t)}_i, \frac{z_{X}^{(r)} + z_{Y}^{(s)}}{2} \rangle| \geq b^{(t)}_i + |\langle w^{(t)}_i, \bm{z_X}^{(r)} - \frac{z_{X}^{(r)} + z_{Y}^{(s)}}{2} \rangle|} |\langle \bm{M}_j, \tilde{\xi}^{(r)}_n \rangle| \right]
\end{equation}

When $\left\{ |\langle w^{(t)}_i, \frac{z_{X}^{(r)} + z_{Y}^{(s)}}{2} \rangle| \geq b^{(t)}_i + |\langle w^{(t)}_i, \bm{z_X}^{(r)} - \frac{z_{X}^{(r)} + z_{Y}^{(s)}}{2} \rangle| \right\}$ happens (which we know from Fact~\ref{Lemma E.1} has prob $\le \widetilde{O}(\frac{1}{d})$), using Lemma~\ref{Lemma E.7}, we have
\begin{equation}
    |\mathcal{E}_{1,i,j}^{(t)}| \le \widetilde{O} \left( \frac{1}{d^{2}} \right) \| \bm{w}_i^{(t)} \|_2
\end{equation}
Proof of $\mathcal{E}_{2,i,j}^{(t)}$ can refer to Lemma E.7 and Lemma E.8 in\citep{wen2021toward}.
\begin{equation}
\begin{aligned}
    \mathcal{E}_{2,i,j}^{(t)} =& \mathbb{E} \left[ \sum_{\bm{X}_{n,s} \in \mathfrak{N}} \ell_{s,t}'(\bm{X}_{n}, \mathfrak{B}) \cdot h_{i,t}(\bm{X}_{n,s}) \sum_{r=1}^{L} \bm{1}_{|\langle \bm{w}_i, \bm{z_X}^{(r)} \rangle| \geq b_i} \langle \bm{M}_j, \tilde{\xi}^{(r)}_n \rangle \right] \\
    \le& O\left( \frac{\|\bm{w}_i^{(t)}\|_2 \Xi_2^2}{d^2 \tau} \right) \cdot \max_{i' \in [m]} \left( \left| \langle w_{i'}^{(t)}, \bm{M}_j \rangle \right| \right)    
\end{aligned}
\end{equation}
Combining the results of $\mathcal{E}_{1,i,j}^{(t)}$ and  $\mathcal{E}_{2,i,j}^{(t)}$, we can conclude the proof.
\end{proof}

\subsection{Proof of Lemma~\ref{Lemma E.5}:} 

\begin{proof}[Proof of Lemma~\ref{Lemma E.5}:]

The proof essentially relies on the condition that Lemma~\ref{Theorem E.1} holds for all $t' \in [T_3, t]$. We first consider the case where $i \in \mathcal{M}_j^\star$. Similar to how $\psi_{i,j}^{(t)}(x)$ are defined for each $x$ in Definition~\ref{Definition E.1}, for each $j' \neq j$, we let
\begin{equation}
    \rho_{i,j}^{(t)}(\bm{Y}_{n}) := \sum_{s=1}^{L} \left( \langle \bm{w}_i^{(t)}, z^{(s)}_Y \rangle - \langle \bm{w}_i^{(t)}, \bm{M} \tilde{z}_{n}^{+(s)} \rangle \right) \bm{1}_{\tilde{z}_{n,j}^{+(s)} \neq 0}
\end{equation}

Now it is straightforward to decompose $\Phi_{i,j}^{(t)}$ as follows:
\begin{equation}
\begin{aligned}
    &\Phi_{i,j}^{(t)}\\
    =& \mathbb{E} \left[ \left( (\ell'_{p,t} - 1) \cdot \phi_{i,j}^{(t)}(\bm{Y}_{n}) + \sum_{\bm{X}_{n,s} \in \mathfrak{N}} \ell'_{s,t} \cdot \phi_{i,j}^{(t)}(\bm{X}_{n,s}) \right) \sum_{r=1}^{L} \bm{1}_{|\langle \bm{w}_i, \bm{z_X}^{(r)} \rangle| \geq b_i} \tilde{z}_{n,j}^{(r)} \right]\\
    =& \sum_{j' \in [d], j' \neq j} \langle \bm{w}_i^{(t)}, \bm{M}_{j'} \rangle \mathbb{E} \left[ 
    \left( (\ell'_{p,t} - 1) \cdot \sum_{s=1}^{L} \tilde{z}_{n,j'}^{+(s)} + \sum_{\bm{X}_{n,s} \in \mathfrak{N}} \ell'_{s,t} \cdot \sum_{q=1}^{L} \tilde{z}_{n,s,j'}^{(q)} \bm{1}_{\tilde{z}_{n,s,j'}^{(q)} \neq 0} \right) 
    \sum_{r=1}^{L} \bm{1}_{|\langle \bm{w}_i, \bm{z_X}^{(r)} \rangle| \geq b_i} \tilde{z}_{n,j}^{(r)} \right] 
    \quad \\
    +& \mathbb{E} \left[
    \left( (\ell'_{p,t} - 1) \cdot \rho_{i,j}^{(t)}(\bm{Y}_{n}) + \sum_{\bm{X}_{n,s} \in \mathfrak{N}} \ell'_{s,t} \cdot \rho_{i,j}^{(t)}(\bm{X}_{n,s}) \right)
    \sum_{r=1}^{L} \bm{1}_{|\langle \bm{w}_i, \bm{z_X}^{(r)} \rangle| \geq b_i} \tilde{z}_{n,j}^{(r)} \right] + \frac{1}{\operatorname{poly}(d)^{\Omega(\log d)}} \\
    =& H_1 + H_2 + \frac{1}{\operatorname{poly}(d)^{\Omega(\log d)}}
\end{aligned}
\end{equation}

Indeed, from similar arguments as in the proof of Lemma~\ref{Lemma E.3} and Lemma~\ref{Lemma E.4}, we can trivially obtain 
\begin{equation}
    |H_2| \leq O\left( \frac{\Xi_2^2}{d^2} \right) \|\bm{w}_i^{(t)}\|_2
\end{equation}

Now we turn to $H_1$. Since $\max_{j' \neq j} |\langle \bm{w}_i^{(t)}, \bm{M}_{j'} \rangle| \leq O\left( \frac{\epsilon_{j}}{\epsilon_{\max}}\frac{\|\bm{w}_i^{(t)}\|_2}{\sqrt{d}\Xi_2^5} \right)$, we can simply get (Since w.h.p., $|\{j' \in [d] : \tilde{z}_{n,j'}^{+(s)} \neq 0\}| = \widetilde{O}(1)$, and if $\tilde{z}_{n,j'}^{+(s)} = 0$, the negative terms are small from similar analysis in Lemma~\ref{Lemma E.2})

\begin{equation}
\begin{aligned}
    &|H_1| \\
    \leq& O\left( \frac{\epsilon_{j}}{\epsilon_{\max}}\frac{\|\bm{w}_i^{(t)}\|_2}{\sqrt{d}\Xi_2^5} \right) \sum_{j' \neq j, \, j' \in [d]} \mathbb{E} \left[\left( (\ell_{p,t}' - 1) \cdot \sum_{s=1}^{L} \tilde{z}_{n,j'}^{+(s)} + \sum_{\bm{X}_{n,s} \in \mathfrak{N}} \ell_{s,t}' \cdot \sum_{q=1}^{L} \tilde{z}_{n,s,j'}^{(q)} \bm{1}_{\tilde{z}_{n,s,j'}^{(q)} \neq 0} \right)\sum_{r=1}^{L} \bm{1}_{|\langle \bm{w}_i, \bm{z_X}^{(r)} \rangle| \geq b_i} \tilde{z}_{n,j}^{(r)}\right] \\
    \leq& O\left( \frac{\epsilon_{j}}{\epsilon_{\max}} \frac{\|\bm{w}_i^{(t)}\|_2}{d^{3/2}} \right)
\end{aligned}
\end{equation}

Then we can obtain a crude bound for all $t \in \left[\frac{d^{1.01}}{\eta}, \frac{d^{1.99}}{\eta} \right]$ by
\begin{equation}
    \Phi_{i,j}^{(t)} \leq (H_1 + H_2) + \frac{1}{\operatorname{poly}(d)^{\Omega(\log d)}} \leq \widetilde{O} \left( \frac{\epsilon_{j}}{\epsilon_{\max}} \frac{\|\bm{w}_i^{(t)}\|_2}{d^{3/2}} \right)
\end{equation}

The harder part is to deal with iterations $t \in \left[\frac{d^{1.495}}{\eta}, \frac{d^{1.498}}{\eta}\right]$. 
We first establish a connection between $\Psi^{(t)}$ and $\Phi^{(t)}$. 
We first assume that for all $j' \neq j, j' \in [d]$, it holds that 
\begin{equation}
    \left|\Psi_{i',j'}^{(t_1)}\right| / \left| \langle w_{i'}^{(t_1)}, \bm{M}_{j'} \rangle \right| \leq \Omega\left( \frac{\Xi^2_2}{\sqrt{d}t\eta} \right),
\end{equation}

which is true for all iteration $t \leq \frac{d \, \operatorname{polylog}(d)}{\eta}$ from simple calculations. 

Now suppose at some $t_1 \geq \frac{d \, \operatorname{polylog}(d)}{\eta}$, there exists some $j' \neq j, j' \in [d]$ and $i' \in \mathcal{M}_j^\star$ such that
\begin{equation}
    \left|\Psi_{i',j'}^{(t_1)}\right| / \left| \langle w_{i'}^{(t_1)}, \bm{M}_{j'} \rangle \right| \geq \Omega\left( \frac{\Xi_2}{\sqrt{d}t\eta} \right)
\end{equation}

which means we have the followings:
\begin{equation}
    \mathbb{E} \left[\left( (\ell_{p,t_1}' - 1) \sum_{s=1}^{L} \tilde{z}_{n,j'}^{+(s)} + \sum_{\bm{X}_{n,s} \in \mathfrak{N}} \ell_{s,t_1}' \sum_{q=1}^{L} \tilde{z}_{n,s,j'}^{(q)} \right) \sum_{r=1}^{L} \tilde{z}^{(r)}_{n,j'}\right] \geq \Omega\left( \frac{\Xi_2}{\sqrt{d}t\eta} \right)
\end{equation}

Letting $\Delta > 0$ be defined as the number such that if $\mathfrak{F}_j^{(t)} = \Delta$, we can have
\begin{equation}
    \left| \Psi_{i',j'}^{(t)} \right| / \left| \langle w_{i'}^{(t)}, \bm{M}_{j'} \rangle \right| \leq O\left( \frac{\sqrt{\Xi_2}\tau \log d}{\sqrt{dt\eta}} \right).
\end{equation}

Then from the calculations in the proof of Lemma~\ref{Lemma E.2}, there must be a constant 
$\delta > \Omega(1)$ such that 
\begin{equation}
    \mathfrak{F}_j^{(t_1)} \geq \Delta - \delta \tau \log d.
\end{equation}

However, such growth cannot continue since for some $t' = \Theta(t / \sqrt{\Xi_2}), \text{ we have for each } i' \in \mathcal{M}_j'$:
\begin{equation}
\begin{aligned}
    \left| \langle w_{i'}^{(t_1 + t')}, \bm{M}_{j'} \rangle \right|
    &\geq \left| \langle w_{i'}^{(t_1 + t')}, \bm{M}_{j'} \rangle \right| (1 - \eta \lambda)+ \Psi_{i',j'}^{(t_1 + t' - 1)} + \Phi_{i',j'}^{(t)} + O\left( \frac{\Xi_2^2}{d^2} \right) \\
    &\geq \left| \langle w_{i'}^{(t)}, \bm{M}_{j'} \rangle \right| (1 - \eta \lambda)^{t'} +\sum_{s = t}^{t + t' - 1} \Psi_{i',j'}^{(s)} - O\left( \frac{t' \Xi_2^2}{d^{3/2}} \right)
\end{aligned}
\end{equation}

where the bounds for $\Phi_{i',j'}^{(s)}$ for each $s \in [t_1, t_1 + t']$ are obtained from induction over iterations $s' \in \left[ \frac{d^{1.01}}{\eta}, s \right]$. Therefore there must exist $t''' \in [t, t + t']$ such that $\left| \Psi_{i',j'}^{(t''')} \right| \leq O\left( \frac{\sqrt{\Xi_2}}{\tau \sqrt{d}t \eta} \right)$ or otherwise $\mathfrak{F}_j^{(t + t')} \geq \mathfrak{F}_j^{(t)} + t' \cdot O\left( \frac{\sqrt{\Xi_2} \tau \log d}{\sqrt{d}t \eta} \right) \geq \Delta + \delta \tau \log d$, which results in that $\left| \Psi_{i',j'}^{(t)} \right| \leq O\left( \frac{\tau \log d}{\sqrt{d}t \eta} \right) \|\bm{w}_i^{(t)}\|_2$, following the same reasoning in Lemma~\ref{Lemma E.3}. Above arguments actually proved that $\left| \Psi_{i',j'}^{(t)} \right| \leq \Omega\left( \frac{\Xi_2}{\sqrt{d}t \eta} \right) \|\bm{w}_i^{(t)}\|_2$ at all $t \in \left[ \frac{d \, \operatorname{polylog}(d)}{\eta}, \frac{d^{1.498}}{\eta} \right]$. Therefore we can use the results of all $\Psi_{i',j'}^{(t)}$, where $j' \neq j, j' \in [d]$ to get (combined with Fact~\ref{Lemma E.1}):
\begin{equation}
    |H_1| \leq \widetilde{O} \left( \max_{j' \neq j, j' \in [d]} \Psi_{i,j}^{(t)} \right)\leq \widetilde{O} \left( \frac{\Xi_2^2}{d^2} \|\bm{w}_i^{(t)}\|_2 \right)
\end{equation}

For iterations $t \geq \frac{d^{1.498}}{\eta}$, the proof is essentially the same: we only need to notice that the difference $\Psi_{i,j}^{(t)} - \lambda \langle \bm{w}_i^{(t)}, \bm{M}_j \rangle$ here will bounce around zero, while the compensation terms in $H_1$ are bounded by $\widetilde{O} \left( \frac{ \|\bm{w}_i^{(t)}\|_2 }{ d^{1.98} } \right)$. These observations indeed prove the case $i \in \mathcal{M}_j^{\star}$. When $i \in \mathcal{M}_j \setminus \mathcal{M}_j^{\star}$, notice that 
with prob $\leq \widetilde{O}\left( \frac{1}{d} \right)$ it holds $\bm{1}_{|\langle \bm{w}_i^{(t)}, z^{(s)}_Y \rangle| \geq b_i^{(t)}} = \bm{1}_{\tilde{z}_{n,j'}^{+(s)} \neq 0}$ for any $x \in \mathfrak{B}$. Now we expand
\begin{equation}
\begin{aligned}
    &H_1 \\
    =& \sum_{j' \in \mathcal{N}_i, j' \neq j} \langle \bm{w}_i^{(t)}, \bm{M}_{j'} \rangle \mathbb{E} \left[ \left( (\ell_{p,t}' - 1) \cdot \sum_{s=1}^{L} \tilde{z}_{n,j'}^{+(s)} + \sum_{\bm{X}_{n,s} \in \mathcal{N}} \ell_{s,t}' \cdot \sum_{q=1}^{L} \tilde{z}_{n,s,j'}^{(q)} \bm{1}_{\tilde{z}_{n,s,j'}^{(q)} \neq 0} \right) \sum_{r=1}^{L} \bm{1}_{|\langle \bm{w}_i, \bm{z_X}^{(r)} \rangle| \geq b_i} \tilde{z}_{n,j}^{(r)}\right] \\
    +& \sum_{j' \notin \mathcal{N}_i, j' \neq j} \langle \bm{w}_i^{(t)}, \bm{M}_{j'} \rangle 
    \mathbb{E} \left[ \left( (\ell_{p,t}' - 1) \cdot \sum_{s=1}^{L} \tilde{z}_{n,j'}^{+(s)} + \sum_{\bm{X}_{n,s} \in \mathcal{N}} \ell_{s,t}' \cdot \sum_{q=1}^{L} \tilde{z}_{n,s,j'}^{(q)} \bm{1}_{\tilde{z}_{n,s,j'}^{(q)} \neq 0} \right) 
    \sum_{r=1}^{L} \bm{1}_{|\langle \bm{w}_i, \bm{z_X}^{(r)} \rangle| \geq b_i} \tilde{z}_{n,j}^{(r)} \right]    
\end{aligned}
\end{equation}

Indeed, the event that there are some $j' \in \mathcal{N}_i$ (which means $i \in \mathcal{M}_j$) such that $z_{p,j'} \neq 0$ 
has probability $\leq \widetilde{O}(1/d)$. Thus the first term on the RHS is trivially bounded by $\widetilde{O}(1/d^2) \|\bm{w}_i^{(t)}\|_2$. For the second term of $H_1$, we can again go through similar procedure as above to obtain that
\begin{equation}
\begin{aligned}
    &\mathbb{E} \left[ (\ell_{p,t}' - 1) \sum_{s=1}^{L} \tilde{z}_{n,j'}^{+(s)} + \sum_{\bm{X}_{n,s} \in \mathfrak{N}} \ell_{s,t}' \sum_{q=1}^{L} \tilde{z}_{n,s,j'}^{(q)} \bm{1}_{\tilde{z}_{n,s,j'}^{(q)} \neq 0} \right] \sum_{r=1}^{L} \bm{1}_{|\langle \bm{w}_i, \bm{z_X}^{(r)} \rangle| \geq b_i} \tilde{z}_{n,j}^{(r)} \\
    \leq& \max \left\{ \widetilde{O} \left( \frac{ \sqrt{\Xi_2} }{ \sqrt{d} t\eta} \right), \frac{1}{d^{1.99}} \right\} \|\bm{w}_i^{(t)}\|_2
\end{aligned}
\end{equation}

Then again we have 
\begin{equation}
    |H_1| \leq \widetilde{O} \left( \max_{j' \neq j, j' \in [d]} \Psi_{i,j}^{(t)} \right)\leq \widetilde{O} \left( \max \left\{ \frac{\Xi_2^2}{d^2}, \frac{\Xi_2}{\sqrt{d}t\eta} \right\} \right) \|\bm{w}_i^{(t)}\|_2
\end{equation}
which can be combined with the bound for $H_2$ to conclude the proof.
\end{proof}

\subsection{Proof of Lemma~\ref{Lemma E.6}:}

\begin{proof}[Proof of Lemma~\ref{Lemma E.6}:]

By using Bernoulli concentration, we know that whenever $\sum_{j \in [d]} \mathbf{1}_{\sum_{r=1}^{L} |\tilde{z}^{(r)}_{n,j}| \neq 0} = \Omega(\log \log d)$ (which happens with constant probability), we have
\begin{equation}
    \sum_{j \in [d]} \mathbf{1}_{\sum_{q=1}^{L} |\tilde{z}^{(q)}_{n,s,j}| =\sum_{r=1}^{L} |\tilde{z}^{(r)}_{n,j}|} \leq C \sum_{j \in [d]} \mathbf{1}_{\sum_{r=1}^{L} |\tilde{z}^{(r)}_{n,j}| \neq 0}.
\end{equation}
$\text{(with prob } \geq 1 - \frac{1}{d^{\Omega(\log \log d)}} \text{ for all } X_{n,s} \in \mathfrak{N})$

And also from Definition~\ref{Definition E.2} we know that if for some $j \in [d]$, $\sum_{r=1}^{L} |\tilde{z}^{(r)}_{n,j}| = \sum_{q=1}^{L} |\tilde{z}^{(q)}_{n,s,j}|$, then
\begin{equation}
    \sum_{i \in \mathcal{M}_j^\star} \left( f_{t,\theta^\star,i}(X_n) h_{i,t}(X_{n,s}) - f_{t,\theta^\star,i}(X_n) h_{i,t}(Y_n) \right)\geq \kappa \tau \sum_{j \in \mathcal{M}_j^\star} |\langle \bm{w}_i^{(t)}, \mathbf{M}_j \rangle| + O\left( \frac{1}{\log d} \right)
\end{equation}

which can be obtained by similar calculations in Lemma~\ref{Lemma E.2}. 

Noticing that the event $\sum_{r=1}^{L} |\tilde{z}^{(r)}_{n}| \neq 0$ happens with $\operatorname{prob} \geq 1 - \frac{1}{\operatorname{polylog}(d)}$, we have

\begin{equation}
\begin{aligned}
    &\widetilde{L}(f_{t,\theta^\star}, f_t)\\
    &\leq \left( 1 - \frac{1}{\operatorname{polylog}(d)} \right) \mathbb{E} \left[ \log \left( \sum_{X \in \mathfrak{B}} e^{\langle f_{t,\theta^\star}(X_n), f_t(X) \rangle / \tau - \langle f_{t,\theta^\star}(X_n), f_t(Y_n) \rangle / \tau} \right) \,\middle|\, \sum_{r=1}^{L} |\tilde{z}^{(r)}_{n}| \neq 0 \right] \\
    &+ \operatorname{Pr}(\sum_{r=1}^{L} |\tilde{z}^{(r)}_{n}| = 0) \cdot O(\log |\mathfrak{B}|) \\
    &\overset{(1)}= \left( 1 - \frac{1}{\operatorname{polylog}(d)} \right) \mathbb{E} \left[ \log \left( \sum_{X \in \mathfrak{B}} e^{\sum_{j \in [d]} \sum_{i \in \mathcal{M}_j^\star} (f_{t,\theta^\star,i}(X_n) h_{i,t}(X) - f_{t,\theta^\star,i}(X_n) h_{i,t}(Y_n)) / \tau} \right) \,\middle|\, \sum_{r=1}^{L} |\tilde{z}^{(r)}_{n}| \neq 0 \right]\\
    &+ \Pr(\sum_{r=1}^{L} |\tilde{z}^{(r)}_{n}| = 0) \cdot O(\log |\mathfrak{B}|)\\
    &\leq O\left( \frac{1}{\log d} \right)
\end{aligned}
\end{equation}

(1) is because $\theta^\star$ has a value only when $i \in \mathcal{M}_j^\star$ occurs. where the last inequality combines the Bernoulli concentration results of $\sum_{j \in [d]} \sum_{r=1}^{L} |\tilde{z}^{(r)}_{n,j}| \sum_{q=1}^{L} |\tilde{z}^{(q)}_{n,s, j}|$ and a union bound for all $s \in [\mathfrak{N}]$, and that $ \sum_{i \in \mathcal{M}_j^\star}|\langle \bm{w}_i^{(t)}, \mathbf{M}_j \rangle| \geq \Omega\left( \frac{\sqrt{\tau}}{\Xi_2} \right)$.
\end{proof}

\subsection{Proof of Lemma~\ref{Lemma E.7}(a):}

\begin{proof}[Proof of Lemma~\ref{Lemma E.7}(a):]

Since the mean of $\langle \bm{w}_i, \bm{z_X}^{(r)} \rangle$ is zero, we can simply compute the variance as
\begin{equation}
\begin{aligned}
    &\operatorname{Var}\left( \langle \bm{w}_i, z^{(r) \setminus j}_X \rangle +  \frac{1}{L} \langle \bm{w}_i, \bm{M}_j \rangle \tilde{z}^{(r)}_{n,j} \right)\\
    \leq& \frac{2}{L^2} \mathbb{E} \left[ \left( \langle \bm{w}_i, \sum_{j' \neq j, j' \in [d]} \bm{M}_{j'} \tilde{z}^{(r)}_{n,j'} + \tilde{\xi}^{(r)}_n \rangle \right)^2 \right] + \frac{2}{L^2} \mathbb{E} \left[ \left(\langle \bm{w}_i, \bm{M}_j \rangle \tilde{z}^{(r)}_{n,j} \right)^2 \right]  \\
    \leq& \frac{4}{L^2} \sum_{s=1}^{d_1} (\bm{w}_i)_s^2 (M_{j'})_s^2 \mathbb{E} \left[ \sum_{j' \neq j} (\tilde{z}^{(r)}_{n,j'})^2 \right] + \frac{4}{L^2} \sum_{s=1}^{d_1} (\bm{w}_i)_s^2 \mathbb{E} \left[ (\tilde{\xi}^{(r)}_{n,s})^2 \right] \\
    +& \frac{2}{L^2} \sum_{s=1}^{d_1} (\bm{w}_i)_s^2 (\bm{M}_j)_s^2 \mathbb{E} \left[ (\tilde{z}^{(r)}_{n,j'})^2 \right] \\
    \leq& \widetilde{\mathcal{O}} \left( \frac{ \|\bm{w}_i\|_2^2 }{d_1} \right) + \widetilde{\mathcal{O}} \left( \frac{ \|\bm{w}_i\|_2^2 }{d} \right) + \widetilde{\mathcal{O}} \left( \frac{ \|\bm{w}_i\|_2^2 }{d_1} \right)\\
    =& \widetilde{\mathcal{O}} \left( \frac{ \|\bm{w}_i\|_2^2 }{d} \right).
\end{aligned} 
\end{equation}

Now we can use Chebychev's inequality to conclude: For a random variable \( X \) with mean zero, Chebyshev's inequality tells us:
\begin{equation}
\begin{aligned}
    &\Pr(|X| \geq t) \leq \frac{\operatorname{Var}(X)}{t^2} \\
    &\Pr(|\langle \bm{w}_i, z^{(r) \setminus j}_X \rangle + \frac{1}{L} \langle \bm{w}_i, \bm{M}_j \rangle \tilde{z}^{(r)}_{n,j}| \geq t) \leq \frac{\operatorname{Var}(\langle \bm{w}_i, z^{(r) \setminus j}_X \rangle + \langle \bm{w}_i, \bm{M}_j \rangle \tilde{z}^{(r)}_{n,j})}{t^2} \\
    &\Pr\left( \left( \langle \bm{w}_i, z^{(r) \setminus j}_X \rangle + \frac{1}{L} \langle \bm{w}_i, \bm{M}_j \rangle \tilde{z}^{(r)}_{n,j}\right)^2 \geq t^2 \right) \leq \frac{\operatorname{Var}(\langle \bm{w}_i, z^{(r) \setminus j}_X \rangle + \langle \bm{w}_i, \bm{M}_j \rangle \tilde{z}^{(r)}_{n,j})}{t^2} \\
    &\Pr\left( \left( \langle \bm{w}_i, z^{(r) \setminus j}_X \rangle + \frac{1}{L} \langle \bm{w}_i, \bm{M}_j \rangle \tilde{z}^{(r)}_{n,j} \right)^2 \geq \frac{\lambda \|\bm{w}_i\|_2^2 \sqrt{\log d}}{d} \right) \leq \frac{\widetilde{\mathcal{O}} \left( \frac{ \|\bm{w}_i\|_2^2 }{d} \right)}{\frac{\lambda \|\bm{w}_i\|_2^2 \sqrt{\log d}}{d}} \\
    &\Pr\left( \left( \langle \bm{w}_i, z^{(r) \setminus j}_X \rangle + \frac{1}{L} \langle \bm{w}_i, \bm{M}_j \rangle \tilde{z}^{(r)}_{n,j} \right)^2 \geq \frac{\lambda \|\bm{w}_i\|_2^2 \sqrt{\log d}}{d} \right) \leq O\left( \frac{1}{\lambda} \right).
\end{aligned} 
\end{equation}
As to the tail bounds for other variables, it suffices to go through some similar calculations.
\end{proof}

\subsection{Proof of Lemma~\ref{Lemma E.7}(b):}

\begin{proof}[Proof of Lemma~\ref{Lemma E.7}(b):]

\begin{equation}
    \langle \bm{w}_i, \bm{M} \tilde{z}^{(r)}_{n} \rangle = \langle \bm{w}_i, \sum^d_{j=1} \bm{M}_j \tilde{z}^{(r)}_{n,j} \rangle = \sum^{d_1}_s (\bm{w}_i)_s (\sum^d_{j=1} \bm{M}_j \tilde{z}^{(r)}_{n,j})_s = \sum^{d_1}_s (\bm{w}_i)_s (\sum^d_{j=1} M_{j,s} \tilde{z}^{(r)}_{n,j} ).  
\end{equation}

$z^{(i)}_{n,j}$ is a bounded random variable in the interval $[-1, 1]$, so $z^{(i)}_{n,j}$ is sub-Gaussian variable with variance proxy 1, then $\tilde{z}^{(r)}_{n,j} = \sum_{i=1}^{L}\delta_{n,i}^{(r)} z^{(i)}_{n,j}$ is also is sub-Gaussian variable with variance proxy $\Delta^{(r)}_n$. $\sum^d_{j=1} M_{j,s} \tilde{z}^{(r)}_{n,j}$ is sub-Gaussian variable with variance proxy $\max_{j \in [d]} \|\bm{M}_j\|_{\infty}^2$, so $ \langle \bm{w}_i, \bm{M} \tilde{z}^{(r)}_{n} \rangle$ is sub-Gaussian variable with variance proxy $\|\bm{w}_i\|_2^2 \cdot \max_{j \in [d]} \|\bm{M}_j\|_{\infty}^2$.

Sub-Gaussian Tail Bound
\begin{equation}
\begin{aligned}
    &\Pr[|X - \mu| \geq t] \leq 2 \exp \left( -\frac{t^2}{2\sigma^2} \right)\\
    &\Pr( \left| \langle \bm{w}_i, \bm{M} \tilde{z}^{(r)}_{n} \rangle \right| \geq t) \leq 2 \exp \left( \frac{-t^2}{\|\bm{w}_i\|_2^2 \cdot \max_{j \in [d]} \|\bm{M}_j\|_{\infty}^2} \right)\\
    &\Pr( \left( \langle \bm{w}_i, \bm{M} \tilde{z}^{(r)}_{n} \rangle \right)^2 \geq t^2) \leq 2 \exp \left( \frac{-t^2}{\|\bm{w}_i\|_2^2 \cdot \max_{j \in [d]} \|\bm{M}_j\|_{\infty}^2} \right)\\   
    &\Pr\left( \left( \langle \bm{w}_i, \bm{M} \tilde{z}^{(r)}_{n} \rangle \right)^2 \geq \|\bm{w}_i\|_2^2 \cdot \max_{j \in [d]} \|\bm{M}_j\|_{\infty}^2 \log^4 d \right) \lesssim e^{ -\Omega(\log^2 d) }.
\end{aligned}
\end{equation}
\end{proof}

\subsection{Proof of Lemma~\ref{Lemma E.7}(c):}

\begin{proof}[Proof of Lemma~\ref{Lemma E.7}(c):]
$\xi^{(i)}_n$ ($\xi^{(i)}_{n,k}$) is a Gaussian random vector (variable), so $\sum_{i=1}^{L}\delta_{n,i}^{(r)} \xi_{n}^{(i)}$ ($\sum_{i=1}^{L}\delta_{n,i}^{(r)} \xi_{n,k}^{(i)}$) is a Gaussian random vector (variable), and therefore $\tilde{\xi}^{(r)}_n$ ($\tilde{\xi}^{(r)}_{n,k}$) is a Gaussian random vector (variable). $\tilde{\xi}^{(r)}_{n,k}$ is a sub-Gaussian random variable and each term \( w_{i,k} \cdot \tilde{\xi}^{(r)}_{n,k} \) is a sub-Gaussian random variable, and its variance is: $\Delta^{(r)}_n \cdot  w_{i,k}^2 \cdot \sigma_{\xi}^2$ ($\Delta^{(r)}_n = \sum_{i=1}^{L} (\delta_{n,i}^{(r)})^2 $). Therefore $\sum_{k=1}^{d_1} w_{i,k} \cdot \tilde{\xi}^{(r)}_{n,k}$ is a sub-Gaussian random variable, and its variance is: $\sum_{k=1}^{d_1} \Delta^{(r)}_n \cdot  w_{i,k}^2 \cdot \sigma_{\xi}^2 = \Delta^{(r)}_n \cdot \|\bm{w}_i\|_2^2 \cdot \sigma_{\xi}^2$.

Sub-Gaussian Tail Bound
\begin{equation}
\begin{aligned}
    &\Pr[|X - \mu| \geq t] \leq 2 \exp \left( -\frac{t^2}{2\sigma^2} \right)\\
    &\Pr( \left| \sum_{k=1}^{d_1} w_{i,k} \cdot \tilde{\xi}^{(r)}_{n,k} \right| \geq t) \leq 2 \exp \left( \frac{-t^2}{ 2 \Delta^{(r)}_n \|\bm{w}_i\|_2^2 \sigma_{\xi}^2} \right)\\
    &\Pr( \left( \sum_{k=1}^{d_1} w_{i,k} \cdot \tilde{\xi}^{(r)}_{n,k} \right)^2 \geq t^2) \leq 2 \exp \left( \frac{-t^2}{ 2 \Delta^{(r)}_n \|\bm{w}_i\|_2^2 \sigma_{\xi}^2} \right)\\   
    &\Pr\left( \left( \sum_{k=1}^{d_1} w_{i,k} \cdot \tilde{\xi}^{(r)}_{n,k} \right)^2 \geq \frac{\|\bm{w}_i\|_2^2 \log^4 d}{d} \right) \lesssim e^{ -\Omega(\log^2 d) }.
\end{aligned}
\end{equation}
\end{proof}

\begin{proof}[Proof of Lemma~\ref{Lemma E.8}:]
The proof is similar to the proof of ~\citep{allen2022feature, wen2021toward}.
\end{proof}

\begin{proof}[Proof of Lemma~\ref{Lemma E.9}:]
The proof is similar to the proof of ~\citep{allen2022feature, wen2021toward}.
\end{proof}

\subsection{Proof of Lemma~\ref{Lemma E.10}(a):}

\begin{proof}[Proof of Lemma~\ref{Lemma E.10}(a):]

In the proof we will make the following simplification of notations: we drop the time superscript \( ^{(t)} \), and also the subscript for neuron index \( i \). We start with the case when \( 0 < \alpha_j < b_i^{(t)} \) and rewrite the expectation as follows:
\begin{equation}
\small
\begin{aligned}
    &\mathbb{E} \left[ h_i(\bm{Y}_{n}) \sum_{r=1}^{L} \bm{1}_{|\langle \bm{w}_i, \bm{z_X}^{(r)} \rangle| \geq b_i} \tilde{z}_{n,j}^{(r)} \right] \\
    =& \mathbb{E} \left[ \sum_{s=1}^{L} \left( \text{ReLU} \left( \langle \bm{w}_i, \bm{z_Y}^{(s)} \rangle - b_i \right) - \text{ReLU} \left( -\langle \bm{w}_i, \bm{z_Y}^{(s)} \rangle - b_i \right) \right) \sum_{r=1}^{L} \bm{1}_{|\langle \bm{w}_i, \bm{z_X}^{(r)} \rangle| \geq b_i} \tilde{z}_{n,j}^{(r)} \right] \\
    =& \mathbb{E} \left[ \sum_{s=1}^{L} \left( \left( \langle \bm{w}_i, \bm{z_Y}^{(s)} \rangle - b_i \right) \bm{1}_{\langle \bm{w}_i, \bm{z_Y}^{(s)} \rangle \geq b_i} - \left( -\langle \bm{w}_i, \bm{z_Y}^{(s)} \rangle - b_i \right) \bm{1}_{-\langle \bm{w}_i, \bm{z_Y}^{(s)} \rangle \geq b_i} \right) \sum_{r=1}^{L} \bm{1}_{|\langle \bm{w}_i, \bm{z_X}^{(r)} \rangle| \geq b_i} \tilde{z}_{n,j}^{(r)}  \right] \\
    =& \mathbb{E} \left[ \sum_{s=1}^{L} \left( \langle \bm{w}_i, \bm{z_Y}^{(s)} \rangle - b_i \right) \bm{1}_{\langle \bm{w}_i, \bm{z_Y}^{(s)} \rangle \geq b_i} \sum_{r=1}^{L} \bm{1}_{|\langle \bm{w}_i, \bm{z_X}^{(r)} \rangle| \geq b_i} \tilde{z}_{n,j}^{(r)} \right.\\
    -& \left. \sum_{s=1}^{L} \left( -\langle \bm{w}_i, \bm{z_Y}^{(s)} \rangle - b_i \right) \bm{1}_{-\langle \bm{w}_i, \bm{z_Y}^{(s)} \rangle \geq b_i} \sum_{r=1}^{L} \bm{1}_{|\langle \bm{w}_i, \bm{z_X}^{(r)} \rangle| \geq b_i} \tilde{z}_{n,j}^{(r)} \right] \\
    =& \mathbb{E} \left[ \sum_{s=1}^{L} \langle \bm{w}_i, \bm{z_Y}^{(s)} \rangle  \bm{1}_{\langle \bm{w}_i, \bm{z_Y}^{(s)} \rangle \geq b_i} \sum_{r=1}^{L} \bm{1}_{|\langle \bm{w}_i, \bm{z_X}^{(r)} \rangle| \geq b_i}  \tilde{z}_{n,j}^{(r)} - b_i  \sum_{s=1}^{L} \bm{1}_{\langle \bm{w}_i, \bm{z_Y}^{(s)} \rangle \geq b_i} \sum_{r=1}^{L} \bm{1}_{|\langle \bm{w}_i, \bm{z_X}^{(r)} \rangle| \geq b_i} \tilde{z}_{n,j}^{(r)} \right.\\
    +& \left. \sum_{s=1}^{L} \langle \bm{w}_i, \bm{z_Y}^{(s)} \rangle \bm{1}_{-\langle \bm{w}_i, \bm{z_Y}^{(s)} \rangle \geq b_i}  \sum_{r=1}^{L} \bm{1}_{|\langle \bm{w}_i, \bm{z_X}^{(r)} \rangle| \geq b_i}  \tilde{z}_{n,j}^{(r)} + b_i \sum_{s=1}^{L} \bm{1}_{-\langle \bm{w}_i, \bm{z_Y}^{(s)} \rangle \geq b_i} \sum_{r=1}^{L} \bm{1}_{|\langle \bm{w}_i, \bm{z_X}^{(r)} \rangle| \geq b_i} \tilde{z}_{n,j}^{(r)} \right] \\
    =& \mathbb{E} \left[ \sum_{s=1}^{L} \langle \bm{w}_i, \bm{z_Y}^{(s)} \rangle \bm{1}_{|\langle \bm{w}_i, \bm{z_Y}^{(s)} \rangle| \geq b_i} \sum_{r=1}^{L} \bm{1}_{|\langle \bm{w}_i, \bm{z_X}^{(r)} \rangle| \geq b_i}  \tilde{z}_{n,j}^{(r)} \right] \\
    -& \mathbb{E} \left[ b_i \sum_{s=1}^{L} \left( \bm{1}_{\langle \bm{w}_i, \bm{z_Y}^{(s)} \rangle \geq b_i} - \bm{1}_{-\langle \bm{w}_i, \bm{z_Y}^{(s)} \rangle \geq b_i} \right)  \sum_{r=1}^{L} \bm{1}_{|\langle \bm{w}_i, \bm{z_X}^{(r)} \rangle| \geq b_i}  \tilde{z}_{n,j}^{(r)} \right]        
\end{aligned}
\end{equation}

Thus the expectation can be expanded as:
\begin{equation}
\begin{aligned}
    &\mathbb{E} \left[ h_i(\bm{Y}_{n}) \sum_{r=1}^{L} \bm{1}_{|\langle \bm{w}_i, \bm{z_X}^{(r)} \rangle| \geq b_i} \tilde{z}_{n,j}^{(r)} \right] \\
    =& \mathbb{E} \left[ \sum_{s=1}^{L} \langle \bm{w}_i, z_{Y}^{(s)} \rangle \bm{1}_{|\langle \bm{w}_i, \bm{z_Y}^{(s)} \rangle| \geq b_i} \sum_{r=1}^{L} \bm{1}_{|\langle \bm{w}_i, \bm{z_X}^{(r)} \rangle| \geq b_i} \tilde{z}_{n,j}^{(r)} \right] \\
    -&\mathbb{E} \left[ b_i \sum_{s=1}^{L} \left( \bm{1}_{\langle \bm{w}_i, \bm{z_Y}^{(s)} \rangle \geq b_i} - \bm{1}_{-\langle \bm{w}_i, \bm{z_Y}^{(s)} \rangle \geq b_i} \right)  \sum_{r=1}^{L} \bm{1}_{|\langle \bm{w}_i, \bm{z_X}^{(r)} \rangle| \geq b_i}  \tilde{z}_{n,j}^{(r)} \right] \\
    =& \frac{1}{L} \mathbb{E} \left[ \alpha_j \sum_{s=1}^{L} \tilde{z}_{n,j}^{+(s)} \sum_{r=1}^{L} \tilde{z}_{n,j}^{(r)} \bm{1}_{|\langle \bm{w}_i, \frac{z_{X}^{(r)} + z_{Y}^{(s)}}{2} \rangle| \geq b_i + |\langle \bm{w}_i, \bm{z_X}^{(r)} - \frac{z_{X}^{(r)} + z_{Y}^{(s)}}{2} \rangle|} \right]\\
    +& \mathbb{E} \left[ \sum_{s=1}^{L} \left(  S^{(s)\setminus j} - b_i \right) \bm{1}_{\langle \bm{w}_i, \bm{z_Y}^{(s)} \rangle \geq b_i} \sum_{r=1}^{L} \bm{1}_{\langle \bm{w}_i, \bm{z_X}^{(r)} \rangle \geq b_i} \tilde{z}_{n,j}^{(r)} \right]\\
    +& \mathbb{E} \left[ \sum_{s=1}^{L} \left( S^{(s)\setminus j} + b_i \right) \bm{1}_{\langle \bm{w}_i, \bm{z_Y}^{(s)} \rangle \leq -b_i} \sum_{r=1}^{L} \bm{1}_{\langle \bm{w}_i, \bm{z_X}^{(r)} \rangle \leq -b_i} \tilde{z}_{n,j}^{(r)} \right] \\
    +& \mathbb{E} \left[ \sum_{s=1}^{L} 
    \left( \frac{1}{L} \alpha_j \tilde{z}_{n,j}^{+(s)}  + S^{(s)\setminus j} - b_i \right) \bm{1}_{\langle \bm{w}_i, \bm{z_Y}^{(s)} \rangle \geq b_i} \sum_{r=1}^{L} \bm{1}_{\langle \bm{w}_i, \bm{z_X}^{(r)} \rangle \leq -b_i} \tilde{z}_{n,j}^{(r)} \right]\\
    +& \mathbb{E} \left[ \sum_{s=1}^{L} \left( \frac{1}{L} \alpha_j \tilde{z}_{n,j}^{+(s)}  + S^{(s)\setminus j} + b_i \right) \bm{1}_{\langle \bm{w}_i, \bm{z_Y}^{(s)} \rangle \leq -b_i} \sum_{r=1}^{L} \bm{1}_{\langle \bm{w}_i, \bm{z_X}^{(r)} \rangle \geq b_i} \tilde{z}_{n,j}^{(r)} \right] \\
    =& J_1 + J_2 + J_3     
\end{aligned}
\end{equation}

\noindent Now we need to obtain absolute bounds for both $J_2$ and $J_3$. We start with $J_2$, where
\begin{equation}
\begin{aligned}
    J_2 &= \mathbb{E} \left[ \sum_{s=1}^{L} \left( S^{(s)\setminus j} - b_i \right) \bm{1}_{\langle \bm{w}_i, \bm{z_Y}^{(s)} \rangle \geq b_i} \sum_{r=1}^{L} \bm{1}_{\langle \bm{w}_i, \bm{z_X}^{(r)} \rangle \geq b_i} \tilde{z}_{n,j}^{(r)} \right]\\
    &+ \mathbb{E} \left[ \sum_{s=1}^{L} \left( S^{(s)\setminus j} + b_i \right) \bm{1}_{\langle \bm{w}_i, \bm{z_Y}^{(s)} \rangle \leq -b_i} \sum_{r=1}^{L} \bm{1}_{\langle \bm{w}_i, \bm{z_X}^{(r)} \rangle \leq -b_i} \tilde{z}_{n,j}^{(r)} \right] \\
    &= \mathbb{E} \left[ \sum_{s=1}^{L} \left( S^{(s)\setminus j} - b_i \right) \sum_{r=1}^{L} \bm{1}_{\langle \bm{w}_i, \frac{z_{X}^{(r)} + z_{Y}^{(s)}}{2} \rangle \ge b_i + |\langle \bm{w}_i, \bm{z_X}^{(r)} - \frac{z_{X}^{(r)} + z_{Y}^{(s)}}{2} \rangle|} \tilde{z}_{n,j}^{(r)}  \right] \\
    &+ \mathbb{E} \left[ \sum_{s=1}^{L} \left( S^{(s)\setminus j} + b_i \right) \sum_{r=1}^{L} \bm{1}_{\langle \bm{w}_i, \frac{z_{X}^{(r)} + z_{Y}^{(s)}}{2} \rangle \le -b_i - |\langle \bm{w}_i, \bm{z_X}^{(r)} - \frac{z_{X}^{(r)} + z_{Y}^{(s)}}{2} \rangle|} \tilde{z}_{n,j}^{(r)}  \right]
\end{aligned}
\end{equation}

\noindent We proceed with the first term 
\begin{equation}
    \mathbb{E} \left[ \sum_{s=1}^{L} \left( S^{(s)\setminus j} - b_i \right) \sum_{r=1}^{L} \bm{1}_{\langle \bm{w}_i, \frac{z_{X}^{(r)} + z_{Y}^{(s)}}{2} \rangle \ge b_i + |\langle \bm{w}_i, \bm{z_X}^{(r)} - \frac{z_{X}^{(r)} + z_{Y}^{(s)}}{2} \rangle|} \tilde{z}_{n,j}^{(r)}  \right]
\end{equation}

First, from a trivial calculation conditioned on the randomness of \( \tilde{z}_{n,j}^{(r)}  \), we have:
\begin{equation}
\begin{aligned}
    &\mathbb{E} \left[ \sum_{s=1}^{L} \left( S^{(s)\setminus j} - b_i \right) \sum_{r=1}^{L} \bm{1}_{\langle \bm{w}_i, \frac{z_{X}^{(r)} + z_{Y}^{(s)}}{2} \rangle \ge b_i + |\langle \bm{w}_i, \bm{z_X}^{(r)} - \frac{z_{X}^{(r)} + z_{Y}^{(s)}}{2} \rangle|} \tilde{z}_{n,j}^{(r)}  \right] \\
    =&\mathbb{E} \left[ \sum_{s=1}^{L} \left( S^{(s)\setminus j} - b_i \right) \sum_{r=1}^{L} \bm{1}_{\langle \bm{w}_i, \frac{z_{X}^{(r)} + z_{Y}^{(s)}}{2} \rangle \ge b_i + |\bar{\mathcal{S}}^{(r,s)\setminus j} + \bar{\alpha}^{(r,s)}_j \frac{\tilde{z}_{n,j}^{(r)} + \tilde{z}_{n,j}^{+(s)}}{2L}|} \tilde{z}_{n,j}^{(r)} \right] \\
    =&\mathbb{E} \left[ \sum_{s=1}^{L} \left( S^{(s)\setminus j} - b_i \right) \sum_{r=1}^{L} \bm{1}_{\mathcal{S}^{(r,s)\setminus j} \ge b_i - \alpha_j \frac{\tilde{z}_{n,j}^{(r)} + \tilde{z}_{n,j}^{+(s)}}{2L} + |\bar{\mathcal{S}}^{(r,s)\setminus j} + \bar{\alpha}^{(r,s)}_j \frac{\tilde{z}_{n,j}^{(r)} + \tilde{z}_{n,j}^{+(s)}}{2L}|} \tilde{z}_{n,j}^{(r)} \right] \\
    =&\, \mathbb{E}\Bigg[\sum_{s=1}^{L} \bigl( S^{(s)\setminus j} - b_i \bigr)\sum_{r=1}^{L} \lvert \tilde{z}_{n,j}^{(r)} \rvert\Big(\mathbf{1}_{\mathcal{S}^{(r,s)\setminus j} \ge b_i - \alpha_j \frac{\tilde{z}_{n,j}^{(r)} + \tilde{z}_{n,j}^{+(s)}}{2L} + \bigl\lvert \bar{\mathcal{S}}^{(r,s)\setminus j} + \bar{\alpha}^{(r,s)}_j \tfrac{\tilde{z}_{n,j}^{(r)} + \tilde{z}_{n,j}^{+(s)}}{2L} \bigr\rvert}\\
    &\qquad\qquad\quad -\,\mathbf{1}_{\mathcal{S}^{(r,s)\setminus j} \ge
    b_i + \alpha_j \frac{\tilde{z}_{n,j}^{(r)} + \tilde{z}_{n,j}^{+(s)}}{2L} + \bigl\lvert \bar{\mathcal{S}}^{(r,s)\setminus j} - \bar{\alpha}^{(r,s)}_j \tfrac{\tilde{z}_{n,j}^{(r)} + \tilde{z}_{n,j}^{+(s)}}{2L} \bigr\rvert}\Big)\Bigg].
\end{aligned}
\end{equation}

Now define:
\begin{equation}
\begin{aligned}
    Z &= \frac{1}{2} \left( \left| \bar{\mathcal{S}}^{(r,s)\setminus j} + \bar{\alpha}^{(r,s)}_j \frac{\tilde{z}_{n,j}^{(r)} + \tilde{z}_{n,j}^{+(s)}}{2L} \right| + \left| \bar{\mathcal{S}}^{(r,s)\setminus j} - \bar{\alpha}^{(r,s)}_j \frac{\tilde{z}_{n,j}^{(r)} + \tilde{z}_{n,j}^{+(s)}}{2L} \right| \right), \\ 
    Z' &= \frac{1}{2} \left( \left| \bar{\mathcal{S}}^{(r,s)\setminus j} + \bar{\alpha}^{(r,s)}_j \frac{\tilde{z}_{n,j}^{(r)} + \tilde{z}_{n,j}^{+(s)}}{2L} \right| - \left|\bar{\mathcal{S}}^{(r,s)\setminus j} - \bar{\alpha}^{(r,s)}_j \frac{\tilde{z}_{n,j}^{(r)} + \tilde{z}_{n,j}^{+(s)}}{2L} \right| \right).
\end{aligned}
\end{equation}

In this case, we always have \( |Z'| \le |\bar{\alpha}^{(r,s)}_j||\frac{\tilde{z}_{n,j}^{(r)} + \tilde{z}_{n,j}^{+(s)}}{2L}| \), and
\begin{equation}
\begin{aligned}
    &\Biggl\lvert \bm{1}_{\mathcal{S}^{(r,s)\setminus j} \ge b_i - \alpha_j \frac{\tilde{z}_{n,j}^{(r)} + \tilde{z}_{n,j}^{+(s)}}{2L} + \bigl\lvert \bar{\mathcal{S}}^{(r,s)\setminus j} + \bar{\alpha}^{(r,s)}_j \tfrac{\tilde{z}_{n,j}^{(r)} + \tilde{z}_{n,j}^{+(s)}}{2L} \bigr\rvert}\\
    &\quad - \bm{1}_{\mathcal{S}^{(r,s)\setminus j} \ge 
    b_i + \alpha_j \frac{\tilde{z}_{n,j}^{(r)} + \tilde{z}_{n,j}^{+(s)}}{2L} + \bigl\lvert \bar{\mathcal{S}}^{(r,s)\setminus j} - \bar{\alpha}^{(r,s)}_j \tfrac{\tilde{z}_{n,j}^{(r)} + \tilde{z}_{n,j}^{+(s)}}{2L} \bigr\rvert}\Biggr\rvert\\ 
    &= \bm{1}_{\mathcal{S}^{(r,s)\setminus j} - b_i - Z \in [ -|\alpha_j \frac{\tilde{z}_{n,j}^{(r)} + \tilde{z}_{n,j}^{+(s)}}{2L} - Z'|, |\alpha_j \frac{\tilde{z}_{n,j}^{(r)} + \tilde{z}_{n,j}^{+(s)}}{2L} - Z'| ]}
\end{aligned}
\end{equation}

Equality Proof: The original two threshold values are:
\begin{equation}
\begin{aligned}
    T_1 = b_i - \alpha_j \frac{\tilde{z}_{n,j}^{(r)} + \tilde{z}_{n,j}^{+(s)}}{2L} + \left| \bar{S}^{(r,s)\setminus j} + \bar{\alpha}^{(r,s)}_j \frac{\tilde{z}_{n,j}^{(r)} + \tilde{z}_{n,j}^{+(s)}}{2L}  \right|\\
    T_2 = b_i + \alpha_j \frac{\tilde{z}_{n,j}^{(r)} + \tilde{z}_{n,j}^{+(s)}}{2L}  + \left| \bar{S}^{(r,s)\setminus j} - \bar{\alpha}^{(r,s)}_j \frac{\tilde{z}_{n,j}^{(r)} + \tilde{z}_{n,j}^{+(s)}}{2L}  \right|
\end{aligned}
\end{equation}

We can rewrite these two terms as:
\begin{equation}
\begin{aligned}
    T_1 &= b_i + \left( -\alpha_j \frac{\tilde{z}_{n,j}^{(r)} + \tilde{z}_{n,j}^{+(s)}}{2L}  + \left| \bar{S}^{(r,s)\setminus j} + \bar{\alpha}^{(r,s)}_j \frac{\tilde{z}_{n,j}^{(r)} + \tilde{z}_{n,j}^{+(s)}}{2L}  \right| \right) \\
    T_2 &= b_i + \left( \alpha_j \frac{\tilde{z}_{n,j}^{(r)} + \tilde{z}_{n,j}^{+(s)}}{2L}  + \left| \bar{S}^{(r,s)\setminus j} - \bar{\alpha}^{(r,s)}_j \frac{\tilde{z}_{n,j}^{(r)} + \tilde{z}_{n,j}^{+(s)}}{2L}  \right| \right)
\end{aligned}
\end{equation}

Then the midpoint between these two terms is:
\begin{equation}
    \text{Midpoint} = b_i + Z
\end{equation}

The distance between them is:
\begin{equation}
\begin{aligned}
    & T_2 - T_1 \\
    =& 2\alpha_j \frac{\tilde{z}_{n,j}^{(r)} + \tilde{z}_{n,j}^{+(s)}}{2L}  + \left( \left| \bar{S}^{(r,s)\setminus j} - \bar{\alpha}^{(r,s)}_j \frac{\tilde{z}_{n,j}^{(r)} + \tilde{z}_{n,j}^{+(s)}}{2L}  \right| - \left| \bar{S}^{(r,s)\setminus j} + \bar{\alpha}^{(r,s)}_j \frac{\tilde{z}_{n,j}^{(r)} + \tilde{z}_{n,j}^{+(s)}}{2L}  \right| \right) \\
    =& 2(\alpha_j \frac{\tilde{z}_{n,j}^{(r)} + \tilde{z}_{n,j}^{+(s)}}{2L}  - Z')
\end{aligned}
\end{equation}

Therefore, shifting from the midpoint \( b_i + Z \) by \( \left| \alpha_j \frac{\tilde{z}_{n,j}^{(r)} + \tilde{z}_{n,j}^{+(s)}}{2L} - Z' \right| \) in both directions will exactly span the distance between \( T_1 \) and \( T_2 \).

The indicator function differs by 1 if and only if:
\begin{equation}
    T_1 \leq S^{(r,s)\setminus j} < T_2
\end{equation}

Therefore we have:
\begin{equation}
\begin{aligned}
    & \Biggl\lvert \bm{1}_{\mathcal{S}^{(r,s)\setminus j} \ge b_i - \alpha_j \tfrac{\tilde{z}_{n,j}^{(r)} + \tilde{z}_{n,j}^{+(s)}}{2L} + \bigl\lvert \bar{\mathcal{S}}^{(r,s)\setminus j} + \bar{\alpha}^{(r,s)}_j \tfrac{\tilde{z}_{n,j}^{(r)} + \tilde{z}_{n,j}^{+(s)}}{2L} \bigr\rvert} \\
    &\quad- \bm{1}_{\mathcal{S}^{(r,s)\setminus j} \ge 
    b_i + \alpha_j \tfrac{\tilde{z}_{n,j}^{(r)} + \tilde{z}_{n,j}^{+(s)}}{2L} 
    + \bigl\lvert \bar{\mathcal{S}}^{(r,s)\setminus j} - \bar{\alpha}^{(r,s)}_j \tfrac{\tilde{z}_{n,j}^{(r)} + \tilde{z}_{n,j}^{+(s)}}{2L} \bigr\rvert}\Biggr\rvert\\
    =& \left| \bm{1}_{S^{(r,s)\setminus j} \geq T_1} - \bm{1}_{S^{(r,s)\setminus j} \geq T_2} \right| \\
    =& \bm{1}_{S^{(r,s)\setminus j} - \frac{T_1 + T_2}{2} \in [-\frac{T_2 - T_1}{2} , \frac{T_2  - T_1}{2} ]} \\
    =& \bm{1}_{S^{(r,s)\setminus j} - b_i - Z \in [-|\alpha_j \frac{\tilde{z}_{n,j}^{(r)} + \tilde{z}_{n,j}^{+(s)}}{2L} - Z'|,\ |\alpha_j \frac{\tilde{z}_{n,j}^{(r)} + \tilde{z}_{n,j}^{+(s)}}{2L} - Z'|]}
\end{aligned}
\end{equation}

Inequality Proof: Let \( a = \bar{S}^{(r,s)\setminus j} \), \( b = \bar{\alpha}^{(r,s)}_j \frac{\tilde{z}_{n,j}^{(r)} + \tilde{z}_{n,j}^{+(s)}}{2L} \). 

We know:\(\left| \left| a + b \right| - \left| a - b \right| \right| \leq 2 |b|\) and \(\left| \left| a + b \right| + \left| a - b \right| \right| \leq 2 (|a| + |b|)\)

Applying this, we have:
\begin{equation}
\begin{aligned}
    |Z'| &= \frac{1}{2} \left|\, |a + b| - |a - b| \,\right| \leq \frac{1}{2} \cdot 2|b| = |b| = |\bar{\alpha}^{(r,s)}_j\frac{\tilde{z}_{n,j}^{(r)} + \tilde{z}_{n,j}^{+(s)}}{2L}|\\
    |Z| &= \frac{1}{2} \left|\, |a + b| + |a - b| \,\right| \leq \frac{1}{2} \cdot 2 \cdot (|a| + |b|) = |\bar{S}^{(r,s)\setminus j}| + |\bar{\alpha}^{(r,s)}_j \frac{\tilde{z}_{n,j}^{(r)} + \tilde{z}_{n,j}^{+(s)}}{2L}|
\end{aligned}
\end{equation}

Which allows us to proceed as follows:
\begin{equation}
\begin{aligned}
    &\left|\mathbb{E} \left[ \sum_{s=1}^{L} \left( S^{(s)\setminus j} - b_i \right) \sum_{r=1}^{L} \bm{1}_{\langle \bm{w}_i, \frac{z_{X}^{(r)} + z_{Y}^{(s)}}{2} \rangle \ge b_i + |\langle \bm{w}_i, \bm{z_X}^{(r)} - \frac{z_{X}^{(r)} + z_{Y}^{(s)}}{2} \rangle|} \tilde{z}_{n,j}^{(r)} \right] \right| \\
    =& \Biggl\lvert\mathbb{E}\Bigg[ \sum_{s=1}^{L} \bigl( S^{(s)\setminus j} - b_i - Z + Z \bigr) \sum_{r=1}^{L} \lvert \tilde{z}_{n,j}^{(r)} \rvert \Big(\bm{1}_{\mathcal{S}^{(r,s)\setminus j} \ge 
    b_i - \alpha_j \tfrac{\tilde{z}_{n,j}^{(r)} + \tilde{z}_{n,j}^{+(s)}}{2L} + \bigl\lvert \bar{\mathcal{S}}^{(r,s)\setminus j} + \bar{\alpha}^{(r,s)}_j \tfrac{\tilde{z}_{n,j}^{(r)} 
    + \tilde{z}_{n,j}^{+(s)}}{2L} \bigr\rvert} \\
    &\quad - \bm{1}_{\mathcal{S}^{(r,s)\setminus j} \ge 
    b_i + \alpha_j \tfrac{\tilde{z}_{n,j}^{(r)} + \tilde{z}_{n,j}^{+(s)}}{2L} + \bigl\lvert \bar{\mathcal{S}}^{(r,s)\setminus j} - \bar{\alpha}^{(r,s)}_j \tfrac{\tilde{z}_{n,j}^{(r)} + \tilde{z}_{n,j}^{+(s)}}{2L} \bigr\rvert}\Big)\,
   \bm{1}_{S^{\setminus j} \ge b_i - \alpha_j \tfrac{\tilde{z}_{n,j}^{(r)} + \tilde{z}_{n,j}^{+(s)}}{2L}}\Bigg] \Biggr\rvert \\
    \le& \frac{1}{L} \mathbb{E} \left[ \sum_{s=1}^{L} \sum_{r=1}^{L} (\alpha_j + 2|\bar{\alpha}^{(r,s)}_j|)| \frac{\tilde{z}_{n,j}^{(r)} + \tilde{z}_{n,j}^{+(s)}}{2}| |\tilde{z}_{n,j}^{(r)}| \bm{1}_{S^{(r,s)\setminus j} \ge b_i - \alpha_j \frac{\tilde{z}_{n,j}^{(r)} + \tilde{z}_{n,j}^{+(s)}}{2L}} \right] \\
    +& \mathbb{E} \left[ \sum_{s=1}^{L} \sum_{r=1}^{L} |\bar{S}^{(r,s)\setminus j}||\tilde{z}_{n,j}^{(r)}| \bm{1}_{S^{(r,s)\setminus j} - b_i - Z \in [-|\alpha_j \frac{\tilde{z}_{n,j}^{(r)} + \tilde{z}_{n,j}^{+(s)}}{2L} - Z'|, |\alpha_j \frac{\tilde{z}_{n,j}^{(r)} + \tilde{z}_{n,j}^{+(s)}}{2L} - Z'|]} \bm{1}_{S^{(r,s)\setminus j} \ge b_i - \alpha_j \frac{\tilde{z}_{n,j}^{(r)} + \tilde{z}_{n,j}^{+(s)}}{2L}} \right] \\
    =& \frac{1}{L} \mathbb{E} \left[ \sum_{s=1}^{L} \sum_{r=1}^{L} (\alpha_j + 2|\bar{\alpha}^{(r,s)}_j|) |\frac{\tilde{z}_{n,j}^{(r)} + \tilde{z}_{n,j}^{+(s)}}{2}| |\tilde{z}_{n,j}^{(r)}| \bm{1}_{S^{(r,s)\setminus j} \ge b_i - \alpha_j \frac{\tilde{z}_{n,j}^{(r)} + \tilde{z}_{n,j}^{+(s)}}{2L}} \right] \\
    +& \sum_{s=1}^{L} \sum_{r=1}^{L} \sqrt{ \mathbb{E} \left[ |\bar{S}^{(r,s)\setminus j}|^2 ( \tilde{z}_{n,j}^{(r)})^2 \bm{1}_{S^{(r,s)\setminus j} \ge b_i - \alpha_j \frac{\tilde{z}_{n,j}^{(r)} + \tilde{z}_{n,j}^{+(s)}}{2L}} \right] } \cdot \sqrt{ \mathbb{E} \left[ \bm{1}_{S^{(r,s)\setminus j} - b_i - Z \in [-|\alpha_j \frac{\tilde{z}_{n,j}^{(r)} + \tilde{z}_{n,j}^{+(s)}}{2L} - Z'|, |\alpha_j \frac{\tilde{z}_{n,j}^{(r)} + \tilde{z}_{n,j}^{+(s)}}{2L} - Z'|]} \right] } \\
    \overset{\circled{1}}{\le}& \frac{1}{L} \mathbb{E} \left[ \sum_{s=1}^{L} \sum_{r=1}^{L}  (\alpha_j + 2|\bar{\alpha}^{(r,s)}_j|)|\frac{\tilde{z}_{n,j}^{(r)} + \tilde{z}_{n,j}^{+(s)}}{2}| |\tilde{z}_{n,j}^{(r)}| \bm{1}_{S^{(r,s)\setminus j} \ge b_i - \alpha_j \frac{\tilde{z}_{n,j}^{(r)} + \tilde{z}_{n,j}^{+(s)}}{2L}} \right] \\
    +& \sum_{s=1}^{L} \sum_{r=1}^{L} \sqrt{ \mathbb{E} \left[ |\bar{S}^{(r,s)\setminus j}|^2 ( \tilde{z}_{n,j}^{(r)})^2 \bm{1}_{S^{(r,s)\setminus j} \ge b_i - \alpha_j \frac{\tilde{z}_{n,j}^{(r)} + \tilde{z}_{n,j}^{+(s)}}{2L}} \right] }  \cdot 
    \sqrt{ \frac{ \mathbb{E} \left[ (\alpha_j + |\bar{\alpha}^{(r,s)}_j|)^2 (\frac{\tilde{z}_{n,j}^{(r)} + \tilde{z}_{n,j}^{+(s)}}{2L})^2 \right] }{ \mathbb{E} \left[ \langle \bm{w}_i, \frac{\tilde{\xi}_{n}^{(r)} + \tilde{\xi}_{n}^{+(s)}}{2L} \rangle^2 \right] } }  \\
    =& O(1) \left( \alpha_j + O\left( \mathbb{E}[|\bar{\alpha}_j|^2]^{1/2} \right) \right) \mathbb{E}\left[\sum_{s=1}^{L} \sum_{r=1}^{L} |\frac{\tilde{z}_{n,j}^{(r)} + \tilde{z}_{n,j}^{+(s)}}{2} | |\tilde{z}_{n,j}^{(r)}| \right] 
    \left( \Pr(S^{\setminus j} \ge b_i - \alpha_j C_{\tilde{z}}) + \sqrt{ \frac{ \mathbb{E}[|\bar{S}^{\setminus j}|^2 \bm{1}_{S^{\setminus j} \ge b_i - \alpha_j C_{\tilde{z}} }] }{ \mathbb{E}[\langle \bm{w}_i, \frac{\tilde{\xi}_{n} + \tilde{\xi}_{n}^{+}}{2L} \rangle^2] } }  \right)
\end{aligned}
\end{equation}
\newpage

where in $\circled{1}$ we have used the randomness of \( \frac{\tilde{\xi}_{n}^{(r)} + \tilde{\xi}_{n}^{+(s)}}{2L} \) in the following manner: Fixing the randomness of \( \tilde{z}_{n,j} \), we have \( S^{(r,s)\setminus j} - Z \) is a random variable depending solely on the randomness of \( \frac{\tilde{\xi}_{n}^{(r)} + \tilde{\xi}_{n}^{+(s)}}{2L} \), and thus we have:
\begin{equation}
\begin{aligned}
    &\mathbb{E} \left[ \bm{1}_{S^{(r,s)\setminus j} - b_i - Z \in [-|\alpha_j \frac{\tilde{z}_{n,j}^{(r)} + \tilde{z}_{n,j}^{+(s)}}{2L} - Z'|, |\alpha_j \frac{\tilde{z}_{n,j}^{(r)} + \tilde{z}_{n,j}^{+(s)}}{2L} - Z'|]} \right] \\
    \le& \mathbb{E} \left[ \bm{1}_{\langle \bm{w}_i, \frac{\tilde{\xi}_{n}^{(r)} + \tilde{\xi}_{n}^{+(s)}}{2L} \rangle - |\langle \bm{w}_i, \tilde{\xi}_{n}^{+(s)} \rangle| \in [|\bar{\alpha}^{(r,s)}_j| - (\alpha_j + |\bar{\alpha}^{(r,s)}_j|), \alpha_j + 2|\bar{\alpha}^{(r,s)}_j|]} \right] \\
    =& \mathbb{E} \left[ \bm{1}_{\langle \bm{w}_i, \frac{\tilde{\xi}_{n}^{(r)}}{2L} \rangle + \langle \bm{w}_i, \frac{\tilde{\xi}_{n}^{+(s)}}{2L} \rangle - |\langle \bm{w}_i, \frac{\tilde{\xi}_{n}^{+(s)}}{L} \rangle| \in [-O(\alpha_j + |\bar{\alpha}^{(r,s)}_j|), O(\alpha_j + |\bar{\alpha}^{(r,s)}_j|)]} \right] \\
    \le& \mathbb{E} \left[ \frac{O((\alpha_j + |\bar{\alpha}^{(r,s)}_j|)^2 (\frac{\tilde{z}_{n,j}^{(r)} + \tilde{z}_{n,j}^{+(s)}}{2L})^2)} {\langle \bm{w}_i, \frac{\tilde{\xi}_{n}^{(r)} + \tilde{\xi}_{n}^{+(s)}}{2L} \rangle^2} \right] 
    = \frac{\mathbb{E}[O((\alpha_j + |\bar{\alpha}^{(r,s)}_j|)^2 (\frac{\tilde{z}_{n,j}^{(r)} + \tilde{z}_{n,j}^{+(s)}}{2L})^2))]}{\mathbb{E}[\langle \bm{w}_i, \frac{\tilde{\xi}_{n}^{(r)} + \tilde{\xi}_{n}^{+(s)}}{2L} \rangle^2]}
\end{aligned}
\end{equation}

Simultaneously, from similar analysis as above, we have for the second term in $J_2$:
\begin{equation}
\begin{aligned}
    &\left| \mathbb{E} \left[ \sum_{s=1}^{L} \sum_{r=1}^{L} \left( S^{(s)\setminus j} + b_i \right) \bm{1}_{\langle \bm{w}_i, \frac{z_{X}^{(r)} + z_{Y}^{(s)}}{2} \rangle \le -b_i - |\langle \bm{w}_i, \bm{z_X}^{(r)} - \frac{z_{X}^{(r)} + z_{Y}^{(s)}}{2} \rangle|} \tilde{z}_{n,j}^{(r)} \right] \right| \\
    =& \left| \mathbb{E} \left[  \sum_{s=1}^{L} \sum_{r=1}^{L} \left( S^{(s)\setminus j} + b_i \right) \bm{1}_{\langle \bm{w}_i, \frac{z_{X}^{(r)} + z_{Y}^{(s)}}{2} \rangle \le -b_i - |\bar{S}^{(r,s)\setminus j} + \bar{\alpha}_j \frac{z_{X}^{(r)} + z_{Y}^{(s)}}{2}|} \tilde{z}_{n,j}^{(r)} \right] \right| \\
    \le& O\left( \alpha_j + \mathbb{E}[|\bar{\alpha}_j|^2]^{1/2} \right) \mathbb{E}\left[\sum_{s=1}^{L} \sum_{r=1}^{L} |\frac{\tilde{z}_{n,j}^{(r)} + \tilde{z}_{n,j}^{+(s)}}{2} | |\tilde{z}_{n,j}^{(r)}| \right] \sqrt{ \frac{ \mathbb{E}[|\bar{S}^{\setminus j}|^2 \bm{1}_{S^{\setminus j} \ge b_i - \alpha_j}] }{ \mathbb{E}[\langle \bm{w}_i, \frac{\tilde{\xi}_{n} + \tilde{\xi}_{n}^{+}}{2L} \rangle^2] } } 
\end{aligned}
\end{equation}

Now we turn to \( J_3 \)
\begin{equation}
\begin{aligned}
    J_3 &= \mathbb{E} \left[ \sum_{s=1}^{L} \sum_{r=1}^{L}\left( \frac{1}{L} \alpha_j \tilde{z}_{n,j}^{+(s)}  + S^{(s)\setminus j} - b_i \right) \bm{1}_{\langle \bm{w}_i, \bm{z_Y}^{(s)} \rangle \geq b_i} \bm{1}_{\langle \bm{w}_i, \bm{z_X}^{(r)} \rangle \leq -b_i} \tilde{z}_{n,j}^{(r)} \right]\\
    &+ \mathbb{E} \left[ \sum_{s=1}^{L} \sum_{r=1}^{L} \left( \frac{1}{L} \alpha_j \tilde{z}_{n,j}^{+(s)}  + S^{(s)\setminus j} + b_i \right) \bm{1}_{\langle \bm{w}_i, \bm{z_Y}^{(s)} \rangle \leq -b_i} \bm{1}_{\langle \bm{w}_i, \bm{z_X}^{(r)} \rangle \geq b_i}\tilde{z}_{n,j}^{(r)} \right] \\
    &\overset{\circled{1}}= \mathbb{E} \left[ \sum_{s=1}^{L}  \sum_{r=1}^{L} \left( \frac{1}{L} \alpha_j \tilde{z}_{n,j}^{+(s)}  + S^{(s)\setminus j}\right) \bm{1}_{\langle \bm{w}_i, \bm{z_Y}^{(s)} \rangle \geq b_i} \bm{1}_{\langle \bm{w}_i, \bm{z_X}^{(r)} \rangle \leq -b_i}\tilde{z}_{n,j}^{(r)} \right]\\
    &+ \mathbb{E} \left[ \sum_{s=1}^{L} \sum_{r=1}^{L} \left( \frac{1}{L} \alpha_j\tilde{z}_{n,j}^{+(s)}  + S^{(s)\setminus j}\right) \bm{1}_{\langle \bm{w}_i, \bm{z_Y}^{(s)} \rangle \leq -b_i} \bm{1}_{\langle \bm{w}_i, \bm{z_X}^{(r)} \rangle \geq b_i}\tilde{z}_{n,j}^{(r)} \right] 
\end{aligned}
\end{equation}

$\circled{1}$ is from the symmetry of \( \sum_{r=1}^{L}\bm{z_X}^{(r)} \) and \( \sum_{s=1}^{L} \bm{z_Y}^{(s)} \) over the randomness of \( z_X , z_Y \), ($\sum_{r=1}^{L} \bm{z_X}^{(r)} \overset{d}{=} \sum_{s=1}^{L} \bm{z_Y}^{(s)}$) we observe
\begin{equation}
\begin{aligned}
    \sum_{s=1}^{L} \sum_{r=1}^{L} f(\bm{z_X}^{(r)}, \bm{z_Y}^{(s)}) &\overset{d}{=} \sum_{s=1}^{L} \sum_{r=1}^{L} f(\bm{z_Y}^{(s)}, \bm{z_X}^{(r)})\\
    \mathbb{E}[\sum_{s=1}^{L} \sum_{r=1}^{L} f(\bm{z_X}^{(r)}, \bm{z_Y}^{(s)})] &= \mathbb{E}[\sum_{s=1}^{L} \sum_{r=1}^{L} f(\bm{z_Y}^{(s)}, \bm{z_X}^{(r)})]\\
    \mathbb{E}\left[ \sum_{s=1}^{L} \sum_{r=1}^{L} b_i \bm{1}_{\langle \bm{w}_i, \bm{z_X}^{(r)} \rangle \ge b_i} \bm{1}_{\langle \bm{w}_i, \bm{z_Y}^{(s)} \rangle \le -b_i} \tilde{z}_{n,j}^{(r)} \right] 
    &= \mathbb{E}\left[ \sum_{s=1}^{L} \sum_{r=1}^{L}b_i \bm{1}_{\langle \bm{w}_i, \bm{z_X}^{(r)} \rangle \le -b_i} \bm{1}_{\langle \bm{w}_i, \bm{z_Y}^{(s)} \rangle \ge b_i} \tilde{z}_{n,j}^{(r)} \right]
\end{aligned}
\end{equation}

which allows us to drop the \( b_i \) terms in \( J_3 \). The analysis of the rest of \( J_3 \) is somewhat similar.

First we observe that whenever \( \bm{1}_{\langle \bm{w}_i, \bm{z_Y}^{(s)} \rangle \ge b_i} \bm{1}_{\langle \bm{w}_i, \bm{z_X}^{(r)} \rangle \le -b_i} \ne 0 \). we have $ \langle \bm{w}_i, \bm{z_Y}^{(s)} \rangle \ge b_i \quad \text{and} \quad \langle \bm{w}_i, \bm{z_X}^{(r)} \rangle \le -b_i$, so we can get $\bar{S}^{(r,s)\setminus j} + \bar{\alpha}^{(r,s)}_j \frac{\tilde{z}_{n,j}^{(r)} + \tilde{z}_{n,j}^{+(s)}}{2L} \ge b_i + |S^{(r,s)\setminus j} + \alpha_j \frac{\tilde{z}_{n,j}^{(r)} + \tilde{z}_{n,j}^{+(s)}}{2L}|$

When this inequality holds, we always have \( \bar{S}^{(r,s)\setminus j} \ge b_i - \bar{\alpha}^{(r,s)}_j \frac{\tilde{z}_{n,j}^{(r)} + \tilde{z}_{n,j}^{+(s)}}{2L} \). 

Together with all the above observations, we proceed to compute as:
\begin{equation}
\begin{aligned}
    &\left| \mathbb{E} \left[ \sum_{s=1}^{L} \sum_{r=1}^{L}\left( \frac{1}{L} \alpha_j \tilde{z}_{n,j}^{+(s)}  + S^{(s)\setminus j}\right) \bm{1}_{\langle \bm{w}_i, \bm{z_Y}^{(s)} \rangle \geq b_i} \bm{1}_{\langle \bm{w}_i, \bm{z_X}^{(r)} \rangle \leq -b_i}\tilde{z}_{n,j}^{(r)} \right] \right|\\
    =& \left| \mathbb{E} \left[ \sum_{s=1}^{L} \sum_{r=1}^{L} \left( \frac{1}{L} \alpha_j \tilde{z}_{n,j}^{+(s)}  + S^{(s)\setminus j}\right) \bm{1}_{\bar{\alpha}^{(r,s)}_j \frac{\tilde{z}_{n,j}^{(r)} + \tilde{z}_{n,j}^{+(s)}}{2L} + \bar{S}^{(r,s)\setminus j} \ge b_i + |\alpha_j \frac{\tilde{z}_{n,j}^{(r)} + \tilde{z}_{n,j}^{+(s)}}{2L} + S^{(r,s)\setminus j}|} \tilde{z}_{n,j}^{(r)} \right] \right|\\
    \le\;& \mathbb{E}\Bigg[\sum_{s=1}^{L} \sum_{r=1}^{L} \biggl\lvert \bar{S}^{(r,s)\setminus j} + \bar{\alpha}^{(r,s)}_j \tfrac{\tilde{z}_{n,j}^{(r)} + \tilde{z}_{n,j}^{+(s)}}{2L} \biggr\rvert \,\lvert \tilde{z}_{n,j}^{(r)} \rvert\,\bm{1}_{\bar{S}^{(r,s)\setminus j} \ge b_i - \bar{\alpha}^{(r,s)}_j \tfrac{\tilde{z}_{n,j}^{(r)} + \tilde{z}_{n,j}^{+(s)}}{2L}} \\
    &\qquad\cdot\;\bm{1}_{\bar{S}^{(r,s)\setminus j} \in 
   \bigl[\, b_i - \alpha_j \bigl\lvert \tfrac{\tilde{z}_{n,j}^{(r)} + \tilde{z}_{n,j}^{+(s)}}{2L} \bigr\rvert + \bigl\lvert \bar{S}^{(r,s)\setminus j} + \bar{\alpha}^{(r,s)}_j \tfrac{\tilde{z}_{n,j}^{(r)} + \tilde{z}_{n,j}^{+(s)}}{2L} \bigr\rvert,\; b_i + \alpha_j \bigl\lvert \tfrac{\tilde{z}_{n,j}^{(r)} + \tilde{z}_{n,j}^{+(s)}}{2L} \bigr\rvert + \bigl\lvert \bar{S}^{(r,s)\setminus j} - \bar{\alpha}^{(r,s)}_j \tfrac{\tilde{z}_{n,j}^{(r)} + \tilde{z}_{n,j}^{+(s)}}{2L} \bigr\rvert \,\bigr]} \Bigg]\\
    \le\;& \sum_{s=1}^{L} \sum_{r=1}^{L} \sqrt{ \mathbb{E}\!\left[ \lvert \bar{S}^{(r,s)\setminus j} \rvert^2 \lvert \tilde{z}_{n,j}^{(r)} \rvert^2 \right] \bm{1}_{\bar{S}^{(r,s)\setminus j} \ge b_i - \bar{\alpha}^{(r,s)}_j \tfrac{\tilde{z}_{n,j}^{(r)} + \tilde{z}_{n,j}^{+(s)}}{2L}} } \\
    &\quad\cdot \sqrt{ \mathbb{E}\!\left[ \bm{1}_{\bar{S}^{(r,s)\setminus j} \in \bigl[\, b_i - \alpha_j \bigl\lvert \tfrac{\tilde{z}_{n,j}^{(r)} + \tilde{z}_{n,j}^{+(s)}}{2L} \bigr\rvert + \bigl\lvert \bar{S}^{(r,s)\setminus j} + \alpha_j \tfrac{\tilde{z}_{n,j}^{(r)} + \tilde{z}_{n,j}^{+(s)}}{2L} \bigr\rvert,\; b_i + \alpha_j \bigl\lvert \tfrac{\tilde{z}_{n,j}^{(r)} + \tilde{z}_{n,j}^{+(s)}}{2L} \bigr\rvert + \bigl\lvert \bar{S}^{(r,s)\setminus j} - \bar{\alpha}^{(r,s)}_j \tfrac{\tilde{z}_{n,j}^{(r)} + \tilde{z}_{n,j}^{+(s)}}{2L} \bigr\rvert \,\bigr]} 
    \right] }\\
    +& \frac{1}{L} \mathbb{E} \left[ \sum_{s=1}^{L} \sum_{r=1}^{L} \bar{\alpha}^{(r,s)}_j |\frac{\tilde{z}_{n,j}^{(r)} + \tilde{z}_{n,j}^{+(s)}}{2}| |\tilde{z}_{n,j}^{(r)}| \bm{1}_{\bar{S}^{(r,s)\setminus j} \ge b_i - \bar{\alpha}^{(r,s)}_j \frac{\tilde{z}_{n,j}^{(r)} + \tilde{z}_{n,j}^{+(s)}}{2L}} \right]\\
    \le\;& \sum_{s=1}^{L} \sum_{r=1}^{L} \sqrt{ \mathbb{E}\!\left[ \lvert \bar{S}^{(r,s)\setminus j}\rvert^2 \lvert \tilde{z}_{n,j}^{(r)} \rvert^2 \right] \bm{1}_{\bar{S}^{(r,s)\setminus j} \ge b_i - \bar{\alpha}^{(r,s)}_j \tfrac{\tilde{z}_{n,j}^{(r)} + \tilde{z}_{n,j}^{+(s)}}{2L}} } \\
    &\quad \cdot \sqrt{ \mathbb{E}\!\left[ \bm{1}_{\left\langle \bm{w}_i, \tfrac{\tilde{\xi}_{n,j}^{(r)} - \tilde{\xi}_{n,j}^{+(s)}}{2L} \right\rangle \in \bigl[-\,\bigl\lvert \alpha_j \tfrac{\tilde{z}_{n,j}^{(r)} + \tilde{z}_{n,j}^{+(s)}}{2L} + \lvert \bar{\alpha}^{(r,s)}_j \rvert \tfrac{\tilde{z}_{n,j}^{(r)} + \tilde{z}_{n,j}^{+(s)}}{2L} \bigr\rvert,\;\bigl\lvert \alpha_j \tfrac{\tilde{z}_{n,j}^{(r)} + \tilde{z}_{n,j}^{+(s)}}{2L} + \lvert \bar{\alpha}^{(r,s)}_j \rvert \tfrac{\tilde{z}_{n,j}^{(r)} + \tilde{z}_{n,j}^{+(s)}}{2L} \bigr\rvert\bigr]} \right] }\\
    \le& \frac{1}{L} O(\alpha_j + \mathbb{E}[|\bar{\alpha}_j|^2]^{1/2}) \mathbb{E}[\sum_{s=1}^{L} \sum_{r=1}^{L} |\frac{\tilde{z}_{n,j}^{(r)} + \tilde{z}_{n,j}^{+(s)}}{2}| |\tilde{z}_{n,j}^{(r)}|  ] \cdot \sqrt{ \frac{ \mathbb{E}[|\bar{S}^{\setminus j}|^2 \bm{1}_{\bar{S}^{\setminus j} \ge b_i - \bar{\alpha}_j C_{\tilde{z}} }] }{ \mathbb{E}[\langle \bm{w}_i, \frac{\tilde{\xi}_{n}^{(r)} + \tilde{\xi}_{n}^{+(s)}}{2L} \rangle^2] } } \\
    +& \frac{1}{L} \left( \mathbb{E}[|\bar{\alpha}_j|^2]^{1/2} \mathbb{E}[\sum_{s=1}^{L} \sum_{r=1}^{L} |\frac{\tilde{z}_{n,j}^{(r)} + \tilde{z}_{n,j}^{+(s)}}{2}| |\tilde{z}_{n,j}^{(r)}|] \right) \Pr(\bar{S}^{\setminus j} \ge b_i - \bar{\alpha}_j C_{\tilde{z}})
\end{aligned}
\end{equation}

where in the last inequality, we have use the following reasoning: we know that \( \langle \bm{w}_i, \frac{\tilde{\xi}_{n}^{+(s)} - \tilde{\xi}_{n}^{(r)}}{2L} \rangle \) has the same distribution with \( \langle \bm{w}_i, \frac{\tilde{\xi}_{n}^{(r)} + \tilde{\xi}_{n}^{+(s)}}{2L}  \rangle \).($\langle \bm{w}_i, ( \frac{\tilde{\xi}_{n}^{+(s)} - \tilde{\xi}_{n}^{(r)}}{2L}  \rangle \overset{d}{=} \langle \bm{w}_i, \frac{\tilde{\xi}_{n}^{(r)} + \tilde{\xi}_{n}^{+(s)}}{2L} \rangle
$) We use the randomness to obtain that
\begin{equation}
\begin{aligned}
    \mathbb{E}_{\tilde{\xi}_{n}^{(r)},\tilde{\xi}_{n}^{+(s)}} &\left[ \bm{1}_{\langle \bm{w}_i, \frac{\tilde{\xi}_{n}^{(r)} - \tilde{\xi}_{n}^{+(s)}}{2L}  \rangle \in 
    [-|\alpha_j \frac{\tilde{z}_{n,j}^{(r)} + \tilde{z}_{n,j}^{+(s)}}{2L} + |\bar{\alpha}^{(r,s)}_j| \frac{\tilde{z}_{n,j}^{(r)} + \tilde{z}_{n,j}^{+(s)}}{2L}|, |\alpha_j \frac{\tilde{z}_{n,j}^{(r)} + \tilde{z}_{n,j}^{+(s)}}{2L} + |\bar{\alpha}^{(r,s)}_j| \frac{\tilde{z}_{n,j}^{(r)} + \tilde{z}_{n,j}^{+(s)}}{2L}|]} \right]\\
    &\leq \frac{|\alpha_j + \bar{\alpha}^{(r,s)}_j|^2 \cdot |\frac{\tilde{z}_{n,j}^{(r)} + \tilde{z}_{n,j}^{+(s)}}{2L}|^2}
    {\mathbb{E}_{\xi} [\langle \bm{w}_i, \frac{\tilde{\xi}_{n}^{+(s)} - \tilde{\xi}_{n}^{(r)}}{2L} \rangle^2]}
    = \frac{|\alpha_j + \bar{\alpha}^{(r,s)}_j|^2 \cdot |\frac{\tilde{z}_{n,j}^{(r)} + \tilde{z}_{n,j}^{+(s)}}{2L}|^2}
    {\mathbb{E}_{\xi} [\langle \bm{w}_i, \frac{\tilde{\xi}_{n}^{(r)} + \tilde{\xi}_{n}^{+(s)}}{2L} \rangle^2]}
\end{aligned}
\end{equation}

The second term of \( J_3 \) can be similarly bounded by the same quantity. Now by combining all the results of \( J_1, J_2, J_3 \) above, we have the desired result for (a).
\begin{equation}
\begin{aligned}
    \mathbb{E}& \left[ h_i(\bm{Y}_{n}) \sum_{r=1}^{L} \bm{1}_{|\langle \bm{w}_i, \bm{z_X}^{(r)} \rangle| \geq b_i} \tilde{z}_{n,j}^{(r)} \right] \\
    \leq& \frac{1}{L} \alpha_{i,j}^{(t)} \cdot \mathbb{E} \left[ \sum_{s=1}^{L} \tilde{z}_{n,j}^{+(s)}\sum_{r=1}^{L} \tilde{z}_{n,j}^{(r)} \bm{1}_{|\langle \bm{w}_i^{(t)}, \frac{z_{X} + z_{Y}}{2} \rangle| \geq b_i + |\langle \bm{w}_i^{(t)}, z_X - \frac{z_{X} + z_{Y}}{2} \rangle|} \right]\\
    \pm& \left( \alpha_{i,j}^{(t)} + O\left( \sqrt{\mathbb{E} |\bar{\alpha}_{i,j}^{(t)}|^2} \right) \right) \cdot \mathbb{E}[\sum_{s=1}^{L} \sum_{r=1}^{L} |\frac{\tilde{z}_{n,j}^{(r)} + \tilde{z}_{n,j}^{+(s)}}{2}| |\tilde{z}_{n,j}^{(r)}|] \cdot O(L_1 + L_2)
\end{aligned}
\end{equation}
\end{proof}

\subsection{Proof of Lemma~\ref{Lemma E.10}(b):}

\begin{proof}[Proof of Lemma~\ref{Lemma E.10}(b):]

This proof is extremely similar to the above proof of Lemma~\ref{Lemma E.10}(a), we describe the differences here and sketch the remaining. First we need to decompose the expectation as follows:
\begin{equation}
\begin{aligned}
    &\mathbb{E} \left[ h_i(\bm{Y}_{n}) \sum_{r=1}^{L} \bm{1}_{|\langle \bm{w}_i, \bm{z_X}^{(r)} \rangle| \geq b_i} \left( \sum_{i=1}^{L} \delta_{n,i}^{(r)} z_{n,j}^{(i)} \right) \right] \\
    =& \mathbb{E} \left[ \frac{1}{L} (\alpha_j - b_i) \sum_{s=1}^{L} \tilde{z}_{n,j}^{+(s)} \sum_{r=1}^{L} \tilde{z}_{n,j}^{(r)} \bm{1}_{|\langle \bm{w}_i, \frac{z_{X}^{(r)} + z_{Y}^{(s)}}{2} \rangle| \geq b_i + |\langle \bm{w}_i, \bm{z_X}^{(r)} - \frac{z_{X}^{(r)} + z_{Y}^{(s)}}{2} \rangle|} \right] \\
    +& \mathbb{E} \left[ \sum_{s=1}^{L} \left(  S^{(s)\setminus j} \right) \sum_{r=1}^{L} \tilde{z}_{n,j}^{(r)} \bm{1}_{|\langle \bm{w}_i, \frac{z_{X}^{(r)} + z_{Y}^{(s)}}{2} \rangle| \geq b_i + |\langle \bm{w}_i, \bm{z_X}^{(r)} - \frac{z_{X}^{(r)} + z_{Y}^{(s)}}{2} \rangle|} \right] \\
    +& \mathbb{E} \left[ \sum_{s=1}^{L} 
    \left( \frac{1}{L} \alpha_j \tilde{z}_{n,j}^{+(s)}  + S^{(s)\setminus j} \right) \sum_{r=1}^{L} \tilde{z}_{n,j}^{(r)} \bm{1}_{|\langle \bm{w}_i, _X^{(r)} - \frac{z_{X}^{(r)} + z_{Y}^{(s)}}{2} \rangle| \ge b_i + |\langle \bm{w}_i, \frac{z_{X}^{(r)} + z_{Y}^{(s)}}{2} \rangle|} \right]\\
    =& J_1 + J_2 + J_3
\end{aligned}
\end{equation}

\text{where we have used the following facts:}
\begin{equation}
\begin{aligned}
    &\sum_{s=1}^{L} \sum_{r=1}^{L}\mathbb{E}[b_i \bm{1}_{\langle \bm{w}_i, z_{X}^{(r)} \rangle \ge b_i} \bm{1}_{\langle \bm{w}_i, z_{Y}^{(s)} \rangle \geq b_i} \tilde{z}_{n,j}^{(r)}] = -\sum_{s=1}^{L}\sum_{r=1}^{L}\mathbb{E}[b_i \bm{1}_{\langle \bm{w}_i, z_{X}^{(r)} \rangle \le -b_i} \bm{1}_{\langle \bm{w}_i, z_{Y}^{(s)} \rangle \le -b_i} \tilde{z}_{n,j}^{(r)}];\\
    &\sum_{s=1}^{L} \sum_{r=1}^{L}\mathbb{E}[b_i \bm{1}_{\langle \bm{w}_i, z_{X}^{(r)} \rangle \ge b_i} \bm{1}_{\langle \bm{w}_i, z_{Y}^{(s)} \rangle \le -b_i} \tilde{z}_{n,j}^{(r)}] = \sum_{s=1}^{L}\sum_{r=1}^{L}\mathbb{E}[b_i \bm{1}_{\langle \bm{w}_i, z_{Y}^{(s)} \rangle \ge b_i} \bm{1}_{\langle \bm{w}_i,z_{X}^{(r)} \rangle \le -b_i} \tilde{z}_{n,j}^{(r)}];\\
    &\bm{1}_{\langle \bm{w}_i, z_{X}^{(r)} \rangle \ge b_i} \bm{1}_{\langle \bm{w}_i, z_{Y}^{(s)} \rangle \ge b_i} = \bm{1}_{\langle \bm{w}_i, \frac{z_{X}^{(r)} + z_{Y}^{(s)}}{2} \rangle \ge b_i + |\langle \bm{w}_i, z_{X}^{(r)} - \frac{z_{X}^{(r)} + z_{Y}^{(s)}}{2} \rangle|};\\
    &\bm{1}_{\langle \bm{w}_i, z_{X}^{(r)} \rangle \le -b_i} \bm{1}_{\langle \bm{w}_i, z_{Y}^{(s)} \rangle \le -b_i} = \bm{1}_{\langle \bm{w}_i, \frac{z_{X}^{(r)} + z_{Y}^{(s)}}{2} \rangle \le -b_i - |\langle \bm{w}_i, z_{X}^{(r)} - \frac{z_{X}^{(r)} + z_{Y}^{(s)}}{2} \rangle|};\\
    &\bm{1}_{\langle \bm{w}_i, z_{X}^{(r)} \rangle \ge b_i} \bm{1}_{\langle \bm{w}_i, z_{Y}^{(s)} \rangle \le -b_i} = \bm{1}_{\langle \bm{w}_i, z_{X}^{(r)} \rangle \le -b_i} \bm{1}_{\langle \bm{w}_i, z_{Y}^{(s)} \rangle \geq b_i} = \bm{1}_{\langle \bm{w}_i, z_{X}^{(r)} - \frac{z_{X}^{(r)} + z_{Y}^{(s)}}{2} \rangle| \ge b_i + |\langle \bm{w}_i, \frac{z_{X}^{(r)} + z_{Y}^{(s)}}{2} \rangle|}.
\end{aligned}
\end{equation}

Now observe that \( J_2 \) can be dealt with as follows: define events \( A_3 := \{|\bar{S}^{(r,s)\setminus j} + \bar{\alpha}^{(r,s)}_j C_{\tilde{z}}| \ge (\alpha_j C_{\tilde{z}}- b_i)/2\} \) and \( A_4 := \{\bar{S}^{(r,s)\setminus j} \ge (\alpha_j C_{\tilde{z}} - b_i)/2\} \), and notice that \( \bm{1} \le \bm{1}_{A_3} + \bm{1}_{A_4} \), we can compute

\begin{equation}
\begin{aligned}
    J_2 &= \mathbb{E} \left[ \sum_{s=1}^{L} \left( S^{(s)\setminus j} \right)  \sum_{r=1}^{L} \mathbf{1}_{|\langle \bm{w}_i, \frac{z_{X}^{(r)} + z_{Y}^{(s)}}{2} \rangle| \geq b_i + |\langle \bm{w}_i, \bm{z_X}^{(r)} - \frac{z_{X}^{(r)} + z_{Y}^{(s)}}{2} \rangle|} \tilde{z}_{n,j}^{(r)} \right] \\
    &= \mathbb{E}\Bigg[\sum_{s=1}^{L} \bigl(S^{(s)\setminus j}\bigr) 
   \sum_{r=1}^{L} \lvert \tilde{z}_{n,j}^{(r)} \rvert 
   \left( \mathbf{1}_{A_3} + \mathbf{1}_{A_4}\mathbf{1}_{A_3^c} \right) \\
    &\qquad\cdot \mathbf{1}_{\,S^{(r,s)\setminus j} \in 
    \bigl[\, b_i - \alpha_j \bigl\lvert \tfrac{\tilde{z}_{n,j}^{(r)} + \tilde{z}_{n,j}^{+(s)}}{2L} \bigr\rvert 
    + \bigl\lvert \bar{S}^{(r,s)\setminus j} + \bar{\alpha}^{(r,s)}_j \tfrac{\tilde{z}_{n,j}^{(r)} + \tilde{z}_{n,j}^{+(s)}}{2L} \bigr\rvert,\;
    b_i + \alpha_j \bigl\lvert \tfrac{\tilde{z}_{n,j}^{(r)} + \tilde{z}_{n,j}^{+(s)}}{2L} \bigr\rvert + \bigl\lvert \bar{S}^{(r,s)\setminus j} - \bar{\alpha}^{(r,s)}_j \tfrac{\tilde{z}_{n,j}^{(r)} + \tilde{z}_{n,j}^{+(s)}}{2L} \bigr\rvert \,\bigr]}\Bigg] \\
    &\overset{(1)}\le \mathbb{E} \Bigg[ \sum_{s=1}^{L} \sum_{r=1}^{L}  \left(O(\alpha_j + |\bar{\alpha}^{(r,s)}_j|) |\frac{\tilde{z}_{n,j}^{(r)} + \tilde{z}_{n,j}^{+(s)}}{2L}| |\tilde{z}_{n,j}^{(r)}| \mathbf{1}_{A_3} + |\left( S^{(s)\setminus j} \right) | |\tilde{z}_{n,j}^{(r)}| (\mathbf{1}_{A_3} + \mathbf{1}_{A_4}) \right) \\
    &\qquad \cdot \mathbf{1}_{S^{(r,s)\setminus j} \in [\alpha_j |\frac{\tilde{z}_{n,j}^{(r)} + \tilde{z}_{n,j}^{+(s)}}{2L}| - b_i + |\bar{S}^{(r,s)\setminus j} + \bar{\alpha}^{(r,s)}_j|\frac{\tilde{z}_{n,j}^{(r)} + \tilde{z}_{n,j}^{+(s)}}{2L}||, b_i + \alpha_j |\frac{\tilde{z}_{n,j}^{(r)} + \tilde{z}_{n,j}^{+(s)}}{2L}| + |\bar{S}^{(r,s)\setminus j} - \bar{\alpha}^{(r,s)}_j|\frac{\tilde{z}_{n,j}^{(r)} + \tilde{z}_{n,j}^{+(s)}}{2L}|| ]} \Bigg] \\
    &\le \frac{1}{L} \mathbb{E} \left[ \sum_{s=1}^{L} \frac{\tilde{z}_{n,j}^{(r)} + \tilde{z}_{n,j}^{+(s)}}{2} \sum_{r=1}^{L} \tilde{z}_{n,j}^{(r)} \right] \cdot O(\alpha_j + \mathbb{E}[|\bar{\alpha}_j|^2]^{1/2}) \cdot \left( \mathrm{Pr}(A_3) + \sqrt{\mathbb{E}[|\bar{S}^{\setminus j}|^2 (\mathbf{1}_{A_3} + \mathbf{1}_{A_4})]/ \mathbb{E}[\langle \bm{w}_i, \xi_p \rangle^2]} \right)
\end{aligned}
\end{equation}

where $\circled{1}$ relies on the fact that whenever \( \bm{1}_{\langle \bm{w}_i, \frac{\tilde{z}_{n,j}^{(r)} + \tilde{z}_{n,j}^{+(s)}}{2L} \rangle \ge b_i + |\langle \bm{w}_i, \frac{\tilde{z}_{n,j}^{(r)}}{L}  - \frac{\tilde{z}_{n,j}^{(r} + \tilde{z}_{n,j}^{+(s)}}{2L} \rangle|} \ne 0 \), we have \( |S^{(s)\setminus j}| \le b_i + (\alpha_j + |\bar{\alpha}^{(r,s)}_j|)|\frac{\tilde{z}_{n,j}^{(r)} + \tilde{z}_{n,j}^{+(s)}}{2L}| + |\bar{S}^{(r,s)\setminus j}| \), and also that we have assumed \( b_i < \alpha_j C_{\tilde{z}} \). The term \( J_3 \) can be bounded from similar analysis as in the proof of Lemma~\ref{Lemma E.10}(a), but changing the factor from \( O(L_1 + L_2) \) to \( O(L_3 + L_4) \). Combining these calculations gives the desired results.
\end{proof}

\subsection{Proof of Lemma~\ref{Lemma E.11}:}

\begin{proof}[Proof of Lemma~\ref{Lemma E.11}:]
Again in this proof we omit the time superscript $^{(t)}$. First we deal with the case where the features under consideration is $\bm{M}_j, j \in [d]$. $\tilde{\xi}_{n}^{(r)}$ and $\tilde{\xi}_{n}^{+(r)}$ are independent. Now we can write as follows:
\begin{equation}
\begin{aligned}
    &\mathbb{E} \left[  h_i(\bm{Y}_{n}) \sum_{r=1}^{L} \bm{1}_{|\langle \bm{w}_i, \bm{z_X}^{(r)} \rangle| \geq b_i} \langle \tilde{\xi}_{n}^{(r)}, \bm{M}_j \rangle \right] \\
    =& \mathbb{E} \left[ \sum_{s=1}^{L} \left( \langle \bm{w}_i, \bm{z_Y}^{(s)} \rangle - b_i \right) \bm{1}_{\langle \bm{w}_i, \bm{z_Y}^{(s)} \rangle \geq b_i} \sum_{r=1}^{L} \bm{1}_{|\langle \bm{w}_i, \bm{z_X}^{(r)} \rangle| \geq b_i} \langle \tilde{\xi}_{n}^{(r)}, \bm{M}_j \rangle \right] \\
    +& \mathbb{E} \left[ \sum_{s=1}^{L} \left( \langle \bm{w}_i, \bm{z_Y}^{(s)} \rangle + b_i \right) \bm{1}_{\langle \bm{w}_i, \bm{z_Y}^{(s)} \rangle \leq -b_i} \sum_{r=1}^{L} \bm{1}_{|\langle \bm{w}_i, \bm{z_X}^{(r)} \rangle| \geq b_i} \langle \tilde{\xi}_{n}^{(r)}, \bm{M}_j \rangle \right]\\
    =& I_1 + I_2
\end{aligned}
\end{equation}

For the first term on the RHS, we have
\begin{equation}
\begin{aligned}
    &\mathbb{E} \left[ \sum_{s=1}^{L} \left( \langle \bm{w}_i, \bm{z_Y}^{(s)} \rangle - b_i \right) \bm{1}_{\langle \bm{w}_i, \bm{z_Y}^{(s)} \rangle \geq b_i} \sum_{r=1}^{L} \bm{1}_{|\langle \bm{w}_i, \bm{z_X}^{(r)} \rangle| \geq b_i} \langle \tilde{\xi}_{n}^{(r)}, \bm{M}_j \rangle \right]
\end{aligned}
\end{equation}

where $\{ \bm{M}_j \}_{j \in [d]}$ is a basis for $\mathbb{R}^{d_1}$ satisfying $\| \bm{M}_j \|_\infty \leq O(1/\sqrt{d_1})$. 
\begin{equation}
\begin{aligned}
    &\mathbb{E} \left[ \sum_{s=1}^{L} \left( \langle \bm{w}_i, \bm{z_Y}^{(s)} \rangle - b_i \right) \bm{1}_{\langle \bm{w}_i, \bm{z_Y}^{(s)} \rangle \geq b_i} \sum_{r=1}^{L} \bm{1}_{|\langle \bm{w}_i, \bm{z_X}^{(r)} \rangle| \geq b_i} \langle \tilde{\xi}_{n}^{(r)}, \bm{M}_j \rangle \right]\\
    \leq&  \mathbb{E}\Bigg[ \sum_{s=1}^{L} \bigl( \langle \bm{w}_i, \bm{z_Y}^{(s)} \rangle - b_i \bigr) \mathbf{1}_{\langle \bm{w}_i, \bm{z_Y}^{(s)} \rangle \geq b_i} 
    \sum_{r=1}^{L} \Bigl( \mathbf{1}_{\lvert \langle \bm{w}_i, \bm{z_X}^{(r)} \rangle + 2(\alpha-\alpha') \lvert \langle \tilde{\xi}_n^{(r)}, \bm{M}_j \rangle \rvert \rvert \geq b_i} \\
    -& \mathbf{1}_{\lvert \langle \bm{w}_i, \bm{z_X}^{(r)} \rangle - 2(\alpha-\alpha') \lvert \langle \tilde{\xi}_n^{(r)}, \bm{M}_j \rangle \rvert \rvert \geq b_i} \Bigr) \,\lvert \langle \tilde{\xi}_n^{(r)}, \bm{M}_j \rangle \rvert \Bigg]\\
    \leq& \mathbb{E}_{(\bm{I} - \bm{M}_j \bm{M}_j^\top) \tilde{\xi}_{n}^{(r)}} \left[ \sum_{s=1}^{L} \left( \langle \bm{w}_i, \bm{z_Y}^{(s)} \rangle - b_i \right) \bm{1}_{\langle \bm{w}_i, \bm{z_Y}^{(s)} \rangle \geq b_i} \sum_{r=1}^{L} \left| \langle \tilde{\xi}_{n}^{(r)}, \bm{M}_j \rangle \right| \right] \\
    \times& \mathbb{E}_{(\bm{I} - \bm{M}_j \bm{M}_j^\top) \tilde{\xi}_{n}^{(r)}} \left[ \left| \bm{1}_{\left| \langle \bm{w}_i, \bm{z_X}^{(r)} \rangle + 2(\alpha - \alpha') |\langle \tilde{\xi}_{n}^{(r)}, \bm{M}_j \rangle |\right| \geq b_i} - \bm{1}_{\left| \langle \bm{w}_i, \bm{z_X}^{(r)} \rangle - 2(\alpha - \alpha') |\langle \tilde{\xi}_{n}^{(r)}, \bm{M}_j \rangle |\right| \geq b_i}\right| \right] \\
    \leq& O(1) \cdot \mathbb{E} \left[ (\alpha_j + |\bar{\alpha}_j|)^2 \sum_{s=1}^{L} \sum_{r=1}^{L}\langle \tilde{\xi}_{n}^{(r)}, \bm{M}_j \rangle^3 \cdot \frac{ \left[ \langle \bm{w}_i, \bm{z_Y}^{(s)} \rangle - b_i \right]}{ \mathbb{E}[\langle \bm{w}_i, \tilde{\xi}_{n}^{(r)} \rangle^2] } \bm{1}_{\langle \bm{w}_i, \bm{z_Y}^{(s)} \rangle \geq b_i} \right]
\end{aligned}
\end{equation}

where in the last inequality we have used the randomness of $(\bm{I} - \bm{M}_j \bm{M}_j^{\top}) \tilde{\xi}_{n}^{(r)} $, which allow us to obtain the denominator $\Omega(1)\mathbb{E}|\langle \bm{w}_i, \tilde{\xi}_{n}^{(r)}  \rangle|^2$. 

Noticing that $ \sum_{j \in [d]} (|\alpha_{j}|^2 + |\bar{\alpha}_{j}|^2) = \mathcal{O}(\|\bm{w}_i\|_2^2)$ coupled with Cauchy–Schwarz inequality, we can similarly obtain:
\begin{equation}
\begin{aligned}
    &\mathbb{E} \left[ \sum_{s=1}^{L} \left( \langle \bm{w}_i, \bm{z_Y}^{(s)} \rangle - b_i \right) \bm{1}_{\langle \bm{w}_i, \bm{z_Y}^{(s)} \rangle \geq b_i} \sum_{r=1}^{L} \bm{1}_{|\langle \bm{w}_i, \bm{z_X}^{(r)} \rangle| \geq b_i} \langle \tilde{\xi}_{n}^{(r)}, \bm{M}_j \rangle \right]\\
    \leq& O(1) \cdot \mathbb{E} \left[ (\alpha_j + |\bar{\alpha}_j|)^2 \cdot \sum_{s=1}^{L} \sum_{r=1}^{L} \langle \tilde{\xi}_{n}^{(r)}, \bm{M}_j \rangle^3 \cdot \frac{ \left[ \langle \bm{w}_i, \bm{z_Y}^{(s)} \rangle - b_i \right]}{ \mathbb{E}[\langle \bm{w}_i, \tilde{\xi}_{n}^{(r)} \rangle^2] } \bm{1}_{\langle \bm{w}_i, \bm{z_Y}^{(s)} \rangle \geq b_i} \right] \\
    \leq& O(1) \cdot \mathbb{E} \left[ (\alpha_j + |\bar{\alpha}_j|)^2 \cdot \sum_{r=1}^{L} \langle \tilde{\xi}_{n}^{(r)}, \bm{M}_j \rangle^3 \right] \cdot \frac{1}{\|\bm{w}_i\|_2^2 / d} \times\|\bm{w}_i\|_2^2 \cdot \Pr(h_{i,t}(\bm{Y}_{n}) \neq 0) \\
    \leq& \widetilde{\mathcal{O}}\left( \frac{\|\bm{w}_i\|_2}{d^2} \right) \cdot \Pr(h_{i,t}(\bm{Y}_{n}) \neq 0)
\end{aligned}
\end{equation}

where in the second inequality we have used the following arguments: first we can compute
\begin{equation}
\begin{aligned}
    (\langle \bm{w}_i, \bm{z_Y}^{(s)} \rangle - b_i)
    &\leq \sum_{j \in \mathcal{N}_i} |\langle \bm{w}_i, \bm{M}_j \rangle \tilde{z}_{n,j}^{+(s)} |
    + \left| \sum_{j \notin \mathcal{N}_i} \langle \bm{w}_i, \bm{M}_j \rangle \tilde{z}_{n,j}^{+(s)} \right| + |\langle \bm{w}_i, \tilde{\xi}_{n}^{+(s)} \rangle| \\
    &\leq \clubsuit + \spadesuit + \heartsuit 
\end{aligned}
\end{equation}
And from Lemma~\ref{Lemma D.1}, Lemma~\ref{Lemma E.7}. and Lemma~\ref{Lemma E.8}, we have
\begin{equation}
    |\clubsuit| \geq \Omega(\| \bm{w}_i^{(t)} \|_2) \quad \text{with prob} \leq \widetilde{\mathcal{O}}\left( \frac{1}{d} \right)
\end{equation}
\begin{equation}
    |\spadesuit|, |\heartsuit| \leq \widetilde{\mathcal{O}}\left( \frac{\| \bm{w}_i^{(t)} \|_2}{\sqrt{d}} \right) \quad \text{w.h.p.}
\end{equation}
For the second term on the RHS, we can prove it in a similar way. Summing up over $I_1$, $I_2$, we have the desired bound. For the dense feature $\bm{M}_j^{\perp}$, the analysis is similar and we omit for brevity.
\end{proof}

\section{Proof of Additional Lemmas}
\label{appendix:K}

\subsection{Proof of Lemma~\ref{Lemma H.1}:}

\begin{proof}[Proof of Lemma~\ref{Lemma H.1}:]
For the logit \( \ell_{s,t}(\bm{X}_{n}, \mathfrak{B}) \) of negative sample \( \bm{X}_{n,s} \), we can simply calculate:
\begin{equation}
\begin{aligned}
    &\left| \ell'_{s,t} (\bm{X}_{n}, \mathfrak{B}) - \frac{1}{|\mathfrak{B}|} \right| \\
    = &\left| \frac{e^{\operatorname{Sim}_{f_t}(\bm{X}_{n}, \bm{X}_{n,s}) / \tau}}{\sum_{\bm{X} \in \mathfrak{B}} e^{\operatorname{Sim}_{f_t}(\bm{X}_{n}, \bm{X}) / \tau}} - \frac{1}{|\mathfrak{B}|} \right|\\
    = &\left( \sum_{\bm{X} \in \mathfrak{B}} e^{\langle f_t(\bm{X}_{n}), f_t(\bm{X}) - f_t(\bm{X}_{n,s}) \rangle / \tau} \right)^{-1} - \frac{1}{|\mathfrak{B}|}\\
    = &\left| |\mathfrak{B}| - \sum_{\bm{X} \in \mathfrak{B}} e^{\langle f_t(\bm{X}_{n}), f_t(\bm{X}) - f_t(\bm{X}_{n,s}) \rangle / \tau} \right|
    \cdot \left( |\mathfrak{B}| \cdot \sum_{\bm{X} \in \mathfrak{B}} e^{\langle f_t(\bm{X}_{n}), f_t(\bm{X}) - f_t(\bm{X}_{n,s}) \rangle / \tau} \right)^{-1}\\
    \leq &\sum_{\bm{X} \in \mathfrak{B}} \left| 1 - e^{\langle f_t(\bm{X}_{n}), f_t(\bm{X}) - f_t(\bm{X}_{n,s}) \rangle / \tau} \right| 
    \cdot \left( |\mathfrak{B}| \cdot \sum_{\bm{X} \in \mathfrak{B}} e^{\langle f_t(\bm{X}_{n}), f_t(\bm{X}) - f_t(\bm{X}_{n,s}) \rangle / \tau} \right)^{-1}\\
    \overset{\circled{1}}{\leq} &\max_{\bm{X} \in \mathfrak{B}} \left| \langle f_t(\bm{X}_{n}), f_t(\bm{X}) - f_t(\bm{X}_{n,s}) \rangle \right|
    \cdot \left( \tau \sum_{\bm{X} \in \mathfrak{B}} e^{\langle f_t(\bm{X}_{n}), f_t(\bm{X}) - f_t(\bm{X}_{n,s}) \rangle / \tau} \right)^{-1}\\
    \overset{\circled{2}}{\leq} &\widetilde{\mathcal{O}} \left( \frac{\sum_{i \in [m]} \| \bm{w}_i^{(t)} \|_2^2}{\tau |\mathfrak{B}|} 
    \cdot \exp \left( \frac{\sum_{i \in [m]} \| \bm{w}_i^{(t)} \|_2^2}{\tau} \right) \right)\\
    \overset{\circled{3}}{\leq}  &\widetilde{\mathcal{O}} \left( \frac{\sum_{i \in [m]} \| \bm{w}_i^{(t)} \|_2^2}{\tau |\mathfrak{B}|} \right) = \widetilde{\mathcal{O}} \left( \frac{1}{\operatorname{polylog}(d)} \right).  
\end{aligned}
\end{equation}
$\circled{1}$ is because we have $\left| 1 - e^a \right| \leq |a| \quad \text{for } a \leq 0.1$. 

$\circled{2}$ is because we have $\left| \left\langle f_t(\bm{X}_{n}), f_t(\bm{X}) - f_t(\bm{X}_{n,s}) \right\rangle \right| \leq \tilde{\mathcal{O}} \left( \sum_{i \in [m]} \|\bm{w}_i^{(t)}\|_2^2 \right) \cdot L^2$ 

$\circled{3}$ is because we have $e^x \leq O(1) \quad \text{for } x \leq \frac{1}{2}$
\end{proof}

\label{appendix:L}

\subsection{Proof of Lemma~\ref{Lemma J.1}:}

\begin{proof}[Proof of Lemma~\ref{Lemma J.1}:]
Recalling the definition of \( \mathcal{M}_j \). According to Definition~\ref{Definition B.1} and Lemma~\ref{Lemma B.2}(b), we know:
\begin{equation}
    \langle \bm{w}_i^{(0)}, \bm{M}_j \rangle^2 \geq c_2 \sigma_0^2 \log d.
\end{equation}

We can write:
\begin{equation}
    \mathbb{P}[i \in \mathcal{M}_j \cap \mathcal{M}_{j'}] = \mathbb{P}[i \in \mathcal{M}_j] \cdot \mathbb{P}[i \in \mathcal{M}_{j'}].
\end{equation}

Now,
\begin{equation}
    \mathbb{P}[i \in \mathcal{M}_j] = \mathbb{P}[\langle \bm{w}_i^{(0)}, \bm{M}_j \rangle^2 \geq c_2 \sigma_0^2 \log d].
\end{equation}

Since \( \langle \bm{w}_i^{(0)}, \bm{M}_j \rangle \sim \mathcal{N}(0, \sigma_0^2) \), it follows that:
\begin{equation}
    \langle \bm{w}_i^{(0)}, \bm{M}_j \rangle^2 / \sigma_0^2 \sim \chi^2(1).
\end{equation}

Using the tail bound of \( \chi^2(1) \), we have:
\begin{equation}
    \mathbb{P}\left[ \langle \bm{w}_i^{(0)}, \bm{M}_j \rangle^2 \geq c_2 \sigma_0^2 \log d \right] = \mathbb{P}\left[ \chi^2(1) \geq c_2 \cdot \log d \right] \leq \exp\left(-\Omega(\log d)\right) = \frac{1}{d^{\Omega(1)}}.
\end{equation}

Therefore:
\begin{equation}
    \mathbb{P}[i \in \mathcal{M}_j \cap \mathcal{M}_{j'}] \leq \frac{1}{d^{\Omega(1)}}.
\end{equation}
There are \( m \) neurons in total, and each one falls into the intersection with probability \( \frac{1}{d^{\Omega(1)}} \), so the total expectation is:
\begin{equation}
    \mathbb{E}[|i \in \mathcal{M}_j \cap \mathcal{M}_{j'}|] \leq m \cdot \frac{1}{d^{\Omega(1)}}.
\end{equation}
It follows that \( \mu=\mathbb{E}[|i \in \mathcal{M}_j \cap \mathcal{M}_{j'}|] \leq O(\log d) \). We can use Chernoff Bound to derive high probability upper bound. First, we treat \( X_i = \bm{1}_{i \in \mathcal{M}_j \cap \mathcal{M}_{j'}} \) as independent 0-1 Bernoulli variables and define:
\begin{equation}
    X = \sum_{i=1}^m X_i = |i \in\mathcal{M}_j \cap \mathcal{M}_{j'}|, \quad \text{and} \quad \mathbb{E}[X] = \mu \leq O(\log d).
\end{equation}

Apply Chernoff bound we have:
\begin{equation}
    \Pr[|i \in\mathcal{M}_j \cap \mathcal{M}_{j'}| \leq O(\log d)] \geq 1 - o(1/d^4).
\end{equation}
\end{proof}

\end{document}